\theoremstyle{plain}
\newtheorem{theorem}{Theorem}[section]
\newtheorem{lemma}[theorem]{Lemma}
\theoremstyle{definition}
\newtheorem{example}[theorem]{Example}
\newtheorem{assumption}[theorem]{Assumptions}
\newtheorem{assumptionsingle}[theorem]{Assumption}
\theoremstyle{definition}
\numberwithin{equation}{section}
\numberwithin{figure}{section}
\newcommand{\bE}{\mathbb{E}}
\newcommand{\bR}{\mathbb{R}}
\newcommand{\cA}{\mathcal{A}}
\def \a {\alpha}
\title{Multi-index Antithetic Stochastic Gradient Algorithm}
\author[1]{Mateusz B.\ Majka \thanks{m.majka@hw.ac.uk}}
\author[3]{Marc Sabate-Vidales \thanks{m.sabate-vidales@sms.ed.ac.uk}}
\author[2,3]{\L ukasz Szpruch\thanks{l.szpruch@ed.ac.uk}}
\affil[1]{School of Mathematical and Computer Sciences, Heriot-Watt University}
\affil[2]{The Alan Turing Institute, London}
\affil[3]{School of Mathematics, University of Edinburgh}
\date{ }
\begin{document}
\selectlanguage{english}
\maketitle

\begin{abstract}

Stochastic Gradient Algorithms (SGAs) are ubiquitous in computational statistics, machine learning and optimisation. Recent years have brought an influx of interest in SGAs, and the non-asymptotic analysis of their bias is by now well-developed. However, relatively little is known about the optimal choice of the random approximation (e.g mini-batching) of the gradient in SGAs as this relies on the analysis of the variance and is problem specific. 
While there have been numerous attempts to reduce the variance of SGAs, these typically exploit a particular structure of the sampled distribution 
by requiring a priori knowledge of its density's mode. It is thus unclear how to adapt such algorithms to non-log-concave settings.
In this paper, we construct a Multi-index Antithetic Stochastic Gradient Algorithm (MASGA) whose implementation is independent of the structure of the target measure and which achieves performance on par with Monte Carlo estimators that have access to unbiased samples from the distribution of interest. In other words, MASGA is an optimal estimator from the mean square error-computational cost perspective within the class of Monte Carlo estimators.
We prove this fact rigorously for log-concave settings and verify it numerically for some examples where the log-concavity assumption is not satisfied.
\end{abstract}

\vspace{-5mm}

\section{Introduction}

Variations of Stochastic Gradient Algorithms (SGAs) are central in many modern machine learning applications such as large scale Bayesian inference \cite{welling2011bayesian}, variational inference \cite{Hoffman2013}, generative adversarial networks \cite{goodfellow2014generative}, variational autoencoders \cite{kingma2013auto} and deep reinforcement learning \cite{mnih2015human}. Statistical sampling perspective provides a unified framework to study non-asymptotic behaviour of these algorithms, which is the main topic of this work. More precisely, consider a data set $D=(\xi_i)_{i=1}^{m} \subset \mathbb{R}^n$, with $m \in \mathbb N \cup \{ \infty \}$ and the corresponding empirical measure $\nu^m:=\frac{1}{m}\sum_{i=1}^{m}\delta_{\xi_i}$, where $\delta$ is a Dirac measure. Denote by $\mathcal{P}(\mathbb R^n)$ the space of all probability measures on $\mathbb{R}^n$ and consider a potential $V:\mathbb R^d \times \mathcal P(\mathbb R^n) \rightarrow \mathbb R$. We are then interested in the problem of sampling from the (data-dependent) probability distribution $\pi$ on $\mathbb{R}^d$, given by 
\begin{equation}\label{eq gibbs} \textstyle 
\pi(x) \propto \exp\left(-\frac{2}{\beta^2}V(x,\nu^m)\right)dx
\end{equation}  
for some fixed parameter $\beta > 0$. 
Under some mild assumptions on $V$, the measure $\pi$ is a stationary distribution of the (overdamped) Langevin stochastic differential equation (SDE) and classical Langevin Monte Carlo \cite{Dalalyan2017} algorithms utilise discrete-time counterparts of such SDEs to provide tools for approximate sampling from $\pi$, which, however, require access to exact evaluations of $\nabla V(\cdot, \nu^m)$. On the other hand, 
SGAs take as input a noisy evaluation $\nabla V(\cdot,\nu^s)$ for some $s \in \{1, \ldots m \}$.
The simplest example of $\nabla V(\cdot,\nu^s)$ utilizes the subsampling with replacement method.
Namely, consider a sequence of i.i.d.\ uniformly distributed random variables $\tau^k_i\sim \operatorname{Unif}(\{ 1,\cdots,m \})$ for $k \geq 0$ and $1\leq i \leq s$ and define a sequence of random data batches $D_s^k := (\xi_{\tau^k_i})_{i=1}^{s}$ and corresponding random measures $\nu_s^k:=\frac{1}{s}\sum_{i=1}^{s}\delta_{\xi_{\tau^k_i}}$ for $k \geq 0$. Fix a learning rate (time-step) $h>0$. The corresponding algorithm to sample from \eqref{eq gibbs} is given by
\begin{equation}\label{eq SGLDm} \textstyle 
X_{k+1} = X_k - h \nabla_x V(X_k,\nu_s^k) + \beta \sqrt{h} Z_{k+1},
\end{equation}
where $(Z_i)_{i=1}^{\infty}$ are i.i.d.\ random variables with the standard normal distribution. 
This method in its simplest form, without mini-batching 
(i.e., when we use the exact evaluation $\nabla V(\cdot,\nu^m)$) 
is known in computational statistics as the Unadjusted Langevin Algorithm (ULA) \cite{DurmusMoulines2017}, but it has numerous more sophisticated variants and alternatives \cite{ChatterjiBartlettJordan2018, CornishDoucet2019, BrosseDurmusMoulines2018, MaChenJinFlamarionJordan2019, MajkaMijatovicSzpruch2018}. 

Numerical methods based on Euler schemes with inaccurate (randomised) drifts such as (\ref{eq SGLDm}) have recently become an object of considerable interest in both the computational statistics and the machine learning communities \cite{Raginsky2017, Gao2018, ChauRasonyi2019, DenizAkyildiz2020, ZouXuGu2019, AicherMaFotiFox2019, NemethFearnhead2019, MaChenFox2015}. In particular, the Stochastic Gradient Langevin Dynamics (SGLD) method for approximate sampling from invariant measures of Langevin SDEs has been studied e.g.\ in \cite{welling2011bayesian, TehThieryVollmer2016, VollmerZygalakisTeh2016, Dubey2016, BrosseDurmusMoulines2018, Barkhagen2018, ChauMoulines2019, Zhang2019}.
Furthermore, recall that under some mild assumptions on $V$, when $\beta \rightarrow 0$, the measure $\pi$ concentrates on the set of minima of $V$, i.e., on $\{x\in\mathbb R^d: x = \arg\inf V(\cdot,\nu^m)\} $, cf.\ \cite{hwang1980laplace}. Remarkably, no convexity of $V$ is required for this to be true, which makes SGAs good candidates for a tool for non-convex optimisation.
We would like to stress that throughout the paper, in our analysis of algorithm \eqref{eq SGLDm}, we allow for $\beta = 0$ and hence we cover also the Stochastic Gradient Descent (SGD) \cite{BachMoulines2011, Dalalyan2017user}.

Despite the great success of algorithm (\ref{eq SGLDm}) and its various extensions, relatively  little is known about how to optimally choose $s$ and whether sub-sampling (or mini-batching) is a good idea at all \cite{TrueCost}. One of the reasons is that performance analysis appears to be problem-specific as we shall demonstrate on a simple example below. It is clear that subsampling increases the variance of the estimator and induces an additional non-asymptotic bias, see e.g.\ \cite{TrueCost, BrosseDurmusMoulines2018}. Therefore it is not clear that the reduced computational cost of running the algorithm compensates for these adverse effects. On the other hand, in a big data regime it may be computationally infeasible to use all data points at every step of the gradient algorithm and hence subsampling becomes a necessity.

In the present paper we propose a solution to these challenges by constructing a novel Multi-index Antithetic Stochastic Gradient Algorithm (MASGA). 
In settings where the measure $\pi$ in \eqref{eq gibbs} is log-concave, we will rigorously demonstrate that MASGA performs on par with Monte Carlo estimators having access to unbiased samples from the target measure, even though it consists of biased samples. 
Remarkably, our numerical results in Section \ref{sectionNumerics} demonstrate that this optimal performance is achieved even in some non-log-concave settings. To our knowledge, all current state-of-the-art SGAs \cite{CornishDoucet2019, BakerFearnheadFox2019} require the user to a priori determine the mode of the target distribution and hence it is not clear how to implement them in non-log-concave settings. This problem is absent with MASGA, whose implementation is independent of the structure of the target measure. Moreover, the analysis in \cite{CornishDoucet2019} is based on the Bernstein-von Mises phenomenon, which describes the asymptotic behaviour of the target measure as $m \to \infty$, and hence their algorithm is aimed explicitly  at the big data regime, see Section 3 therein. Meanwhile, as we will discuss below, MASGA works optimally irrespectively of the size of the dataset.

\paragraph{Mean-square error analysis.} In the present paper we are studying the problem of computing 
\[ \textstyle
(f,\pi) := \int_{\mathbb R^d} f(x)\pi(dx)\,,
\]
for some $f\in L^2(\pi)$. This framework covers the tasks of approximating minima of possibly non-convex functions or the computation of normalising constants in statistical sampling. 
 To this end, the Markov chain specified by \eqref{eq SGLDm} is used to approximate $(f,\pi)$ with $\mathbb E[f(X_k)]$ for large $k > 0$. More precisely, one simulates $N > 1$ independent copies $(X^i_k)_{k=0}^{\infty}$, for $i \in \{ 1, \ldots , N \}$, of (\ref{eq SGLDm}), to compute the empirical measure $\mu_{N,k} := \frac{1}{N}\sum_{i=1}^{N} \delta_{X^i_k}$. The usual metric for measuring the performance of such algorithms is the (root) mean square error (MSE). Namely, for any $f \in L^2(\pi)$, $k \geq 1$ and $N \geq 1$, we define 
\begin{equation*}
\operatorname{MSE}(\mathcal{A}^{f,k,N}) := \left( \mathbb{E} \left| (f, \pi) -  (f,\mu_{N,k})\right|^2 \right)^{1/2} \,,
\end{equation*}
where $\mathcal{A}^{f,k,N}$ is the algorithm specified by the estimator $(f,\mu_{N,k})$. Then, for a given $\varepsilon > 0$, we look for the optimal number $k$ of steps and the optimal number $N$ of simulated paths, such that for any fixed integrable function $f$ we have $\operatorname{MSE}(\mathcal{A}^{f,k,N}) < \varepsilon$. Note that
\begin{equation}\label{eq MSE decomp}
\operatorname{MSE}(\mathcal{A}^{f,k,N}) \leq \left| (f, \pi)  - (f, \mathcal L(X_k)) \right| + \left(N^{-1}\mathbb V[f(X_k)]\right)^{1/2}\,.
\end{equation}
If $f$ is Lipschitz with a Lipschitz constant $L$, then the Kantorovich duality representation of the $L^1$-Wasserstein distance $W_1$, see \cite[Remark 6.5]{villani2009optimal}, allows us to upper bound the first term of the right hand side by $L W_1(\mathcal L (X_k), \pi)$. Hence it is possible to control the bias by using the vast literature on such Wasserstein bounds (see e.g.\ \cite{ChengJordan2018, DurmusEberle2021, MajkaMijatovicSzpruch2018,BrosseDurmusMoulines2018} and the references therein). Controlling the variance, however, is a more challenging task. Before we proceed, let us consider an example.

\paragraph{Motivational example}
In the context of Bayesian inference, one is interested in sampling from the posterior distribution $\pi$ on $\mathbb{R}^d$ given by
\begin{equation*} \textstyle
\pi(dx) \propto \pi_0(dx) \prod_{i=1}^{m} \pi(\xi_i | x)  \,,
\end{equation*}
where the measure $\pi_0(dx) = \pi_0(x)dx$ is called the prior and $(\xi_i)_{i=1}^{m}$ are i.i.d.\ data points with densities $\pi(\xi_i | x)$ for $x \in \mathbb{R}^d$. 
Note that in this example, for convenience, we assume that the data are conditionally independent given the parameters. See Section \ref{sectionM} for more general settings.
 Taking $\beta = \sqrt{2}$, the potential $V$ in \eqref{eq gibbs} becomes
\begin{equation}\label{introduction:BayesianDrift} \textstyle
 V(x,\nu^m) := - \log \pi_0(x) - m \int \log \pi(y | x) \nu^m(dy) = - \log \pi_0(x) - \sum_{i=1}^{m} \log \pi(\xi_i | x) \,.
\end{equation}
In stochastic gradient algorithms, one replaces the exact gradient $\nabla V(\cdot,\nu^m)$ in \eqref{eq SGLDm} with an approximate gradient, constructed by sampling only $s \ll m$ terms from the sum in (\ref{introduction:BayesianDrift}). This amounts to considering $V(x,\nu^s) = - \log \pi_0(x) - m \int \log \pi(y | x) \nu^s(dy) = - \log \pi_0(x) - \frac{m}{s}\sum_{i=1}^{s} \log \pi(\xi_{\tau_i^k} | x)$, where $\tau_i^k \sim \operatorname{Unif}(\{ 1, \ldots, m \})$ for $i = 1, \ldots , s$ are i.i.d.\ random variables uniformly distributed on $\{ 1, \ldots, m \}$. Note that for any $x \in \mathbb{R}^d$ the noisy gradient $\nabla_x V(x,\nu^s)$ is an unbiased estimator of $\nabla_x V(x,\nu^m)$. If we choose $\beta = \sqrt{2}$ and the time-step $h/(2m)$ in (\ref{eq SGLDm}), we arrive at the Markov chain 
\begin{equation}\label{introduction:BayesianStochasticGradientChain2}
\textstyle
X_{k+1} = X_k + \frac{1}{2} h \left( \frac{1}{m}\nabla \log \pi_0(X_k) + \frac{1}{s} \sum_{i=1}^{s} \nabla \log \pi(\xi_{\tau_i^k} | X_k)\right) + \frac{1}{\sqrt{m}}\sqrt{h} Z_{k+1} \,.
\end{equation}
Note that the term in brackets in \eqref{introduction:BayesianStochasticGradientChain2} is an unbiased estimator of $- \frac{1}{m}\nabla V$.

The main difficulty in quantifying the cost of Monte Carlo algorithms based on \eqref{introduction:BayesianStochasticGradientChain2} stems from the fact that the variance is problem-specific and depends substantially on the interplay between the parameters $m$, $s$, $h$ and $N$. Hence, one may obtain different costs (and thus different answers to the question of profitability of using mini-batching) for different models and different data regimes \cite{TrueCost}.

 In Figures \ref{fig subsampling log} and \ref{fig subsampling exp} we present a numerical experiment for a simple example of a Monte Carlo estimator $(f,\mu_{N,k})$ based on the chain (\ref{introduction:BayesianStochasticGradientChain2}) with the densities $\pi(\xi_i | x)$ specified by a Bayesian logistic regression model, cf.\ Section \ref{sectionNumerics} for details. We take $m = 512$, $h = 10^{-2}$ and we simulate up to $t = 2$, hence we have $k = 200$ iterations. On the left-hand side of both figures we can see the estimated MSE for different numbers of paths $N$ and different numbers of samples $s \leq m$, for two different functions $f$. On the right hand side we can see how the variance changes with $s$. Evidently, subsampling/mini-batching works better for $f(x) = \log|x|$ than for $f(x) = \exp(x)$, since in the former case it allows us to obtain a small MSE by using just two samples, while in the latter the minimal reasonable number of samples seems to be $16$. However, even in this simple example we see that the optimal choice of $s$ and $N$ is far from obvious and very much problem-specific. For an additional discussion on this subject, see Appendix \ref{section:Examples}.

\begin{figure}[h]
	\centering
	\includegraphics[width=0.4\textwidth]{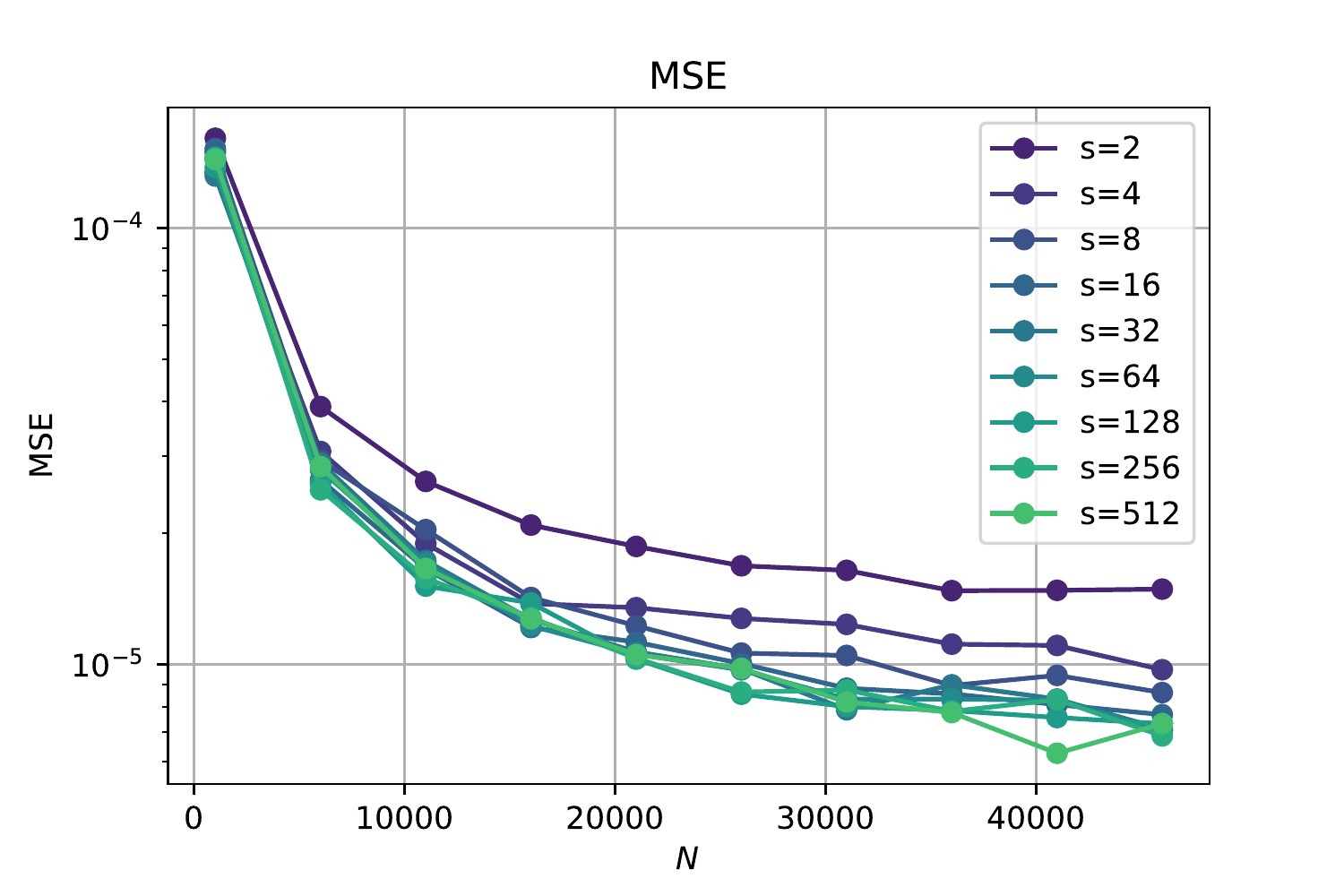}
	\includegraphics[width=0.4\textwidth]{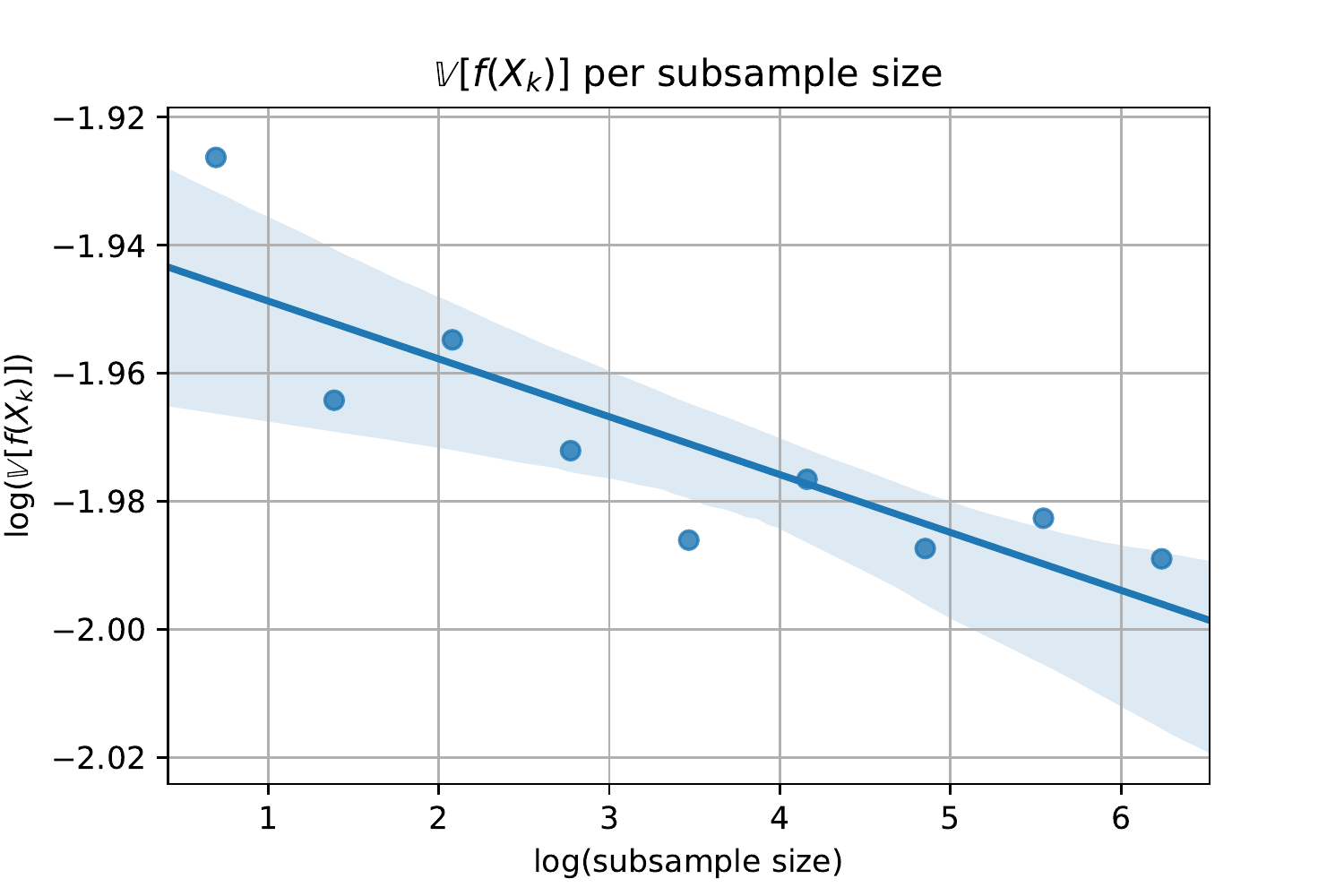}
	\caption{MSE and Variance $\mathbb V[f(X_k)]$ with $f(x) = \log|x|$}
	\label{fig subsampling log}
\end{figure}

\begin{figure}[h]
	\centering
	\includegraphics[width=0.4\textwidth]{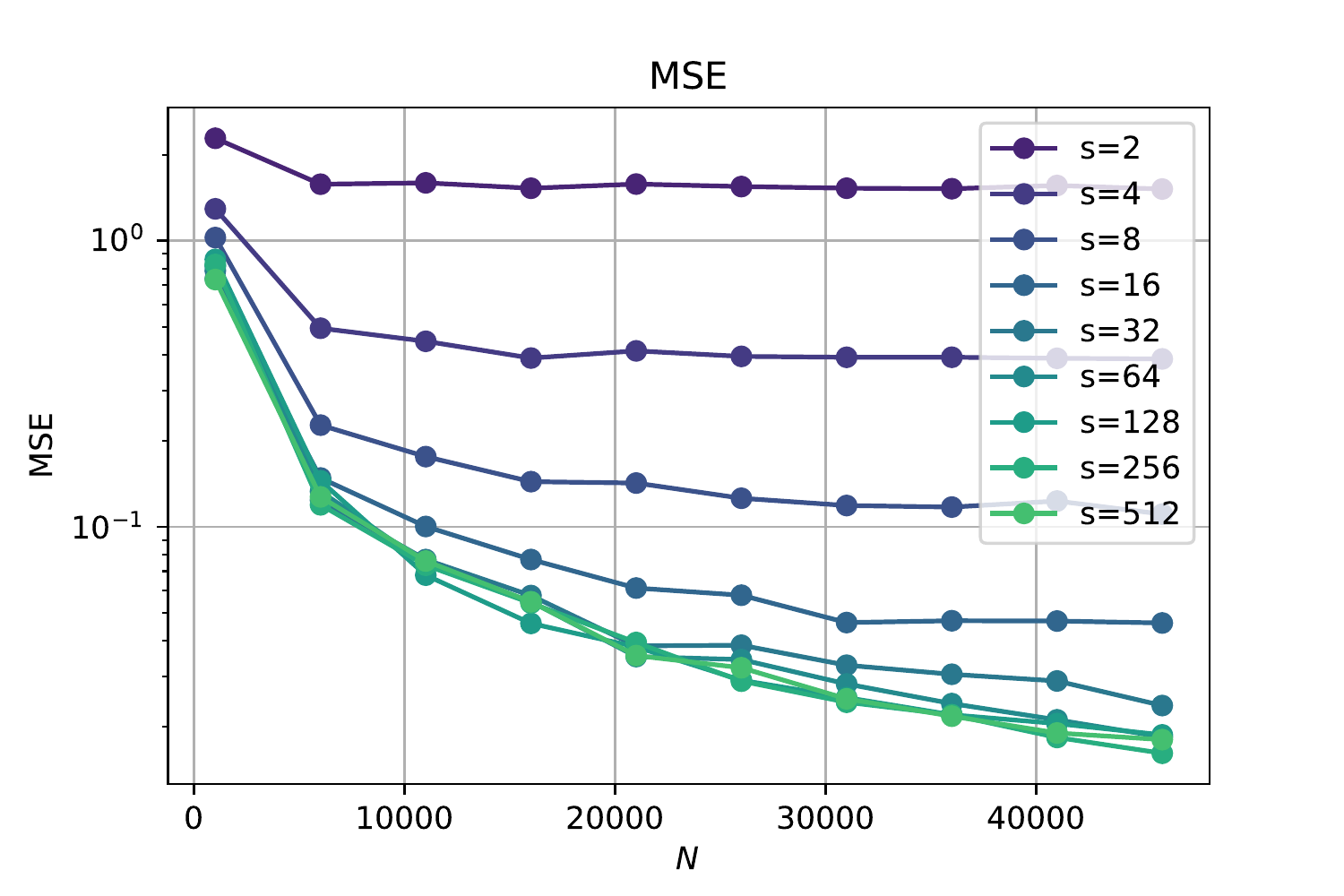}
	\includegraphics[width=0.4\textwidth]{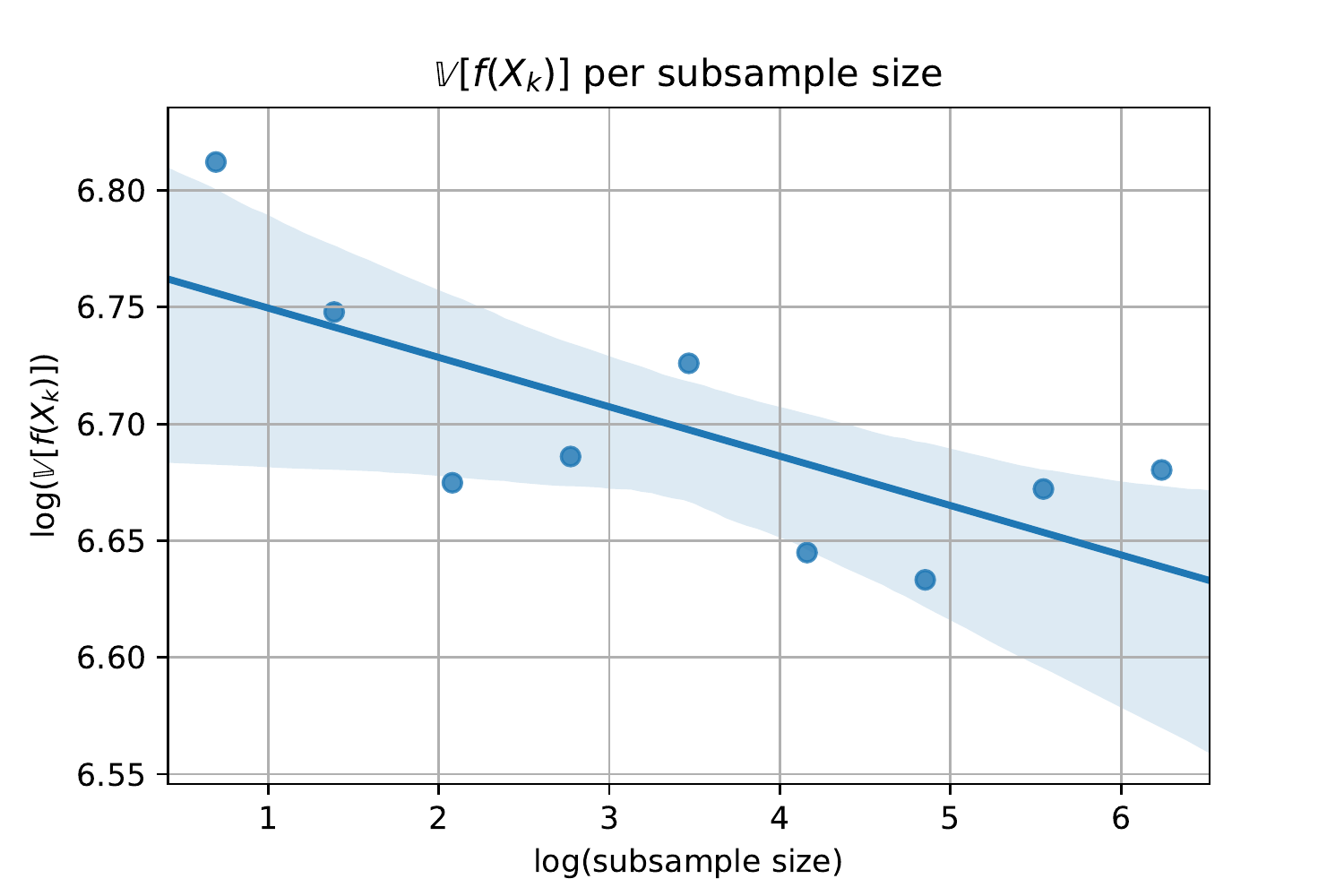}
	\caption{MSE and Variance $\mathbb V[f(X_k)]$ with $f(x) = \exp(x)$}
	\label{fig subsampling exp}
\end{figure}

It has been observed in \cite{TrueCost}, that in some specific regimes there is no benefit of employing the subsampling scheme. The authors of \cite{TrueCost} also observed that a subsampling scheme utilizing control variates can exhibit improved performance, but again only in some specific regimes (see also \cite{BrosseDurmusMoulines2018, BakerFearnheadFox2019} for related ideas). Moreover, in order to implement their scheme one has to know the mode of the sampled distribution in advance and hence it is not 
clear how to adapt it to non-convex settings.
 
The fact that the analysis of the variance of SGLD (and hence of the computational cost of the algorithm $\mathcal{A}^{f,k,N}$) becomes cumbersome even in the simplest possible examples, clearly demonstrates the need for developing a different algorithm, for which the benefits of subsampling could be proven in a rigorous way for a reasonably large class of models. To this end, we turn towards the Multi-level Monte Carlo (MLMC) technique.

\section{Main result}\label{section:mainResult}

In order to approximate the measure $\pi$, we consider a family of (possibly random) measures $(\pi^{\bm \ell})_{\bm\ell \in \mathbb N^r}$, 
where $\mathbb{N}^r$ for $r \geq 1$ is the set of multi-indices $\bm \ell = (\ell_1,\cdots,\ell_r)$, where each $\ell_i \geq 0$ corresponds to a different type of approximation. In this work we focus on the case $r=2$, with $\ell_1$ dictating the number of subsamples at each step of the algorithm and $\ell_2$ the time discretisation error. While for a fixed $\bm \ell$ the samples are biased, we work with methods that are asymptotically unbiased in the sense that
\begin{equation}\label{eq weakerror} \textstyle
\lim_{\bm \ell \rightarrow \infty} \mathbb{E} ( f , \pi^{\bm \ell}) = (f, \pi) \,,
\end{equation}
where $\bm \ell \rightarrow \infty$ means that $\min_{i\in{1,\ldots,r}}\ell_i\rightarrow \infty$. Note that we need to take the expectation in \eqref{eq weakerror} since the measures $\pi^{\bm \ell}$ can be random. We also remark that as the coordinates of $\bm \ell$ increase and the bias decreases, the corresponding computational cost of MLMC increases. One therefore faces an optimisation problem and tries to obtain the minimal computational cost for a prescribed accuracy (or, equivalently, to minimise the bias for a fixed computational budget).   

It turns out, perhaps surprisingly, that a Multi-level Monte Carlo estimator that consists of a hierarchy of biased approximations can achieve computational efficiency of vanilla Monte Carlo built from directly accessible unbiased samples \cite{MG08,Giles2015Acta}. In order to explain this approach, let us define backward difference operators
\[ \textstyle
\Delta_{s}\pi^{\bm \ell} := \pi^{\bm \ell} - \pi^{\bm \ell - \bm e_s}\,, \quad 
\bm \Delta \pi^{\bm \ell}:= \left(\prod_{s=1}^{r}\Delta_{s} \right)\pi^{\bm \ell}\,,
\]
where $\bm e_s$ is the unit vector in the direction $s \in\{1\,\ldots,r\}$ and $\prod_{s=1}^{r}\Delta_{s}$ denotes the concatenation of operators. The core idea of MLMC is to observe that thanks to \eqref{eq weakerror},
\begin{equation}\label{eq telescop} \textstyle
(f,\pi) = \mathbb{E} \sum_{\bm \ell \in \mathbb{N}^r}	 (f, \bm \Delta \pi^{\bm \ell})\,, 
\end{equation}
where we set $\pi^{\bm 0 - \bm e_s} := 0$ for all unit vectors $\bm e_s$, with $\bm 0 = (0, \ldots, 0)$. The original MLMC \cite{Giles2015Acta} has been developed for $r=1$, and the extension to an arbitrary $r$ (named Multi-index Monte Carlo, or MIMC) has been developed in \cite{Haji-Ali2016}. In MIMC we approximate each term $\pi^{\bm \ell}$ on the right-hand side of \eqref{eq telescop} using mutually independent, unbiased Monte Carlo estimators $\pi^{\bm \ell, N_{\bm \ell}}$ with $N_{\bm \ell} \geq 1$ samples each, and we choose a finite set of multi-indices $\mathcal L \subset \mathbb{N}^r$ to define
\begin{equation} \label{eq MLMC}  \textstyle
\cA^{\mathcal L}(f) := \sum_{\bm \ell \in \mathcal L}	 (f, \bm \Delta \pi^{\bm \ell,N_{\bm \ell}})\,.
\end{equation}
Clearly, $\mathbb E(f, \bm \Delta \pi^{\bm \ell,N_{\bm \ell}})  = \mathbb{E}(f, \bm \Delta \pi^{\bm \ell})$ and hence $\cA^{\mathcal L}(f)$ is an asymptotically unbiased estimator of $(f,\pi)$ when $\mathcal{L}$ increases to $\mathbb{N}^r$. Moreover, we note that due to the independence of $\pi^{\bm \ell, N_{\bm \ell}}$ across levels, the variance of the MIMC estimator satisfies $\mathbb V [\cA^{\mathcal L}(f)]=\sum_{\bm \ell \in \mathcal L} \mathbb{V}[(f, \bm \Delta \pi^{\bm \ell,N_{\bm \ell}})]$.

In this work we develop an antithetic extension of the MIMC algorithm. To this end we define pairs $(\pi^{+,\bm \ell},\pi^{-,\bm \ell})$ of copies of $\pi^{\bm \ell}$ in the sense that $\mathbb E[(f,\pi^{\bm \ell})]=\mathbb E[(f,\pi^{+, \bm \ell})]=\mathbb E[(f,\pi^{-, \bm \ell})]$ for all Lipschitz functions $f$, and 
\begin{equation}\label{eq antithetic operator} \textstyle
\Delta^{A}_{s}\pi^{\bm \ell} := \pi^{\bm \ell} - \frac{1}{2}(\pi^{+,\bm \ell - \bm e_s}+\pi^{-,\bm \ell - \bm e_s})\,, \quad 
\bm \Delta^{A} \pi^{\bm \ell}:= \left(\prod_{s=1}^{r}\Delta^{A}_{s} \right)\pi^{\bm \ell}\,.
\end{equation}
The corresponding Antithetic MIMC estimator is given by
\begin{equation} \label{eq AMLMC} \textstyle
\cA^{A,\mathcal L}(f) := \sum_{\bm \ell \in \mathcal L}	 (f, \bm \Delta^{A} \pi^{\bm \ell,N_{\bm \ell}})\,.
\end{equation}
As we will see in the sequel, the reduction of the variance of $\cA^{A,\mathcal L}(f)$ in comparison to $\cA^{\mathcal L}(f)$ can be achieved as a consequence of an appropriately chosen coupling between $\pi^{\bm \ell}$, $\pi^{+,\bm \ell - \bm e_s}$ and $\pi^{-,\bm \ell - \bm e_s}$ for each $\bm \ell \in \mathcal{L}$ and $s \in \{ 1, \ldots , r \}$.
Using the notation introduced in \eqref{eq SGLDm}, we will apply (\ref{eq AMLMC}) to the specific case of $\bm \ell = (\ell_1, \ell_2)$ and $\pi^{\bm \ell}$ given as the law of $X_k^{\ell_1, \ell_2}$ for some fixed $k \geq 1$, where 
\begin{equation}\label{eq AMLMC special}
X_{k+1}^{\ell_1, \ell_2} = X_k^{\ell_1, \ell_2} - h_{\ell_2} \nabla_x V(X_k^{\ell_1, \ell_2}, \nu^{s_{\ell_1}}) + \beta \sqrt{h_{\ell_2}} Z_{k+1} \,.
\end{equation}
In this setting, we call the algorithm $\cA^{A,\mathcal L}(f)$ specified by \eqref{eq AMLMC} the Multi-index Antithetic Stochastic Gradient Algorithm (MASGA). Note that for a fixed $t > 0$ such that $t = kh_{\ell_2}$ for some $k \geq 1$, the chain \eqref{eq AMLMC special} can be interpreted as a discrete-time approximation, both in time with parameter $h$ but also in data with parameter $s$, of the SDE
\begin{equation}\label{eq SDEm timechanged}
dY_t = - \nabla_x V(Y_t, \nu^m) dt + \beta dW_t \,,
\end{equation}
where $(W_t)_{t \geq 0}$ is the standard Brownian motion in $\mathbb{R}^d$. 
Then, MASGA with $\pi^{\bm \ell}$ given as the law of $X_k^{\ell_1, \ell_2}$, for any finite subset $\mathcal{L} \subset \mathbb{N}^2$ provides a biased estimator of $(f,\pi_t)$ (where $\pi_t := \mathcal{L}(Y_t)$ with $Y_t$ given by (\ref{eq SDEm timechanged})), where the bias stems from the use of a finite number of levels. However, since $\pi$ given by (\ref{eq gibbs}) is the limiting stationary distribution of \eqref{eq SDEm timechanged}, $\cA^{A,\mathcal L}(f)$ can be also interpreted as a biased estimator of $(f,\pi)$, with an additional bias coming from the difference between $(f,\pi)$ and $(f,\pi_t)$ due to the simulation up to a finite time.

Note that the construction of MASGA does not require any knowledge of the structure of the target measure, such as the location of its modes \cite{BakerFearnheadFox2019, BrosseDurmusMoulines2018, TrueCost}, or any properties of the potential $V$. 
However, in order to formulate the main result of this paper, we will use the following set of assumptions.

\begin{assumption}\label{as main}
	Let the potential $V : \mathbb{R}^d \times \mathcal{P}(\mathbb{R}^d) \to \mathbb{R}$ be of the form $V(x,\nu^m): = v_0(x) + \int_{\mathbb{R}^k} v(x,y) \nu^m(dy) = v_0(x) + \frac{1}{m}\sum_{i=1}^{m} v(x, \xi_i)$, where $(\xi_i)_{i=1}^m \subset \mathbb{R}^k$ is the data and the functions $v_0 : \mathbb{R}^d \to \mathbb{R}$ and $v : \mathbb{R}^d \times \mathbb{R}^k \to \mathbb{R}$ are such that
	\begin{enumerate}[i)]
		\item For all $\xi \in \mathbb{R}^k$ we have $\nabla v(\cdot,\xi)$, $\nabla v_0(\cdot) \in C^2_b(\mathbb{R}^d;\mathbb{R}^d)$, i.e., the gradients of $v$ and $v_0$ are twice continuously differentiable with all partial derivatives of the first and second order bounded (but the gradients themselves are not necessarily bounded). 
		\item There exists a constant $C > 0$ such that for all $\xi \in \mathbb{R}^k$ and for all $x \in \mathbb{R}^d$ we have
		\begin{equation*}
		|\nabla_x v(x,\xi)|^4 \leq C(1+|x|^4) \qquad \text{ and } \qquad |\nabla v_0(x)|^4 \leq C(1+|x|^4) \,.
		\end{equation*}
		\item There exists a constant $K > 0$ such that for all $\xi \in \mathbb{R}^k$ and for all $x$, $y \in \mathbb{R}^d$ we have
		\begin{equation*}
		\langle  x - y , \nabla_x v(x,\xi) - \nabla_y v(y,\xi) \rangle  \geq K|x - y|^2 \qquad \text{ and } \qquad \langle x - y , \nabla v_0(x) - \nabla v_0(y) \rangle \geq K|x - y|^2 \,.
		\end{equation*}
	\end{enumerate}
\end{assumption}

Note that the first condition above in particular implies that the gradient $\nabla V(\cdot,\nu^m)$ of the potential is globally Lipschitz and the third condition implies that $\pi$ given via (\ref{eq gibbs}) is log-concave. Note also that the Bayesian inference example given in (\ref{introduction:BayesianStochasticGradientChain2}) satisfies Assumptions \ref{as main} if the functions $x \mapsto - \nabla \log \pi(\xi | x)$ satisfy all the respective regularity conditions. We remark that we formulate our main result in this section only for the specific form of $V$ given above just for convenience. Our result holds also for a much more general class of potentials, but the assumptions for the general case are more cumbersome to formulate and hence we postpone their presentation to Section \ref{sectionM}. Moreover, we stress that assuming convexity of $V$ is not necessary for the construction of our algorithm and it is a choice we made solely to simplify the proofs. By combining our approach with the coupling techniques from \cite{MajkaMijatovicSzpruch2018}, it should be possible to extend our results to the non-convex case. This, however, falls beyond the scope of the present paper and is left for future work.
We have the following result.

\begin{theorem}\label{theMainTheorem}
	Let Assumptions \ref{as main} hold.
	Let $\cA^{A,\mathcal L}$ be the MASGA estimator defined in \eqref{eq AMLMC} with $\pi^{\bm \ell}$ given as the law of $X_k^{\ell_1, \ell_2}$, defined via (\ref{eq AMLMC special}) for some fixed $k \geq 1$. 
	Then there exists a set of indices $\mathcal{L}$ and a sequence $(N_{\bm \ell})_{\bm \ell \in \mathcal{L}}$ such that for any $\varepsilon > 0$ and for any Lipschitz function $f$, the estimator $\cA^{A,\mathcal L}(f)$ requires the computational cost of order $\varepsilon^{-2}$ to achieve mean square error $\varepsilon$ for approximating $(f, \pi_t)$ with $t = kh_{\ell_2}$, where $\pi_t := \mathcal{L}(Y_t)$ with $Y_t$ given by (\ref{eq SDEm timechanged}).
\end{theorem}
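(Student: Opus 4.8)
The plan is to cast MASGA within the abstract Multi-index Monte Carlo (MIMC) complexity framework of \cite{Haji-Ali2016} and to verify, for the antithetic differences $\bm\Delta^A\pi^{\bm\ell}$, the three sets of rate estimates that the framework requires: a weak-error (bias) decay, a variance decay, and a cost bound, each indexed by the two directions $s\in\{1,2\}$ (subsampling and time discretisation). Concretely, writing $s_{\ell_1}\sim 2^{\ell_1}$ and $h_{\ell_2}\sim 2^{-\ell_2}$, I would produce exponents $(\alpha_s,\beta_s,\gamma_s)_{s=1,2}$ such that
\begin{equation*}
|\mathbb E(f,\bm\Delta^A\pi^{\bm\ell})|\lesssim \prod_{s=1}^2 2^{-\alpha_s\ell_s},\qquad \mathbb V[(f,\bm\Delta^A\pi^{\bm\ell})]\lesssim \prod_{s=1}^2 2^{-\beta_s\ell_s},\qquad C_{\bm\ell}\lesssim \prod_{s=1}^2 2^{\gamma_s\ell_s},
\end{equation*}
where $C_{\bm\ell}$ is the cost of one sample of $\bm\Delta^A\pi^{\bm\ell}$. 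The cost bound is immediate with $\gamma_1=\gamma_2=1$, since a single path of \eqref{eq AMLMC special} at level $\bm\ell$ requires $k\sim t/h_{\ell_2}\sim 2^{\ell_2}$ steps, each costing $s_{\ell_1}\sim 2^{\ell_1}$ gradient evaluations, and the antithetic construction only multiplies this by a fixed constant. Once the exponents are in hand, the MIMC optimisation over the index set $\mathcal L$ and the per-level sample sizes $N_{\bm\ell}$ yields the canonical complexity: provided $\beta_s>\gamma_s$ in both directions, the total cost to reach mean-square error $\varepsilon$ is of order $\varepsilon^{-2}$.

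The heart of the argument is therefore the variance estimate, and the role of the antithetic operator $\Delta^A_s$ is precisely to push each $\beta_s$ above the cost exponent $\gamma_s=1$. Without antithetic coupling, the Euler direction would give only $\beta_2=1$ (strong order $1/2$) and the subsampling direction only $\beta_1=1$ (the subsampled gradient having variance $\sim 1/s_{\ell_1}$), leaving us at the borderline $\beta_s=\gamma_s$, i.e.\ at a suboptimal, log-penalised rate. I would instead couple the three laws appearing in \eqref{eq antithetic operator} as follows. In the time direction, following the antithetic idea of Giles--Szpruch, I drive the fine-level chain and its two coarse partners $\pi^{\pm,\bm\ell-\bm e_2}$ with the same Brownian path but swap the order of the two fine Brownian increments within each coarse step to build $\pi^{+}$ and $\pi^{-}$; the leading $\mathcal O(\sqrt{h})$ fluctuation then cancels in the average $\tfrac12(\pi^{+}+\pi^{-})$, upgrading the variance to $\beta_2=2$. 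In the subsampling direction, I would split the $s_{\ell_1}$ subsamples used at the fine level into two disjoint halves of size $s_{\ell_1-1}$, use each half to form $\pi^{\pm,\bm\ell-\bm e_1}$, and exploit that the batch gradient is affine in the empirical measure (so the fine drift equals the average of the two coarse drifts) together with the i.i.d.\ structure of the halves, so that the average again cancels the leading $\mathcal O(1/\sqrt{s})$ term and gives $\beta_1=2$. For the genuinely mixed second difference $\bm\Delta^A\pi^{\bm\ell}$ the two cancellations must be combined, and one establishes the product-form decay $\prod_s 2^{-\beta_s\ell_s}$ through a Taylor/Lipschitz expansion of the one-step maps around the common coarse state, controlling the remainder via the regularity in Assumption~\ref{as main}(i)--(ii).

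A crucial point, since the target is $(f,\pi_t)$ and we want constants independent of the number of steps $k$, is that all of these moment and variance bounds must hold \emph{uniformly in time}. This is exactly where the strong monotonicity in Assumption~\ref{as main}(iii) enters: it makes the one-step map of \eqref{eq AMLMC special} a contraction in $L^2$ for $h$ small enough, so that synchronously-coupled copies of the chain, and the differences of coupled copies that appear in $\bm\Delta^A\pi^{\bm\ell}$, contract geometrically and their second moments remain bounded uniformly in $k$. Combining this contraction with the standard polynomial moment bounds (propagated along the chain using (ii)) closes the estimates. The main obstacle I anticipate is precisely this mixed, uniform-in-time variance bound: establishing the simultaneous antithetic cancellation in both directions for $\bm\Delta^A\pi^{\bm\ell}$ while keeping the constants independent of $k$ requires careful bookkeeping of the coupled increments and a contraction argument robust to the two distinct perturbations—subsampling noise and time-discretisation error—acting at once.

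Finally, the weak-error rates $\alpha_s$ follow from standard one-step-error analysis: weak order one in the time direction, and a bias of order $h/s_{\ell_1}$ in the subsampling direction arising from the variance of the noisy gradient interacting with the nonlinearity, so that $\alpha_1,\alpha_2>0$. Since under the antithetic coupling we obtain $\beta_s=2>1=\gamma_s$ for $s=1,2$, the hypotheses of the MIMC complexity theorem are met with strict inequality, and invoking it with the optimal choice of $\mathcal L$ and $(N_{\bm\ell})_{\bm\ell\in\mathcal L}$ yields mean-square error $\varepsilon$ at computational cost of order $\varepsilon^{-2}$, as claimed.
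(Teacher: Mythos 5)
Your overall architecture matches the paper's: reduce to the MIMC complexity theorem of Haji-Ali et al.\ (Theorem \ref{th complexity}), take $\bm\gamma=(1,1)$ from the cost count, prove a product-form variance bound $\mathbb{V}[(f,\bm\Delta^A\pi^{\bm\ell})]\lesssim 2^{-2\ell_1-2\ell_2}$ (the paper's Lemma \ref{thm:crucialLemma}), deduce the bias exponents, and use the global convexity in Assumption \ref{as main}(iii) for uniform-in-time contraction. Your subsampling-direction coupling (split the $s_{\ell_1}$ subsamples into two disjoint halves, exploit $b(x,U^{s_{\ell_1}})=\tfrac12 b(x,U^{s_{\ell_1},1})+\tfrac12 b(x,U^{s_{\ell_1},2})$) is exactly the paper's Assumption \ref{as splittingEstimator} and construction (\ref{sect3AMLMCviasubsampling}).

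However, there is a genuine gap in your time-direction coupling. You propose the classical Giles--Szpruch device: build $\pi^{\pm,\bm\ell-\bm e_2}$ by \emph{swapping the two fine Brownian increments} within each coarse step. In the present setting this is a no-op. The scheme (\ref{eq AMLMC special}) has \emph{additive} noise (constant $\beta$), so a coarse Euler step sees the two fine increments only through their sum $\sqrt{h}(Z_{k+1}+Z_{k+2})$; swapping $Z_{k+1}$ and $Z_{k+2}$ produces two \emph{identical} coarse chains, the antithetic average degenerates to the plain coarse chain, and no cancellation occurs. (The swap is effective in Giles--Szpruch because Milstein-type schemes contain antisymmetric cross terms $\Delta W_{k+1}\Delta W_{k+2}$; no such terms exist here.) More importantly, even for multiplicative noise the swap would not touch the actual obstruction: as the paper's Example \ref{exampleAMLMC} shows, the $\mathcal{O}(h)$ variance in the time direction comes from the \emph{drift-estimator randomness}, namely the term $h^2\,\mathbb{E}\bigl|b(\cdot,U^{f}_k)+b(\cdot,U^{f}_{k+1})-2b(\cdot,U^{c}_k)\bigr|^2$, which is independent of how the Brownian increments are coupled. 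With your coupling one is left with $\beta_2=1=\gamma_2$, so the hypothesis $\max_k(\gamma_k-\beta_k)/\alpha_k<0$ of the complexity theorem fails and the $\varepsilon^{-2}$ conclusion does not follow. The correct coupling, which the paper uses, is antithetic in the drift-estimator variables rather than in the noise: both coarse partners are driven by the synchronously coupled increment $\sqrt{h}(Z_{k+1}+Z_{k+2})$, but $X^{c-}$ reuses the fine level's $U^{f}_k$ and $X^{c+}$ reuses $U^{f}_{k+1}$, so that the averaged coarse drift $\tfrac12\bigl(b(\cdot,U^f_k)+b(\cdot,U^f_{k+1})\bigr)$ matches the fine chain's average drift and the $\mathcal{O}(h)$ term cancels. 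With that replacement (and the mixed, nested version (\ref{sect3nineterms}) for the genuinely two-dimensional difference), your argument aligns with the paper's and goes through.
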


 The proof of Theorem \ref{theMainTheorem} and an explicit description of the sequence $(N_{\bm \ell})_{\bm \ell \in \mathcal{L}}$, as well as all the details of the relaxed assumptions that can be imposed on $V$, will be given in Section \ref{sectionM}. 
 Note that requiring computational cost of order $\varepsilon^{-2}$ to achieve $\operatorname{MSE} < \varepsilon$ is the best performance that one can expect from Monte Carlo methods, cf.\ \cite{Giles2015Acta}.

The two quantities that feature in the optimization problem formulated in Theorem \ref{theMainTheorem} are the MSE and the computational cost (i.e., we want to optimize the cost given the constraint $\operatorname{MSE} < \varepsilon$ for a fixed $\varepsilon > 0$). Note that the cost is explicitly defined to be
\begin{equation}\label{eq cost def} \textstyle
\operatorname{cost}(\cA^{A,\mathcal L}(f)) := \sum_{\bm \ell \in \mathcal{L}} N_{\ell}\,C_{\bm \ell} \,, \quad C_{\bm \ell} :=  t h_{\ell_2}^{-1}  s_{\ell_1} \,,
\end{equation}
where $C_{\bm \ell}$ is the cost of each path at the level $\bm \ell$, i.e., the product of $k = t h_{\ell_2}^{-1}$ steps and $s_{\ell_1}$ subsamples for a given level $\bm \ell = (\ell_1, \ell_2)$. On the other hand, from our discussion on MSE (\ref{eq MSE decomp}) it is evident that in order to control $\operatorname{MSE}(\cA^{A,\mathcal L}(f))$, it is crucial to find upper bounds on the variance $\mathbb{V}[(f, \bm \Delta^{A} \pi^{\bm \ell})]$ and the bias of the estimator at each level $\bm \ell$. In our proof we will in fact rely on the complexity analysis from \cite{Haji-Ali2016} (see also \cite{Giles2015Acta} and Theorem \ref{th complexity} below for more details), which is concerned exactly with such optimization problems.

For convenience, from now on we will assume that in our MASGA estimator (\ref{eq AMLMC}), both the number of subsamples and the discretisation parameter are rescaled by two when moving between levels. More precisely, for a fixed $s_0 \in \mathbb{N}_+$ and $h_0 > 0$, we assume that $s_{\ell_1} = 2^{\ell_1}s_0$ and $h_{\ell_2} = 2^{-\ell_2}h_0$ for all $\bm \ell = (\ell_1, \ell_2) \in \mathbb{N}^2$. In this setting, the complexity analysis from \cite{Haji-Ali2016} tells us, roughly speaking (with more details in Section \ref{sectionM} below) that in order to obtain $\operatorname{MSE}(\cA^{A,\mathcal L}(f)) < \varepsilon$ with $\operatorname{cost}(\cA^{A,\mathcal L}(f)) < \varepsilon^{-2}$, we need to have for each $\bm \ell \in \mathbb{N}^2$,
\begin{equation}\label{eq crucialCondition} \textstyle
\mathbb{E}[(f, \bm \Delta^{A} \pi^{\bm \ell})^2] \lesssim 2^{- \langle \bm \beta , \bm \ell \rangle} \qquad \text{ and } \qquad C_{\bm \ell} \lesssim 2^{\langle \bm \gamma , \bm \ell \rangle} \,,
\end{equation} 
where $\bm \beta = (\beta_1, \beta_2)$ and $\bm \gamma = (\gamma_1, \gamma_2) \in \mathbb{R}^2$ are such that $\gamma_1 < \beta_1$ and $\gamma_2 < \beta_2$. Note that the bound on $\mathbb{E}[(f, \bm \Delta^{A} \pi^{\bm \ell})^2]$ trivially implies a bound on $\mathbb{V}[(f, \bm \Delta^{A} \pi^{\bm \ell})]$ of the same order (i.e., with the same $\bm \beta$), as well as the bound $\mathbb{E}[(f, \bm \Delta^{A} \pi^{\bm \ell})] \lesssim 2^{- \langle \bm \alpha , \bm \ell \rangle}$ with $\bm \alpha = \bm \beta /2$, which turns out to be crucial in the complexity analysis from \cite{Haji-Ali2016} and is the reason why it suffices to verify (\ref{eq crucialCondition}), cf.\ also Theorem 2 in \cite{Giles2015Acta} and Theorem \ref{th complexity} below. Since, straight from the definition of $C_{\bm \ell}$, it is clear that in our setting we have $\bm \gamma = (1,1)$ (recall that $C_{\bm \ell} \lesssim s_{\ell_1} h_{\ell_2}^{-1} \lesssim 2^{\ell_1 + \ell_2}$), we can infer that in order to prove Theorem \ref{theMainTheorem} all we have to do is to find an upper bound on $\mathbb{E}[(f, \bm \Delta^{A} \pi^{\bm \ell})^2]$ proportional to $2^{- \langle \bm \beta , \bm \ell \rangle}$ with $\bm \beta = (\beta_1, \beta_2)$ such that $\beta_1 > 1$ and $\beta_2 > 1$. In the proof of Theorem \ref{theMainTheorem} we will in fact obtain $\bm \beta = (2,2)$. However, the crucial difficulty in our argument will be to ensure that our upper bound is indeed of the product form, i.e., that we obtain $\mathbb{E}[(f, \bm \Delta^{A} \pi^{\bm \ell})^2] \lesssim s_{\ell_1}^{-2} h_{\ell_2}^2 \lesssim 2^{-2\ell_1 - 2 \ell_2}$.
 
\paragraph{Further extensions}

	The assertion of Theorem \ref{theMainTheorem} states that $\cA^{A,\mathcal L}(f)$, interpreted as an estimator of $(f, \pi_t)$, requires computational cost of order $\varepsilon^{-2}$ to achieve mean square error $\varepsilon$. Since the difference between $(f, \pi_t)$ and $(f, \pi)$ is of order $\mathcal{O}(e^{-\lambda t})$ for some $\lambda > 0$ (cf.\ the discussion in Appendix \ref{section:Examples}), this means that $\cA^{A,\mathcal L}(f)$ interpreted as an estimator of $(f, \pi)$, requires computational cost of order $\varepsilon^{-2}\log(\varepsilon^{-1})$ to achieve mean square error $\varepsilon$. However, $\cA^{A,\mathcal L}(f)$ could be further modified in order to remove the log term, by employing the MLMC in terminal time technique introduced in \cite{MajkaSzpruchVollmerZygalakisGiles2019}, cf.\ Section 2.3 and Remark 3.8 therein. This would involve taking $r = 3$ in (\ref{eq AMLMC}) and modifying the definition as
	\begin{equation}\label{eq AMLMC terminaltime} \textstyle
	\cA^{A,\mathcal L}(f) := \sum_{\bm \ell \in \mathcal L}	 (f, \Delta^{A}_1 \Delta^{A}_2 \Delta_3 \pi^{\bm \ell,N_{\bm \ell}})\,,
	\end{equation}
	i.e., we would take $\pi^{\bm \ell}$ with $\bm \ell = (\ell_1, \ell_2, \ell_3)$ to be the law of $X_{k_{\ell_3}}^{\ell_1, \ell_2}$ given by (\ref{eq AMLMC special}), hence we would introduce a sequence of terminal times $t := k_{\ell_3} h_{\ell_2}$ for the chain (\ref{eq AMLMC special}), changing at each level. However, in the definition (\ref{eq AMLMC terminaltime}) of $\cA^{A,\mathcal L}(f)$ we would use the antithetic difference operators $\Delta^{A}$ only with respect to the subsampling level parameter $\ell_1$ and the discretisation level parameter $\ell_2$, while applying the plain difference operator $\Delta$ to the terminal time level parameter $\ell_3$. The details of how to construct the sequence of terminal times $t := k_{\ell_3} h_{\ell_2}$ can be found in \cite{MajkaSzpruchVollmerZygalakisGiles2019}. We skip this modification in the present paper in the attempt to try to keep the notation as simple as possible.

In the setting where the computational complexity is $\varepsilon^{-2}$, (as it is for MASGA), it is possible to easily modify the biased estimator $\cA^{A,\mathcal L}(f)$ to obtain its unbiased counterpart. Indeed, let $\bm M = (M_1, \ldots, M_r)$ be a random variable on $\mathbb N^r$, independent of  $(\bm \Delta^{A} \pi^{\bm \ell,N_{\bm \ell}})_{\bm \ell \in \mathcal{L}}$. Define $\mathcal L^{\bm M} := \{\ell \in \mathbb{N}^r : \ell_1 \leq M_1, \cdots, \ell_r \leq M_r\} $ and 
\begin{equation*} \textstyle
\cA^{UA,\mathcal L^{\bm M}}(f) := \sum_{\bm \ell \in \mathcal L^{\bm M}}	 \frac{(f, \bm \Delta^{A} \pi^{\bm \ell,N_{\bm \ell}})}{\mathbb P(\bm M \geq \bm \ell)} =  \sum_{\bm \ell \in \mathbb{N}^r}	  1_{\{\bm M \geq \bm \ell\} }\frac{(f, \bm \Delta^{A} \pi^{\bm \ell,N_{\bm \ell}})}{\mathbb P(\bm M \geq \bm \ell)}\,,
\end{equation*}		
where $\bm M \geq \bm \ell$ is understood component-wise. One can see that $\cA^{UA,\mathcal L^{\bm M}}(f)$ is then an unbiased estimator
		of $(f,\pi)$. Indeed,
\[ \textstyle
\mathbb E\big[\cA^{UA,\mathcal L^{\bm M}}(f) \big]
= \mathbb E\left[  \sum_{\bm \ell \in \mathbb{N}^r}	 (f, \bm \Delta^{A} \pi^{\bm \ell,N_{\bm \ell}}) \right] = (f,\pi)\,,
\]		
due to (\ref{eq telescop}). It turns out that in order for the variance of  $\cA^{UA,\mathcal L^{\bm M}}(f)$ to be finite, we need to be in the regime where the computational complexity of the original estimator is $\varepsilon^{-2}$.  We refer the reader to  \cite{Rhee2015,crisan2018unbiased} for more details and recipes for constructing $\bm M$. 
The methods from \cite{Rhee2015} have recently been extended to more general classes of MCMC algorithms in \cite{Jacob2020}, which contains further discussion of the benefits and costs of debiasing.

\paragraph{Literature review} The idea of employing MLMC apparatus to improve efficiency of stochastic gradient algorithms has been studied before. In \cite{MajkaSzpruchVollmerZygalakisGiles2019} we introduced a Multi-level Monte Carlo (MLMC) method for SGLD in the global convexity setting, based on a number of decreasing discretisation levels. We proved that under certain assumptions on the variance of the estimator of the drift, this technique can indeed improve the overall performance of the Monte Carlo method with stochastic gradients. In \cite{MajkaMijatovicSzpruch2018} we extended our approach to cover the non-convex setting, allowing for sampling from probability measures that satisfy a weak log-concavity at infinity condition.  However, the computational complexity of such algorithms is sub-optimal. As we will observe in Section \ref{sectionNumerics} with numerical experiments, the crucial insight of the present paper (and a novelty compared to \cite{MajkaSzpruchVollmerZygalakisGiles2019, MajkaMijatovicSzpruch2018}) is the application of MLMC with respect to the subsampling parameter. Note that, in a different context, the idea to apply MLMC to stochastic approximation algorithms has been studied in \cite{Frikha2016}, see also \cite{DereichMuller2019, Dereich2021}.

At the core of our analysis of Multi-level Monte Carlo estimators lies the problem of constructing the right couplings between Euler schemes (\ref{eq AMLMC special}) on different discretisation levels. For Euler schemes with standard (accurate) drifts this is done via a one-step analysis by coupling the driving noise in a suitable way, cf.\ Sections 2.2 and 2.5 in \cite{MajkaMijatovicSzpruch2018} and Section 2.4 in \cite{EberleMajka2019}. However, in the case of SGLD, one is faced with an additional problem of coupling the drift estimators used on different discretisation levels. In both \cite{MajkaSzpruchVollmerZygalakisGiles2019} and \cite{MajkaMijatovicSzpruch2018} we addressed this issue in the simplest possible way, by coupling the drift estimators independently. In the present paper we show that by employing a non-trivial coupling we can substantially reduce the variance of MLMC and thus obtain the required bound $\mathbb{V}[(f, \bm \Delta^{A} \pi^{\bm \ell})] \lesssim 2^{-2\ell_1 - 2 \ell_2}$ as explained above. We achieve this by utilising the antithetic approach to MLMC as defined in (\ref{eq antithetic operator}). Related ideas for antithetic multi-level estimators have been used e.g. in \cite{GilesSzpruch2012, GilesSzpruch2014, SzpruchTse2019}. However, in the present paper we apply this concept for the first time to Euler schemes with inaccurate drifts. We also remark that, due to our bounds on second moments, we can easily derive confidence intervals for MASGA using Chebyshev's inequality. However, it would also be possible to derive a Central Limit Theorem and corresponding concentration inequalities, in the spirit of \cite{alaya2015central, alaya2020central, jourdain2019non, Kebaier2005}.

The remaining part of this paper is organised as follows. In Section \ref{sectionNumerics} we present numerical experiments confirming our theoretical findings. In Section \ref{sectionM} we present a more general framework for the MASGA estimator, we explain the intuition behind the antithetic approach to MLMC in more detail (see Example \ref{exampleAMLMC}) and we formulate the crucial Lemma \ref{thm:crucialLemma}. We also explain how to prove Theorem \ref{theMainTheorem} based on Lemma \ref{thm:crucialLemma}. In Section \ref{sectionProofs} we prove Lemma \ref{thm:crucialLemma} in a few steps: we first discuss the antithetic estimator with respect to the discretisation parameter, which corresponds to taking $r=1$ and ${\bm \ell} = \ell_2$ in (\ref{eq AMLMC}), then we consider the antithetic estimator with respect to the subsampling parameter, which corresponds to taking $r=1$ and ${\bm \ell} = \ell_1$ in (\ref{eq AMLMC}) and, finally, we explain how these approaches can be combined in a multi-index estimator with $r=2$ and ${\bm \ell} = (\ell_1, \ell_2)$ to prove Lemma \ref{thm:crucialLemma}. Several technical proofs are postponed to Appendices.

\section{Numerical experiments}\label{sectionNumerics}

We showcase the performance of the MASGA estimator in Bayesian inference problems, combining different Bayesian models and priors. The code for all the numerical experiments
can be found at \url{https://github.com/msabvid/MLMC-MIMC-SGD}.

In Subsection \ref{subsect:MASGAvsAMLMC} we compare the MASGA estimator introduced in Section \ref{section:mainResult} with an Antithetic Multi-level Monte Carlo (AMLMC) estimator with respect to the subsampling parameter, corresponding to taking $r=1$ and ${\bm \ell} = \ell_1$ in (\ref{eq AMLMC}). We demonstrate that MASGA indeed achieves the optimal computational complexity. Both these estimators are also compared to a standard Monte Carlo estimator for reference. As we shall see, while the performance of MASGA in our experiments is always better than that of AMLMC, the difference is not substantial. This suggests that, from the practical standpoint, the crucial insight of this paper is the application of the antithetic MLMC approach with respect to the subsampling parameter. Hence in our subsequent experiments in Subsections \ref{subsect:AMLMC} and \ref{subsect:AMLMCvsSGLDCV}, we will focus on the AMLMC estimator, which is easier to implement than MASGA. In Subsection \ref{subsect:AMLMC} we will check its performance in both convex and non-convex settings, whereas in Subsection \ref{subsect:AMLMCvsSGLDCV} we will compare it to the Stochastic Gradient Langevin Dynamics with Control Variates (SGLD-CV) method introduced in \cite{BakerFearnheadFox2019}. More precisely, in Subsection \ref{subsect:AMLMCvsSGLDCV} we present an example of a convex setting in which AMLMC outperforms SGLD-CV. It is worth pointing out, that the latter method can be applied only to convex settings, whereas AMLMC is free of such limitations.

\subsection{MASGA and AMLMC with respect to subsampling}\label{subsect:MASGAvsAMLMC}

Let us begin by introducing the Bayesian logistic regression setting that we will use for our simulations. The data is modelled by 
\begin{equation}\label{eq logistic regression} \textstyle
p(y_i | \iota_i, x) = g(y_i x^T \iota_i)
\end{equation}
with $g(z) = 1/(1+e^{-z})$, $z \in \mathbb{R}$, where $x\in \mathbb R^d$ are the parameters of the model
that need to be sampled from their posterior distribution, $\iota_i$ denotes an observation of the predictive variable in the data, and $y_i$ the binary target variable. Given a dataset of size $m$, by Bayes' rule, the posterior density of $x$ satisfies
\begin{equation}\label{eq posterior bayesian logistic} \textstyle
\pi(x) \propto \pi_0(x) \prod_{i=1}^m g(y_ix^T\iota_i).	
\end{equation}

In our experiments, we will consider two different priors $\pi_0$, namely
\begin{enumerate}[i)]
\item a Gaussian prior $\pi_0 \sim \mathcal N(0, I)$.
\item a mixture of two Gaussians $\pi_0 \sim \frac{1}{2} \mathcal N(0,I) + \frac{1}{2} \mathcal N(1,I)$. 	
\end{enumerate}

We use Algorithm~\ref{alg MIMC} to empirically calculate the cost of
approximating $\mathbb E(f(X))$, for a function $f:\mathbb R^d \rightarrow \mathbb R$, where the law of $X$ is the posterior $\pi$, by 
the MASGA estimator 
$\cA^{A,\mathcal L}_{MASGA}(f) = \cA^{A,\mathcal L}(f)$ defined in \eqref{eq AMLMC}, such that its MSE is under some threshold $\varepsilon$. Recall that in our notation $(f, \bm \Delta^{A} \pi^{\bm \ell})$ denotes the integral of $f$ with respect to the antithetic measure $\Delta^{A} \pi^{\bm \ell}$ given by (\ref{eq antithetic operator}), where $\pi^{\bm \ell}$ is specified by the law of the Markov chain (\ref{eq AMLMC special}) with the potential $V$ determined from (\ref{eq posterior bayesian logistic}) in an analogous way as in the Bayesian inference example (\ref{introduction:BayesianDrift}). 
More explicitly, we have
\begin{equation*} \textstyle
V(x,\nu^m) = -\log \pi_0(x) - \sum_{i=1}^{m} \log g(y_i x^T \iota_i)
\end{equation*}
and
\begin{equation}\label{eq:numericsChain} \textstyle
X_{k+1}^{\ell_1, \ell_2} = X_k^{\ell_1, \ell_2} + \frac{1}{2}h_{\ell_2} \left( \frac{1}{m} \nabla \log \pi_0(X_k^{\ell_1, \ell_2}) + \frac{1}{s_{\ell_1}}   \sum_{i=1}^{s_{\ell_1}} \nabla \log g\left(y_{\tau_i^k}(X_k^{\ell_1, \ell_2})^T \iota_{\tau_i^k}\right) \right) + \frac{1}{\sqrt{m}} \sqrt{h_{\ell_2}} Z_{k+1} \,,
\end{equation}
where $\tau_i^k$ for $k \geq 1$ and $i \in \{ 1, \ldots, s_{\ell_1}  \}$ can correspond to subsampling either with or without replacement (in our simulations we choose the latter).

Below we present the results of our experiments for $f(x) = |x|^2$ (we would like to remark that we obtained similar conclusions for $f(x) = |x|$ and hence we skip the latter example to save space). 

Furthermore, for the Bayesian Logistic regressions we use the covertype dataset~\cite{blackard1999comparative} which has $581\,012$ observations, and 54 columns\footnote{In order to perform a Bayesian logistic regression, the categorical variable specifying the forest type designation is aggregated into a binary variable and is used as the target variable $y_i$ in the model.}. We create a training set containing 20\% of the original observations.

\begin{algorithm}
	\caption{MASGA}	
	\label{alg MIMC}
	\begin{algorithmic}
		\STATE{Initialisation: $n$, $L$, $\varepsilon$}
		\STATE{Calculate $n$ samples $(f, \bm \Delta^{A} \pi^{\bm \ell})^{(j)}$ of $(f, \bm \Delta^{A} \pi^{\bm \ell})$ independently, $j = 1, \ldots , n$ for each multi-index level in $\mathcal L := \{\bm\ell = (\ell_1,\ell_2): \ell_1,\ell_2=0,\ldots, L \}$.}
		\STATE{Calculate the MASGA estimator $\cA^{A,\mathcal L}(f)$ as in \eqref{eq AMLMC} with $N_{\bm \ell} = n$, i.e., $\cA^{A,\mathcal L}(f) = \sum_{\bm \ell \in \mathcal L}	 (f, \bm \Delta^{A} \pi^{\bm \ell,n})$, where for each $\bm \ell \in \mathcal L$ we take $(f, \bm \Delta^{A} \pi^{\bm \ell,n}) := n^{-1} \sum_{j=1}^{n} (f, \bm \Delta^{A} \pi^{\bm \ell})^{(j)}$.}
		\WHILE{$\operatorname{MSE}>\varepsilon$}
		\STATE{
			Calculate the number of paths in each level $\ell$ given by \\
			$N_{\ell_1,\ell_2} = \varepsilon^{-2}\left( \sqrt{\frac{\mathbb{V}[(f, \bm \Delta^{A} \pi^{(\ell_1,\ell_2)})]}{C_{(\ell_1,\ell_2)}}} \sum_{\bm \ell \in \mathcal{L}} \sqrt{\mathbb{V}[(f, \bm \Delta^{A} \pi^{\bm \ell})] C_{\bm \ell}} \right)$ (see Theorem \ref{thm:mainTheorem} for details)
			and calculate extra $N_{\ell_1,\ell_2} - n$ samples in each multi-index level.
		}
		\STATE{
			Update $\cA^{A,\mathcal L}(f)$ and estimate its bias
			(see~\cite{Giles2015Acta, Haji-Ali2016}).
		}
		\IF{bias estimate is less than $\varepsilon/2$}
		\STATE{Set convergence=True, and stop.}
		\ELSE
		\STATE{Set $L:=L+1$, and calculate $n$ samples of $(f, \bm \Delta^{A} \pi^{\bm \ell})$ for $\bm \ell=(\ell_1,L), \, \ell_1\leq L$ and for $\bm \ell=(L,\ell_2), \, \ell_2 \leq L$.}
		\ENDIF
		\ENDWHILE
		\RETURN $\cA^{A,\mathcal L}(f)$ and the number of samples $N_{\ell_1,\ell_2}$ in each level.
	\end{algorithmic}
\end{algorithm}

On the other hand, for AMLMC with respect to subsampling (denoted below by $\cA^{A,\mathcal L}_{MLMC}(f)$) we take $r = 1$ and $\bm \ell = \ell_1$ in (\ref{eq AMLMC}). This corresponds to using a fixed discretisation parameter $h$ and applying the antithetic MLMC estimator only with respect to the subsampling parameter.

Note that in our experiments we apply the estimators $\cA^{A,\mathcal L}_{MASGA}(f)$ and $\cA^{A,\mathcal L}_{MLMC}(f)$ to approximate $(f,\pi_t^h)$, where $\pi_t^h$ is the law given by the chain $X_k$ with $t = kh$, defined by
\begin{equation}\label{eq:numericsChain2} \textstyle
X_{k+1} = X_k + \frac{1}{2}h \left( \frac{1}{m} \nabla \log \pi_0(X_k)  + \frac{1}{m} \sum_{i=1}^{m} \nabla \log g(y_i X_k^T \iota_i) \right) + \frac{1}{\sqrt{m}} \sqrt{h} Z_{k+1} \,,
\end{equation}
with a fixed discretisation parameter $h$ (i.e., we do not take into account the error between $(f,\pi_t^h)$ and $(f,\pi_t)$ when calculating the MSE, where $\pi_t$ is the law of the SDE \eqref{eq SDEm timechanged}). The value of this $h$ is determined by the final level used in MASGA, i.e., $h = h_L$.

The experiment is organised as follows: 
\begin{enumerate}[i)]
\item let $L\geq 1$, and $(\ell_s, \ell_d) = (L,L)$ be the highest multi-indices used in the calculation of $\mathcal A^{A, \mathcal L}_{MASGA}(f)$.
\item We measure the bias of $\mathcal A^{A, \mathcal L}_{MASGA}(f)$ and set $\varepsilon := \sqrt{2} \left( \mathbb{E} \left[ |(f,\pi^{h_L}_t) - \mathcal A^{A, \mathcal L}_{MASGA}(f)|^2 \right] \right)^{1/2}$.
\item We then compare the cost of $\mathcal A^{A, \mathcal L}_{MASGA}(f)$ 
against the cost of $\mathcal A^{A, \mathcal L}_{MLMC}(f)$ with fixed discretisation parameter $h = h_{\ell_d} = h_L$ satisfying $\mathbb E [((f,\pi^{h_L}_t) - \mathcal A_{MLMC}^{A, \mathcal L}(f))^2]<\varepsilon^2$. 	
\end{enumerate}
We repeat the above three steps for $L=1,\ldots,7$, in order to measure the cost for different values of $\varepsilon$.

We perform this comparison on two data regimes: first on the covertype dataset with 100K observations and 54 covariates, and second on a smaller synthetic dataset with 1K observations and 5 covariates. Results are shown in Figure~\ref{fig MASGA vs AMLMC}, where each $\varepsilon$ corresponds to different values of $L$ (see Table~\ref{table:masga setting}).  
\begin{table}[h]
\centering
\begin{tabular}{l|l}
 AMLMC parameter & Value \\
  \hline
 $h_0$ (initial discretisation step size) & $0.005 $ \\ 
  Number of steps in initial discretisation level  &  $100$ \\ 
 (dataset size, dataset dim) &  $(116\, 202, 54), \, \, (1\,000, 5)$ \\
 $s_0$ (initial subsample size) & 4 \\
 $X_0$ & Approximation of the mode of the posterior
\end{tabular}
\caption{MASGA setting for Bayesian Logistic Regression.}\label{table:masga setting}
\end{table}

As expected, the higher the accuracy (the lower the $\varepsilon$) the better the cost of $\mathcal A^{A, \mathcal L}_{MASGA}(f)$ compared to the cost of $\mathcal A^{A, \mathcal L}_{MLMC}(f)$. Depending on the dataset size, it is necessary to reach different levels of accuracy of the MSE to notice an improvement on the cost. This comes from the amount of variance added by the noise added in the chain $X_k^{\ell_1,\ell_2}$ (\ref{eq:numericsChain}) that will decrease as the dataset size $m$ increases. 

\begin{figure}%[H]
	\centering
	\includegraphics[scale=0.6]{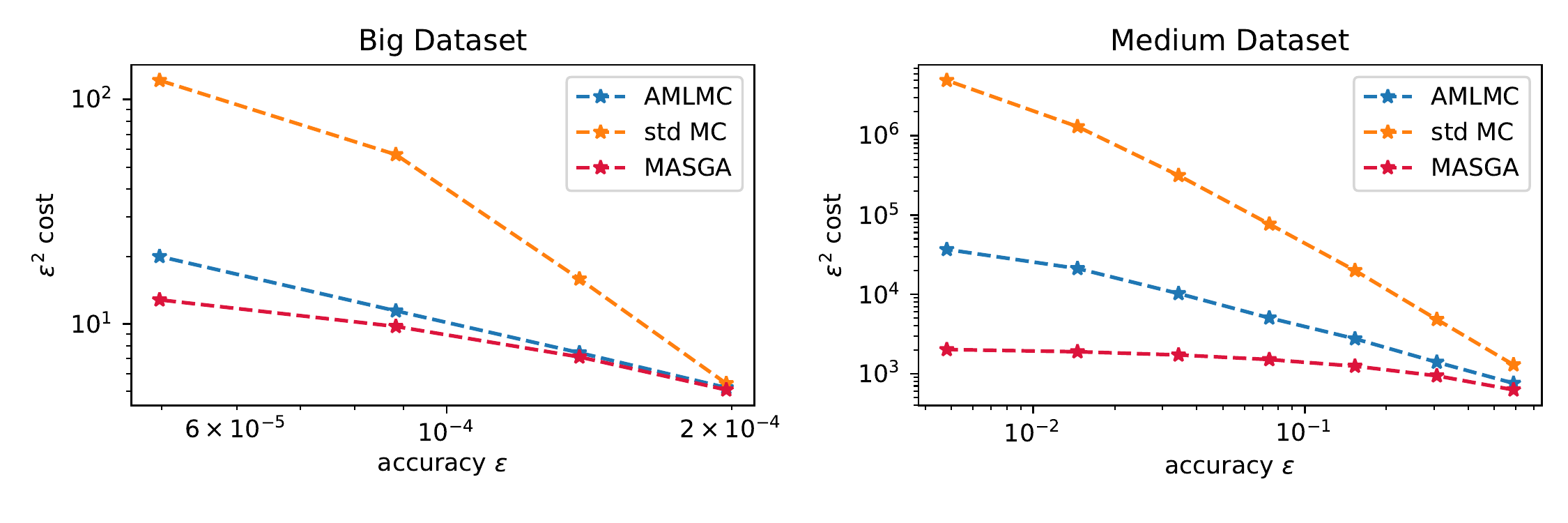}
	\caption{Comparison of MASGA cost against AMLMC cost for the standard Gaussian prior.}
	\label{fig MASGA vs AMLMC}
\end{figure}

\subsection{AMLMC with respect to subsampling in convex and non-convex settings}\label{subsect:AMLMC}

In the subsequent experiments, we study the AMLMC estimator with respect to subsampling, i.e., with a fixed discretisation parameter $h$. We simulate $10\times 2^4$ steps of the chain (\ref{eq:numericsChain}). We take $X_0$ to be an approximation of the mode of the posterior that we pre-compute using Stochastic Gradient Descent to replace the burn-in phase of the Markov chain, cf.\ \cite{BakerFearnheadFox2019}. The number of steps and the step size are chosen so as to be consistent with the finest discretisation level of the MASGA experiment provided in the previous section.

A summary of the AMLMC setting is provided in Table~\ref{table:amlmc setting}.

\begin{figure}
	\centering
	\includegraphics[scale=0.55]{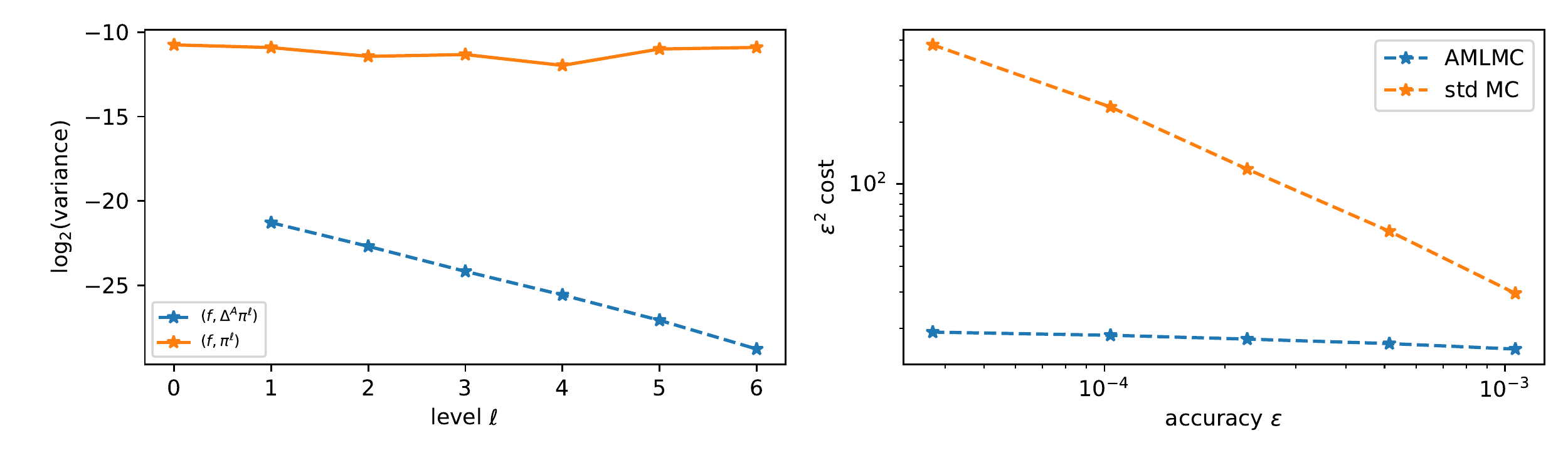}
	\caption{AMLMC estimator $\cA^{A,\mathcal L}(f)$ with respect to subsampling for $f(x)= |x|^2$, where the prior is the standard Gaussian.}
	\label{fig MLMC subsample norm sq}
\end{figure}

\begin{table}[h]
	\centering
	\begin{tabular}{l|l}
		AMLMC parameter & Value \\
		\hline
		$h$ (fixed discretisation step size) & $0.005 \times 2^{-4}$ \\ 
		Number of steps  &  $100 \times 2^4$ \\ 
		(dataset size, dataset dim) &  $(116\, 202, 54)$ \\
		$s_0$ (initial subsample size) & 4 \\
		$X_0$ & Approximation of the mode of the posterior
	\end{tabular}
	\caption{AMLMC setting for Bayesian Logistic Regression.}\label{table:amlmc setting}
\end{table}

\begin{figure}[H]
	\centering
	\includegraphics[scale=0.55]{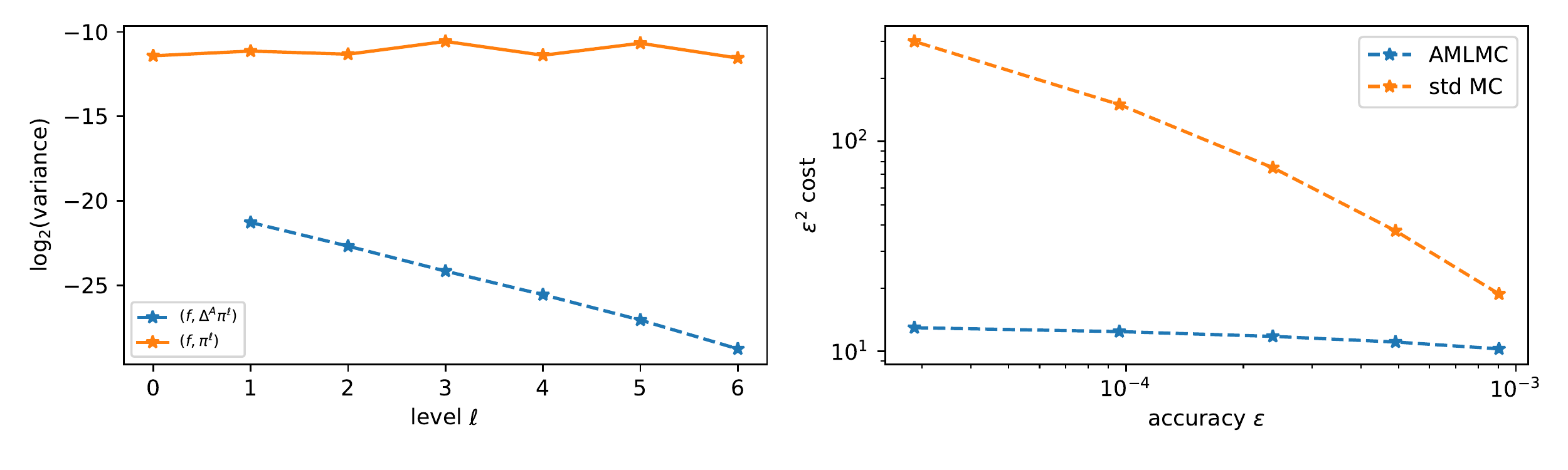}
	\caption{AMLMC estimator $\cA^{A,\mathcal L}(f)$ with respect to subsampling for $f(x)= |x|^2$, where the prior is a mixture of two Gaussians.}
	\label{fig MLMC subsample norm sq Bayesian Logistic with prior mixture of Gaussians}
\end{figure}

Plots in Figure~\ref{fig MLMC subsample norm sq} correspond to the results where $\pi_t^h$ is the approximation of the posterior of a Bayesian Logistic Regression with Gaussian prior. Plots in Figure~\ref{fig MLMC subsample norm sq Bayesian Logistic with prior mixture of Gaussians} use a mixture of Gaussians for the prior. The left plot shows the variance of $(f, \bm \Delta^{A} \pi^{\bm \ell})$ and  $(f,  \pi^{\bm \ell})$ per subsampling level. The right plot displays the computational cost multiplied by $\varepsilon^2$.

These figures indicate that the total cost of approximating $(f,\pi_t^h)$ by $\cA^{A,\mathcal L}(f)$ as described above, is $\mathcal{O}(\varepsilon^{-2})$, even when the prior is not log-concave as is the case of a Mixture of two Gaussians (Figure~\ref{fig MLMC subsample norm sq Bayesian Logistic with prior mixture of Gaussians}).

\subsubsection{Bayesian Mixture of Gaussians}
For our next experiment, we use the setting from Example 5.1 in \cite{welling2011bayesian} to consider a Bayesian mixture of two Gaussians on a 2-dimensional dataset, in order to make the posterior multi-modal. 
Given a dataset of size $m$, by Bayes' rule
\begin{equation} \textstyle
\pi(x) \propto \pi_0(x) \prod_{i=1}^m g(\iota_1, \iota_2 | x),	
\end{equation}
where $x=(x_1,x_2)$, $g$ is the joint density of $(\iota_1, \iota_2)$ where each $\iota_i \sim \frac{1}{2}\mathcal N(x_1, 5) + \frac{1}{2}\mathcal N(x_1+x_2, 5)$. 
For the experiment, we consider a Gaussian prior $\mathcal N(0,I)$ for $\pi_0(x)$, and we create a synthetic dataset with $200$ observations, by sampling from $\iota_i \sim \frac{1}{2}\mathcal N(0, 5) + \frac{1}{2}\mathcal N(1, 5)$.

In this experiment we again take $r = 1$ and $\bm \ell = \ell_1$ in (\ref{eq AMLMC}), which corresponds to using a fixed discretisation parameter $h$ and applying the antithetic MLMC estimator only with respect to the subsampling parameter. We then use the same setting as before: we apply our estimator $\cA^{A,\mathcal L}_{MLMC}(f)$ to approximate $(f,\pi_t^h)$, where $\pi_t^h$ is the law given by the chain $X_k$ defined in (\ref{eq:numericsChain2}), with a fixed discretisation parameter $h = 1$ (i.e., we do not take into account the error between $(f,\pi_t^h)$ and $(f,\pi_t)$ when calculating the MSE). We simulate $2\times 10^5$ steps of the chain (\ref{eq AMLMC special}), starting from $X_0=0$ (see Table~\ref{table:amlmc setting mixture}). 

In this example there is the additional difficulty that the posterior has two modes. It is therefore necessary to ensure that the number of steps is high enough so that the chain has explored all the space (Figure~\ref{fig MCMC mixture}).  

\begin{table}[h]
\centering
\begin{tabular}{l|l}
 AMLMC parameter & Value \\
  \hline
 $h$ (fixed discretisation step size) & $1$ \\ 
  Number of steps  &  $200\, 000$ \\ 
 (dataset size, dataset dim) &  $(200, 2)$ \\
 $s_0$ (initial subsample size) & 2 \\
 $X_0$ & $(0,0)$
\end{tabular}
\caption{AMLMC setting for Bayesian Mixture of Gaussians.}\label{table:amlmc setting mixture}
\end{table}

Results for this experiment are shown in Figure~\ref{fig MLMC subsample norm sq Bayesian Mixture with prior Gaussian}, indicating that the total cost of approximating $(f,\pi_t)$ by the MASGA estimator $\cA^{A,\mathcal L}(f)$ is $\mathcal{O}(\varepsilon^{-2})$. We obtain the following rates of decay of the variance and the absolute mean of $\bm \Delta^{A} \pi^{\bm \ell})$:

\[ \textstyle i)\,\, \mathbb E[(f, \bm \Delta^A \pi^{\bm \ell})]| \lesssim 2^{- 1.01\ell}, 
\quad ii) \,\,\mathbb{V}[(f, \bm \Delta^A \pi^{\bm \ell})] \lesssim 2^{- 1.82\, \ell} \,.
\]

\begin{figure}[H]
	\centering
	\includegraphics[scale=0.55]{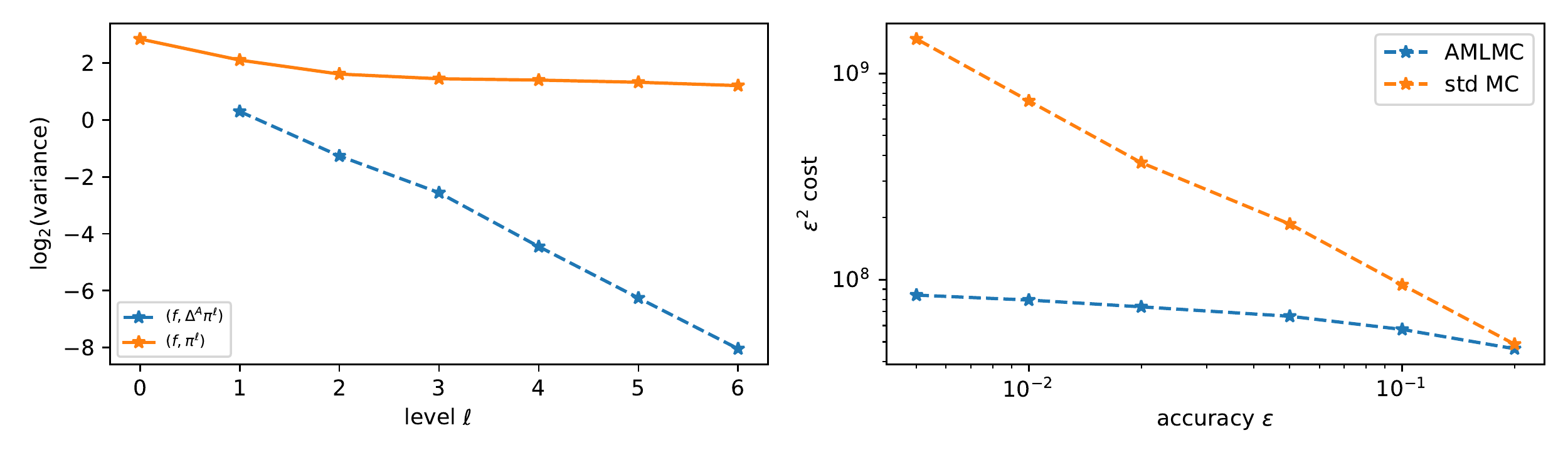}
	\caption{AMLMC estimator $\cA^{A,\mathcal L}(f)$  with respect to subsampling for $f(x)= |x|^2$, for a multi-modal posterior, where the prior $\pi_0$ is the standard Gaussian.}
	\label{fig MLMC subsample norm sq Bayesian Mixture with prior Gaussian}
\end{figure}

\begin{figure}[H]
	\centering
	\includegraphics[scale=0.55]{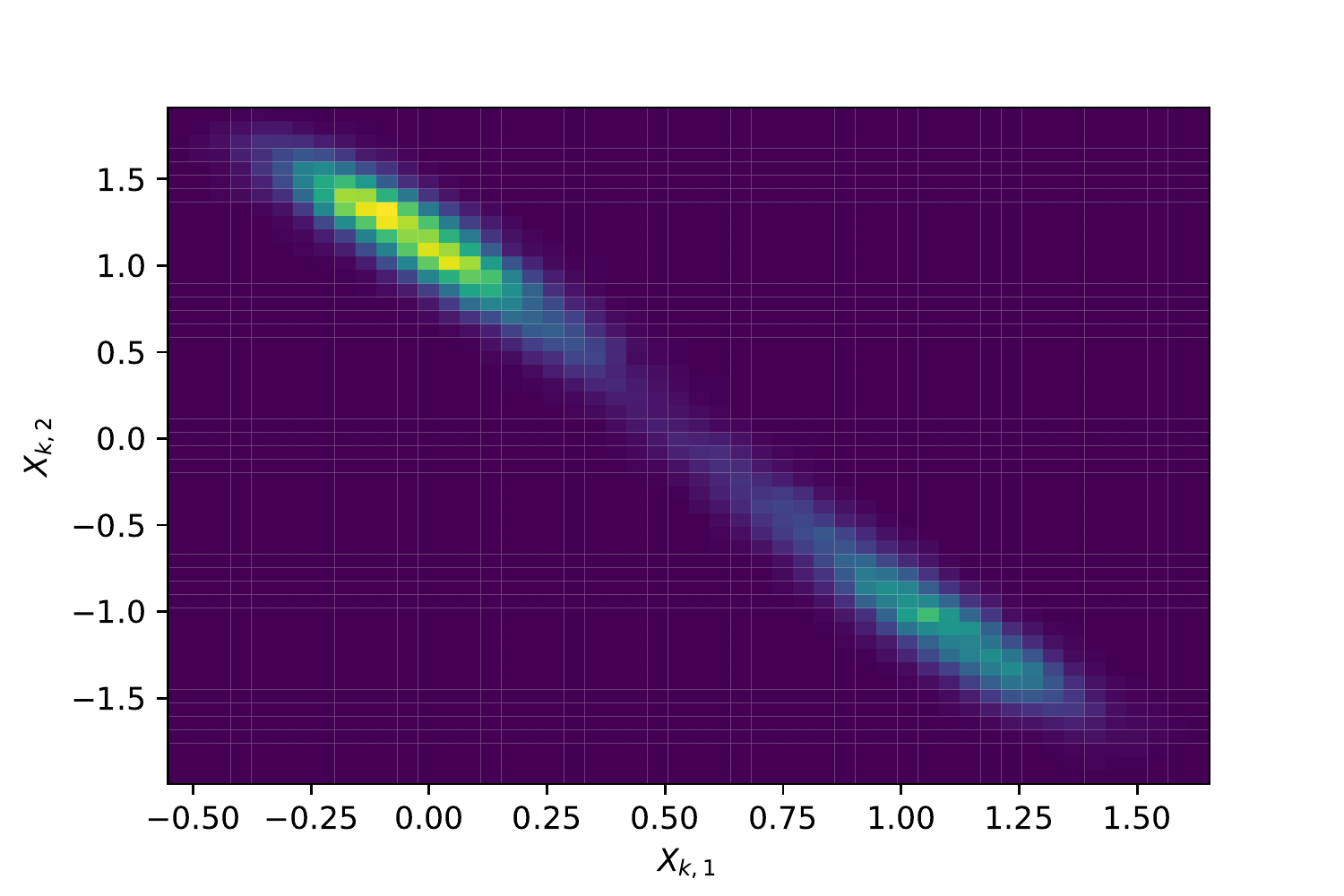}
	\caption{Histogram of $X_k = (X_{k,1}, X_{k,2})$ sampled from~\eqref{eq AMLMC special}.}
	\label{fig MCMC mixture}
\end{figure}

\subsection{AMLMC and SGLD with control variates}\label{subsect:AMLMCvsSGLDCV}

In this subsection we compare the AMLMC estimator with respect to subsampling against the Stochastic Gradient Langevin Dynamics method with control variates (SGLD-CV) from~\cite{BakerFearnheadFox2019, TrueCost}. We work with the standard Gaussian prior $\pi_0$. For SGLD-CV, for a fixed time step size $h$, and for a fixed subsample size $s_{\ell_1}$, instead of the process \eqref{eq AMLMC special} we use 
\begin{equation}\label{eq AMLMC special with cv} \textstyle
X_{k+1}^{\ell_1} = X_k^{\ell_1} - h \left(\nabla_x V(\hat x, \nu^{m}) + \left(\nabla_x V(X_k^{\ell_1}, \nu^{s_{\ell_1}})-\nabla_x V(\hat x, \nu^{s_{\ell_1}})\right)\right) + \beta \sqrt{h_{\ell_2}} Z_{k+1} \,,
\end{equation}
where $\beta = 1/\sqrt{m}$ and $\hat x$ is a fixed value denoting an estimate of the mode of the posterior $\pi(x)$.
We undertake the following steps:
\begin{enumerate}
	\item We estimate the mode of the posterior $\hat x$ by using stochastic gradient descent.
	\item For each considered accuracy $\varepsilon$, we run AMLMC with respect to subsampling (as described in Subsection \ref{subsect:AMLMC}) to get the maximum subsample size $s_{\ell_1}$ necessary to achieve an estimator $\cA^{A,\mathcal L}_{MLMC}(f)$ such that $\operatorname{MSE}(\cA^{A,\mathcal L}_{MLMC}(f)) \lesssim \varepsilon$.
	\item We use each pair $(\varepsilon, s_{\ell_1})$ from the previous step to calculate the cost of SGLD-CV.    	
\end{enumerate}
The AMLMC setting values are listed in Table~\ref{table:amlmc setting comparison} and results are shown in Figure~\ref{fig MLMC subsample norm sq comparison with MCMC cv}.

\begin{table}[h]
	\centering
	\begin{tabular}{l|l}
		AMLMC parameter & Value \\
		\hline
		$h$ (fixed discretisation step size) & $0.5$ \\ 
		Number of steps  &  $100$ \\ 
		(dataset size, dataset dim) &  $(116\, 202, 54)$ \\
		$s_0$ (initial subsample size) & 4 \\
		$X_0$ & Approximation of the mode of the posterior
	\end{tabular}
	\caption{AMLMC setting for Bayesian Logistic Regression.}\label{table:amlmc setting comparison}
\end{table}

\begin{figure}
	\centering
	\includegraphics[scale=0.55]{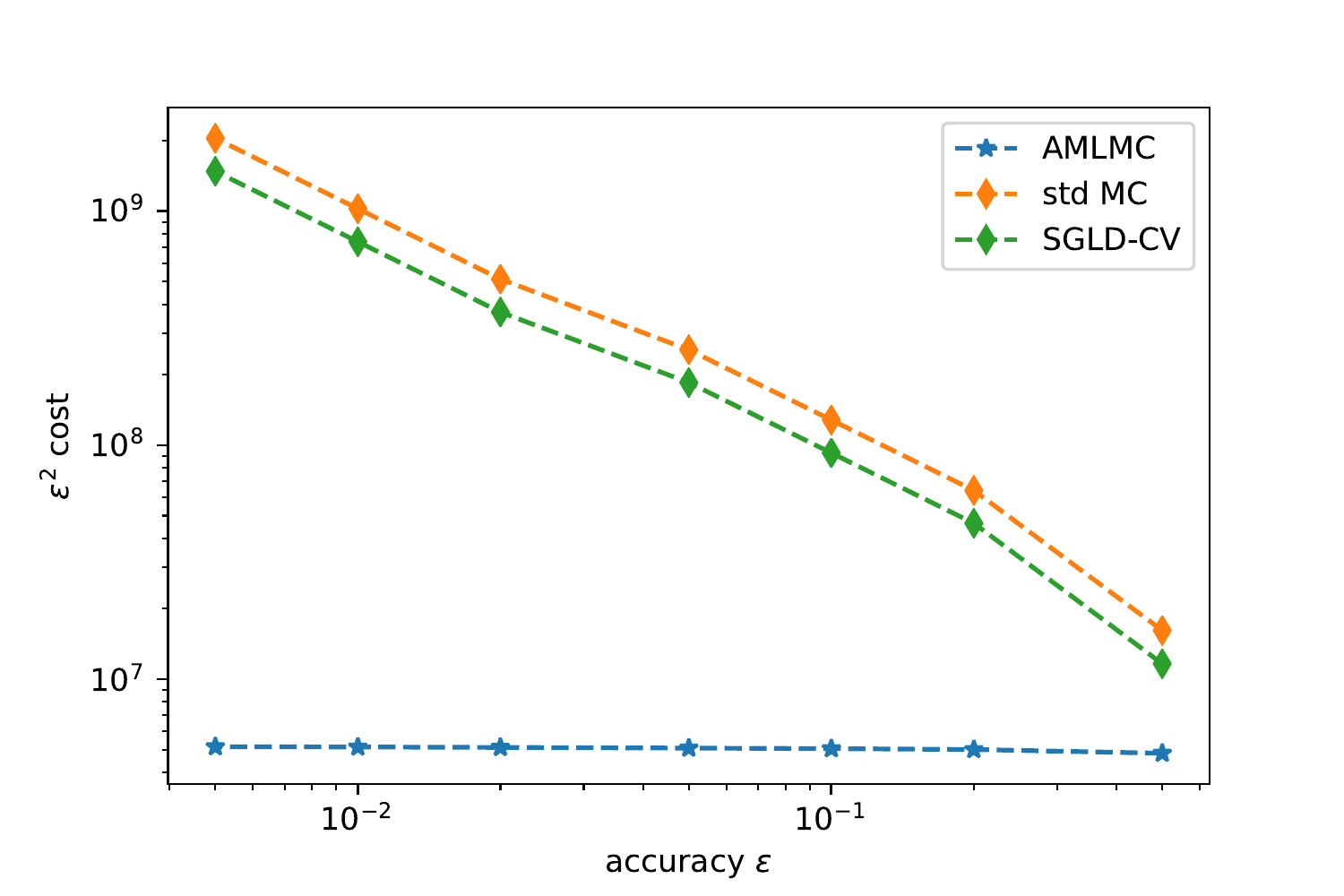}
	\caption{Comparison of AMLMC with respect to subsampling for $f(x)= |x|^2$ vs SGLD with control variate.}
	\label{fig MLMC subsample norm sq comparison with MCMC cv}
\end{figure}

The SGLD-CV method has been shown in \cite{BakerFearnheadFox2019, TrueCost} to reduce the variance (and hence improve the performance) compared to the standard SGLD and the standard Monte Carlo methods. However, in some numerical examples, as demonstrated in this subsection, this gain can be relatively small compared to the gain from using AMLMC.

\section{General setting for MASGA}\label{sectionM}

We will work in a more general setting than the one presented in Section \ref{section:mainResult}. Namely, we consider an SDE
\begin{equation}\label{SDE}
dX_t = a(X_t)dt + \beta dW_t \,,
\end{equation}
where $a: \mathbb{R}^d \to \mathbb{R}^d$, $\beta \in \mathbb{R}_{+}$ and $(W_t)_{t \geq 0}$ is a $d$-dimensional Brownian motion. Furthermore, let $(Z_k)_{k=1}^{\infty}$ be i.i.d.\ random variables in $\mathbb{R}^d$ with $Z_k \sim \mathcal{N}(0,I)$ for $k \geq 1$. For a fixed discretisation parameter $h > 0$ we consider a discretisation of (\ref{SDE}) given by
$X_{k+1} = X_k + ha(X_k) + \beta \sqrt{h} Z_{k+1}$,
as well as its inaccurate drift counterpart
\begin{equation}\label{eq ineuler}
X_{k+1} = X_k + h b(X_k, U_k) + \beta \sqrt{h} Z_{k+1} \,.
\end{equation}
Here $b:\bR^d \times \bR^n \rightarrow \bR^d$ is an unbiased estimator of $a$ in the sense that $(U_k)_{k=0}^{\infty}$ are mutually independent $\mathbb{R}^n$-valued random variables independent of $(Z_k)_{k=1}^{\infty}$ such that for any $k \geq$ 0 we have
\begin{equation}\label{estimator}
\bE[ b(x,U_k)]=a(x) \text{ for all } x \in \mathbb{R}^d \,.
\end{equation}
Note that for each $k$, the random variable $X_k$ is independent of $U_k$ and that $\bE[b(X_k,U_k)|X_k] = a(X_k)$. Moreover, note that the framework where the drift estimator $b(x,U)$ depends on a random variable $U$ is obviously a generalisation of (\ref{eq SGLDm}), since as a special case of (\ref{eq ineuler}) we can take $b(x,U) = - \nabla_x V (x, \mathcal{L}(U))$, where $\mathcal{L}(U)$ denotes the law of $U$. We use the name Stochastic Gradient Langevin Dynamics (SGLD) \cite{TrueCost, MajkaMijatovicSzpruch2018, Dalalyan2017user} to describe (\ref{eq ineuler}) even in the general abstract setting where $b$ and $a$ are not necessarily of gradient form.

This setting, besides having the obvious advantage of being more general than the one presented in Assumptions \ref{as main}, allows us also to reduce the notational complexity by replacing sums of gradients with general abstract functions $a$ and $b$. As a motivation for considering such a general framework, let us discuss an example related to generative models.

\begin{example}
	Let $\nu$ denote an unknown data measure, supported on $\mathbb R^D$, and let $\nu^m$ be its empirical approximation. While $D$ is typically very large, in many applications $\nu$  can be well approximated by a probability distribution supported on a lower dimensional space, say $\mathbb R^d$, with $d \ll D$. The aim of generative models \cite{goodfellow2014generative} is to map samples from some basic distribution $\mu$ supported on $\mathbb R^d$, into samples from $\nu$. More precisely, one considers a parametrised map $G:\mathbb R^d \times \Theta \rightarrow \mathbb R^D$, with a parameter space $\Theta\subseteq \mathbb R^p$, that transports $\mu$ into  $G(\theta)_{\#}\mu := \mu(G(\theta)^{-1}(B))$, $B\in\mathbb R^D$. One then seeks $\theta$ such that $G(\theta)_{\#}\mu$ is a good approximation of $\nu$ with respect to a user-specified metric. In this example we consider  $f:\mathbb R^d \rightarrow \mathbb R$ and $\Phi:\mathbb R \rightarrow \mathbb R$ and define
	\[
	\text{dist}(G(\theta)_{\#}\mu,\nu^m):=\Phi\left( \int f(x)(G(\theta)_{\#}\mu)(dx)  - \int f(x)\nu^m(dx) \right)\,.
	\]
	A popular choice for the generator $G$ is a neural network \cite{goodfellow2014generative}.  In the case of a one-hidden-layer network with $\theta=(\alpha,\beta) \in \mathbb R^p \times \mathbb R^p$ with the activation function $\psi :\mathbb R^d \rightarrow \mathbb R^d$, one takes
	\[
	G(x,\theta):= \frac1p\sum_{i=1}^p \beta_{i} \psi(\a_{i} x)\,.
	\]
	With this choice of $G$, the authors of \cite{hu2019mean} derived a gradient flow equation that minimises suitably regularised $\text{dist}(G(\theta)_{\#}\mu,\nu^m)$. The gradient flow identified in \cite{hu2019mean}, when discretised, is given by
	\begin{equation}\label{eq generative chain}
	\theta_{k+1} = \theta_k - h \left( b(\theta_k,\nu^m) - \frac{\sigma^2}{2}\nabla U(\theta_{k}) \right) + \sigma \sqrt{h} Z_{k+1}\,,
	\end{equation}
	with $U:\mathbb R^{d} \rightarrow \mathbb R$ being a regulariser, $\sigma > 0$ a regularisation parameter, $(Z_k)_{k=1}^{\infty}$ a sequence of i.i.d.\ random variables with the standard normal distribution, and 
	\[
	b(\theta,\nu^m) = \nabla_x \Phi \left(  \int (f \circ G)(x,\theta)  \mu(dx) - \int f(x)\nu^m(dx)\right)
	\int \nabla_{\theta} (f\circ G)(x,\theta) \mu(dx)\,.  
	\]
	We refer the reader to \cite{hu2019mean} for more details and to \cite{jabir2019mean} for an extension to deep neural networks. One can see that $b$ may depend on the data in a non-linear way and hence the general setting of (\ref{eq ineuler}) becomes necessary for the analysis of the stochastic gradient counterpart of (\ref{eq generative chain}). An application of MASGA to the study of generative models will be further developed in a future work. 
\end{example}

In order to analyse the MASGA estimator (\ref{eq AMLMC}), we need to interpret the Markov chain
\begin{equation}\label{sect3chain}
X^{s,h}_{k+1} = X^{s,h}_k + h b(X^{s,h}_k, U^s_k) + \beta \sqrt{h} Z_{k+1}
\end{equation}
as characterized by two parameters: the discretisation parameter $h > 0$ and the drift estimation parameter $s \in \mathbb{N}$ that corresponds to the quality of approximation of $a(x)$ by $b(x,U^s_k)$, for some mutually independent random variables $(U^s_k)_{k=0}^{\infty}$. We will now carefully explain how to implement the antithetic MLMC framework from Section \ref{section:mainResult} in this setting.

To this end, suppose that we have a decreasing sequence $(h_\ell)_{\ell=0}^{L} \subset \mathbb{R}_{+}$ of discretisation parameters and an increasing sequence $(s_\ell)_{\ell=0}^{L} \subset \mathbb{N}_{+}$ of drift estimation parameters for some $L \geq 1$. For any $\ell_1$, $\ell_2 \in \{ 1, \ldots, L \}$ and any function $f : \mathbb{R}^d \to \mathbb{R}$, we define
\begin{equation}\label{introduction:DeltaPhi}
\Delta \Phi^{s_{\ell_1}, h_{\ell_2}}_{f,k} := \left( f(X^{s_{\ell_1},h_{\ell_2}}_k) - f(X^{s_{\ell_1}, h_{\ell_2-1}}_k) \right) - \left( f(X^{s_{\ell_1-1},h_{\ell_2}}_k) - f(X^{s_{\ell_1-1}, h_{\ell_2-1}}_k) \right) \,,
\end{equation}
and we also put $\Delta \Phi^{s_{0}, h_{0}}_{f,k} := f(X^{s_0,h_0}_k)$, $\Delta \Phi^{s_{0}, h_{1}}_{f,k} := f(X^{s_0,h_1}_k) - f(X^{s_0,h_0}_k)$ and $\Delta \Phi^{s_{1}, h_{0}}_{f,k} := f(X^{s_1,h_0}_k) - f(X^{s_0,h_0}_k)$. Then we can define a Multi-index Monte Carlo estimator
\begin{equation}\label{introduction:MultiIndexEstimator}
\mathcal{A} := \sum_{\ell_1 = 0}^{L} \sum_{\ell_2 = 0}^{L} \frac{1}{N_{\ell_1,\ell_2}} \sum_{j=1}^{N_{\ell_1,\ell_2}} \Delta \Phi^{s_{\ell_1}, h_{\ell_2}, (j)}_{f,k} \,,
\end{equation}
where $\Delta \Phi^{s_{\ell_1}, h_{\ell_2}, (j)}_{f,k}$ for $j = 1, \ldots, N_{\ell_1,\ell_2}$ are independent copies of $\Delta \Phi^{s_{\ell_1}, h_{\ell_2}}_{f,k}$. Here $N_{\ell_1,\ell_2}$ is the number of samples at the (doubly-indexed) level $\bm \ell = (\ell_1, \ell_2)$. Note that (\ref{introduction:MultiIndexEstimator}) corresponds to the regular (non-antithetic) MLMC estimator defined in (\ref{eq MLMC}) with $r = 2$ and with $L$ levels for both parameters.

We will now explain how to obtain the MASGA estimator (\ref{eq AMLMC}) by modifying (\ref{introduction:MultiIndexEstimator}) by replacing the difference operator $\Delta \Phi^{s_{\ell_1}, h_{\ell_2}}_{f,k}$ with its antithetic counterpart. To this end, we will need to take a closer look at the relation between the chains (\ref{sect3chain}) on different levels. From now on, we focus on sequences of parameters $h_\ell := 2^{-\ell}h_0$ and $s_\ell := 2^\ell s_0$ for $\ell \in \{ 1, \ldots, L \}$ and some fixed $h_0 > 0$ and $s_0 \in \mathbb{N}$. Then, we observe that for a fixed $s_{\ell_1}$, the chain $X^{s_{\ell_1}, h_{\ell_2}}$ has twice as many steps as the chain $X^{s_{\ell_1}, h_{\ell_2-1}}$, i.e., for any $k \geq 0$ we have
\begin{equation}\label{sect3fineandcoarse}
\begin{split}
X^{s_{\ell_1},h_{\ell_2}}_{k+2} &= X^{s_{\ell_1},h_{\ell_2}}_{k+1} + h_{\ell_2} b(X^{s_{\ell_1},h_{\ell_2}}_{k+1}, U^{s_{\ell_1},h_{\ell_2}}_{k+1}) + \beta \sqrt{h_{\ell_2}}Z_{k+2} \\
X^{s_{\ell_1},h_{\ell_2}}_{k+1} &= X^{s_{\ell_1},h_{\ell_2}}_{k} + h_{\ell_2} b(X^{s_{\ell_1},h_{\ell_2}}_{k+1}, U^{s_{\ell_1},h_{\ell_2}}_{k}) + \beta \sqrt{h_{\ell_2}}Z_{k+1} \\
X^{s_{\ell_1},h_{\ell_2-1}}_{k+2} &= X^{s_{\ell_1},h_{\ell_2-1}}_{k} + 2h_{\ell_2} b(X^{s_{\ell_1},h_{\ell_2-1}}_{k}, U^{s_{\ell_1},h_{\ell_2-1}}_{k}) + \beta \sqrt{2h_{\ell_2}}(Z_{k+2} + Z_{k+1}) \,.
\end{split}
\end{equation}
Throughout the paper, we will refer to $(X^{s_{\ell_1},h_{\ell_2}}_k)_{k \in \mathbb{N}}$ as the fine chain, and to $(X^{s_{\ell_1},h_{\ell_2-1}}_k)_{k\in 2 \mathbb{N}}$ as the coarse chain. 
Note that for the chain $(X^{s_{\ell_1},h_{\ell_2-1}}_k)_{k\in 2 \mathbb{N}}$ we could in principle use a sequence of standard Gaussian random variables $(\hat{Z}_k)_{k \in 2 \mathbb{N}}$ completely unrelated to the one that we use for $(X^{s_{\ell_1},h_{\ell_2}}_k)_{k \in \mathbb{N}}$ (which is $(Z_k)_{k \in \mathbb{N}}$). However, we choose $\hat{Z}_{k+2} := (Z_{k+1} + Z_{k+2})/\sqrt{2}$ for all $k \geq 0$, which corresponds to using the synchronous coupling between levels (which turns out to be a good choice in the global convexity setting as in Assumptions \ref{as main}, cf.\ \cite{MajkaSzpruchVollmerZygalakisGiles2019}). Moreover, note that since the chain $X^{s_{\ell_1}, h_{\ell_2}}$ moves twice as frequently as $X^{s_{\ell_1}, h_{\ell_2-1}}$, it needs twice as many random variables $U^{s_{\ell_1}, h_{\ell_2}}$ as $X^{s_{\ell_1}, h_{\ell_2-1}}$ needs $U^{s_{\ell_1}, h_{\ell_2-1}}$. This can be interpreted as having to choose how to estimate the drift $a$ twice as frequently (at each step of the chain).

The idea of the antithetic estimator (with respect to the discretisation parameter) involves replacing $f(X^{s_{\ell_1}, h_{\ell_2-1}})$ in (\ref{introduction:DeltaPhi}) with a mean of its two independent copies, i.e., with the quantity given by $\frac{1}{2}\left( f(X^{s_{\ell_1}, h_{\ell_2-1}-}) + f(X^{s_{\ell_1}, h_{\ell_2-1}+}) \right)$, where the first copy $X^{s_{\ell_1}, h_{\ell_2-1}-}$ uses $U^{s_{\ell_1}, h_{\ell_2}}_k$ and the other copy $X^{s_{\ell_1}, h_{\ell_2-1}+}$ uses $U^{s_{\ell_1}, h_{\ell_2}}_{k+1}$ to estimate the drift, instead of drawing their own independent copies of $U^{s_{\ell_1},h_{\ell_2-1}}_{k}$, i.e.,
\begin{equation*}
\begin{split}
X^{s_{\ell_1},h_{\ell_2-1}-}_{k+2} &= X^{s_{\ell_1},h_{\ell_2-1}-}_{k} + 2h_{\ell_2} b(X^{s_{\ell_1},h_{\ell_2-1}-}_{k}, U^{s_{\ell_1},h_{\ell_2}}_{k}) + \beta \sqrt{2h_{\ell_2}}(Z_{k+2} + Z_{k+1}) \\
X^{s_{\ell_1},h_{\ell_2-1}+}_{k+2} &= X^{s_{\ell_1},h_{\ell_2-1}+}_{k} + 2h_{\ell_2} b(X^{s_{\ell_1},h_{\ell_2-1}+}_{k}, U^{s_{\ell_1},h_{\ell_2}}_{k+1}) + \beta \sqrt{2h_{\ell_2}}(Z_{k+2} + Z_{k+1}) \,.
\end{split}
\end{equation*}
Hence the term $\left( f(X^{s_{\ell_1},h_{\ell_2}}_k) - f(X^{s_{\ell_1}, h_{\ell_2-1}}_k) \right)$ appearing in (\ref{introduction:DeltaPhi}) would be replaced with the antithetic term $\left( f(X^{s_{\ell_1},h_{\ell_2}}_k) - \frac{1}{2}\left( f(X^{s_{\ell_1}, h_{\ell_2-1}-}) + f(X^{s_{\ell_1}, h_{\ell_2-1}+})  \right) \right)$ and the same can be done for any fixed $s_{\ell_1}$. 
Let us explain the intuition behind this approach on a simple example with $f(x) = x$ and a state-independent drift $a$. 

\begin{example}\label{exampleAMLMC}
	We fix $s_{\ell_1}$ and suppress the dependence on $\ell_1$ in the notation, in order to focus only on MLMC via discretisation parameter.
	Let $\xi = (\xi_1, \ldots, \xi_m)$ be a collection of $m$ data points and let us consider a state-independent drift $a = \frac{1}{m}\sum_{i=1}^{m} \xi_i$ and its unbiased estimator 
	$b(U^s) := \frac{1}{s} \sum_{i=1}^{s} \xi_{U^s_i}$ 
	for $s \leq m$, where $U^s = (U^s_1, \ldots U^s_s)$ is such that $U^s_j \sim \operatorname{Unif}\{ 1, \ldots , m \}$ for all $j \in \{1, \ldots, s \}$ (i.e., we sample with replacement $s$ data points from the data set $\xi$ of size $m$). Consider the standard MLMC estimator with the fine $X^f$ and the coarse $X^c$ schemes defined as
	\begin{equation*}
	\begin{split}
	X^f_{k+1} &= X^f_{k} + h b(U^{s,f}_k) + \beta \sqrt{h} Z_{k+1} \,, \qquad X^f_{k+2} = X^f_{k+1} + h b(U^{s,f}_{k+1}) + \beta \sqrt{h} Z_{k+2} \,, \\
	X^c_{k+2} &= X^c_{k} + 2h b(U^{s,c}_k) + \beta \sqrt{2h} \hat{Z}_{k+2} \,,
	\end{split}
	\end{equation*}
	where $\hat{Z}_{k+2} := (Z_{k+1} + Z_{k+2})/\sqrt{2}$ , $\beta = 1/\sqrt{m}$, with $U^{s,f}_{k}$, $U^{s,f}_{k+1}$ and $U^{s,c}_{k}$ being independent copies of $U^s$ for any $k \geq 0$. Note that $(X^f_k)_{k \in \mathbb{N}}$ corresponds to $(X^{s_{\ell_1},h_{\ell_2}}_k)_{k \in \mathbb{N}}$ in (\ref{sect3fineandcoarse}) whereas $(X^c_k)_{k \in 2\mathbb{N}}$ corresponds to $(X^{s_{\ell_1},h_{\ell_2-1}}_k)_{k \in 2\mathbb{N}}$ for some fixed $\ell_1$, $\ell_2$. Recall from the discussion in Section \ref{section:mainResult} that our goal is to find a sharp upper bound on the variance (or the second moment) of $X^f_k - X^c_k$ for any $k \geq 1$ (which corresponds to bounding the variance of the standard, non-antithetic MLMC estimator (\ref{introduction:MultiIndexEstimator}) for a Lipschitz function $f$, cf.\ the difference (\ref{introduction:DeltaPhi}) taken only with respect to the time-discretisation parameter $h$, with fixed $s$). We have
	\begin{equation*}
	\begin{split}
	&\mathbb{E}\left| X^f_{k+2} - X^c_{k+2} \right|^2 = \mathbb{E} \left| X^f_{k} - X^c_{k} \right|^2 + \mathbb{E} \left\langle X^f_{k} - X^c_{k}, h b(U^{s,f}_k) + h b(U^{s,f}_{k+1}) - 2h b(U^{s,c}_k) \right\rangle \\
	&+ \mathbb{E} \left|h b(U^{s,f}_k) + h b(U^{s,f}_{k+1}) - 2h b(U^{s,c}_k) \right|^2 = \mathbb{E} \left| X^f_{k} - X^c_{k} \right|^2 + h^2 \mathbb{E} \left|   b(U^{s,f}_k) +  b(U^{s,f}_{k+1}) - 2 b(U^{s,c}_k) \right|^2 \,,
	\end{split}
	\end{equation*}
	where in the second step we used conditioning and the fact that $b$ is an unbiased estimator of $a$.
	Hence we can show that, if we choose $X^f_0 = X^c_0$, then for all $k \geq 1$ we have
	$\mathbb{E}\left| X^f_{k} - X^c_{k} \right|^2 \leq C h$
	for some $C > 0$  and we get a variance contribution of order $h$. On the other hand, if we want to apply the antithetic approach as in (\ref{eq AMLMC}), we can define
	\begin{equation*}
	X^{c-}_{k+2} = X^{c-}_{k} + 2h b(U^{s,f}_k) + \beta \sqrt{2h} \hat{Z}_{k+2} \,, \qquad
	X^{c+}_{k+2} = X^{c+}_{k} + 2h b(U^{s,f}_{k+1}) + \beta \sqrt{2h} \hat{Z}_{k+2} 
	\end{equation*}
	with $\beta = 1/\sqrt{m}$, and, putting $\bar{X}^c_{k} := \frac{1}{2} \left( X^{c-}_{k} + X^{c+}_{k} \right)$, we obtain
	\begin{equation*}
	\bar{X}^c_{k+2} = \frac{1}{2} \left( X^{c-}_{k} + X^{c+}_{k} \right) + h \left( b(U^{s,f}_k) + b(U^{s,f}_{k+1}) \right) + \beta \sqrt{2h} \hat{Z}_{k+2} \,.
	\end{equation*}
	Then we have
	$\mathbb{E}\left| X^f_{k+2} - \bar{X}^c_{k+2} \right|^2  = \mathbb{E}\left| X^f_{k} - \bar{X}^c_{k} \right|^2$
	and, choosing $X^f_0 = X^{c-}_0 = X^{c+}_0$, the variance contribution vanishes altogether.
\end{example}

In the general case, $b(x,U^s)$ is a nonlinear function of the data and also depends on the state $x$. Therefore one should not expect that the variance of the drift estimator can be completely mitigated. Nonetheless, careful analysis will allow us to conclude that the application of the antithetic difference operators on all levels in our MASGA estimator allows us to obtain a desired upper bound on the variance as described in Section \ref{section:mainResult}.

Having explained the motivation behind the antithetic approach to MLMC, let us now focus on the antithetic estimator with respect to the drift estimation parameter $s$. To this end,
let us now fix $h_{\ell_2}$ and observe that for the chain $X^{s_{\ell_1}, h_{\ell_2}}$, the value of the drift estimation parameter $s_{\ell_1} = 2s_{\ell_1-1}$ is twice the value for the chain $X^{s_{\ell_1-1}, h_{\ell_2}}$. In the context of the subsampling drift as in Assumptions \ref{as main}, this corresponds to the drift estimator in $X^{s_{\ell_1}, h_{\ell_2}}$ using twice as many samples as the drift estimator in $X^{s_{\ell_1-1}, h_{\ell_2}}$. Hence, instead of using independent samples for $X^{s_{\ell_1-1}, h_{\ell_2}}$, we can consider two independent copies of $X^{s_{\ell_1-1}, h_{\ell_2}}$, the first of which uses the first half $U^{s_{\ell_1},h_{\ell_2},1}_{k}$ of samples of $X^{s_{\ell_1}, h_{\ell_2}}$ and the other uses the second half $U^{s_{\ell_1},h_{\ell_2},2}_{k}$, namely,
\begin{equation}\label{sect3AMLMCviasubsampling}
\begin{split}
X^{s_{\ell_1-1}-, h_{\ell_2}}_{k+1} &:= X^{s_{\ell_1-1}-, h_{\ell_2}}_k + h_{\ell_2} b(X^{s_{\ell_1-1}-, h_{\ell_2}}_k, U^{s_{\ell_1},h_{\ell_2},1}_{k}) + \beta \sqrt{h_{\ell_2}} Z_{k+1} \\
X^{s_{\ell_1-1}+, h_{\ell_2}}_{k+1} &:= X^{s_{\ell_1-1}+, h_{\ell_2}}_k + h_{\ell_2} b(X^{s_{\ell_1-1}+, h_{\ell_2}}_k, U^{s_{\ell_1},h_{\ell_2},2}_{k}) + \beta \sqrt{h_{\ell_2}} Z_{k+1} \,.
\end{split}
\end{equation}
Hence, using the antithetic approach, we could replace $\left( f(X^{s_{\ell_1}, h_{\ell_2}}_k) - f(X^{s_{\ell_1-1}, h_{\ell_2}}_k) \right)$ in (\ref{introduction:DeltaPhi}) with the difference $\left( f(X^{s_{\ell_1}, h_{\ell_2}}_k) - \frac{1}{2}\left( f(X^{s_{\ell_1-1}-, h_{\ell_2}}_k) + f(X^{s_{\ell_1-1}+, h_{\ell_2}}_k) \right) \right)$. 

Combining the ideas of antithetic estimators both with respect to the parameter $s$ and $h$, we arrive at a nested antithetic difference $\Delta \operatorname{Ant} \Phi^{s_{\ell_1}, h_{\ell_2}}_{f,k}$, defined for any $\ell_1$, $\ell_2 \in \{ 1, \ldots L \}$ and any $k \geq 1$ as
\begin{equation}\label{sect3nineterms}
\begin{split}
\Delta \operatorname{Ant} \Phi^{s_{\ell_1}, h_{\ell_2}}_{f,k} &:= \left[ f(X^{s_{\ell_1}, h_{\ell_2}}_k) - \frac{1}{2} \left( f(X^{s_{\ell_1}, h_{\ell_2-1}-}_k) + f(X^{s_{\ell_1}, h_{\ell_2-1}+}_k) \right) \right] \\
&- \frac{1}{2}\Bigg[ \left( f(X^{s_{\ell_1-1}-,h_{\ell_2}}_k) - \frac{1}{2}\left( f(X^{s_{\ell_1-1}-,h_{\ell_2-1}-}_k) + f(X^{s_{\ell_1-1}-,h_{\ell_2-1}+}_k) \right) \right) \\
&+ \left( f(X^{s_{\ell_1-1}+,h_{\ell_2}}_k) - \frac{1}{2}\left( f(X^{s_{\ell_1-1}+,h_{\ell_2-1}-}_k) + f(X^{s_{\ell_1-1}+,h_{\ell_2-1}+}_k) \right) \right)\Bigg] \,,
\end{split}
\end{equation}
with the same convention as in (\ref{introduction:DeltaPhi}) for the case of $\ell_1 = 0$ or $\ell_2 = 0$. We can now plug this difference into the definition of a Multi-index Monte Carlo estimator (\ref{introduction:MultiIndexEstimator}) to obtain
\begin{equation}\label{sect3MASGA}
\operatorname{Ant} \mathcal{A} := \sum_{\ell_1=0}^{L}  \sum_{\ell_2=0}^{L} \frac{1}{N_{\ell_1,\ell_2}} \sum_{j=1}^{N_{\ell_1,\ell_2}} \Delta \operatorname{Ant} \Phi^{s_{\ell_1}, h_{\ell_2}, (j)}_{f,k} \,.
\end{equation}
Note that this  corresponds to the Antithetic MIMC estimator introduced in (\ref{eq AMLMC}), based on the chain (\ref{eq AMLMC special}), with $\ell_1$ corresponding to the number of samples $s$ and $\ell_2$ to the discretisation parameter $h$, but with a more general drift $a$ and its estimator $b$.

In order to formulate our result for the general setting presented in this section, we need to specify the following assumptions.

\begin{assumption}[Lipschitz condition and global contractivity of the drift] \label{as diss}
	The drift function $a : \mathbb{R}^d \to \mathbb{R}^d$ satisfies the following conditions:
	\begin{enumerate}[i)]
		\item Lipschitz condition: there is a constant $L > 0$ such that
		\begin{equation}\label{driftLipschitz}
		|a(x) - a(y)| \leq L|x - y| \qquad \text{ for all } x, y \in \mathbb{R}^d \,.
		\end{equation}
		\item Global contractivity condition: there exists a constant $K > 0$ such that 
		\begin{equation}\label{driftDissipativity}
		\langle  x - y , a(x) - a(y) \rangle  \leq -K|x - y|^2 \qquad \text{ for all } x , y \in \mathbb{R}^d \,.
		\end{equation}
		\item Smoothness: $a \in C^2_b(\mathbb{R}^d; \mathbb{R}^d)$ (where $C^2_b(\mathbb{R}^d; \mathbb{R}^d)$ is defined as in Assumptions \ref{as main}). In particular, there exist constants $C_{a^{(1)}}$, $C_{a^{(2)}} > 0$ such that
		\begin{equation}\label{driftSmoothness} 
		|D^{\alpha}a(x)| \leq C_{a^{(|\alpha|)}}
		\end{equation}
		for all $x \in \mathbb{R}^d$ and for all multiindices $\alpha$ with $|\alpha| = 1$, $2$.
	\end{enumerate}
\end{assumption}

We remark that condition (\ref{driftSmoothness}) could be easily removed by approximating a non-smooth drift $a$ with suitably chosen smooth functions. This, however, would create additional technicalities in the proof and hence we decided to work with (\ref{driftSmoothness}).

We now impose the following assumptions on the estimator $b$ of the drift $a$.

\begin{assumptionsingle}[Lipschitz condition of the estimator] \label{as Lipschitz estimator}
	There is a constant $\bar{L} > 0$ such that for all $x$, $y \in \mathbb{R}^d$ and all random variables $U^s$ such that $\mathbb{E}[b(x,U^s)] = a(x)$ for all $x \in \mathbb{R}^d$, we have
	\begin{equation}\label{ina driftLipschitz}
	\mathbb{E}|b(x,U^s) - b(y,U^s)| \leq \bar{L}|x - y| \,.
	\end{equation}
\end{assumptionsingle}

\begin{assumptionsingle}[Variance of the estimator] \label{as ina}
	There exists a constant $\sigma > 0$ of order $\mathcal{O}(s^{-1})$ such that for any $x \in \mathbb{R}^d$ and any random variable $U^s$ such that $\mathbb{E}[b(x,U^s)] = a(x)$ for all $x \in \mathbb{R}^d$, we have
	\begin{equation}\label{ina estimatorVariance}
	\mathbb{E} \left| b(x, U^s) - a(x) \right|^2 \leq \sigma^2 (1+|x|^2) \,.
	\end{equation}
\end{assumptionsingle}

\begin{assumptionsingle}[Fourth centered moment of the estimator]\label{as fourth moment}
	There exists a constant $\sigma^{(4)} \geq 0$ of order $\mathcal{O}(s^{-2})$ such that for any $x \in \mathbb{R}^d$ and for any random variable $U^s$ such that $\mathbb{E}[b(x,U^s)] = a(x)$ for all $x \in \mathbb{R}^d$, we have
	\begin{equation}\label{fourthCenteredMomentOfEstimator}
	\mathbb{E} \left| b(x, U^s) - a(x) \right|^4 \leq \sigma^{(4)} (1+|x|^4) \,.
	\end{equation}
\end{assumptionsingle}

Note that obviously Assumption \ref{as fourth moment} implies Assumption \ref{as ina}. However, we formulate these conditions separately in order to keep track of the constants in the proofs. Moreover, with the same constant $\sigma^{(4)}$ as in (\ref{fourthCenteredMomentOfEstimator}), we impose

\begin{assumptionsingle}\label{as smoothnessEstimator}
 The estimator $b(x,U)$ as a function of $x$ is twice continuously differentiable for any $U$ and, for any $x \in \mathbb{R}^d$, we have
		$\mathbb{E} \left| \nabla b(x,U^s) - \nabla a(x) \right|^4 \leq \sigma^{(4)}(1+|x|^4)$.
\end{assumptionsingle}

Note that $\nabla a(x)$, $\nabla b(x,U^s) \in \mathbb{R}^{d \times d}$ and we use the matrix norm $|\nabla a(x)|^2 := \sum_{i,j=1}^{d} |\partial_i a_j(x)|^2$, where $a(x) = (a_1(x), \ldots, a_d(x))$.

\begin{assumption}\label{as additionalAssumption}
	Partial derivatives of the estimator of the drift are estimators of the corresponding partial derivatives of the drift. More precisely, for any multi-index $\alpha$ with $|\alpha| \leq 2$ and for any random variable $U^s$ such that $\mathbb{E}[b(x,U^s)] = a(x)$ for all $x \in \mathbb{R}^d$, we have $\mathbb{E}[D^{\alpha} b(x,U^s)] = D^{\alpha} a(x)$ for any $x \in \mathbb{R}^d$.
\end{assumption}

\begin{assumptionsingle}[Growth of the drift]\label{as linear fourth}
	There exists a constant $L_0^{(4)} > 0$ such that for all $x \in \mathbb{R}^d$ we have
	\begin{equation}\label{linearGrowthFourth}
	|a(x)|^4 \leq L_0^{(4)}\left(1 + |x|^4\right) \,.
	\end{equation}
\end{assumptionsingle}

Finally, we have the following condition that specifies the behaviour of the drift estimator $b$ with respect to the random variables $U^{s_{\ell_1},h_{\ell_2},1}_{k}$ and $U^{s_{\ell_1},h_{\ell_2},2}_{k}$ introduced in (\ref{sect3AMLMCviasubsampling}).

\begin{assumptionsingle}\label{as splittingEstimator}
	For any $x \in \mathbb{R}^d$ we have
	$b(x, U^{s_{\ell_1},h_{\ell_2}}_{k}) = \frac{1}{2}b(x, U^{s_{\ell_1},h_{\ell_2},1}_{k}) + \frac{1}{2}b(x, U^{s_{\ell_1},h_{\ell_2},2}_{k})$.
\end{assumptionsingle}

Even though this set of conditions is long, the assumptions are in fact rather mild and it is an easy exercise to verify that when
\begin{equation*}
a(x) := \frac{1}{m}\sum_{i=1}^{m} \nabla_x v(x,\xi_i) \qquad \text{ and } \qquad b(x,U^s) := \frac{1}{s} \sum_{i=1}^{s} \nabla_x v(x, \xi_{U^s_i}) \,,
\end{equation*}
where $U^s_i \sim \operatorname{Unif}(\{ 1, \ldots , m \})$ for $i \in \{ 1, \ldots , s \}$ are i.i.d.\ random variables, uniformly distributed on $\{ 1, \ldots, m \}$, whereas $v : \mathbb{R}^d \times \mathbb{R}^k \to \mathbb{R}$ is the function satisfying Assumptions \ref{as main}, then $a$ and $b$ satisfy all Assumptions \ref{as diss}, \ref{as Lipschitz estimator}, \ref{as ina}, \ref{as fourth moment}, \ref{as smoothnessEstimator}, \ref{as additionalAssumption}, \ref{as linear fourth} and \ref{as splittingEstimator}. The only conditions that actually require some effort to be checked are Assumptions \ref{as ina} and \ref{as fourth moment}, however, they can be verified by extending the argument from Example 2.15 in \cite{MajkaMijatovicSzpruch2018}, where a similar setting was considered. As it turns out, these conditions hold also for the case of subsampling without replacement. All the details are provided in Appendix \ref{sectionSubsampling}.

We have the following result.
\begin{theorem}\label{thm:mainTheorem}
	Under Assumptions \ref{as diss}, \ref{as Lipschitz estimator}, \ref{as ina}, \ref{as fourth moment}, \ref{as smoothnessEstimator}, \ref{as additionalAssumption}, \ref{as linear fourth} and \ref{as splittingEstimator} on the drift $a$ and its estimator $b$, the MASGA estimator (\ref{sect3MASGA})
	achieves the mean square error smaller than $\varepsilon > 0$ at the computational cost $\varepsilon^{-2}$. Here, at each level $(\ell_1,\ell_2) \in \{ 0, \ldots , L \}^2$, the number of paths $N_{\ell_1,\ell_2}$ is given by $\varepsilon^{-2}\left( \sqrt{\frac{\mathbb{V}[(f, \bm \Delta^{A} \pi^{(\ell_1,\ell_2)})]}{C_{(\ell_1,\ell_2)}}} \sum_{\bm \ell \in [L]^2} \sqrt{\mathbb{V}[(f, \bm \Delta^{A} \pi^{\bm \ell})] C_{\bm \ell}} \right) $, where $[L]^2 := \{0, \ldots , L \}^2$, $C_{\bm \ell}$ is defined via (\ref{eq cost def}) and $(f, \bm \Delta^{A} \pi^{(\ell_1,\ell_2)}) := \Delta \operatorname{Ant} \Phi^{s_{\ell_1},h_{\ell_2}}_{f,k}$ given by (\ref{sect3nineterms}). 
\end{theorem}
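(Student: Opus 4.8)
The plan is to obtain Theorem \ref{thm:mainTheorem} as a direct consequence of two ingredients: the abstract multi-index complexity result of \cite{Haji-Ali2016} (quoted as Theorem \ref{th complexity}), and a product-form second moment estimate for the nested antithetic increment $\Delta \operatorname{Ant} \Phi^{s_{\ell_1}, h_{\ell_2}}_{f,k}$ from \eqref{sect3nineterms}, which is exactly the assertion of Lemma \ref{thm:crucialLemma}. Thus the whole argument reduces to identifying the three rate exponents $\bm \alpha$, $\bm \beta$, $\bm \gamma$ demanded by Theorem \ref{th complexity} and checking that they place the estimator \eqref{sect3MASGA} in the canonical Monte Carlo regime.

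First I would record the cost rate, which is immediate: since $C_{\bm \ell} = t h_{\ell_2}^{-1} s_{\ell_1}$ by \eqref{eq cost def} and we have fixed $h_{\ell_2} = 2^{-\ell_2} h_0$ and $s_{\ell_1} = 2^{\ell_1} s_0$, we get $C_{\bm \ell} \lesssim 2^{\ell_1 + \ell_2} = 2^{\langle \bm \gamma, \bm \ell \rangle}$ with $\bm \gamma = (1,1)$. Next I would invoke Lemma \ref{thm:crucialLemma} for the variance rate: under Assumptions \ref{as diss}--\ref{as splittingEstimator} and for Lipschitz $f$ it supplies the product bound
\[
\mathbb{E}\left[ \left( \Delta \operatorname{Ant} \Phi^{s_{\ell_1}, h_{\ell_2}}_{f,k} \right)^2 \right] \lesssim s_{\ell_1}^{-2} h_{\ell_2}^2 \lesssim 2^{-2\ell_1 - 2\ell_2} \,,
\]
so that $\bm \beta = (2,2)$. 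Because $(f, \bm \Delta^{A} \pi^{\bm \ell}) = \Delta \operatorname{Ant} \Phi^{s_{\ell_1}, h_{\ell_2}}_{f,k}$, the same estimate controls $\mathbb{V}[(f, \bm \Delta^{A} \pi^{\bm \ell})]$, and by Jensen's inequality it yields the weak rate $|\mathbb{E}[(f, \bm \Delta^{A} \pi^{\bm \ell})]| \lesssim 2^{-\ell_1 - \ell_2}$, i.e. $\bm \alpha = \bm \beta/2 = (1,1)$. These are precisely the bounds abbreviated in \eqref{eq crucialCondition}.

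With $\bm \alpha = (1,1)$, $\bm \beta = (2,2)$ and $\bm \gamma = (1,1)$ in hand, I would feed them into Theorem \ref{th complexity}. Since $\gamma_s < \beta_s$ strictly in each coordinate $s \in \{1,2\}$ and the relation $\bm \alpha = \bm \beta/2$ holds, the estimator lies in the regime where the variance decay dominates the cost growth in every direction, and the complexity theorem delivers the optimal cost of order $\varepsilon^{-2}$ to reach $\operatorname{MSE} < \varepsilon$. The optimal per-level allocation it returns is exactly the $N_{\ell_1,\ell_2}$ displayed in the statement, namely the standard MIMC rule $N_{\bm \ell} \propto \sqrt{\mathbb{V}[(f, \bm \Delta^{A} \pi^{\bm \ell})]/C_{\bm \ell}}$ renormalised by $\varepsilon^{-2}\sum_{\bm \ell} \sqrt{\mathbb{V}\, C}$, which completes the deduction.

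The genuinely hard part is Lemma \ref{thm:crucialLemma}: it must furnish the \emph{multiplicative} decay $s_{\ell_1}^{-2} h_{\ell_2}^2$ rather than a weaker bound that fails to gain in one of the two directions; if, say, the mixed increment were only shown to satisfy $\mathbb{E}[(f, \bm \Delta^{A}\pi^{\bm \ell})^2] \lesssim 2^{-2\ell_1}$ with no decay in $\ell_2$, one component of $\bm \beta$ would vanish, $\gamma_s < \beta_s$ would fail in that coordinate, and the $\varepsilon^{-2}$ rate would collapse. Establishing the full product bound requires a simultaneous one-step mean-square analysis of the nine coupled chains in \eqref{sect3nineterms}: the second-order cancellation in the discretisation direction (via the synchronous Brownian coupling, as illustrated in Example \ref{exampleAMLMC}) and the cancellation in the subsampling direction (via the splitting Assumption \ref{as splittingEstimator} together with the fourth-moment controls of Assumptions \ref{as fourth moment} and \ref{as smoothnessEstimator}) must be shown not to interfere, so that the two antithetic gains multiply. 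The global contractivity in Assumption \ref{as diss}(ii) is what keeps the accumulated one-step errors bounded uniformly in the number of steps $k = t h_{\ell_2}^{-1}$, which is essential since $k$ grows as the discretisation is refined.
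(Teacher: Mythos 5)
Your proposal is correct and follows essentially the same route as the paper's own proof: both reduce the theorem to the MIMC complexity result (Theorem \ref{th complexity}) by extracting $\bm \gamma = (1,1)$ from the cost definition, $\bm \beta = (2,2)$ from the product-form second moment bound in Lemma \ref{thm:crucialLemma}, and $\bm \alpha = (1,1)$ via $|\mathbb{E}(f, \bm \Delta^A \pi^{\bm \ell})| \leq \left( \mathbb{E}|(f, \bm \Delta^A \pi^{\bm \ell})|^2 \right)^{1/2} \lesssim h_{\ell_2}/s_{\ell_1}$. Your added remarks on why the multiplicative decay in both indices is indispensable match the paper's own discussion surrounding \eqref{eq crucialCondition}.
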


As we explained in Section \ref{section:mainResult}, the proof of Theorem \ref{theMainTheorem} (and its generalisation Theorem \ref{thm:mainTheorem}) relies on the MIMC complexity analysis from \cite{Haji-Ali2016} (see also \cite{Giles2015Acta}). 
\begin{theorem} \label{th complexity}
	Fix $\varepsilon \in (0, e^{-1})$. Let $(\bm \alpha, \bm \beta, \bm \gamma)\in \mathbb R^{3r}$ be a triplet of vectors in $\mathbb{R}^r$ such that for all $k \in \{ 1, \ldots , r \}$ we have $\alpha_k \geq \frac{1}{2} \beta_{k}$. Assume that for each $\bm \ell \in \mathbb{N}^r$
	\[
	i)\,\,| \mathbb E[(f, \bm \Delta^A \pi^{\bm \ell})]| \lesssim 2^{- \langle  \bm \alpha , \bm \ell \rangle}, 
	\quad ii) \,\,\mathbb{V}[(f, \bm \Delta^A \pi^{\bm \ell})] \lesssim 2^{- \langle \bm \beta,  \bm \ell \rangle}	
	\quad iii)\,\,  C_{\bm \ell} \lesssim 2^{\langle \bm \gamma , \bm \ell \rangle} \,.
	\]
	If $\max_{k\in[1,\ldots,r]}\frac{(\gamma_k-\beta_k)}{\alpha_k} < 0$, then there exists a set $\mathcal L \subset \mathbb{N}^r$ and a sequence $(N_{\bm \ell})_{\bm \ell\in \mathcal L}$ such that the MLMC estimator $\mathcal{A}^{A, \mathcal{L}}(f)$ defined in \eqref{eq AMLMC} satisfies
	\[
	\mathbb E[ ((f,\pi) -  \mathcal{A}^{A, \mathcal{L}}(f))^2  ] < \varepsilon^2\,,
	\]
	with the computational cost $\varepsilon^{-2}$.
\end{theorem}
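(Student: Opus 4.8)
The plan is to prove this by the Multi-index Monte Carlo complexity argument of \cite{Haji-Ali2016} (and \cite{Giles2015Acta} in the case $r=1$), splitting the mean square error into a squared bias and a variance, and then optimising the sample sizes $(N_{\bm\ell})$ and the truncation set $\mathcal{L}$ separately. First I would record the two structural identities on which everything rests. By the telescoping identity \eqref{eq telescop} and the fact that $\mathbb{E}(f,\bm\Delta^A\pi^{\bm\ell,N_{\bm\ell}}) = \mathbb{E}(f,\bm\Delta^A\pi^{\bm\ell})$, the bias of $\mathcal{A}^{A,\mathcal{L}}(f)$ defined in \eqref{eq AMLMC} equals the tail $\sum_{\bm\ell\notin\mathcal{L}}\mathbb{E}(f,\bm\Delta^A\pi^{\bm\ell})$; by independence across levels its variance is $\sum_{\bm\ell\in\mathcal{L}}N_{\bm\ell}^{-1}\mathbb{V}[(f,\bm\Delta^A\pi^{\bm\ell})]$. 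Hence $\mathbb{E}[((f,\pi)-\mathcal{A}^{A,\mathcal{L}}(f))^2]$ is at most the square of the former plus the latter, and it suffices to make each of these at most $\varepsilon^2/2$.

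To control the bias I would choose $\mathcal{L}$ downward-closed and large enough that $\sum_{\bm\ell\notin\mathcal{L}}2^{-\langle\bm\alpha,\bm\ell\rangle}\le\varepsilon/\sqrt 2$; since every $\alpha_k$ is strictly positive the full series over $\mathbb{N}^r$ converges, so such a set exists, with linear extent of order $\log(\varepsilon^{-1})$ in each coordinate. For the variance, with $\mathcal{L}$ fixed, minimising the cost $\sum_{\bm\ell\in\mathcal{L}}N_{\bm\ell}C_{\bm\ell}$ subject to $\sum_{\bm\ell\in\mathcal{L}}N_{\bm\ell}^{-1}V_{\bm\ell}\le\varepsilon^2/2$ (writing $V_{\bm\ell}:=\mathbb{V}[(f,\bm\Delta^A\pi^{\bm\ell})]$) is a Lagrange/Cauchy--Schwarz problem whose continuous optimum is $N_{\bm\ell}\propto\sqrt{V_{\bm\ell}/C_{\bm\ell}}$, reproducing the value appearing in Theorem \ref{thm:mainTheorem} and yielding a minimal cost of order $\varepsilon^{-2}\big(\sum_{\bm\ell\in\mathcal{L}}\sqrt{V_{\bm\ell}C_{\bm\ell}}\big)^2$.

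The decisive observation is that this prefactor stays bounded as $\varepsilon\to0$. Using ii) and iii) one has $\sqrt{V_{\bm\ell}C_{\bm\ell}}\lesssim 2^{-\langle(\bm\beta-\bm\gamma)/2,\bm\ell\rangle}$, and the hypothesis $\max_k(\gamma_k-\beta_k)/\alpha_k<0$ together with $\alpha_k>0$ forces $\beta_k>\gamma_k$ for every $k$. Thus each geometric factor has ratio strictly below one, so $\sum_{\bm\ell\in\mathbb{N}^r}\sqrt{V_{\bm\ell}C_{\bm\ell}}$ converges to a finite constant independent of $\mathcal{L}$ and of $\varepsilon$, and restricting the sum to $\mathcal{L}$ only decreases it. Consequently the optimal cost is $O(\varepsilon^{-2})$, which is the claimed complexity.

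The part requiring the most care, and where I expect the real work to lie, is reconciling the integrality constraint $N_{\bm\ell}\ge1$ with the choice of $\mathcal{L}$. Rounding the continuous optimum up adds an extra $\sum_{\bm\ell\in\mathcal{L}}C_{\bm\ell}$ to the cost, and since $C_{\bm\ell}\lesssim 2^{\langle\bm\gamma,\bm\ell\rangle}$ grows with $\bm\ell$, this additive term is not automatically $O(\varepsilon^{-2})$ for an arbitrary box-shaped $\mathcal{L}$. The resolution, following \cite{Haji-Ali2016}, is to take $\mathcal{L}$ as a weighted total-degree set $\{\bm\ell:\langle\bm\ell,\bm w\rangle\le\tau\}$ with weights $\bm w$ tuned to $\bm\alpha$ and $\bm\gamma$ and $\tau$ of order $\log(\varepsilon^{-1})$, so that the bias tail is killed while both the number of active levels and the aggregate cost $\sum_{\bm\ell\in\mathcal{L}}C_{\bm\ell}$ remain of lower order than $\varepsilon^{-2}$. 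Verifying that this single choice of $\mathcal{L}$ simultaneously bounds the bias, keeps the convergent prefactor finite, and absorbs the rounding overhead is the crux of the argument; the remaining estimates are the routine geometric-series bookkeeping already carried out in \cite{Giles2015Acta, Haji-Ali2016}.
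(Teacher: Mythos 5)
Your proposal is correct, and it is essentially the standard MIMC complexity argument of \cite{Haji-Ali2016} (and \cite{Giles2015Acta} for $r=1$): bias--variance splitting, Lagrangian sample allocation $N_{\bm\ell}\propto\sqrt{V_{\bm\ell}/C_{\bm\ell}}$, summability of $\sqrt{V_{\bm\ell}C_{\bm\ell}}$ from $\beta_k>\gamma_k$, and the total-degree choice of $\mathcal{L}$ to absorb the rounding overhead. Be aware, though, that the paper itself does not prove Theorem \ref{th complexity} at all --- it imports it verbatim from \cite{Haji-Ali2016} and devotes its own analysis to verifying hypotheses i)--iii) for the MASGA estimator (Lemma \ref{thm:crucialLemma}) --- so your sketch reconstructs the argument the paper only cites, and it is the same argument.
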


The key challenge in constructing and analysing MIMC estimators is to establish conditions i)-iii) in Theorem \ref{th complexity} i.e., to show that the leading error bounds for the bias, variance and cost can be expressed in the product form. In fact, there are very few results in the literature that present the analysis giving i)-iii), with the exception of \cite[Section 9]{Giles2015Acta} and \cite{giles2019multilevel}. The bulk of the analysis in this paper is devoted to the analysis of ii). We remark that the optimal choice of $\mathcal L=[L]^2$ is dictated by the relationship between $(\bm \alpha, \bm \beta, \bm \gamma)$, see \cite{Haji-Ali2016}.

The following lemma will be crucial for the proof of Theorem \ref{thm:mainTheorem}.

\begin{lemma}\label{thm:crucialLemma}
	Let Assumptions \ref{as diss}, \ref{as Lipschitz estimator}, \ref{as ina}, \ref{as fourth moment}, \ref{as smoothnessEstimator}, \ref{as additionalAssumption}, \ref{as linear fourth} and \ref{as splittingEstimator} hold. Then there is a constant $h_0 > 0$ and a constant $C > 0$ (independent of $s$ and $h$) such that for any Lipschitz function $f: \mathbb{R}^d \to \mathbb{R}$, for any $s \geq 1$, $h \in (0, h_0)$ and for any $k \geq 1$,
	\begin{equation*}
	\mathbb{E}|\Delta \operatorname{Ant} \Phi^{s, h}_{f,k}|^2 \leq C h^2/s^2 \,.
	\end{equation*}
\end{lemma}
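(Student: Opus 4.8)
The plan is to follow the three-step programme announced just before the lemma: first control the purely time-antithetic difference (the $r=1$, $\bm\ell=\ell_2$ case), then the purely subsampling-antithetic difference ($r=1$, $\bm\ell=\ell_1$), and finally assemble the nine-term object \eqref{sect3nineterms} as the subsampling-antithetic operator applied to the time-antithetic one, so that the two gains multiply. Everything rests on a one-step analysis of the coupled chains \eqref{sect3fineandcoarse}--\eqref{sect3AMLMCviasubsampling} driven by the shared noise $(Z_k)$, combined with the global contractivity \eqref{driftDissipativity}; the latter is what forces all the recursions below into a contraction and thereby makes the constant $C$ uniform in the number of steps $k$. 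As a preliminary I would record the a priori bounds $\sup_{k}\mathbb{E}|X_k^{s,h}|^p\leq C_p$ for $p=2,4$, uniform in $s$ and in $h\in(0,h_0)$, which follow from \eqref{driftDissipativity}, \eqref{linearGrowthFourth} and \eqref{fourthCenteredMomentOfEstimator} and are needed to absorb the factors $(1+|x|^2)$, $(1+|x|^4)$ in Assumptions \ref{as ina} and \ref{as fourth moment}.

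The heart of the matter is a pair of pathwise estimates in each direction. Writing $\bar X^{c}:=\tfrac12(X^{c+}+X^{c-})$ for the antithetic average, I would show by a one-step expansion that (i) the averaged fine–coarse difference is of higher order, $\mathbb{E}|X^{f}_k-\bar X^{c}_k|^2\lesssim h^2$, and (ii) the within-pair difference has a small fourth moment, $\mathbb{E}|X^{c+}_k-X^{c-}_k|^4\lesssim h^2$, together with the analogues $\mathbb{E}|X^{s}_k-\bar X^{s/2}_k|^2\lesssim s^{-2}$ and $\mathbb{E}|X^{s/2,+}_k-X^{s/2,-}_k|^4\lesssim s^{-2}$ in the subsampling direction, where Assumption \ref{as splittingEstimator} guarantees that the fine drift is exactly the average of the two coarse drifts. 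In each case one conditions on the past, uses unbiasedness \eqref{estimator} so that the first-order cross term collapses to $\langle\,\cdot\,,\,a(\cdot)-a(\cdot)\,\rangle$ and is rendered negative by \eqref{driftDissipativity}, and bounds the residual quadratic (resp. quartic) injection by \eqref{ina driftLipschitz}, \eqref{ina estimatorVariance} and \eqref{fourthCenteredMomentOfEstimator}; the contraction then turns this into a geometric recursion whose fixed point gives the stated orders uniformly in $k$. The extra power of $h$ (resp. $s^{-1}$) in (i) is exactly the antithetic cancellation of Example \ref{exampleAMLMC}, where for an affine functional the averaged difference vanishes identically.

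To pass from the chains to the functional I would split each one-dimensional antithetic difference as
\begin{equation*}
f(X^{f})-\tfrac12\big(f(X^{c+})+f(X^{c-})\big)=\big[f(X^{f})-f(\bar X^{c})\big]+\big[f(\bar X^{c})-\tfrac12\big(f(X^{c+})+f(X^{c-})\big)\big],
\end{equation*}
bounding the first bracket by $L|X^{f}-\bar X^{c}|$ and hence by estimate (i), whereas the symmetric second bracket is handled by a second-order expansion around $\bar X^{c}$: the identity $X^{c\pm}-\bar X^{c}=\pm\tfrac12(X^{c+}-X^{c-})$ makes the first-order contributions cancel, leaving a remainder governed by $\mathbb{E}|X^{c+}-X^{c-}|^4$ and thus by (ii). The full nine-term difference \eqref{sect3nineterms} is then obtained by running the same conditioning argument on the joint system of all nine coupled chains, applying the subsampling decomposition to the already time-antithetic increments so that the two cancellations act on independent sources of randomness and the bounds multiply, yielding $\mathbb{E}|\Delta\operatorname{Ant}\Phi^{s,h}_{f,k}|^2\lesssim h^2/s^2$.

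I expect the obstacle to be twofold. The harder conceptual point is guaranteeing that the bound is of genuine \emph{product} form $h^2 s^{-2}$, rather than $h^2+s^{-2}$ or $h^2 s^{-1}$: this is the essence of the multi-index condition \eqref{eq crucialCondition} and forces one to track mixed second-order quantities on the entire nine-chain system and to check that the time- and subsampling-antithetic cancellations do not interfere when the drift $b$ depends nonlinearly on both $x$ and the data. The more technical point is the symmetric second bracket above for a merely Lipschitz $f$: extracting the full order gain relies on the effectively second-order behaviour of $f(\bar X^c)-\tfrac12\big(f(X^{c+})+f(X^{c-})\big)$, so one must exploit its averaged structure together with the smoothness encoded in estimates (i)–(ii), rather than the crude bound $\tfrac{L}{2}|X^{c+}-X^{c-}|$, which on its own would cost a power of $h$.
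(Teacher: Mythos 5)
Your proposal reproduces the paper's architecture faithfully up to the last step: the uniform second/fourth moment bounds, the splitting of each one-parameter antithetic difference into a ``fine minus average'' term plus a symmetric term whose first-order Taylor contributions cancel via $X^{c\pm}-\bar X^{c}=\pm\tfrac12(X^{c+}-X^{c-})$, and the fourth-moment control of the within-pair differences are exactly the content of Section \ref{sectionProofs} and Lemmas \ref{lem:term2}, \ref{thm:AntitheticSubsamplingAux}, \ref{thm:AntitheticSubsampling}. The genuine gap is your final assembly. You assert that on the nine-chain system ``the two cancellations act on independent sources of randomness and the bounds multiply''; that is a heuristic, not a mechanism, and it is precisely the step this lemma is about. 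The paper does not multiply one-parameter bounds. It first reduces the Lipschitz functional to the identity, proving $\mathbb{E}|\Psi^g_{k}|^2 \leq C_1\mathbb{E}|\Psi_{k}|^2 + C_2 h^2/s^2$, and then establishes for the identity the recursion $\mathbb{E}|\Psi_{k+2}|^2 \leq (1-ch)\mathbb{E}|\Psi_{k}|^2 + Ch^3$ with $C=\mathcal{O}(s^{-2})$, via the decomposition $\Psi_{k+2}=\Psi_{k}+\Xi_k+\Upsilon_k$. Getting the contraction requires showing that, after several Taylor expansions, whole groups of first-order terms in $\mathbb{E}\langle\Psi_k,\Xi_k\rangle$ recombine into $\Psi_k$ itself, producing $h\,\mathbb{E}\langle \Psi_{k},\sum_{|\alpha|=1}D^{\alpha}a(X_{k})(\Psi_{k})^{\alpha}\rangle \leq -Kh\,\mathbb{E}|\Psi_k|^2$ (Lemma \ref{thm:lemmaXi2}); this uses Assumptions \ref{as additionalAssumption} and \ref{as splittingEstimator} in an essential way and has no counterpart in your sketch. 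Likewise, every remainder must be shown to be $\mathcal{O}(h^3/s^2)$ per step, which needs further product-form auxiliary estimates (Lemmas \ref{thm:AntitheticMasterChain}, \ref{thm:lemmaXi1}, \ref{thm:AntitheticMean}, \ref{thm:AntitheticMeanMean}) that your plan does not supply.

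A concrete symptom of the gap is that your building blocks are each stated with one of the two factors missing, so no combination argument could yield $h^2/s^2$ from them. In the subsampling direction you claim $\mathbb{E}|X^{c+}_k-X^{c-}_k|^4\lesssim s^{-2}$ and $\mathbb{E}|X^{f}_k-\bar X^{c}_k|^2\lesssim s^{-2}$, whereas Lemmas \ref{thm:AntitheticSubsamplingAux} and \ref{thm:AntitheticSubsampling} prove the stronger bounds $\lesssim h^2/s^2$ (the constants $C_1$, $C_2$ there are $\mathcal{O}(s^{-2})$ and the bounds carry an explicit factor $h^2$); in the time direction your bound $\mathbb{E}|X^{c+}_k-X^{c-}_k|^4\lesssim h^2$ must be proved with a constant of order $\mathcal{O}(s^{-2})$, as in Lemma \ref{lem:term2}, which you do not note. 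Starting from bounds of the form $h^2$ and $s^{-2}$ separately, the natural outcome of any naive combination is a sum $h^2+s^{-2}$, which is exactly what condition \eqref{eq crucialCondition} rules out as insufficient. Since you yourself identify the product form as ``the obstacle'' but offer no mechanism to overcome it, the proposal stops short of a proof at the decisive point.
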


As we already indicated in Section \ref{section:mainResult}, once we have an upper bound on the second moment (and thus on the variance) of $\Delta \operatorname{Ant} \Phi^{s, h}_{f,k}$ such as in Lemma \ref{thm:crucialLemma}, the proof of Theorem \ref{thm:mainTheorem} becomes rather straightforward.

\begin{proof}[Proof of Theorem \ref{thm:mainTheorem}]
	Note that we have $\mathbb{V}[(f, \bm \Delta^A \pi^{\bm \ell})] \leq \mathbb{E} [(f, \bm \Delta^A \pi^{\bm \ell})^2] \lesssim h_{\ell_2}^2 / s_{\ell_1}^2$ due to Lemma \ref{thm:crucialLemma}. Moreover, the number of time-steps and the number of subsamples at each level of MIMC is doubled (i.e., we have $s_{\ell_1} = 2^{\ell_1}s_0$ and $h_{\ell_2} = 2^{-\ell_2}h_0$ for all $\ell_1$, $\ell_2 \in \{ 0, \ldots, L \}$) and hence $\bm \gamma = (1,1)$ and $\bm \beta=(2,2)$ in the assumption of Theorem \ref{th complexity} (recall that $C_{\bm \ell} \lesssim s_{\ell_1} h_{\ell_2}^{-1}$). Finally, we have
	\begin{equation*}
	|\mathbb{E} (f, \bm \Delta^A \pi^{\bm \ell})| \leq \left( \mathbb{E} |(f, \bm \Delta^A \pi^{\bm \ell})|^2 \right)^{1/2} \lesssim h_{\ell_2} / s_{\ell_1} \,,
	\end{equation*}
	 which implies that $\bm \alpha = (1,1)$. Hence the assumptions of Theorem \ref{th complexity} are satisfied and the overall complexity of MIMC is indeed $\varepsilon^{-2}$.
\end{proof}

\begin{proof}[Proof of Theorem \ref{theMainTheorem}]
	Under Assumptions \ref{as main}, the function $a(x) := - \nabla v_0(x) - \frac{1}{m}\sum_{i=1}^{m} \nabla_x v(x,\xi_i)$ and its estimator $b(x,U^s) := -\nabla v_0(x) - \frac{1}{s}\sum_{i=1}^{s} \nabla_x v(x,\xi_{U^s_i})$, where $U^s_i$ are mutually independent random variables, uniformly distributed on $\{ 1, \ldots m \}$, satisfy all Assumptions \ref{as diss}, \ref{as Lipschitz estimator}, \ref{as ina}, \ref{as smoothnessEstimator}, \ref{as additionalAssumption}, \ref{as linear fourth}, \ref{as fourth moment} and \ref{as splittingEstimator}. Hence we can just apply Theorem \ref{thm:mainTheorem} to conclude.
\end{proof}

\section{Analysis of AMLMC}\label{sectionProofs}

The estimator we introduced in (\ref{eq AMLMC}), see also (\ref{sect3MASGA}), can be interpreted as built from two building blocks: 
the antithetic MLMC estimator with respect to the discretisation parameter, which corresponds to taking $r=1$ and ${\bm \ell} = \ell_2$ in (\ref{eq AMLMC}), and the antithetic MLMC estimator with respect to subsampling, which corresponds to taking $r=1$ and ${\bm \ell} = \ell_1$ in (\ref{eq AMLMC}). Let us begin our analysis by focusing on the former.

\subsection{AMLMC via discretisation}\label{sectionAntitheticViaDiscretisation}

We will analyse one step of the MLMC algorithm, for some fixed level $\bm \ell$. To this end, let us first introduce the following fine $(X^f_k)_{k \in \mathbb{N}}$ and coarse $(X^c_k)_{k \in 2\mathbb{N}}$ chains
\begin{equation}\label{standardMLMCchains}
\begin{split}
X^f_{k+1} &= X^f_{k} + h b(X^f_{k}, U^f_{k}) + \beta \sqrt{h}Z_{k+1} \,, \, \,
X^f_{k+2} = X^f_{k+1} + h b(X^f_{k+1}, U^f_{k+1}) + \beta \sqrt{h} Z_{k+2} \\
X^c_{k+2} &= X^c_{k} + 2h b(X^c_{k}, U^c_{k}) + \beta \sqrt{2h} \hat{Z}_{k+2} \,,
\end{split}
\end{equation}
where $h > 0$ is fixed, $(U^f_{k})_{k=0}^{\infty}$ and $(U^c_{k})_{k=0}^{\infty}$ are mutually independent random variables such that for $U \in \{ U^f_{k}, U^c_{k} \}$ we have $\mathbb{E}[b(x,U)] = a(x)$ for all $x \in \mathbb{R}^d$ and all $k \geq 0$ and $(Z_k)_{k=1}^{\infty}$ are i.i.d.\ random variables with $Z_k \sim \mathcal{N}(0,I)$. We also have $\hat{Z}_{k+2} := \frac{1}{\sqrt{2}}\left( Z_{k+1} + Z_{k+2} \right)$. 

In order to analyse the antithetic estimator, we also need to introduce two auxiliary chains
\begin{equation}\label{antitheticChains}
\begin{split}
X^{c+}_{k+2} &= X^{c+}_{k} + 2h b(X^{c+}_{k}, U^f_{k}) + \beta \sqrt{2h} \hat{Z}_{k+2} \\
X^{c-}_{k+2} &= X^{c-}_{k} + 2h b(X^{c-}_{k}, U^f_{k+1}) + \beta \sqrt{2h} \hat{Z}_{k+2} \,.
\end{split}
\end{equation}
Furthermore, we denote
$\bar{X}^c_{k} = \frac{1}{2} \left( X^{c+}_{k} + X^{c-}_{k} \right)$.

Before we proceed, let us list a few simple consequences of Assumptions \ref{as diss}. We have the following bounds:
\begin{equation}\label{driftLinearGrowth}
|a(x)| \leq L_0(1 + |x|) \text{ for all } x \in \mathbb{R}^d \,, \text{ where } L_0 := \max (L, |a(0)|) \,.
\end{equation}
Let $M_2 := 4 L |a(0)|^2 / K^2 + 2 |a(0)|^2 / K$ and $M_1 : = K / 2$. Then we have
\begin{equation}\label{driftLyapunov}
\langle x, a(x) \rangle \leq M_2 - M_1|x|^2 \text{ for all } x \in \mathbb{R}^d \,.
\end{equation}
Finally, for all random variables $U$ satisfying (\ref{estimator}), we have
\begin{equation}\label{estimatorGrowth}
\mathbb{E} |b(x,U)|^2 \leq \bar{L}_0(1 + |x|^2) \text{ for all } x \in \mathbb{R}^d \,,
\end{equation}
where $\bar{L}_0 := \sigma^2 h^{\alpha} + 2 \max (L^2 , |a(0)|^2)$. Note that (\ref{driftLinearGrowth}) is an immediate consequence of (\ref{driftLipschitz}), (\ref{driftLyapunov}) follows easily from (\ref{driftLipschitz}) and (\ref{driftDissipativity}) (cf. the proof of Lemma 2.11 in \cite{MajkaMijatovicSzpruch2018}), whereas (\ref{estimatorGrowth}) is implied by (\ref{driftLipschitz}) and (\ref{ina estimatorVariance}), cf. (2.40) in \cite{MajkaMijatovicSzpruch2018}. Throughout our proofs, we will also use uniform bounds on the second and the fourth moments of Euler schemes with time steps $h$ and $2h$, i.e., we have 
\begin{equation*}
\mathbb{E}|X^f_{k}|^2 \leq C_{IEul} \,, \quad \mathbb{E}|X^{c-}_{k}|^2 \leq C^{(2h)}_{IEul} \,, \quad \mathbb{E}|X^f_{k}|^4 \leq C^{(4)}_{IEul} \quad \text{ and } \quad \mathbb{E}|X^{c-}_{k}|^4 \leq C^{(4),(2h)}_{IEul} \,,
\end{equation*}
where the exact formulas for the constants $C_{IEul}$, $C^{(2h)}_{IEul}$, $C^{(4)}_{IEul}$ and $C^{(4),(2h)}_{IEul}$ can be deduced from Lemma \ref{lem:fourthmoment} in the Appendix (for $C_{IEul}$, $C^{(2h)}_{IEul}$ see also Lemma 2.17 in \cite{MajkaMijatovicSzpruch2018}).

We now fix $g \in C_b^2(\mathbb{R}^d; \mathbb{R})$. Denote by $C_{g^{(1)}}$, $C_{g^{(2)}}$ positive constants such that $|D^{\alpha}g(x)| \leq C_{g^{(|\alpha|)}}$ for all $x \in \mathbb{R}^d$ and all multiindices $\alpha$ with $|\alpha| = 1$, $2$.

We begin by presenting the crucial idea of our proof. We will use the Taylor formula to write
\begin{equation*}
\begin{split}
g &(\bar{X}^c_{k}) - g(X^{c-}_{k}) = - \Big[ \sum_{|\alpha| = 1} D^{\alpha} g(\bar{X}^c_{k}) \left( \bar{X}^c_{k} - X^{c-}_{k} \right)^{\alpha} \\
&+ \sum_{|\alpha| = 2} \int_0^1 (1-t) D^{\alpha} g\left(\bar{X}^c_{k} + t\left(\bar{X}^c_{k} - X^{c-}_{k}\right)\right) dt \left( \bar{X}^c_{k} - X^{c-}_{k} \right)^{\alpha} \Big] \,.
\end{split}
\end{equation*}
We also express $g(\bar{X}^c_{k}) - g(X^{c+}_{k})$ in an analogous way. Note that 
\begin{equation}\label{antisymmetry}
\bar{X}^c_{k} - X^{c-}_{k} = \frac{1}{2} X^{c+}_{k} - \frac{1}{2} X^{c-}_{k} = - (\bar{X}^c_{k} - X^{c+}_{k}) 
\end{equation}
and hence we have
\begin{equation*}
\begin{split}
&g(X^f_{k}) - \frac{1}{2}\left( g(X^{c-}_{k}) + g(X^{c+}_{k}) \right) = g(X^f_{k}) - g (\bar{X}^c_{k}) + \frac{1}{2} \left( g (\bar{X}^c_{k}) - g(X^{c-}_{k}) + g (\bar{X}^c_{k}) - g(X^{c+}_{k}) \right) \\
&= g(X^f_{k}) - g (\bar{X}^c_{k}) - \frac{1}{2} \sum_{|\alpha| = 2} \int_0^1 (1-t) D^{\alpha} g\left(\bar{X}^c_{k} + t\left(\bar{X}^c_{k} - X^{c-}_{k}\right)\right) dt \left( \bar{X}^c_{k} - X^{c-}_{k} \right)^{\alpha} \\
&- \frac{1}{2} \sum_{|\alpha| = 2} \int_0^1 (1-t) D^{\alpha} g\left(\bar{X}^c_{k} + t\left(\bar{X}^c_{k} - X^{c+}_{k}\right)\right) dt \left( \bar{X}^c_{k} - X^{c+}_{k} \right)^{\alpha} \,,
\end{split}
\end{equation*} 
i.e., the first order terms in the Taylor expansions cancel out. Thus, using the inequalities $\sum_{|\alpha| = 2}  D^{\alpha} g(x) \left( \bar{X}^c_{k} - X^{c+}_{k} \right)^{\alpha} \leq \| \nabla^2 g \|_{\operatorname{op}} |\bar{X}^c_{k} - X^{c+}_{k}|^2$ 
and $\| \nabla^2 g \|_{\operatorname{op}} \leq C_{g^{(2)}}$ for all $x \in \mathbb{R}^d$, where $\| \nabla^2 g \|_{\operatorname{op}}$ is the operator norm of the Hessian matrix, we see that
\begin{equation}\label{e:aux1}
\begin{split}
&\mathbb{E}\left| g(X^f_{k}) - \frac{1}{2}\left( g(X^{c-}_{k}) + g(X^{c+}_{k}) \right)\right|^2 \leq 2 \mathbb{E}\left| g(X^f_{k}) - g (\bar{X}^c_{k}) \right|^2 \\ &+ \frac{1}{2} C_{g^{(2)}} \left( \mathbb{E}\left| \bar{X}^c_{k} - X^{c-}_{k} \right|^4 + \mathbb{E}\left| \bar{X}^c_{k} - X^{c+}_{k} \right|^4 \right) = 2 \mathbb{E}\left| g(X^f_{k}) - g (\bar{X}^c_{k}) \right|^2 + \frac{1}{16} C_{g^{(2)}} \mathbb{E}\left| X^{c+}_{k} - X^{c-}_{k} \right|^4\,.
\end{split}
\end{equation}
Note that we purposefully introduced the term $\mathbb{E}\left| X^{c+}_{k} - X^{c-}_{k} \right|^4$, since it will provide us with an improved rate in $h$. Indeed, we have the following result.

\begin{lemma}\label{lem:term2}
	Let Assumptions \ref{as diss}, \ref{as ina}, \ref{as fourth moment} and \ref{as linear fourth} hold.
	If $X^{c+}_0 = X^{c-}_0$, then for all $k \geq 1$ and for all $h \in (0,h_0)$ we have
	\begin{equation}\label{e:term2}
	\mathbb{E} |X^{c+}_{k} - X^{c-}_{k}|^4 \leq \frac{\bar{C}_1}{\bar{c}_1} h^2 \,,
	\end{equation}
	where
	$\bar{C}_1 := 72 \frac{1}{\varepsilon} h_0^{2\alpha} \left( 1 + 2 C_{IEul}^{(2h)} + C_{IEul}^{(4),(2h)} \right) \sigma^4 + 32 \left( 432\sqrt{2} + 648 + 27 h_0 \right) \sigma^{(4)}(1 + C_{IEul}^{(4),(2h)})$
	and $\bar{c}_1$, $h_0$, $\varepsilon > 0$ are chosen such that
	\begin{equation*}
	- 8K + 72 \varepsilon + 72 h_0 L^2 + 432 h_0^3 L^4 +32 h_0^2 \left( \left( (1 + 432L^4)216L^4  \right)^{1/2} + \frac{1 + 1296L^4}{4} \right)  \leq - \bar{c}_1 \,.
	\end{equation*}
\end{lemma}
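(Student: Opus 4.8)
The plan is to derive a one-step geometric recursion for the fourth moment of the difference process $D_k := X^{c+}_k - X^{c-}_k$ and then sum it. The starting observation is that, since both auxiliary chains in (\ref{antitheticChains}) are driven by the \emph{same} Gaussian increment $\hat Z_{k+2}$, the noise cancels in the difference, leaving a purely drift-driven recursion
\begin{equation*}
D_{k+2} = D_k + 2h\bigl(b(X^{c+}_k, U^f_k) - b(X^{c-}_k, U^f_{k+1})\bigr)\,, \qquad D_0 = 0\,,
\end{equation*}
where $D_0 = 0$ uses the hypothesis $X^{c+}_0 = X^{c-}_0$. This cancellation is the whole point of the antithetic coupling and is what ultimately produces an $h^2$ (rather than $h$) rate.

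Next I would split the increment into a contractive mean part and a conditionally centered fluctuation. Let $\mathcal F_k$ be the $\sigma$-algebra generated by the two chains up to step $k$; since $U^f_k$ and $U^f_{k+1}$ are fresh and thus independent of $\mathcal F_k$, unbiasedness gives $\mathbb E[b(X^{c\pm}_k,\cdot)\mid\mathcal F_k] = a(X^{c\pm}_k)$. Writing $A := D_k + 2h\bigl(a(X^{c+}_k) - a(X^{c-}_k)\bigr)$ and $B := 2hR_k$ with
\begin{equation*}
R_k := \bigl(b(X^{c+}_k, U^f_k) - a(X^{c+}_k)\bigr) - \bigl(b(X^{c-}_k, U^f_{k+1}) - a(X^{c-}_k)\bigr)\,,
\end{equation*}
we have $D_{k+2} = A + B$, where $A$ is $\mathcal F_k$-measurable and $\mathbb E[B\mid\mathcal F_k]=0$. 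Combining the global contractivity (\ref{driftDissipativity}) with the Lipschitz bound (\ref{driftLipschitz}) gives the key estimate $|A|^2 \le (1 - 4hK + 4h^2L^2)|D_k|^2$, hence $|A|^4 \le (1 - 4hK + 4h^2L^2)^2|D_k|^4 = (1 - 8hK + \mathcal{O}(h^2))|D_k|^4$, which supplies the leading $-8K$ term in the condition defining $\bar c_1$.

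I would then expand $|D_{k+2}|^4 = |A+B|^4$ and take $\mathbb E[\,\cdot\mid\mathcal F_k]$. The terms linear in $B$ vanish because $\mathbb E[B\mid\mathcal F_k]=0$, and the surviving cross terms $|A|^2|B|^2$, $\langle A,B\rangle^2$, $\langle A,B\rangle|B|^2$ and $|B|^4$ are controlled as follows: the conditional second and fourth moments of $B = 2hR_k$ are of order $h^2\sigma^2$ and $h^4\sigma^{(4)}$ by Assumptions \ref{as ina} and \ref{as fourth moment}, together with the uniform moment bounds $\mathbb E|X^{c\pm}_k|^2 \le C^{(2h)}_{IEul}$ and $\mathbb E|X^{c\pm}_k|^4 \le C^{(4),(2h)}_{IEul}$. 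Each mixed term is then split by Young's inequality, with the parameter $\varepsilon$ from the statement, into a piece of the form $\varepsilon h|D_k|^4$, which is absorbed into the contraction and accounts for the $+72\varepsilon$ in the condition on $\bar c_1$, and an additive remainder of order $h^3$, whose coefficient collects the $\tfrac1\varepsilon\sigma^4$ and $\sigma^{(4)}$ contributions appearing in $\bar C_1$ (the powers of $L$ and $h_0$ coming from the expansion of $|A|^4$ and from $|a(X^{c+}_k)-a(X^{c-}_k)| \le L|D_k|$). Collecting everything yields
\begin{equation*}
\mathbb E|D_{k+2}|^4 \le (1 - \bar c_1 h)\,\mathbb E|D_k|^4 + \bar C_1 h^3\,,
\end{equation*}
valid once $h_0$ and $\varepsilon$ are small enough that the bracketed quantity in the statement is $\le -\bar c_1 < 0$. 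Iterating from $D_0 = 0$ and summing the geometric series gives $\mathbb E|D_k|^4 \le \bar C_1 h^3/(\bar c_1 h) = (\bar C_1/\bar c_1)h^2$, which is (\ref{e:term2}).

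The main obstacle is the bookkeeping in the third step: one must expand the fourth power, check that the genuinely odd-in-$B$ terms drop under conditioning, and tune the Young parameters so that \emph{every} non-contractive contribution is either absorbed into the $1 - \bar c_1 h$ factor or is of additive order $h^3$ (carrying the decaying $\sigma$, $\sigma^{(4)}$ factors). Getting the powers of $h$ right — in particular ensuring the source term is $\mathcal{O}(h^3)$ rather than $\mathcal{O}(h^2)$, so that the geometric sum produces $h^2$ — is the delicate point, and is precisely what the noise cancellation and the conditional centering of $R_k$ make possible.
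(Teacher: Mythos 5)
Your proposal is correct and follows essentially the same route as the paper: the paper's proof (given for the analogous Lemma \ref{thm:AntitheticSubsamplingAux}, which it declares identical up to constants) derives the same one-step recursion $\mathbb{E}|X^{c+}_{k+2}-X^{c-}_{k+2}|^4 \le (1-\bar c_1 h)\,\mathbb{E}|X^{c+}_{k}-X^{c-}_{k}|^4 + \bar C_1 h^3$ from the same ingredients — noise cancellation under the synchronous coupling, contractivity of $a$ producing the leading $-8hK$, conditional unbiasedness of $b$, and Assumptions \ref{as ina}/\ref{as fourth moment} feeding Young's inequality with the parameter $\varepsilon$ — and then sums the geometric series from $X^{c+}_0 = X^{c-}_0$. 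Your up-front split into the $\mathcal{F}_k$-measurable drift part $A$ and the conditionally centered fluctuation $B$ is only a cosmetic reorganization of the paper's term-by-term expansion $B_1,\dots,B_5$ combined with the decomposition $b^+-b^- = (b^+-a^+)+(a^+-a^-)+(a^--b^-)$.
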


Note that the constant $\bar{C}_1$ above is of order $\mathcal{O}(s^{-2})$ due to Assumptions \ref{as ina} and \ref{as fourth moment}. Hence the bound in (\ref{e:term2}) is in fact of order $\mathcal{O}(h^2s^{-2})$, which is exactly what is needed in Lemma \ref{thm:crucialLemma}.

We remark that, in principle, it would be now possible to bound also the first term on the right hand side of (\ref{e:aux1}) and hence to obtain a bound on the variance of the antithetic MLMC estimator with respect to discretisation, corresponding to taking $r=1$ and ${\bm \ell} = \ell_2$ in (\ref{eq AMLMC}). However, such an estimator does not perform on par with the MASGA estimator (even though it is better than the standard MLMC) and hence we skip its analysis. In this subsection, we present only the derivation of the inequality (\ref{e:aux1}) and we formulate the lemma about the bound on the term $\mathbb{E} |X^{c+}_{k} - X^{c-}_{k}|^4$, since they will be needed in our analysis of MASGA.

We remark that the proof of Lemma \ref{lem:term2} is essentially identical to the proof of Lemma \ref{thm:AntitheticSubsamplingAux} below (with different constants) and is therefore skipped. The latter proof can be found in the Appendix.

\subsection{AMLMC via subsampling}\label{sectionAntitheticViaSubsampling}

As the second building block of our MASGA estimator, we discuss a multi-level algorithm for subsampling that involves taking different drift estimators (different numbers of samples) at different levels, but with constant discretisation parameter across the levels.

We fix $s_0 \in \mathbb{N}_+$ and for $\ell \in \{ 0, \ldots, L \}$ we define $s^{\ell} := 2^{\ell} s_0$ and we consider
chains
\begin{equation}\label{subsamplingFineCoarseChains}
X^{f}_{k+1} = X^{f}_{k} + h b(X^{f}_{k},U^{f}_{k}) + \beta \sqrt{h} Z_{k+1} \,, \, \, X^{c}_{k+1} = X^{c}_{k} + h b(X^{c}_{k},U^{c}_{k}) + \beta \sqrt{h} Z_{k+1} \,,
\end{equation}
where $U^{f}_{k}$ is from a higher level (in parameter $s$) than $U^c_k$, i.e., we have $b(x,U^{f}_{k}) = \frac{1}{2}b(x,U^{f,1}_{k}) + \frac{1}{2}b(x,U^{f,2}_{k})$ and $U^{f,1}_{k}$, $U^{f,1}_{k}$ are both from the same level (in parameter $s$) as $U^{c}_{k}$, cf.\ Assumption \ref{as splittingEstimator}. In the special case of subsampling, we have
\begin{equation*}
b(x,U^f_k) := \frac{1}{2s}\sum_{i=1}^{2s} \hat{b}(x,\theta_{(U^f_k)_i}) \qquad \text{ and } \qquad b(x,U^c_k) := \frac{1}{s}\sum_{i=1}^{s} \hat{b}(x,\theta_{(U^c_k)_i}) \,,
\end{equation*}
whereas $b(x,U^{f,1}_k) := \frac{1}{s}\sum_{i=1}^{s} \hat{b}(x,\theta_{(U^f_k)_i})$ and $b(x,U^{f,2}_k) := \frac{1}{s}\sum_{i=s}^{2s} \hat{b}(x,\theta_{(U^f_k)_i})$, for some kernels $\hat{b}: \mathbb{R}^d \times \mathbb{R}^k \to \mathbb{R}^d$, where $(U^f_k)_i$, $(U^c_k)_i \sim \operatorname{Unif}(\{ 1, \ldots , m \})$. In order to introduce the antithetic counterpart of (\ref{subsamplingFineCoarseChains}), we will replace the random variable $U^c_k$ taken on the coarse level with the two components of $U^f_k$. Namely, let us denote
\begin{equation}\label{AMLMCforSubsamplingDefinitions}
X^{c-}_{k+1} = X^{c-}_{k} + h b(X^{c-}_k, U^{f,1}_k) + \beta \sqrt{h} Z_{k+1} \,, \, \, X^{c+}_{k+1} = X^{c+}_{k} + h b(X^{c+}_k, U^{f,2}_k) + \beta \sqrt{h} Z_{k+1} \,.
\end{equation}
We also put $\bar{X}^c_{k} := \frac{1}{2}\left( X^{c-}_{k} + X^{c+}_{k} \right)$.

Following our calculations from the beginning of Section \ref{sectionAntitheticViaDiscretisation} we see that for any Lipschitz function $g \in C^2_b(\mathbb{R}^d; \mathbb{R})$,
\begin{equation}\label{eq:AntitheticSubsampling1}
\begin{split}
\mathbb{E}&\left| g(X^f_{k+1}) - \frac{1}{2}\left( g(X^{c-}_{k+1}) + g(X^{c+}_{k+1}) \right)\right|^2 \\
&\leq 2 \mathbb{E}\left| g(X^f_{k+1}) - g (\bar{X}^c_{k+1}) \right|^2 + \frac{1}{16} C^2_{g^{(2)}} \mathbb{E}\left| X^{c+}_{k+1} - X^{c-}_{k+1} \right|^4
\end{split}
\end{equation}
and we need a similar bound as in Lemma \ref{lem:term2}.

\begin{lemma}\label{thm:AntitheticSubsamplingAux}
	Let Assumptions \ref{as diss}, \ref{as ina}, \ref{as fourth moment} and \ref{as linear fourth} hold.
	For $X^{c+}_{k}$ and $X^{c-}_{k}$ defined in (\ref{AMLMCforSubsamplingDefinitions}), if $X^{c+}_{0} = X^{c-}_{0}$, then for any $k \geq 1$ and any $h \in (0,h_0)$ we have
	\begin{equation*}
	\mathbb{E}|X^{c+}_{k} - X^{c-}_{k}|^4 \leq \frac{C_1}{c_1}h^2 \,,
	\end{equation*}
	where
	$C_1 := 18 \frac{1}{\varepsilon} \left( 1 + 2 C^{(2h)}_{IEul} + C^{(4),(2h)}_{IEul} \right) \sigma^4 + 4 \left( 27 \sqrt{2} + 54h_0 \right) \sigma^{(4)} (1+ C^{(4),(2h)}_{IEul})$ 
	and $c_1$, $\varepsilon$ and $h_0$ are chosen so that
	\begin{equation*}
	- 4h_0K + 18h_0 \varepsilon + 18 h_0^2 L^2 + 27h_0^4 L^4 + 4h_0^3 \left( \left( \frac{1 + 27L^4}{2} 27 L^4 \right)^{1/2} + \frac{1 + 81 L^4 }{4} \right) \leq - c_1 h_0 \,.
	\end{equation*}
\end{lemma}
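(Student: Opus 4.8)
The plan is to prove the fourth-moment bound $\mathbb{E}|X^{c+}_k - X^{c-}_k|^4 \leq (C_1/c_1)h^2$ by deriving a one-step recursion for the quantity $D_k := \mathbb{E}|X^{c+}_k - X^{c-}_k|^4$ and showing it contracts geometrically toward a fixed point of order $h^2$. Set $\delta_k := X^{c+}_k - X^{c-}_k$. Subtracting the two defining recursions in \eqref{AMLMCforSubsamplingDefinitions}, the Brownian increments $\beta\sqrt{h}Z_{k+1}$ cancel exactly, so that $\delta_{k+1} = \delta_k + h\left(b(X^{c+}_k, U^{f,2}_k) - b(X^{c-}_k, U^{f,1}_k)\right)$. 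This is the key structural fact: the noise does not enter the difference, so the growth of $\delta_k$ is driven entirely by the drift increments.

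The hard part is that the two drift terms use \emph{different} subsampling random variables $U^{f,1}_k$ and $U^{f,2}_k$, so I cannot simply use the Lipschitz estimate from Assumption~\ref{as Lipschitz estimator} on the difference. Instead I would split the drift increment as
\begin{equation*}
b(X^{c+}_k, U^{f,2}_k) - b(X^{c-}_k, U^{f,1}_k) = \underbrace{\left(b(X^{c+}_k, U^{f,2}_k) - b(X^{c-}_k, U^{f,2}_k)\right)}_{\text{Lipschitz part}} + \underbrace{\left(b(X^{c-}_k, U^{f,2}_k) - b(X^{c-}_k, U^{f,1}_k)\right)}_{\text{variance part}} \,.
\end{equation*}
The first bracket is controlled by Assumption~\ref{as Lipschitz estimator} (contributing a term proportional to $|\delta_k|$), while the second bracket is a \emph{mean-zero} fluctuation in the state $X^{c-}_k$ only, whose fourth moment is controlled via Assumption~\ref{as fourth moment} (the $\sigma^{(4)} = \mathcal{O}(s^{-2})$ fourth-centered-moment bound) — this is what produces the $\sigma^4$ and $\sigma^{(4)}$ terms in the constant $C_1$ and, crucially, the $s^{-2}$ decay. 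The plan is to expand $|\delta_{k+1}|^4 = |\delta_k + h(\cdots)|^4$, take expectations, and use conditioning on $\mathcal{F}_k$ (the $\sigma$-algebra up to step $k$, given $X^{c\pm}_k$) to exploit that both $b(X^{c-}_k, U^{f,2}_k) - a(X^{c-}_k)$ and its $U^{f,1}$-analogue have conditional mean zero.

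Concretely, I would expand $\mathbb{E}|\delta_{k+1}|^4$ in powers of $h$, using the polarisation $|v+w|^4 \le |v|^4 + 4|v|^2\langle v,w\rangle + \text{(higher order)}$ together with Young's inequality to distribute the cross terms; the dissipativity/contractivity Assumption~\ref{as diss}(ii) enters through the $\langle \delta_k, a(X^{c+}_k) - a(X^{c-}_k)\rangle$ term, supplying the crucial $-K|\delta_k|^2$ factor that yields the negative coefficient $-4h_0 K$ visible in the definition of $c_1$. Collecting the leading $\mathcal{O}(h)$ coefficient of $D_k = \mathbb{E}|\delta_k|^4$ against $D_k$ itself gives a recursion of the form $D_{k+1} \le (1 + h\,\kappa)D_k + C_1 h^3$, where $\kappa \le -c_1 < 0$ once $h_0$, $\varepsilon$ are small enough to satisfy the displayed inequality defining $c_1$ (here $\varepsilon$ is the free Young-inequality parameter splitting the cross terms). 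The moment bounds $C^{(4),(2h)}_{IEul}$ on $\mathbb{E}|X^{c\pm}_k|^4$ are needed to absorb the $(1+|x|^4)$ factors from Assumptions~\ref{as fourth moment} and \ref{as linear fourth}.

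The main obstacle I anticipate is bookkeeping the cross terms to ensure that the coefficient multiplying $D_k$ is genuinely of the contractive form $1 + h\kappa$ with $\kappa < 0$, rather than merely $1 + \mathcal{O}(h)$ — this requires carefully keeping the dissipativity term at order $h$ while pushing all variance-generated terms to order $h^3$ (so they feed only the inhomogeneous part $C_1 h^3$). Once the recursion $D_{k+1} \le (1 - c_1 h)D_k + C_1 h^3$ is established with $X^{c+}_0 = X^{c-}_0$ (so $D_0 = 0$), summing the geometric series yields $D_k \le C_1 h^3 / (c_1 h) = (C_1/c_1)h^2$ uniformly in $k$, as claimed. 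I would then remark that the proof of Lemma~\ref{lem:term2} is identical with $2h$ in place of $h$ and the appropriately rescaled constants.
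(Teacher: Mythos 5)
Your overall architecture is exactly the paper's: the Brownian increments cancel in the difference, you expand $\mathbb{E}|\delta_{k+1}|^4$ one step at a time, conditioning on the states turns the leading cross term into $h\,\mathbb{E}\,|\delta_k|^2\langle \delta_k, a(X^{c+}_k)-a(X^{c-}_k)\rangle \leq -hK\,\mathbb{E}|\delta_k|^4$ via Assumption~\ref{as diss}(ii), the centered-moment bounds push everything else into an inhomogeneous $C_1h^3$ term, and the geometric sum with $\delta_0=0$ gives $(C_1/c_1)h^2$. That is the paper's proof in outline. However, your key decomposition has a genuine flaw. You split the drift increment into a ``Lipschitz part'' $b(X^{c+}_k,U^{f,2}_k)-b(X^{c-}_k,U^{f,2}_k)$, to be controlled by Assumption~\ref{as Lipschitz estimator}, plus a mean-zero ``variance part.'' But Assumption~\ref{as Lipschitz estimator} gives only a \emph{first-moment} bound, $\mathbb{E}|b(x,U)-b(y,U)|\leq \bar{L}|x-y|$, and it is not even among the hypotheses of the lemma (which assumes only \ref{as diss}, \ref{as ina}, \ref{as fourth moment}, \ref{as linear fourth}). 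In the fourth-moment expansion you must estimate terms like $\mathbb{E}\bigl[|\delta_k|^2\,|b(X^{c+}_k,U^{f,2}_k)-b(X^{c-}_k,U^{f,2}_k)|^2\bigr]$ and the fourth power of the drift increment; a first-moment Lipschitz bound cannot produce these second and fourth moments, and no $L^2$- or $L^4$-Lipschitz condition on $b$ in the state variable is available from the stated assumptions.

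The fix is precisely what the paper does: center \emph{both} estimator evaluations around the true drift, i.e.\ write
\begin{equation*}
b(X^{c+}_{k}, U^{f,2}_{k}) - b(X^{c-}_{k}, U^{f,1}_{k}) = \bigl(b(X^{c+}_{k}, U^{f,2}_{k}) - a(X^{c+}_{k})\bigr) + \bigl(a(X^{c+}_{k}) - a(X^{c-}_{k})\bigr) + \bigl(a(X^{c-}_{k}) - b(X^{c-}_{k}, U^{f,1}_{k})\bigr) \,.
\end{equation*}
The middle term is deterministically Lipschitz by Assumption~\ref{as diss}(i) (this, not a Lipschitz property of $b$, is what multiplies $|\delta_k|$ and feeds the contraction), while the two centered terms have conditional mean zero with second and fourth moments bounded by $\sigma^2(1+|x|^2)$ and $\sigma^{(4)}(1+|x|^4)$ via Assumptions~\ref{as ina} and \ref{as fourth moment}; combined with the uniform moment bounds $C^{(2h)}_{IEul}$, $C^{(4),(2h)}_{IEul}$ and the inequality $(a+b+c)^4\leq 27(a^4+b^4+c^4)$, these produce exactly the $\sigma^4$ and $\sigma^{(4)}$ contributions in $C_1$ at order $h^3$. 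With that replacement, your recursion $D_{k+1}\leq(1-c_1h)D_k + C_1h^3$ and the concluding geometric summation go through as you describe.
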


Using the Lipschitz property of $g$, we see that in order to deal with the first term on the right hand side of (\ref{eq:AntitheticSubsampling1}), we need to bound $\mathbb{E}|X^{f}_{k} - \bar{X}^{c}_{k}|^2$. Indeed, we have the following result.

\begin{lemma}\label{thm:AntitheticSubsampling}
	Let Assumptions \ref{as diss}, \ref{as ina}, \ref{as fourth moment}, \ref{as smoothnessEstimator} and \ref{as linear fourth} hold.
	For $X^f_{k}$ and $\bar{X}^{c}_{k}$ defined in (\ref{AMLMCforSubsamplingDefinitions}), if $X^{c+}_{0} = X^{c-}_{0}$, then for any $k \geq 1$ and any $h \in (0,h_0)$ we have
\begin{equation*}
\mathbb{E}|X^{f}_{k} - \bar{X}^{c}_{k}|^2 \leq \frac{C_2}{c_2}h^2 \,,
\end{equation*}
where
$C_2 := \frac{3}{4} \sigma^{(4)} (1 + C^{(4)}_{IEul}) + \frac{C_1}{c_1}\left( \frac{1}{4} d C_{a^{(2)}}  \frac{1}{\varepsilon_1} + \frac{3}{4} + \frac{3}{8} h_0 C_{b^{(2)}}^2 \right)$
with $C_1$ and $c_1$ given in Lemma \ref{thm:AntitheticSubsamplingAux}, whereas $c_2$, $\varepsilon_1$ and $h_0$ are chosen so that
$- h_0 K + \frac{1}{4} d C_{a^{(2)}} \varepsilon_1 h_0 + \frac{3}{2} h_0^2 \bar{L}^2 \leq - c_2 h_0$.
\end{lemma}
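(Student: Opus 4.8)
The plan is to run a one-step analysis on $D_k := X^f_k - \bar X^c_k$, exploiting the auxiliary antithetic difference $E_k := X^{c+}_k - X^{c-}_k$, for which Lemma~\ref{thm:AntitheticSubsamplingAux} already gives $\mathbb{E}|E_k|^4 \leq (C_1/c_1)h^2$. First I would subtract the recursions in (\ref{AMLMCforSubsamplingDefinitions}) from that of $X^f$, using Assumption~\ref{as splittingEstimator} to write $b(X^f_k, U^{f}_k) = \tfrac12\bigl(b(X^f_k, U^{f,1}_k) + b(X^f_k, U^{f,2}_k)\bigr)$. The Brownian increments $\beta\sqrt{h}Z_{k+1}$ cancel identically (synchronous coupling), leaving
\[
D_{k+1} = D_k + \frac{h}{2}\,B_k, \qquad B_k := \bigl(b(X^f_k, U^{f,1}_k) - b(X^{c-}_k, U^{f,1}_k)\bigr) + \bigl(b(X^f_k, U^{f,2}_k) - b(X^{c+}_k, U^{f,2}_k)\bigr).
\]
I would then expand $\mathbb{E}[|D_{k+1}|^2 \mid \mathcal{F}_k] = |D_k|^2 + h\langle D_k, \mathbb{E}[B_k\mid\mathcal{F}_k]\rangle + \tfrac{h^2}{4}\mathbb{E}[|B_k|^2\mid\mathcal{F}_k]$, where $\mathcal{F}_k$ is the history of the coupled chains up to step $k$, and treat the cross term and the quadratic term separately.

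For the cross term, unbiasedness gives $\mathbb{E}[B_k\mid\mathcal{F}_k] = 2a(X^f_k) - a(X^{c-}_k) - a(X^{c+}_k)$. The decisive step is to Taylor-expand $a(X^{c\pm}_k)$ about $\bar X^c_k$ using $X^{c\pm}_k - \bar X^c_k = \pm\tfrac12 E_k$: the first-order terms cancel by antisymmetry, exactly as in (\ref{antisymmetry}), so that $2a(X^f_k) - a(X^{c-}_k) - a(X^{c+}_k) = 2\bigl(a(X^f_k) - a(\bar X^c_k)\bigr) - R_k$ with a second-order remainder $R_k$ controlled by $C_{a^{(2)}}|E_k|^2$ through Assumption~\ref{as diss}. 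Applying the global contractivity (\ref{driftDissipativity}) to the main term yields $2\langle D_k, a(X^f_k) - a(\bar X^c_k)\rangle \leq -2K|D_k|^2 \leq -K|D_k|^2$, while the remainder is absorbed by Young's inequality into $\tfrac14 dC_{a^{(2)}}\varepsilon_1|D_k|^2$ plus a multiple of $|E_k|^4$. This is exactly where the antithetic construction pays off: without the first-order cancellation one would be left with a term of order $|E_k|^2 \sim h$ (via the Lipschitz bound on $a(X^{c-}_k)-a(X^{c+}_k)$), whereas the cancellation replaces it by $|E_k|^4 \sim h^2$.

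For the quadratic term I would use the bias--variance split $\mathbb{E}[|B_k|^2\mid\mathcal{F}_k] = |\mathbb{E}[B_k\mid\mathcal{F}_k]|^2 + \mathbb{V}[B_k\mid\mathcal{F}_k]$. The squared conditional mean is handled by the Lipschitz bounds, producing $\mathcal{O}(h^2)(|D_k|^2 + |E_k|^2)$ and feeding the $\bar L^2$-term of the $c_2$-condition together with a further $h^3$ source via $|E_k|^2$. For the conditional variance, independence of $U^{f,1}_k$ and $U^{f,2}_k$ splits it into two pieces of the form $\mathbb{V}[b(X^f_k,U) - b(X^{c\mp}_k,U)\mid\mathcal{F}_k]$; here a naive bound through Assumption~\ref{as ina} would only give order $\sigma^2 = \mathcal{O}(s^{-1})$, which is insufficient. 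The key is to write $b(X^f_k,U) - b(X^{c-}_k,U) - \bigl(a(X^f_k) - a(X^{c-}_k)\bigr) = \int_0^1 (\nabla b - \nabla a)(\cdot,U)\,(X^f_k - X^{c-}_k)\,dt$ and invoke the fourth-order smoothness Assumption~\ref{as smoothnessEstimator}, which delivers the correct $\sigma^{(4)} = \mathcal{O}(s^{-2})$ scaling and the standalone $\tfrac34\sigma^{(4)}(1 + C^{(4)}_{IEul})$ contribution to $C_2$; a second-order expansion of $b$ accounts for the $C_{b^{(2)}}^2$ term.

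Collecting everything and taking full expectations gives a recursion of the form $\mathbb{E}|D_{k+1}|^2 \leq (1 - c_2 h)\mathbb{E}|D_k|^2 + C_2 h^3$, where every $|E_k|^4$-contribution has been converted into an $\mathcal{O}(h^2)$ source through Lemma~\ref{thm:AntitheticSubsamplingAux}, the fourth-moment bounds $C^{(4)}_{IEul}$ control the state-dependent factors, and $h_0$ is taken small enough that the displayed condition on $c_2$ forces the bracket to be a genuine contraction. Since $X^{c+}_0 = X^{c-}_0$ forces $D_0 = 0$, iterating the geometric recursion yields $\mathbb{E}|D_k|^2 \leq (C_2/c_2)h^2$ uniformly in $k$, as claimed. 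I expect the main obstacle to be precisely the variance estimate in the quadratic term: extracting the $s^{-2}$ rather than $s^{-1}$ rate forces one to combine the gradient-smoothness bound on $\nabla b - \nabla a$ with the smallness of $X^f_k - X^{c\mp}_k = D_k \pm \tfrac12 E_k$, so that the quantity being bounded is coupled simultaneously to the target estimate for $D_k$ and to the auxiliary estimate for $E_k$ from Lemma~\ref{thm:AntitheticSubsamplingAux}.
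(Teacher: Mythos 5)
Your overall skeleton --- the one-step recursion for $D_k = X^f_k - \bar X^c_k$ with cancellation of the Gaussian increments, the antisymmetric Taylor expansion plus global contractivity and Young's inequality for the cross term, and the geometric iteration from $D_0 = 0$ --- is the same as the paper's, and your cross-term estimate reproduces the paper's bounds for $I_1,\dots,I_4$. The genuine gap is in the quadratic term. You propose the mean-value identity $b(X^f_k,U) - b(X^{c-}_k,U) - \bigl(a(X^f_k) - a(X^{c-}_k)\bigr) = \int_0^1 (\nabla b - \nabla a)\,(X^f_k - X^{c-}_k)\,dt$, whose increment is $X^f_k - X^{c-}_k = D_k + \tfrac12 E_k$. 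To decouple $|\nabla b - \nabla a|^2$ from $|D_k + \tfrac12 E_k|^2$ you must use Young or Cauchy--Schwarz, and either way you are led to $\mathbb{E}|D_k|^4$, a fourth moment of the very quantity being estimated, which your $\mathbb{E}|D_k|^2$-induction does not control; nor can you absorb the factor $|\nabla b - \nabla a|^2$ into the contraction, since under Assumption \ref{as smoothnessEstimator} it is not pointwise bounded (only its fourth moment is, with a state-dependent right-hand side). You flag exactly this coupling in your closing sentence but do not resolve it, and resolving it along your route would require a separate fourth-moment analysis of $D_k$ of comparable difficulty. The paper avoids the issue entirely by inserting the midpoint \emph{before} any expansion: $b(X^f_k,U^{f,i}_k) - b(X^{c\mp}_k,U^{f,i}_k) = \bigl[b(X^f_k,U^{f,i}_k) - b(\bar X^c_k,U^{f,i}_k)\bigr] + \bigl[b(\bar X^c_k,U^{f,i}_k) - b(X^{c\mp}_k,U^{f,i}_k)\bigr]$. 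The first brackets are handled by the plain Lipschitz bound, contributing $\tfrac32 h^2 \bar L^2\,\mathbb{E}|D_k|^2$, which is absorbed into the contraction; the Taylor/$(\nabla b - \nabla a)$ machinery is applied only to the second brackets (the term $J_{33}$ in the paper's proof), whose increments are $\pm\tfrac12 E_k$, so that only $\mathbb{E}|E_k|^4$ (Lemma \ref{thm:AntitheticSubsamplingAux}) and $\sigma^{(4)}$ ever appear.

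A second, smaller defect: you bound the squared conditional mean $|2a(X^f_k) - a(X^{c-}_k) - a(X^{c+}_k)|^2$ by Lipschitz continuity, leaving an $h^2|E_k|^2$ source. Lemma \ref{thm:AntitheticSubsamplingAux} only gives $\mathbb{E}|E_k|^2 \le (C_1/c_1)^{1/2}h$ via Jensen, and since $C_1 = \mathcal{O}(s^{-2})$ this source is $\mathcal{O}(s^{-1})h^3$, so the constant your recursion produces is $\mathcal{O}(s^{-1})$ rather than the $\mathcal{O}(s^{-2})$ encoded in the stated $C_2$ --- and that order is precisely what Lemma \ref{thm:crucialLemma} and Theorem \ref{thm:mainTheorem} need from this lemma. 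The fix is immediate with a tool you already use for the cross term: expand the mean with the antisymmetric Taylor formula, so the first-order (Lipschitz) contribution cancels and the $|E_k|^2$ term is replaced by the second-order remainder of size $C_{a^{(2)}}^2|E_k|^4 = \mathcal{O}(s^{-2})h^2$.
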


Note that both $C_1$ and $C_2$ in Lemma \ref{thm:AntitheticSubsamplingAux} and Lemma \ref{thm:AntitheticSubsampling} are of order $\mathcal{O}(s^{-2})$, which follows from the dependence of both these constants on the parameters $\sigma^{(4)}$ and $\sigma^2$ and Assumptions \ref{as ina} and \ref{as fourth moment}. The proofs of both these Lemmas can be found in Appendix \ref{section:ProofsForAMLMCviaSubsampling}.

\subsection{Proof of Lemma \ref{thm:crucialLemma}}

Similarly as in Sections \ref{sectionAntitheticViaDiscretisation} and \ref{sectionAntitheticViaSubsampling}, we will analyse our estimator step-by-step. To this end, we first need to define nine auxiliary Markov chains. In what follows, we will combine the ideas for antithetic estimators with respect to the discretisation parameter and with respect to the subsampling parameter. We will therefore need to consider fine and coarse chains with respect to both parameters. We use the notational convention $X^{\operatorname{subsampling},\operatorname{discretisation}}$, hence e.g.\ $X^{f,c}$ would be a chain that behaves as a fine chain with respect to the subsampling parameter and as a coarse chain with respect to the discretisation parameter. We define three chains that move as fine chains with respect to the discretisation parameter
\begin{equation*}
\begin{split}
\begin{cases}
\Delta X^{f,f}_{k+2} &= hb(X^{f,f}_{k+1}, U^f_{k+1}) + \beta \sqrt{h} Z_{k+2} \\
\Delta X^{f,f}_{k+1} &= hb(X^{f,f}_{k}, U^f_{k}) + \beta \sqrt{h} Z_{k+1} 
\end{cases} \, \,
\begin{cases}
\Delta X^{c-,f}_{k+2} &= hb(X^{c-,f}_{k+1}, U^{f,1}_{k+1}) + \beta \sqrt{h} Z_{k+2} \\
\Delta X^{c-,f}_{k+1} &= hb(X^{c-,f}_{k}, U^{f,1}_{k}) + \beta \sqrt{h} Z_{k+1}
\end{cases}\\
\begin{cases}
\Delta X^{c+,f}_{k+2} &= hb(X^{c+,f}_{k+1}, U^{f,2}_{k+1}) + \beta \sqrt{h} Z_{k+2} \\
\Delta X^{c+,f}_{k+1} &= hb(X^{c+,f}_{k}, U^{f,2}_{k}) + \beta \sqrt{h} Z_{k+1}
\end{cases}
\end{split}
\end{equation*}
and six chains that move as coarse chains
\begin{equation*}
\begin{split}
\Delta X^{f,c-}_{k+2} &= 2h b(X^{f,c-}_{k}, U^f_{k}) + \beta \sqrt{2h} \hat{Z}_{k+2} \,, \, \, \Delta X^{f,c+}_{k+2} = 2h b(X^{f,c+}_{k}, U^f_{k+1}) + \beta \sqrt{2h} \hat{Z}_{k+2} \\
\Delta X^{c-,c-}_{k+2} &= 2h b(X^{c-,c-}_{k}, U^{f,1}_{k}) + \beta \sqrt{2h} \hat{Z}_{k+2} \,, \, \, \Delta X^{c-,c+}_{k+2} = 2h b(X^{c-,c+}_{k}, U^{f,1}_{k+1}) + \beta \sqrt{2h} \hat{Z}_{k+2} \\
\Delta X^{c+,c-}_{k+2} &= 2h b(X^{c+,c-}_{k}, U^{f,2}_{k}) + \beta \sqrt{2h} \hat{Z}_{k+2} \,, \, \, \Delta X^{c+,c+}_{k+2} = 2h b(X^{c+,c+}_{k}, U^{f,2}_{k+1}) + \beta \sqrt{2h} \hat{Z}_{k+2} \,.
\end{split}
\end{equation*}
Here $\Delta X^{f,f}_{k+2} = X^{f,f}_{k+2} - X^{f,f}_{k+1}$, $\Delta X^{f,f}_{k+1} = X^{f,f}_{k+1} - X^{f,f}_{k}$, $\Delta X^{f,c-}_{k+2} = X^{f,c-}_{k+2} - X^{f,c-}_{k}$ and likewise for other chains, whereas $\sqrt{2h} \hat{Z}_{k+2} = \sqrt{h} Z_{k+1} + \sqrt{h} Z_{k+2}$.
In order to prove Lemma \ref{thm:crucialLemma}, we will first show that for any $k \geq 1$ we have
$\mathbb{E}|\Delta \operatorname{Ant} \Phi^{s, h}_{g,k}|^2 \leq \widetilde{C} \mathbb{E}|\Delta \operatorname{Ant} \Phi^{s, h}_{k}|^2 + \bar{C}h^2$,
where $\widetilde{C}$, $\bar{C}$ are positive constants and $\bar{C}$ is of order $\mathcal{O}(s^{-2})$. Here $\Delta \operatorname{Ant} \Phi^{s, h}_{k}$ corresponds to taking $\Delta \operatorname{Ant} \Phi^{s, h}_{g,k}$ with $g(x) = x$ the identity function.

Then we will show that for all $k \geq 1$ we have $\mathbb{E}|\Delta \operatorname{Ant} \Phi^{s, h}_{k+2}|^2 \leq (1-ch)\mathbb{E}|\Delta \operatorname{Ant} \Phi^{s, h}_{k}|^2 + Ch^3$ for some constants $c$, $C > 0$, where $C$ is of order $\mathcal{O}(s^{-2})$. Finally, we will conclude that for all $k \geq 1$ we have $\mathbb{E}|\Delta \operatorname{Ant} \Phi^{s, h}_{k}|^2 \leq C_1 h^2/s^2$ for some $C_1 > 0$, which will finish the proof.

In order to simplify the notation, we define here one step of the MASGA estimator as
\begin{equation*}
\begin{split}
\Psi_{k} &:= \left[ X^{f,f}_{k} - \frac{1}{2}\left( X^{f,c-}_{k} + X^{f,c+}_{k} \right)   \right] \\
&- \frac{1}{2} \left[ \left( X^{c-,f}_{k} - \frac{1}{2} \left( X^{c-,c-}_{k} + X^{c-,c+}_{k} \right) \right) + \left( X^{c+,f}_{k} - \frac{1}{2}\left( X^{c+,c-}_{k} + X^{c+,c+}_{k} \right) \right) \right] \,.
\end{split}
\end{equation*}

Let us first focus on the analysis of 
\begin{equation*}
\begin{split}
\mathbb{E}|\Psi^g_{k}|^2 &:= \mathbb{E} \Bigg| \left[ g(X^{f,f}_{k}) - \frac{1}{2}\left( g(X^{f,c-}_{k}) + g(X^{f,c+}_{k}) \right)   \right] \\
&- \frac{1}{2} \Big[ \left( g(X^{c-,f}_{k}) - \frac{1}{2} \left( g(X^{c-,c-}_{k}) + g(X^{c-,c+}_{k}) \right) \right) \\
&+ \left( g(X^{c+,f}_{k}) - \frac{1}{2}\left( g(X^{c+,c-}_{k}) + g(X^{c+,c+}_{k}) \right) \right) \Big] \Bigg|^2 
\end{split}
\end{equation*}
for $g : \mathbb{R}^d \to \mathbb{R}$ Lipschitz.
To this end, we will introduce three additional chains
\begin{equation*}
\bar{X}^{f,c}_{k} = \frac{1}{2}\left( X^{f,c-}_{k} + X^{f,c+}_{k} \right), \hspace{0.25em} \bar{X}^{c-,c}_{k} = \frac{1}{2}\left( X^{c-,c-}_{k} + X^{c-,c+}_{k} \right) \hspace{0.25em}
\text{and} \hspace{0.25em} \bar{X}^{c+,c}_{k} = \frac{1}{2}\left( X^{c+,c-}_{k} + X^{c+,c+}_{k} \right).
\end{equation*}

Observe that in the expression $\mathbb{E}|\Psi^g_{kh}|^2$ above we have three rows of the same structure as the antithetic estimator via discretisation, hence we can proceed exactly as in Section \ref{sectionAntitheticViaDiscretisation} by adding and subtracting $g\left( \bar{X}^{f,c}_{k} \right)$, $g\left( \bar{X}^{c-,c}_{k} \right)$ and $g\left( \bar{X}^{c+,c}_{k} \right)$, respectively in each row, and then applying Taylor's formula in points $\bar{X}^{f,c}_{k}$, $\bar{X}^{c-,c}_{k}$ and $\bar{X}^{c+,c}_{k}$, respectively in each row (note that the first order terms will cancel out) to get
\begin{equation*}
\begin{split}
\mathbb{E}|\Psi^g_{k}|^2 &= \mathbb{E} \Bigg| \left[ g(X^{f,f}_{k}) - g\left( \bar{X}^{f,c}_{k} \right) + T\left( X^{f,c-}_{k} - X^{f,c+}_{k} \right) \right] \\
&- \frac{1}{2} \Big[ \left( g(X^{c-,f}_{k}) - g\left( \bar{X}^{c-,c}_{k} \right) + T\left( X^{c-,c-}_{k} - X^{c-,c+}_{k} \right) \right) \\
&+ \left( g(X^{c+,f}_{k}) - g\left( \bar{X}^{c+,c}_{k} \right) + T\left( X^{c+,c-}_{k} - X^{c+,c+}_{k} \right) \right) \Big] \Bigg|^2 \,, 
\end{split}
\end{equation*}
where
\begin{equation*}
\begin{split}
T(X^{f,c-}_{k} - X^{f,c+}_{k}) &:= - \frac{1}{4}\sum_{|\alpha|=2} \int_0^1 (1-t) D^{\alpha} g\left( \bar{X}^{f,c}_{k} + t \left( X^{f,c-}_{k} - \bar{X}^{f,c}_{k} \right) \right) dt \left( X^{f,c-}_{k} - X^{f,c+}_{k} \right)^{\alpha} \\
&- \frac{1}{4}\sum_{|\alpha|=2} \int_0^1 (1-t) D^{\alpha} g\left( \bar{X}^{f,c}_{k} + t \left( X^{f,c+}_{k} - \bar{X}^{f,c}_{k} \right) \right) dt \left( X^{f,c+}_{k} - X^{f,c-}_{k} \right)^{\alpha} \,,
\end{split}
\end{equation*}
where we used
$X^{f,c+}_{k} - \bar{X}^{f,c}_{k} = 
\frac{1}{2} \left( X^{f,c+}_{k} - X^{f,c-}_{k} \right) = - \left( X^{f,c-}_{k} - \bar{X}^{f,c}_{k} \right)$
and $T$ is defined analogously for other terms, 
and hence $|T(x)|^2 \leq C|x|^4$ for some $C > 0$ and for any $x \in \mathbb{R}^d$. Using $(a+b)^2 \leq 2a^2 + 2b^2$ for $\mathbb{E}|\Psi^g_{kh}|^2$ we can separate the terms involving $T(\cdot)$ and, due to Lemma \ref{lem:term2}, we see that we have the correct order in $h$ and $s$ for all the terms expressed as $T(\cdot)$.
Hence the remaining term whose dependence on $s$ and $h$ we still need to check can be expressed as
\begin{equation*}
\mathbb{E} \left| \left[ g(X^{f,f}_{k}) - \frac{1}{2}\left(g(X^{c-,f}_{k}) + g(X^{c+,f}_{k}) \right) \right] - \left[ g\left( \bar{X}^{f,c}_{k} \right) - \frac{1}{2}\left(g\left( \bar{X}^{c-,c}_{k} \right) +  g\left( \bar{X}^{c+,c}_{k} \right) \right) \right] \right|^2 \,.
\end{equation*}

We will now need yet another auxiliary chain
$\bar{X}^{c,f}_{k} := \frac{1}{2}\left( X^{c-,f}_{k} + X^{c-,f}_{k} \right)$.
We repeat our argument from the previous step by  adding and subtracting $g\left( \bar{X}^{c,f}_{k} \right)$ and $g\left( \frac{1}{2}\left(\bar{X}^{c-,c}_{k} + \bar{X}^{c+,c}_{k} \right) \right)$, respectively, to the first and the second term in square brackets above, respectively, and applying Taylor's formula in points $\bar{X}^{c,f}_{k}$ and $\frac{1}{2}\left(\bar{X}^{c-,c}_{k} + \bar{X}^{c+,c}_{k} \right)$, respectively (the first order terms again cancel out), to obtain
\begin{equation*}
\begin{split}
\mathbb{E} &\Bigg| \left[ g(X^{f,f}_{k}) - g\left( \bar{X}^{c,f}_{k} \right) + T\left( X^{c-,f}_{k} - X^{c+,f}_{k} \right) \right] \\
&- \left[ g\left( \bar{X}^{f,c}_{k} \right) - g\left( \frac{1}{2}\left(\bar{X}^{c-,c}_{k} + \bar{X}^{c+,c}_{k} \right) \right) + T\left( \bar{X}^{c-,c}_{k} - \bar{X}^{c+,c}_{k} \right)  \right] \Bigg|^2  \,,
\end{split}
\end{equation*}
where again $|T(x)|^2 \leq C|x|^4$ for some $C > 0$ and for any $x \in \mathbb{R}^d$. Due to Lemma \ref{thm:AntitheticSubsamplingAux} we see that $\mathbb{E} \left| X^{c-,f}_{k} - X^{c+,f}_{k} \right|^4$ has the correct order in $s$ and $h$. Moreover, we have
$\bar{X}^{c-,c}_{k} - \bar{X}^{c+,c}_{k} = \frac{1}{2}\left( X^{c-,c-}_{k} - X^{c+,c-}_{k} \right) + \frac{1}{2}\left( X^{c-,c+}_{k} - X^{c+,c+}_{k} \right)$
and hence, after using $(a+b)^4 \leq 8a^4 + 8b^4$, Lemma \ref{thm:AntitheticSubsamplingAux} applies also to $\mathbb{E} \left| \bar{X}^{c-,c}_{k} - \bar{X}^{c+,c}_{k} \right|^4$.
Hence we only need to deal with
\begin{equation*}
I:= \mathbb{E} \left| g(X^{f,f}_{k}) - g\left( \bar{X}^{c,f}_{k} \right) - g\left( \bar{X}^{f,c}_{k} \right) + g\left( \frac{1}{2}\left(\bar{X}^{c-,c}_{k} + \bar{X}^{c+,c}_{k} \right) \right)  \right|^2 \,.
\end{equation*}
We can now add and subtract
$g\left(\bar{X}^{f,c}_{k} + \bar{X}^{c,f}_{k} - \frac{1}{2}\left(\bar{X}^{c-,c}_{k} + \bar{X}^{c+,c}_{k} \right) \right)$,
and, using the Lipschitz property of $g$ (which we assume it satisfies with a Lipschitz constant, say, $L_g > 0$), we see that
\begin{equation*}
I \leq 3 L_g \mathbb{E} \left| X^{f,f}_{k} - \bar{X}^{f,c}_{k} - \bar{X}^{c,f}_{k} + \frac{1}{2}\left(\bar{X}^{c-,c}_{k} + \bar{X}^{c+,c}_{k} \right) \right|^2 + 6 L_g \mathbb{E} \left| \bar{X}^{f,c}_{k} - \frac{1}{2}\left(\bar{X}^{c-,c}_{k} + \bar{X}^{c+,c}_{k} \right) \right|^2 \,.
\end{equation*}
However, observe that $\mathbb{E} \left| X^{f,f}_{k} - \bar{X}^{f,c}_{k} - \bar{X}^{c,f}_{k} + \frac{1}{2}\left(\bar{X}^{c-,c}_{k} + \bar{X}^{c+,c}_{k} \right) \right|^2 = \mathbb{E}|\Psi_{k}|^2$ and, moreover,
\begin{equation*}
\begin{split}
&\bar{X}^{f,c}_{k} - \frac{1}{2}\left( \bar{X}^{c-,c}_{k} + \bar{X}^{c+,c}_{k} \right) = \frac{1}{2} \left( X^{f,c-}_{k} + X^{f,c+}_{k} \right) - \frac{1}{4} \left( X^{c-,c-}_{k} + X^{c-,c+}_{k} + X^{c+,c-}_{k} + X^{c+,c+}_{k} \right) \\
&= \frac{1}{2} \left(X^{f,c-}_{k} - \frac{1}{2} \left( X^{c-,c-}_{k} + X^{c+,c-}_{k} \right)  \right) + \frac{1}{2} \left(X^{f,c+}_{k} - \frac{1}{2} \left( X^{c-,c+}_{k} + X^{c+,c+}_{k} \right)  \right) \,.
\end{split}
\end{equation*}
Note that both terms on the right hand side above correspond to antithetic estimators via subsampling, hence from Lemma \ref{thm:AntitheticSubsampling} we infer that $\mathbb{E} \left| \bar{X}^{f,c}_{k} - \frac{1}{2}\left(\bar{X}^{c-,c}_{k} + \bar{X}^{c+,c}_{k} \right) \right|^2$ has the correct order in $s$ and $h$. 
We have thus demonstrated that for any $k \geq 1$ we have $\mathbb{E}|\Psi^g_{k}|^2 \leq C_1\mathbb{E}|\Psi_{k}|^2 + C_2 h^2/s^2$ for some constants $C_1$, $C_2 > 0$. Therefore, in order to finish the proof, it remains to be shown that $\mathbb{E}|\Psi_{k}|^2$ has the correct order in $h$ and $s$. As explained above, this will be achieved by proving that there exist constants $c$, $C > 0$ (with $C$ being of order $\mathcal{O}(s^{-2})$) such that for any $k \geq 1$ we have $\mathbb{E}|\Psi_{k+2}|^2 \leq (1-ch)\mathbb{E}|\Psi_{k}|^2 + Ch^3$.
The idea for dealing with $\mathbb{E}|\Psi_{k+2}|^2$ is to group the terms in a specific way, add and subtract certain drifts in order to set up appropriate combinations of drifts for a Taylor's formula application and then to cancel out some first order terms. As we have already seen, the remaining second order terms should then be of the correct order in $h$ and $s$. To this end, we denote
\begin{equation*}
\begin{split}
\Xi^1_{k} &:= hb(X^{f,f}_{k}, U^f_{k}) - hb(X^{f,c-}_{k}, U^f_{k}) - hb(X^{f,c+}_{k}, U^f_{k+1}) \\
\Xi^2_{k} &:= hb(X^{c-,f}_{k}, U^{f,1}_{k}) - hb(X^{c-,c-}_{k}, U^{f,1}_{k}) - hb(X^{c-,c+}_{k}, U^{f,1}_{k+1}) \\
\Xi^3_{k} &:= hb(X^{c+,f}_{k}, U^{f,2}_{k}) - hb(X^{c+,c-}_{k}, U^{f,2}_{k}) - hb(X^{c+,c+}_{k}, U^{f,2}_{k+1}) \\
\Xi_{k} &:= \Xi^1_{k} - \frac{1}{2}\left( \Xi^2_{k} + \Xi^3_{k} \right) \text{ and } \Upsilon_{k} := hb(X^{f,f}_{k+1}, U^f_{k+1}) - \frac{1}{2}hb(X^{c-,f}_{k+1}, U^{f,1}_{k+1}) - \frac{1}{2}hb(X^{c+,f}_{k+1}, U^{f,2}_{k+1})
\end{split}
\end{equation*}
and hence, observing that all the noise variables cancel out, we can write
$\Psi_{k+2} = \Psi_{k} + \Xi_k + \Upsilon_{k}$.
Thus we have
$\mathbb{E}\left| \Psi_{k+2} \right|^2 = \mathbb{E} \left| \Psi_{k} \right|^2 + 2 \mathbb{E} \langle \Psi_{k} , \Xi_{k} \rangle + 2 \mathbb{E} \langle \Psi_{k} , \Upsilon_{k} \rangle + \mathbb{E} \left| \Xi_{k} + \Upsilon_{k} \right|^2$.
We will bound $\mathbb{E}|\Xi_{k} + \Upsilon_{k}|^2 \leq 2\mathbb{E}|\Xi_{k}|^2 + 2\mathbb{E}|\Upsilon_{k}|^2$ and we will first deal with the terms involving $\Upsilon_{k}$. We will need an additional auxiliary Markov chain (moving as a fine chain with respect to the discretisation parameter) defined as 
\begin{equation*}
X_{k+2} = X_{k+1} + ha(X_{k+1}) + \beta \sqrt{h} Z_{k+2} \,, \qquad
X_{k+1} = X_{kh} + ha(X_{k}) + \beta \sqrt{h} Z_{k+1} \,.
\end{equation*}
Using $b(x,U) = \frac{1}{2}b(x,U^1) + \frac{1}{2}b(x,U^2)$, we have
\begin{equation*}
\begin{split}
\Upsilon_{k} &= hb(X^{f,f}_{k+1}, U^f_{k+1}) - hb(X_{k+1}, U^f_{k+1}) - \frac{1}{2}h\left( b(X^{c-,f}_{k+1}, U^{f,1}_{k+1}) - b(X_{k+1}, U^{f,1}_{k+1})\right) \\
&- \frac{1}{2}h\left( b(X^{c+,f}_{k+1}, U^{f,2}_{k+1}) - b(X_{k+1}, U^{f,2}_{k+1})\right) \,.
\end{split}
\end{equation*}
First, in order to deal with $\mathbb{E} \langle \Psi_{k} , \Upsilon_{k} \rangle$, we use Taylor's formula to write
\begin{equation*}
\begin{split}
\Upsilon_{k} &= h \Bigg( \sum_{|\alpha|=1} D^{\alpha} b (X_{k+1}, U^f_{k+1}) \left(X^{f,f}_{k+1} - X_{k+1}\right)^{\alpha} \\
&+ \sum_{|\alpha|=2} \int_0^1 (1-t) D^{\alpha}b\left(X_{k+1} + t\left( X^{f,f}_{k+1} - X_{k+1} \right), U^f_{k+1}\right) dt \left(X^{f,f}_{k+1} - X_{k+1}\right)^{\alpha} \Bigg) \\
&-\frac{1}{2}h \Bigg( \sum_{|\alpha|=1} D^{\alpha} b (X_{k+1}, U^{f,1}_{k+1}) \left(X^{c-,f}_{k+1} - X_{k+1}\right)^{\alpha} \\
&+ \sum_{|\alpha|=2} \int_0^1 (1-t) D^{\alpha}b\left(X_{k+1} + t\left( X^{c-,f}_{k+1} - X_{k+1} \right), U^{f,1}_{k+1}\right) dt \left(X^{c-,f}_{k+1} - X_{k+1}\right)^{\alpha} \Bigg) \\
&-\frac{1}{2}h \Bigg( \sum_{|\alpha|=1} D^{\alpha} b (X_{k+1}, U^{f,2}_{k+1}) \left(X^{c+,f}_{k+1} - X_{k+1}\right)^{\alpha} \\
&+ \sum_{|\alpha|=2} \int_0^1 (1-t) D^{\alpha}b\left(X_{k+1} + t\left( X^{c+,f}_{k+1} - X_{k+1} \right), U^{f,2}_{k+1}\right) dt \left(X^{c+,f}_{k+1} - X_{k+1}\right)^{\alpha} \Bigg) \,.
\end{split}
\end{equation*}
Hence, using Assumptions \ref{as smoothnessEstimator} and \ref{as additionalAssumption}, we have
\begin{equation*}
\begin{split}
\mathbb{E} &\langle \Psi_{k} , \Upsilon_{k} \rangle = h \mathbb{E} \langle \Psi_{k} , \sum_{|\alpha|=1} D^{\alpha} a (X_{k+1}) \left(X^{f,f}_{k+1} - \frac{1}{2}X^{c-,f}_{k+1} -\frac{1}{2}X^{c+,f}_{k+1} \right)^{\alpha} \rangle \\
&+ h \langle \Psi_{k} , \sum_{|\alpha|=2} \int_0^1 (1-t) D^{\alpha}a\left(X_{k+1} + t\left( X^{f,f}_{k+1} - X_{k+1} \right)\right) dt \left(X^{f,f}_{k+1} - X_{k+1}\right)^{\alpha} \rangle \\
&+ h \langle \Psi_{k} , \sum_{|\alpha|=2} \int_0^1 (1-t) D^{\alpha}a\left(X_{k+1} + t\left( X^{c-,f}_{k+1} - X_{k+1} \right)\right) dt \left(X^{c-,f}_{k+1} - X_{k+1}\right)^{\alpha} \rangle \\
&+ h \langle \Psi_{k} , \sum_{|\alpha|=2} \int_0^1 (1-t) D^{\alpha}a\left(X_{k+1} + t\left( X^{c+,f}_{k+1} - X_{k+1} \right)\right) dt \left(X^{c+,f}_{k+1} - X_{k+1}\right)^{\alpha} \rangle \,.
\end{split}
\end{equation*}
Now we can use Young's inequality for each term above with some $\varepsilon_1$, $\varepsilon_2$, $\varepsilon_3$, $\varepsilon_4 > 0$ respectively and, using Assumption \ref{as diss} (condition (\ref{driftSmoothness})), we get
\begin{equation*}
\begin{split}
\mathbb{E} &\langle \Psi_{k} , \Upsilon_{k} \rangle \leq \frac{1}{2} h \varepsilon_1 C_{a^{(1)}} \mathbb{E} \left| \Psi_{k} \right|^2 + \frac{1}{2} h \frac{1}{\varepsilon_1} C_{a^{(1)}} \mathbb{E} \left| X^{f,f}_{k+1} - \frac{1}{2}X^{c-,f}_{k+1} -\frac{1}{2}X^{c+,f}_{k+1} \right|^2 \\
&+ \frac{1}{2} h \varepsilon_2 C_{a^{(2)}} \mathbb{E} \left| \Psi_{k} \right|^2 + \frac{1}{2} h \frac{1}{\varepsilon_2} C_{a^{(2)}} \mathbb{E}  \left| X^{f,f}_{k+1} - X_{k+1}\right|^4 \\
&+ \frac{1}{2} h \varepsilon_3 C_{a^{(2)}} \mathbb{E} \left| \Psi_{k} \right|^2 + \frac{1}{2} h \frac{1}{\varepsilon_3} C_{a^{(2)}} \mathbb{E}  \left| X^{c-,f}_{k+1} - X_{k+1}\right|^4 \\
&+ \frac{1}{2} h \varepsilon_4 C_{a^{(2)}} \mathbb{E} \left| \Psi_{k} \right|^2 + \frac{1}{2} h \frac{1}{\varepsilon_4} C_{a^{(2)}} \mathbb{E}  \left| X^{c+,f}_{k+1} - X_{k+1}\right|^4 \,.
\end{split}
\end{equation*}
Now note that the second term on the right hand side above is of order $\mathcal{O}(h^2/s^2)$ due to Lemma \ref{thm:AntitheticSubsampling}. For the other three terms, we need the following auxiliary result.

\begin{lemma}\label{thm:AntitheticMasterChain}
	Let Assumptions \ref{as diss}, \ref{as ina}, \ref{as fourth moment} and \ref{as linear fourth} hold. Assuming $X^{f,f}_0 = X_0$, there exists a constant $C > 0$ such that for all $k \geq 1$,
	\begin{equation*}
	\mathbb{E} \left| X^{f,f}_{k} - X_{k} \right|^4 \leq C\frac{1}{s^2}h^2 \,.
	\end{equation*}
\end{lemma}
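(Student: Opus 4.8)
The plan is to derive a one-step fourth-moment contraction estimate for the error process $e_k := X^{f,f}_k - X_k$ and then close it by a discrete Gronwall (geometric-sum) argument, exactly in the spirit of Lemma \ref{thm:AntitheticSubsamplingAux}. Since both chains are driven by the same Gaussian increments $Z_{k+1}$ (synchronous coupling) and advance with the same step $h$, the noise cancels and one obtains the exact recursion $e_{k+1} = e_k + h\bigl(b(X^{f,f}_k, U^f_k) - a(X_k)\bigr)$. The key structural step is to split the drift difference as $b(X^{f,f}_k, U^f_k) - a(X_k) = \xi_k + A_k$, where $\xi_k := b(X^{f,f}_k, U^f_k) - a(X^{f,f}_k)$ and $A_k := a(X^{f,f}_k) - a(X_k)$. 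Because $U^f_k$ is independent of $X^{f,f}_k$ and $b(\cdot,U^f_k)$ is unbiased for $a$, we have $\mathbb{E}[\xi_k \mid \mathcal{F}_k] = 0$, where $\mathcal{F}_k$ is the history generated up to (but not including) $U^f_k$; meanwhile $A_k$ is the contractive part, satisfying $\langle e_k, A_k\rangle \leq -K|e_k|^2$ and $|A_k| \leq L|e_k|$ pathwise by (\ref{driftDissipativity}) and (\ref{driftLipschitz}).

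First I would expand $|e_{k+1}|^4 = |(e_k + hA_k) + h\xi_k|^4$ and take $\mathbb{E}[\,\cdot\mid\mathcal{F}_k]$. The centredness of $\xi_k$ annihilates the odd cross term $4h|e_k+hA_k|^2\langle e_k+hA_k,\xi_k\rangle$, leaving the leading term $|e_k + hA_k|^4$ together with the second-moment contributions $4h^2\langle e_k+hA_k,\xi_k\rangle^2 + 2h^2|e_k+hA_k|^2|\xi_k|^2$, the fourth-moment contribution $h^4|\xi_k|^4$, and one surviving mixed third-moment term $4h^3\langle e_k+hA_k,\xi_k\rangle|\xi_k|^2$. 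For the leading term, the pathwise bound $|e_k + hA_k|^2 \leq (1 - 2hK + h^2L^2)|e_k|^2 \leq (1-hK)|e_k|^2$ (valid for $h$ small) gives $|e_k+hA_k|^4 \leq (1-hK)|e_k|^4$. For the remaining terms I would invoke (\ref{ina estimatorVariance}) and (\ref{fourthCenteredMomentOfEstimator}) in the conditional form $\mathbb{E}[|\xi_k|^2\mid\mathcal{F}_k] \leq \sigma^2(1+|X^{f,f}_k|^2)$ and $\mathbb{E}[|\xi_k|^4\mid\mathcal{F}_k]\leq \sigma^{(4)}(1+|X^{f,f}_k|^4)$, together with the uniform moment bounds $\mathbb{E}|X^{f,f}_k|^2 \leq C_{IEul}$ and $\mathbb{E}|X^{f,f}_k|^4 \leq C^{(4)}_{IEul}$, and I would dispose of the third-moment term by Young's inequality $4h^3|e_k+hA_k||\xi_k|^3 \leq 2h^3|e_k+hA_k|^2|\xi_k|^2 + 2h^3|\xi_k|^4$.

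Writing $w_k := \mathbb{E}|e_k|^4$ and taking full expectations, the surviving cross contributions combine into a term of the form $6h^2\sigma^2\,\mathbb{E}[|e_k|^2(1+|X^{f,f}_k|^2)]$, which is bounded by $6h^2\sigma^2(1+(C^{(4)}_{IEul})^{1/2})\,w_k^{1/2}$ via Cauchy--Schwarz. Applying $w_k^{1/2} \leq \epsilon w_k + (4\epsilon)^{-1}$ and choosing $\epsilon$ of order $(h\sigma^2)^{-1}$ splits this into an absorbable multiple of $\tfrac12 hK\,w_k$ and a source term of order $h^3\sigma^4 = \mathcal{O}(h^3 s^{-2})$, where I crucially use $\sigma^2 = \mathcal{O}(s^{-1})$. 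The genuinely higher-order contributions (the $h^4\mathbb{E}|\xi_k|^4$ term and the $2h^3|\xi_k|^4$ leftover from the third-moment term) are $\mathcal{O}(h^3 s^{-2})$ by $\sigma^{(4)} = \mathcal{O}(s^{-2})$, and $2h^3|e_k+hA_k|^2|\xi_k|^2$ is handled like the cross term with an extra power of $h$. This yields a linear recursion $w_{k+1} \leq (1 - \tfrac12 hK)w_k + C h^3 s^{-2}$; since $X^{f,f}_0 = X_0$ forces $w_0 = 0$, summing the geometric series gives $w_k \leq \tfrac{2C}{K}\,h^2 s^{-2}$, which is the claim.

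The main obstacle I anticipate is not any single estimate but the bookkeeping required to ensure that every residual source term carries \emph{both} the factor $h^3$ (so that division by the contraction rate $hK$ produces the final $h^2$) \emph{and} the factor $s^{-2}$. The latter is the delicate point: the naive bound $\mathbb{E}|\xi_k|^2 \leq \sigma^2(1+|X^{f,f}_k|^2)$ used in isolation would only deliver $\mathcal{O}(s^{-1})$, so one must always pair a noise contribution with a factor coming from the error $e_k$ itself through Young's inequality, thereby converting a second power of $\sigma^2=\mathcal{O}(s^{-1})$ (or the inductive smallness of $w_k$) into the required $s^{-2}$ scaling. Maintaining this pairing consistently through the fourth-moment expansion, precisely as in the proof of Lemma \ref{thm:AntitheticSubsamplingAux}, is where the care is needed.
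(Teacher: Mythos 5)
Your proof is correct and follows essentially the same route as the paper: a one-step fourth-moment recursion for $e_k = X^{f,f}_k - X_k$ exploiting the synchronous coupling, the split of the drift error into the centred estimator noise and the contractive Lipschitz part, the Cauchy--Schwarz/Young pairing that converts $\sigma^2=\mathcal{O}(s^{-1})$ into $\sigma^4=\mathcal{O}(s^{-2})$, and a geometric sum — exactly the scheme of Lemma \ref{thm:AntitheticSubsamplingAux}, which is what the paper itself invokes (noting the present case is even simpler since only one drift is inaccurate). The only cosmetic difference is that you absorb the contraction pathwise via $|e_k+hA_k|^2\leq(1-hK)|e_k|^2$ before expanding, while the paper expands first and extracts the contraction from the cross term $B_2$ by conditioning and dissipativity.
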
	

The proof is similar to the proof of Lemma \ref{thm:AntitheticSubsamplingAux} and can be found in Appendix \ref{sectionMIASGA}.

The reasoning in the proof of Lemma \ref{thm:AntitheticMasterChain} applies also to $X^{c-,f}_{k}$ and $X^{c+,f}_{k}$ in place of $X^{f,f}_{k}$. Hence we see that $\mathbb{E} \langle \Psi_{k} , \Upsilon_{k} \rangle$ is bounded from above by an expression of the form $C_1(\varepsilon_1 + \varepsilon_2 + \varepsilon_3 + \varepsilon_4)h \mathbb{E}|\Psi_k|^2 + C_2h$, where $\varepsilon_i$ for $i \in \{ 1, \ldots 4 \}$ can be chosen as small as necessary and the constant $C_2$ is of the correct order in $s$ and $h$, i.e., of order $\mathcal{O}(h^2/s^2)$. We will explain later how to handle the other terms and how to choose the values of $\varepsilon_i$ for $i \in \{ 1, \ldots 4 \}$. Now in order to deal with $\mathbb{E}|\Upsilon_{k}|^2$ we use a different decomposition for $\Upsilon_{k}$, namely we write
\begin{equation*}
\begin{split}
\Upsilon_{k} &= \frac{1}{2}hb(X^{f,f}_{k+1}, U^{f,1}_{k+1}) + \frac{1}{2}hb(X^{f,f}_{k+1}, U^{f,2}_{k+1}) - \frac{1}{2}hb(X^{c-,f}_{k+1}, U^{f,1}_{k+1}) - \frac{1}{2}hb(X^{c+,f}_{k+1}, U^{f,2}_{k+1}) \\
&= \frac{1}{2}hb(X^{f,f}_{k+1}, U^{f,1}_{k+1}) - \frac{1}{2}hb(\bar{X}^{c,f}_{k+1}, U^{f,1}_{k+1}) + \frac{1}{2}hb(\bar{X}^{c,f}_{k+1}, U^{f,1}_{k+1}) - \frac{1}{2}hb(X^{c-,f}_{k+1}, U^{f,1}_{k+1}) \\
&+ \frac{1}{2}hb(X^{f,f}_{k+1}, U^{f,2}_{k+1}) - \frac{1}{2}hb(\bar{X}^{c,f}_{k+1}, U^{f,2}_{k+1}) + \frac{1}{2}hb(\bar{X}^{c,f}_{k+1}, U^{f,2}_{k+1}) - \frac{1}{2}hb(X^{c+,f}_{k+1}, U^{f,2}_{k+1}) \,,
\end{split}
\end{equation*}
where $\bar{X}^{c,f}_{k+1} := \frac{1}{2} \left( X^{c-,f}_{k+1} + X^{c+,f}_{k+1} \right)$. Hence, using (\ref{ina driftLipschitz}) we obtain
\begin{equation*}
\begin{split}
&\mathbb{E} \left| \Upsilon_{k} \right|^2 \leq \frac{3}{4} h^2 \bar{L}^2 \mathbb{E} \left| X^{f,f}_{k+1} - \bar{X}^{c,f}_{k+1} \right|^2  + \frac{3}{4} h^2 \bar{L}^2 \mathbb{E} \left| X^{f,f}_{k+1} - \bar{X}^{c,f}_{k+1} \right|^2 \\
&+ \frac{3}{4} h^2 \mathbb{E} \left| b(\bar{X}^{c,f}_{k+1}, U^{f,1}_{k+1}) - b(X^{c-,f}_{k+1}, U^{f,1}_{k+1}) + b(\bar{X}^{c,f}_{k+1}, U^{f,2}_{k+1}) - b(X^{c+,f}_{k+1}, U^{f,2}_{k+1}) \right|^2 \,.
\end{split}
\end{equation*}
The first two terms on the right hand side above are identical and have the correct order in $s$ and $h$ due to Lemma \ref{thm:AntitheticSubsampling}. On the other hand, the third term can be dealt with by one more application of Taylor's formula (cf.\ the calculation for the term $J_{33}$ in the proof of Lemma \ref{thm:AntitheticSubsampling} in Appendix \ref{section:ProofsForAMLMCviaSubsampling} for more details).

The terms involving $\Xi_k$ can be handled using similar ideas as above. In particular, for $\mathbb{E}|\Xi_k|^2$ we have the following result.

\begin{lemma}\label{thm:lemmaXi1}
	Let Assumptions \ref{as diss}, \ref{as ina}, \ref{as fourth moment}, \ref{as smoothnessEstimator} and \ref{as linear fourth} hold. Assuming all the auxiliary chains introduced above are initiated at the same point, there exists a constant $C_{1,\Xi} > 0$ such that for all $k \geq 1$,
	\begin{equation*}
	\mathbb{E} |\Xi_k|^2 \leq C_{1,\Xi}\frac{1}{s^2}h^2 \,.
	\end{equation*}
\end{lemma}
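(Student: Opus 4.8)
The plan is to expand $\Xi_k = \Xi^1_k - \frac12(\Xi^2_k + \Xi^3_k)$ and exploit the antithetic cancellation to rewrite it as a sum of \emph{pure subsampling increments}, each carrying an explicit factor $h$. First I would apply the splitting identity $b(x,U^f_\cdot) = \frac12 b(x,U^{f,1}_\cdot) + \frac12 b(x,U^{f,2}_\cdot)$ from Assumption \ref{as splittingEstimator} to the $U^f_k$- and $U^f_{k+1}$-terms inside $\Xi^1_k$, and then regroup all nine contributions according to which of the four driving variables $U^{f,1}_k, U^{f,2}_k, U^{f,1}_{k+1}, U^{f,2}_{k+1}$ they use. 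A direct computation shows that the $U^{f,i}_{k+1}$-terms collapse to
\begin{equation*}
B_k := -\frac{h}{2}\bigl(b(X^{f,c+}_k,U^{f,1}_{k+1}) - b(X^{c-,c+}_k,U^{f,1}_{k+1})\bigr) - \frac{h}{2}\bigl(b(X^{f,c+}_k,U^{f,2}_{k+1}) - b(X^{c+,c+}_k,U^{f,2}_{k+1})\bigr),
\end{equation*}
while the $U^{f,i}_k$-terms collapse to $\frac{h}{2}(D^{(1)}_k + D^{(2)}_k)$, with for instance
\begin{equation*}
D^{(1)}_k = \bigl(b(X^{f,f}_k,U^{f,1}_k) - b(X^{c-,f}_k,U^{f,1}_k)\bigr) - \bigl(b(X^{f,c-}_k,U^{f,1}_k) - b(X^{c-,c-}_k,U^{f,1}_k)\bigr).
\end{equation*}
The crucial structural observation is that the stand-alone drift $-hb(X^{f,c+}_k,U^f_{k+1})$ in $\Xi^1_k$, which on its own is only of size $\mathcal{O}(h)$, is cancelled against the corresponding terms from $-\frac12\Xi^2_k$ and $-\frac12\Xi^3_k$, so that every surviving difference pairs a fine-subsampling chain with a coarse-subsampling chain evaluated at the \emph{same} realisation $U^{f,i}_\cdot$ and the \emph{same} discretisation level.

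Second I would bound each of these differences in $L^2$. Since the states at step $k$ are independent of $U^{f,i}_k$ and of $U^{f,i}_{k+1}$, conditioning together with the Lipschitz property of the estimator (Assumption \ref{as Lipschitz estimator}, used in the squared form $\mathbb{E}|b(x,U)-b(y,U)|^2 \le \bar{L}^2|x-y|^2$, exactly as in the analysis of $\Upsilon_k$) reduces, via $\mathbb{E}|\Xi_k|^2 \le 2\mathbb{E}|\frac{h}{2}(D^{(1)}_k+D^{(2)}_k)|^2 + 2\mathbb{E}|B_k|^2$, the whole estimate to bounds on the squared first-order subsampling state-differences $\mathbb{E}|X^{f,f}_k - X^{c-,f}_k|^2$, $\mathbb{E}|X^{f,c-}_k - X^{c-,c-}_k|^2$ and $\mathbb{E}|X^{f,c+}_k - X^{c\pm,c+}_k|^2$ (and their $c+$ analogues). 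Concretely, after the triangle inequality $\mathbb{E}|D^{(1)}_k|^2 \le 2\bar{L}^2\bigl(\mathbb{E}|X^{f,f}_k - X^{c-,f}_k|^2 + \mathbb{E}|X^{f,c-}_k - X^{c-,c-}_k|^2\bigr)$, and likewise for $D^{(2)}_k$ and for $B_k$.

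Third I would show that each such first-order subsampling difference is of order $\mathcal{O}(h/s^2)$. Writing the one-step recursion for, say, $X^{f,f}_k - X^{c-,f}_k$, the common Gaussian noise cancels, conditioning on $\mathcal{F}_k$ turns the cross term into $\langle X^{f,f}_k - X^{c-,f}_k, a(X^{f,f}_k) - a(X^{c-,f}_k)\rangle$, controlled by the global contractivity \eqref{driftDissipativity}, while the squared drift increment is dominated by $\mathcal{O}(\sigma^2) = \mathcal{O}(s^{-2})$ through the variance bound (Assumption \ref{as ina}), the splitting identity (Assumption \ref{as splittingEstimator}) and the Lipschitz bound on $a$. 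A geometric-sum argument then gives $\mathbb{E}|X^{f,f}_k - X^{c-,f}_k|^2 \le Ch/s^2$ uniformly in $k$ for $h\in(0,h_0)$, and the same recursion applies to the coarse-discretisation and the $c+$ variants. Combining the three steps yields $\mathbb{E}|\Xi_k|^2 \le C h^2\cdot(h/s^2) = Ch^3/s^2 \le Ch_0\,h^2/s^2$, which is the claimed bound.

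Finally, a word on the obstacle. The analytically substantive part is the recursion of the third step, but it is entirely analogous to the proofs of Lemmas \ref{thm:AntitheticSubsamplingAux} and \ref{thm:AntitheticMasterChain} (drift contractivity played off against an estimator variance of order $s^{-2}$), so it may be invoked rather than redone. The genuine difficulty is therefore the bookkeeping of the first step: one must verify that the antithetic combination $\Xi^1_k - \frac12(\Xi^2_k+\Xi^3_k)$, once $U^f$ is split into $(U^{f,1},U^{f,2})$, makes every $\mathcal{O}(h)$ stand-alone drift cancel and leaves only differences between chains that differ \emph{solely} through the subsampling parameter. It is exactly this cancellation that upgrades the naive bound $\mathcal{O}(h^2)$ to the required $\mathcal{O}(h^2/s^2)$.
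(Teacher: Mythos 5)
Your first step is algebraically correct, and in fact it is equivalent to the paper's own grouping: writing, as the paper does, $\Xi_k = \Xi^A_k - \Xi^B_k - \Xi^C_k$ with $\Xi^A_k := h\big(b(X^{f,f}_k,U^f_k) - \tfrac12 b(X^{c-,f}_k,U^{f,1}_k) - \tfrac12 b(X^{c+,f}_k,U^{f,2}_k)\big)$ (and analogously for $\Xi^B_k$, $\Xi^C_k$), one checks that $\tfrac{h}{2}(D^{(1)}_k + D^{(2)}_k) = \Xi^A_k - \Xi^B_k$ and $B_k = -\Xi^C_k$. The genuine gap is in your second and third steps. By applying the Lipschitz bound to each \emph{pairwise} difference you reduce everything to quantities like $\mathbb{E}|X^{f,f}_k - X^{c-,f}_k|^2$, and you claim these are $\mathcal{O}(h/s^2)$. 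They are only $\mathcal{O}(h/s)$: your claim rests on reading Assumption \ref{as ina} as $\sigma^2 = \mathcal{O}(s^{-2})$, whereas the consistent reading (the one the paper itself verifies for subsampling in Appendix \ref{sectionSubsampling}) is a variance bound $\mathbb{E}|b(x,U)-a(x)|^2 \leq \tfrac{1}{s}C(1+|x|^2)$, i.e.\ of order $s^{-1}$; only the fourth-moment quantities $\sigma^4$ and $\sigma^{(4)}$ are $\mathcal{O}(s^{-2})$. Concretely, in your recursion the one-step drift noise at equal states is $b(x,U^f_k) - b(x,U^{f,1}_k) = \tfrac12\big(b(x,U^{f,2}_k) - b(x,U^{f,1}_k)\big)$, whose variance is of order $1/s$, so the geometric-sum argument gives $\mathbb{E}|X^{f,f}_k - X^{c-,f}_k|^2 = \mathcal{O}(h/s)$ and hence $\mathbb{E}|\Xi_k|^2 = \mathcal{O}(h^3/s)$. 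Since $h = 2^{-\ell_2}h_0$ and $s = 2^{\ell_1}s_0$ are independent level parameters, $h^3/s$ is \emph{not} dominated by $h^2/s^2$: in the subsampling direction your bound decays only like $2^{-\ell_1}$, i.e.\ $\beta_1 = 1 = \gamma_1$, which violates the hypothesis of Theorem \ref{th complexity}. (Indeed, if pairwise differences were already $\mathcal{O}(h/s^2)$, plain non-antithetic MIMC would be optimal and the paper's antithetic construction would be superfluous.)

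The repair, which is the paper's proof, is never to break up the antithetic combinations. Each of $\Xi^A_k$, $\Xi^B_k$, $\Xi^C_k$ is of the form $h\big(b(X^{\mathrm{fine}},U^f) - \tfrac12 b(X^{c-},U^{f,1}) - \tfrac12 b(X^{c+},U^{f,2})\big)$; inserting $\pm\tfrac{h}{2}b(\bar X^{c},U^{f,i})$ with $\bar X^{c} = \tfrac12(X^{c-}+X^{c+})$ splits it into (i) terms $\tfrac{h}{2}\big(b(X^{\mathrm{fine}},U^{f,i}) - b(\bar X^{c},U^{f,i})\big)$, which are $\mathcal{O}(h^2)\cdot\mathcal{O}(h^2/s^2)$ in mean square by the Lipschitz property and Lemma \ref{thm:AntitheticSubsampling} (this is where the $s^{-2}$ enters, via the second moment of the fine-minus-\emph{average} difference, not of a pairwise one), and (ii) the antithetic coarse-pair term $b(\bar X^{c},U^{f,1}) - b(X^{c-},U^{f,1}) + b(\bar X^{c},U^{f,2}) - b(X^{c+},U^{f,2})$, in which Taylor expansion around $\bar X^{c}$ cancels the leading first-order parts and leaves contributions proportional to $\big(\nabla b(\bar X^{c},U^{f,i}) - \nabla a(\bar X^{c})\big)\big(X^{c+}-X^{c-}\big)$ plus second-order remainders; by Cauchy--Schwarz, Assumption \ref{as smoothnessEstimator} and Lemma \ref{thm:AntitheticSubsamplingAux} these are $\mathcal{O}(h/s^2)$ (this is the $J_{33}$ computation in the proof of Lemma \ref{thm:AntitheticSubsampling}). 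The factor $s^{-2}$ thus arises only as a \emph{product} of two fluctuations — gradient noise with fourth moment $\mathcal{O}(s^{-2})$ times a coarse-pair difference with fourth moment $\mathcal{O}(h^2/s^2)$ — and no single pairwise second moment can supply it.
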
	

The proof of Lemma \ref{thm:lemmaXi1} can be found in Appendix \ref{sectionMIASGA}. The last term to deal with is $\mathbb{E} \langle \Psi_k, \Xi_k \rangle$. We have the following result.

\begin{lemma}\label{thm:lemmaXi2}
	Let Assumptions \ref{as diss}, \ref{as Lipschitz estimator}, \ref{as ina}, \ref{as fourth moment}, \ref{as smoothnessEstimator}, \ref{as additionalAssumption}, \ref{as linear fourth} and \ref{as splittingEstimator} hold. Assuming all the auxiliary chains introduced above are initiated at the same point, there exist constants $C_{2,\Xi}$, $C_{3,\Xi}$ and $C_{4,\Xi} > 0$ such that for all $k \geq 1$,
	\begin{equation*}
	\mathbb{E} \langle \Psi_k , \Xi_k \rangle \leq (C_{2,\Xi}\varepsilon - C_{3,\Xi})h \mathbb{E}|\Psi_k|^2 + C_{4,\Xi}\frac{1}{s^2}h^2 \,,
	\end{equation*}
	where $\varepsilon > 0$ can be chosen arbitrarily small.
\end{lemma}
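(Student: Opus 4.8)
The plan is to exploit the global contractivity of the exact drift (Assumption~\ref{as diss}~ii)) after replacing the estimator $b$ by $a$ through conditioning. Since $\Psi_k$ is measurable with respect to $\cF_k$, the $\sigma$-algebra generated by all nine chains up to step $k$, while the fresh subsampling seeds $U^f_k,U^f_{k+1},U^{f,1}_k,\dots$ appearing in $\Xi_k$ are independent of $\cF_k$, I would first write $\mathbb{E}\langle\Psi_k,\Xi_k\rangle=\mathbb{E}\langle\Psi_k,\mathbb{E}[\Xi_k\mid\cF_k]\rangle$. Using the unbiasedness $\mathbb{E}[b(x,U^s)]=a(x)$ term by term (each chain at step $k$ is independent of the seed against which it is paired), this reduces $\mathbb{E}[\Xi_k\mid\cF_k]$ to $h$ times the nine-term combination of the exact drift $a$ evaluated at the nine chains, carrying the \emph{same} weights as $\Psi_k$ except that every chain with a coarse \emph{discretisation} index enters with twice its weight; this doubling is exactly the effect of the coarse chains taking a double time step.

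The heart of the argument is to reorganise this nine-term combination so that contractivity applies to a single pair whose displacement is precisely $\Psi_k$. First I would collapse each antithetic discretisation pair and each antithetic subsampling pair by the midpoint identity $\tfrac12(a(x)+a(y))=a(\tfrac{x+y}{2})+O(|x-y|^2)$, using $a\in C^2_b$ (Assumption~\ref{as diss}~iii)) to control the quadratic remainders. Writing $u_k:=X^{f,f}_k-\bar{X}^{c,f}_k$ and $v_k:=\bar{X}^{f,c}_k-\tfrac12(\bar{X}^{c-,c}_k+\bar{X}^{c+,c}_k)$, so that $\Psi_k=u_k-v_k$, the leading part becomes $[a(X^{f,f}_k)-a(\bar{X}^{c,f}_k)]-2[a(\bar{X}^{f,c}_k)-a(\tfrac12(\bar{X}^{c-,c}_k+\bar{X}^{c+,c}_k))]$. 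Introducing the comparison point $W_k:=\bar{X}^{c,f}_k+v_k$, which satisfies $X^{f,f}_k-W_k=u_k-v_k=\Psi_k$, I would split this leading part as $[a(X^{f,f}_k)-a(W_k)]$ plus a remainder whose $a$-coefficients sum to zero and whose weighted displacement is $O(|v_k|)$. Contractivity applied to the pair $(X^{f,f}_k,W_k)$ then yields $\langle\Psi_k,a(X^{f,f}_k)-a(W_k)\rangle\le -K|\Psi_k|^2$, which is the sought contraction $-C_{3,\Xi}h\,\mathbb{E}|\Psi_k|^2$.

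It remains to absorb the errors: (i) the $v_k$-proportional ``doubling'' remainder and (ii) the quadratic midpoint remainders. For (i), the Lipschitz bound on $a$ gives a term that is $O(|v_k|)$, so Young's inequality produces $\varepsilon h\,\mathbb{E}|\Psi_k|^2+\varepsilon^{-1}h\,\mathbb{E}|v_k|^2$, and since $v_k$ is a sum of two antithetic-subsampling differences we have $\mathbb{E}|v_k|^2\lesssim h^2/s^2$ by Lemma~\ref{thm:AntitheticSubsampling}. For (ii), each remainder is dominated by a fourth moment of an antithetic pair difference such as $\mathbb{E}|X^{f,c-}_k-X^{f,c+}_k|^4$ or $\mathbb{E}|X^{c-,f}_k-X^{c+,f}_k|^4$, all of order $h^2/s^2$ by Lemmas~\ref{lem:term2} and~\ref{thm:AntitheticSubsamplingAux} (and Lemma~\ref{thm:AntitheticMasterChain} where a comparison with the master chain is needed); pairing with $\Psi_k$ via Young again gives $\varepsilon h\,\mathbb{E}|\Psi_k|^2$ plus a term of order $h^3/s^2$, which is in particular bounded by $C_{4,\Xi}s^{-2}h^2$ on $(0,h_0)$. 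Collecting the contraction and the errors yields $\mathbb{E}\langle\Psi_k,\Xi_k\rangle\le(C_{2,\Xi}\varepsilon-C_{3,\Xi})h\,\mathbb{E}|\Psi_k|^2+C_{4,\Xi}s^{-2}h^2$ with $\varepsilon$ free, as claimed.

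The main obstacle is precisely the factor-two coefficient mismatch created by the coarse chains' double time step: because of it the conditional drift is \emph{not} simply $-K$ times $\Psi_k$, so a naive pairing does not close. Indeed, splitting the nine points into the two antithetic-subsampling differences $u_k$ and $v_k$ and invoking contractivity on each separately only produces $-K|u_k|^2-K|v_k|^2$ together with cross terms of size $L|u_k||v_k|$ that the contraction constant $K$ cannot dominate when $L$ is large. The resolution is the single comparison point $W_k$ tuned so that $X^{f,f}_k-W_k=\Psi_k$ exactly, which channels all of the contraction into one application of Assumption~\ref{as diss}~ii) and relegates the entire mismatch to a $v_k$-controlled Young term. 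Verifying that this leftover, together with all the second-order Taylor remainders, is genuinely of order $h^3/s^2$, so that the two-step recursion closes at the target rate $\mathbb{E}|\Psi_k|^2\lesssim h^2/s^2$, is the delicate bookkeeping that the proof must carry out.
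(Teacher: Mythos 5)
Your proposal is correct, but it follows a genuinely different route from the paper's own proof of Lemma \ref{thm:lemmaXi2}. The paper never conditions the estimator $b$ away at the outset: it keeps $b$, introduces two extra exact-drift master chains ($X_k$ with step $h$ and $X^c_k$ with step $2h$), splits $\Xi_k$ into twelve pieces $\Xi^{i,j}_k$ ($i\in\{1,2,3\}$, $j\in\{1,\ldots,4\}$, plus a remainder $R$ that vanishes by Assumption \ref{as splittingEstimator}), Taylor-expands $b$ in its first argument around the master chains and the midpoints, and uses Assumption \ref{as additionalAssumption} to turn $\mathbb{E}[D^{\alpha}b]$ into $D^{\alpha}a$ after conditioning. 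Its contraction is extracted in \emph{derivative} form, $\langle \Psi_k, \sum_{|\alpha|=1}D^{\alpha}a(X_k)\Psi_k^{\alpha}\rangle \le -K|\Psi_k|^2$, by combining the first-order Taylor terms of two groups both expanded at $X_k$; the leftovers are controlled by Lemmas \ref{thm:AntitheticMasterChain}, \ref{thm:AntitheticMean} and \ref{thm:AntitheticMeanMean}. You instead (i) condition on $\mathcal{F}_k$ immediately --- legitimate, since $\Psi_k$ and all nine positions at step $k$ are $\mathcal{F}_k$-measurable while the step-$k$ and step-$(k+1)$ seeds are independent of $\mathcal{F}_k$ --- which reduces everything to the exact drift $a$ and makes Assumption \ref{as additionalAssumption} unnecessary for this lemma; (ii) collapse the antithetic pairs by midpoint Taylor identities, with quadratic remainders controlled by the fourth-moment Lemmas \ref{lem:term2} and \ref{thm:AntitheticSubsamplingAux}; and (iii) obtain the contraction from the \emph{finite-difference} form of Assumption \ref{as diss}(ii) applied to the artificial pair $(X^{f,f}_k, W_k)$, $W_k=\bar{X}^{c,f}_k+v_k$, tuned so the displacement is exactly $\Psi_k$, relegating the factor-two mismatch to a Lipschitz/Young term of size $\varepsilon^{-1}\mathbb{E}|v_k|^2\lesssim h^2/s^2$ via Lemma \ref{thm:AntitheticSubsampling} (in its coarse, $2h$-step version, exactly as in Lemma \ref{thm:AntitheticMeanMean}). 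The orders match: your error is $O(h^3/s^2)\le h_0\,h^2/s^2$, as in the paper, and $C_{3,\Xi}=K$. What each approach buys: the paper's mechanical term-by-term expansion stays at the level of objects it needs anyway elsewhere in the proof of Lemma \ref{thm:crucialLemma}, while your argument is shorter, uses fewer auxiliary constructions and assumptions, and isolates the single place where contractivity enters.

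Two minor remarks. First, your parenthetical appeal to Lemma \ref{thm:AntitheticMasterChain} is not actually needed in your scheme --- one of its advantages is precisely that the exact-drift master chains never appear. Second, your claim that a ``naive pairing does not close'' is overstated: pairing $u_k$ with $a(X^{f,f}_k)-a(\bar{X}^{c,f}_k)$ and $v_k$ with the $v$-difference also works, because the cross terms $L|u_k||v_k|$ can be absorbed by Young's inequality together with $|u_k|^2\ge \tfrac12|\Psi_k|^2-|v_k|^2$, the point being that $\mathbb{E}|v_k|^2$ is itself error-sized. This does not affect the validity of your argument; it only means the $W_k$ device, though elegant, is a convenience rather than a necessity.
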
	 

The crucial insight about the bound in Lemma \ref{thm:lemmaXi2} is that, thanks to the special structure of the term $\Xi_k = \Xi_k^1 - \frac{1}{2}(\Xi_k^2 + \Xi_k^3)$, we can extract from $\mathbb{E}\langle \Psi_k , \Xi_k \rangle$, after a few applications of Taylor's formula, a term of the form $h\mathbb{E}\langle \Psi_k , \sum_{|\alpha|=1} D^{\alpha} a(X_k) (\Psi_k)^{\alpha} \rangle$, which, due to Assumptions \ref{as diss}, can be bounded from above by $-Kh\mathbb{E}|\Psi_k|^2$. This gives us the term with the constant $C_{3,\Xi}$ in Lemma \ref{thm:lemmaXi2}. Then, after combining all our estimates and choosing all the $\varepsilon_i > 0$ small enough, we can indeed conclude that for any $k \geq 1$ we have $\mathbb{E}|\Psi_{k+2}|^2 \leq (1-ch)\mathbb{E}|\Psi_{k}|^2 + Ch^3$ with $C$ of order $\mathcal{O}(s^{-2})$, which, as explained above, finishes the proof.

The proof of Lemma \ref{thm:lemmaXi2} is lengthy and tedious but elementary and hence is moved to Appendix \ref{sectionMIASGA}.

\section{Appendix: Bounds on moments of subsampling estimators}\label{sectionSubsampling}

In this section we present bounds both for subsampling with and without replacement \cite{welling2011bayesian, Shamir2016}. We fix $s$, $m \in \mathbb{N}$ such that $s < m$. Let $\theta_i\in \bR^d$, for $i = 1, \ldots, m$. Moreover, let $U = (U_i)_{i=1\ldots,s}$ be a collection of $s$ independent random variables, uniformly distributed over the set $\{1,\ldots,m\}$.
We define 
\begin{equation}\label{subsamplingDrift}
a(x)= \frac{1}{m}\sum_{i=1}^m \hat{b}(x,\theta_i)\,\,\,\,\text{and}\,\,\,\,b(x,U)= \frac{1}{s} \sum_{i=1}^s \hat{b}(x,\theta_{U_i})\,.
\end{equation}
Here $\hat{b} : \mathbb{R}^d \times \mathbb{R}^k \to \mathbb{R}^d$ is a kernel such that for any $x$, $y \in \mathbb{R}^d$ and any $\theta \in \mathbb{R}^k$ we have
\begin{equation}\label{bhatAssumptions}
|\hat{b}(x,\theta) - \hat{b}(y,\theta)| \leq L|x-y| \qquad \text{ and } \qquad \langle x - y , \hat{b}(x,\theta) - \hat{b}(y,\theta) \rangle \leq - K|x-y|^2 \,.
\end{equation}
for some $L$, $K > 0$. Hence $b$ is an unbiased estimator of $a$ that corresponds to sampling with replacement $s$ terms from the sum of $m$ terms defining $a$, cf. Example 2.15 in \cite{MajkaMijatovicSzpruch2018}. Moreover, Assumptions \ref{as diss} and \ref{as Lipschitz estimator} are satisfied with constants $L$, $K$ and $\bar{L} = L$. We will now verify Assumptions \ref{as ina} and \ref{as fourth moment}.

Note that related calculations for second moments of subsampling estimators were carried out in \cite{MajkaMijatovicSzpruch2018} (see Example 2.15 therein) for the drift $a$ and its estimator $b$ in (\ref{subsamplingDrift}) rescaled by $m$, that is, for $a(x)= \sum_{i=1}^m \hat{b}(x,\theta_i)$ and $b(x,U_k)= \frac{m}{s} \sum_{i=1}^s \hat{b}(x,\theta_{U_i})$. Hence, obviously, all the upper bounds on second moments obtained in \cite{MajkaMijatovicSzpruch2018} still hold for $a$ and $b$ given by (\ref{subsamplingDrift}), after rescaling by $1/m^2$.

Based on the calculations in \cite{MajkaMijatovicSzpruch2018}, we know that if we assume that for all $\theta$ and $x$ we have $|\hat{b}(x, \theta)|^2 \leq C(1 + |x|^2)$ with some constant $C >0$, then
$\mathbb{E}|b(x,U) - a(x)|^2 \leq \frac{1}{s} C(1 + |x|^2)$,
which verifies Assumption \ref{as ina} for the subsampling with replacement scheme.
Let us now define a new estimator
$b^{wor}(x,U) := \frac{1}{s} \sum_{j=1}^{m} \hat{b}(x,\theta_j)Z_j$,
where $(Z_j)_{j=1}^{m}$ are correlated random variables such that
$\mathbb{P}(Z_j = 1) = \frac{s}{m}$, $\mathbb{P}(Z_j = 0) = 1 - \frac{s}{m}$ and $\mathbb{P}(Z_i = 1, Z_j = 1) = \binom{m-2}{s-2}/\binom{m}{s}$ 
for any $i$, $j \in \{ 1, \ldots , m \}$ such that $i \neq j$. Note that this definition of $b^{wor}$ corresponds to sampling $s$ terms from the sum of $m$ terms defining $a$ in (\ref{subsamplingDrift}) without replacement, see Example 2.15 in \cite{MajkaMijatovicSzpruch2018}. It is immediate to check that this estimator of $a$ is indeed unbiased. In order to bound the variance, we can first check that $\operatorname{Cov}(Z_i,Z_j) = \frac{s(s-1)}{m(m-1)} - \frac{s^2}{m^2} = - \frac{s(1-\frac{s}{m})}{m(m-1)}$. We have
\begin{equation*}
\begin{split}
\mathbb{E}&|b^{wor}(x,U) - a(x)|^2 = \mathbb{E} \left| \frac{1}{s} \sum_{j=1}^{m} \hat{b}(x, \theta_j)Z_j  - \frac{1}{s} \sum_{j=1}^{m} \frac{s}{m} \hat{b}(x,\theta_j) \right|^2 = \frac{1}{s^2} \mathbb{E} \left| \sum_{j=1}^{m} \hat{b}(x, \theta_j) (Z_j - \frac{s}{m}) \right|^2 \\
&= \frac{1}{s^2} \left( \mathbb{E} \sum_{j=1}^{m} \hat{b}(x,\theta_j)^2 (Z_j - \frac{s}{m})^2 + \mathbb{E} \sum_{i, j = 1, i \neq j}^{m} \hat{b}(x, \theta_j)(Z_j - \frac{s}{m})\hat{b}(x, \theta_i)(Z_i - \frac{s}{m}) \right) \,.
\end{split}
\end{equation*}
Note now that we have
\begin{equation*}
\begin{split}
\mathbb{E}(Z_i - \frac{s}{m})(Z_j - \frac{s}{m}) &= \mathbb{E}\left(Z_i Z_j - \frac{s}{m}Z_i - \frac{s}{m}Z_j + \frac{s^2}{m^2}\right) = \frac{\binom{m-2}{s-2}}{\binom{m}{s}} - 2 \frac{s^2}{m^2} + \frac{s^2}{m^2} \\
&= \frac{s(s-1)}{m(m-1)} - \frac{s^2}{m^2} = - \frac{s(m-s)}{m^2(m-1)} \,.
\end{split}
\end{equation*}
Hence we can easily check that
$\mathbb{E}|b^{wor}(x,U) - a(x)|^2 \leq \frac{1}{s}(1 - \frac{s}{m})C(1 + |x|^2)$.
Thus we see that the upper bound on the variance of the estimator $b^{wor}$ that we obtained is equal to the upper bound on the variance of $b$ multiplied by $(1 - \frac{s}{m})$. In particular, this confirms that Assumption \ref{as ina} holds also for the subsampling without replacement scheme.

Let us now explain how to estimate the fourth centered moments required for Assumption \ref{as fourth moment}, based on the assumption that for all $\theta$ and $x$ we have $|\hat{b}(x,\theta)|^4 \leq C(1 + |x|^4)$. We have
\begin{equation*}
\begin{split}
&\mathbb{E}|b(x,U) - a(x)|^4 = \frac{1}{s^4} \mathbb{E} \left| \sum_{i=1}^{s} (\hat{b}(x, \theta_{U_i}) - a(x) )  \right|^4 = \frac{1}{s^4} \sum_{i=1}^{s} \mathbb{E} \left|\hat{b}(x, \theta_{U_i}) - a(x)  \right|^4 \\
&+ 4 \frac{1}{s^4} \sum_{i,j = 1, i < j}^s \mathbb{E} \left(\hat{b}(x, \theta_{U_i}) - a(x) \right)^3 \left(\hat{b}(x, \theta_{U_j}) - a(x) \right) \\
&+ 6 \frac{1}{s^4} \sum_{i,j = 1, i < j}^s \mathbb{E} \left(\hat{b}(x, \theta_{U_i}) - a(x) \right)^2 \left(\hat{b}(x, \theta_{U_j}) - a(x) \right)^2 \\
&+ 24 \frac{1}{s^4} \sum_{i,j,k,l = 1, i < j < k < l}^s \mathbb{E}\left(\hat{b}(x, \theta_{U_i}) - a(x) \right) \left(\hat{b}(x, \theta_{U_j}) - a(x) \right)\left(\hat{b}(x, \theta_{U_k}) - a(x) \right)\\
&\times \left(\hat{b}(x, \theta_{U_l}) - a(x) \right) \\
&+ 12 \frac{1}{s^4} \sum_{i,j,k = 1, i < j < k}^s \mathbb{E} \left(\hat{b}(x, \theta_{U_i}) - a(x) \right)^2 \left(\hat{b}(x, \theta_{U_j}) - a(x) \right) \left(\hat{b}(x, \theta_{U_k}) - a(x) \right) \,.
\end{split}
\end{equation*}
Since $(\hat{b}(x, \theta_{U_i}) - a(x))$ are mutually independent, centered random variables, we see that 
\begin{equation}\label{e:subsampling:aux1}
\begin{split}
\mathbb{E}|b(x,U) - a(x)|^4 &= \frac{1}{s^4} \sum_{i=1}^{s} \mathbb{E} \left|\hat{b}(x, \theta_{U_i}) - a(x)  \right|^4 \\
&+ 6 \frac{1}{s^4} \sum_{i,j = 1, i < j}^s \mathbb{E} \left(\hat{b}(x, \theta_{U_i}) - a(x) \right)^2 \left(\hat{b}(x, \theta_{U_j}) - a(x) \right)^2 \,.
\end{split}
\end{equation}
We can now compute for any $i = 1, \ldots, m$
\begin{equation*}
\begin{split}
\mathbb{E} &\left|\hat{b}(x, \theta_{U_i}) - a(x)  \right|^4 = \sum_{i=1}^{m} \left(\hat{b}(x, \theta_i) - a(x) \right)^4 \frac{1}{m} \\
&\leq \frac{1}{m} \sum_{i=1}^{m} \left(\hat{b}(x, \theta_i)^4 - 4 \hat{b}(x, \theta_i)^3 a(x) + 6 \hat{b}(x, \theta_i)^2 a(x)^2 \right) \leq C(1+|x|^4) \,,
\end{split}
\end{equation*}
where we used the linear growth conditions for $a(x)$ and for $\hat{b}(x, \theta)$. Hence we see that the first term on the right hand side of (\ref{e:subsampling:aux1}) can be bounded by $\frac{1}{s^3}C(1+|x|^4)$. On the other hand, using $\mathbb{E} \left( \hat{b}(x, \theta_{U_i}) - a(x) \right)^2 \leq C(1+|x|^2)$ we see that the second term on the right hand side of (\ref{e:subsampling:aux1}) can be bounded by $\frac{1}{s^4} \binom{s}{2} C(1+|x|^4)$ and hence we obtain
$\mathbb{E}|b(x,U) - a(x)|^4 \leq \frac{1}{s^2}C(1 + |x|^4)$.
By analogy, for the estimator without replacement $b^{wor}$ we have
\begin{equation}\label{e:subsampling:aux2}
\begin{split}
&\mathbb{E}|b^{wor}(x,U) - a(x)|^4 = \frac{1}{s^4} \mathbb{E} \left| \sum_{i=1}^{m} \hat{b}(x,\theta_i)(Z_i - \frac{s}{m})  \right|^4 = \frac{1}{s^4} \sum_{i=1}^{m} \mathbb{E} \left| \hat{b}(x,\theta_i)(Z_i - \frac{s}{m})  \right|^4 \\
&+ 4 \frac{1}{s^4} \sum_{i,j = 1, i < j}^m \mathbb{E} \left(\hat{b}(x,\theta_i)(Z_i - \frac{s}{m}) \right)^3 \left(\hat{b}(x,\theta_j)(Z_j - \frac{s}{m}) \right) \\
&+ 6 \frac{1}{s^4} \sum_{i,j = 1, i < j}^m \mathbb{E}\left(\hat{b}(x,\theta_i)(Z_i - \frac{s}{m}) \right)^2 \left(\hat{b}(x,\theta_j)(Z_j - \frac{s}{m}) \right)^2 \\
&+ 24 \frac{1}{s^4} \sum_{i,j,k,l = 1, i < j < k < l}^m \mathbb{E} \left(\hat{b}(x,\theta_i)(Z_i - \frac{s}{m}) \right) \left(\hat{b}(x,\theta_j)(Z_j - \frac{s}{m}) \right) \left(\hat{b}(x,\theta_k)(Z_k - \frac{s}{m}) \right)\\
&\times  \left(\hat{b}(x,\theta_l)(Z_l - \frac{s}{m}) \right) \\
&+ 12 \frac{1}{s^4} \sum_{i,j,k = 1, i < j < k}^m \mathbb{E} \left(\hat{b}(x,\theta_i)(Z_i - \frac{s}{m}) \right)^2 \left(\hat{b}(x,\theta_j)(Z_j - \frac{s}{m}) \right) \left(\hat{b}(x,\theta_k)(Z_k - \frac{s}{m}) \right) \,.
\end{split}
\end{equation}
Recall that the random variables $Z_i$ are not independent and hence we need to compute all the terms in the sum above. We have
\begin{equation*}
\begin{split}
\frac{1}{s^4} &\sum_{i=1}^{m} \mathbb{E} \left| \hat{b}(x,\theta_i)(Z_i - \frac{s}{m})  \right|^4 =  \frac{1}{s^4} \sum_{i=1}^{m}  |\hat{b}(x,\theta_i)|^4 \left[ \left(\frac{s}{m}\right)^4 \left( 1 - \frac{s}{m}\right) + \left( 1 - \frac{s}{m}\right)^4 \frac{s}{m}\right] \\
&= \frac{1}{s^4} \sum_{i=1}^{m}  |\hat{b}(x,\theta_i)|^4 \frac{s(m-s)}{m^4} (3s^2 - 3ms + m^2) \leq \frac{1}{s^3}(1 - \frac{s}{m}) C (1 + |x|^4) \,.
\end{split}
\end{equation*}
Let us now focus on the fourth term on the right hand side of (\ref{e:subsampling:aux2}). We have
\begin{equation*}
\begin{split}
&\left( Z_i - \frac{s}{m} \right)\left( Z_j - \frac{s}{m} \right)\left( Z_k - \frac{s}{m} \right)\left( Z_l - \frac{s}{m} \right) = Z_i Z_j Z_k Z_l \\
&- \frac{s}{m} \left( Z_i Z_j Z_k + Z_i Z_k Z_l + Z_j Z_k Z_l + Z_i Z_j Z_l \right) \\
&+ \frac{s^2}{m^2} \left( Z_i Z_k + Z_j Z_k + Z_j Z_l + Z_i Z_j + Z_i Z_l + Z_k Z_l \right) - \frac{s^3}{m^3} \left( Z_i + Z_j + Z_k + Z_l \right) + \frac{s^4}{m^4}
\end{split}
\end{equation*}
and hence, using the definition of the random variables $Z_i$,
\begin{equation*}
\begin{split}
&\mathbb{E} \left( Z_i - \frac{s}{m} \right)\left( Z_j - \frac{s}{m} \right)\left( Z_k - \frac{s}{m} \right)\left( Z_l - \frac{s}{m} \right) \\
&= \binom{m-4}{s-4} / \binom{m}{s} - 4 \frac{s}{m} \binom{m-3}{s-3} / \binom{m}{s} + 6 \frac{s^2}{m^2} \binom{m-2}{s-2} / \binom{m}{s} - 4 \frac{s^3}{m^3} \frac{s}{m} + \frac{s^4}{m^4} \\
&= \frac{(s-2)(s-1)s}{(m-2)(m-1)m} \left( \frac{s-3}{m-3} - \frac{s}{m}\right) + 3 \frac{(s-1)s^2}{(m-1)m^2} \left( \frac{s}{m} - \frac{s-2}{m-2}\right) + 3 \frac{s^3}{m^3} \left( \frac{s-1}{m-1} - \frac{s}{m} \right) \,.
\end{split}
\end{equation*}
Some straightforward computations allow us then to conclude that the fourth term on the right hand side of (\ref{e:subsampling:aux2}) is bounded by $\frac{1}{s^2}(1-\frac{s}{m})C(1+|x|^4)$. Dealing in a similar way with the remaining terms, by tedious by otherwise simple computations we can conclude that 
\begin{equation*}
\mathbb{E}|b^{wor}(x,U) - a(x)|^4 \leq \frac{1}{s^2}(1 - \frac{s}{m}) C (1 + |x|^4) \,.
\end{equation*}

\section{On the advisability of subsampling: A simple example}\label{section:Examples}

In this section we illustrate the issues that arise in the analysis of the dependence of the cost of SGAs on the parameters $m$ and $s$. Let us begin by discussing the MSE estimates \eqref{eq MSE decomp} in more detail.

In order to disentangle various approximation errors in our analysis, it is useful to consider the SDE
\begin{equation}\label{eq SDEm timechanged2} \textstyle
dY_{t} = - \frac{1}{2m}\nabla V(Y_{t},\nu^m) d{t} + \frac{1}{\sqrt{m}}dW_{t}\,, 
\end{equation}
where $(W_t)_{t \geq 0}$ is the standard Brownian motion in $\mathbb{R}^d$. We remark that (\ref{eq SDEm timechanged2}) is the time-changed SDE $d\bar{Y}_t = - \nabla V(\bar{Y}_{t},\nu^m) d{t} + \sqrt{2} dW_{t}$ and they both have the same limiting stationary distribution $\pi$, cf. \cite{DurmusRobertsVilmartZygalakis2017, Xifara2014}.
In the analysis of the mean square error, for $t = kh$, we can estimate
\begin{equation}
\begin{split}
\label{eq mse1}
&\operatorname{MSE}(\mathcal{A}^{f,k,N}) \leq     \left| (f,\pi) - (f,\mathcal{L}(Y_t))  \right|
+  \left|  (f,\mathcal{L}(Y_t))   -(f,\mathcal{L}(X_k))  \right|   +  \left( N^{-1}\mathbb V[f(X_k)] \right)^{1/2} \,,	
\end{split}
\end{equation}
where $X_{k+1} = X_k - \frac{h}{2m} \nabla V(X_k, \nu^s) + \sqrt{h/m} Z_{k+1}$ with i.i.d.\ $(Z_k)_{k=1}^{\infty}$ with the standard normal distribution.
The three terms above are, in order, bias (due to the simulation up to a finite time $t > 0$), weak time discretisation error and the Monte Carlo variance. 
We choose to work with the SDE (\ref{eq SDEm timechanged2}), to mitigate the effect of $m$ on the Lipschitz and convexity constants that play the key role in the first two errors in \eqref{eq mse1}, see the discussion in \cite{TrueCost}. Consequently, we focus on the last term in \eqref{eq mse1}, i.e., the variance of $\mathcal{A}^{f,k,N}$, in our analysis.

For convenience we assume that $h = 1/n$ for some $n \geq 1$, which corresponds to taking $n$ steps in each unit time interval. There are numerous results in the stochastic analysis literature for bounding the first term on the right hand side of (\ref{eq mse1}) by a quantity of order $\mathcal{O}(e^{-t})$, under fairly general assumptions on $\nabla V$, see e.g.\ \cite{Eberle2016, EberleGuillinZimmer2019}. Moreover, in our previous paper \cite{MajkaMijatovicSzpruch2018} we carried out the weak error analysis (see Theorem 1.5 therein) that provides an upper bound for the second term in (\ref{eq mse1}) of order $\mathcal{O}(h)$. Hence, for any algorithm $\mathcal{A}^{f,k,N}$ based on the chain $(X_k)_{k=0}^{\infty}$ given above, we have
\begin{equation}\label{eq mse2} \textstyle
\operatorname{MSE}(\mathcal{A}^{f,k,N} )\lesssim e^{{-}\lambda t} + \frac{1}{n} +\frac{1}{\sqrt{\Lambda(s,m) N}} \,,
\end{equation}
for some $\lambda > 0$ (which is the exponential rate of convergence in the $L^1$-Wasserstein distance of the SDE (\ref{eq SDEm timechanged2}) to $\pi$). Here $\Lambda(s,m)$ is a quantity whose exact value depends on the properties of $V$ and the function $f$, cf.\ the discussion below. Fix $\varepsilon >0$ and set 
$
\operatorname{MSE}(\mathcal{A}^{f,k,N}) \lesssim \varepsilon.
$
This enforces the following choice of the parameters 
$
t \approx \lambda^{-1} \log(\varepsilon^{-1}), \quad \Lambda(s,m) N \approx \varepsilon^{-2}, \quad n \approx {\varepsilon}^{-1}. 
$
The cost of simulation of our algorithm is defined as the product of the number of paths $N$, the number of iterations $k$ of each path and the number of data points $s$ in each iteration. Since $t = kh$ and $h = 1/n$, we have 
\begin{equation}\label{eq cost} \textstyle
\operatorname{cost}(\mathcal{A}^{f,k,N}) = t n N s \approx \frac{s}{\Lambda(s,m)} \log(\varepsilon^{-1}) \varepsilon^{-3}. 
\end{equation}
The main difficulty in quantifying the cost of such Monte Carlo algorithms stems from the fact that the value of $\Lambda(s,m)$ is problem-specific and depends substantially on the interplay between parameters $m$, $s$ and $h$. Hence, one may obtain different costs (and thus different answers to the question of profitability of using mini-batching) for different models and different data regimes \cite{TrueCost}.

In order to gain some insight into possible values of $\Lambda(s,m)$,
 we consider a simple example of an Ornstein-Uhlenbeck Markov chain $(X_k)_{k=0}^{\infty}$ given by 
\begin{equation}\label{eq Ornsteinm}
X_{k+1} = X_k - \alpha X_k h + \left( \frac{1}{m} \sum_{i=1}^{m} \xi_i \right) h + \sqrt{h/m} Z_{k+1} \,,
\end{equation}
where $\alpha > 0$ and $(\xi_i)_{i=1}^{m}$ are data points in $\mathbb{R}^k$, and its stochastic gradient counterpart $(\bar{X}_k)_{k=0}^{\infty}$ given by
\begin{equation}\label{eq Ornsteins}
\bar{X}_{k+1} = \bar{X}_k - \alpha \bar{X}_k h + \left( \frac{1}{s} \sum_{i=1}^{s} \xi_{U_i^k} \right) h + \sqrt{h/m} Z_{k+1} \,,
\end{equation}
where for all $i \in \{ 1, \ldots , s \}$ and for all $k \geq 0$ we have $U_i^k \sim \operatorname{Unif}(\{ 1, \ldots , m \})$ and all the random variables $U_i^k$ are mutually independent. 
Denoting $b:= \frac{1}{m} \sum_{i=1}^{m} \xi_i$ and $\bar{b}^k := \frac{1}{s} \sum_{i=1}^{s} \xi_{U_i^k}$, we easily see that
\begin{equation*}
X_k = (1-\alpha h)^k X_0 + \sum_{j=1}^{k} (1-\alpha h)^{k-j} (bh + \sqrt{h/m}Z_j)
\end{equation*}
and
$\bar{X}_k = (1-\alpha h)^k \bar{X}_0 + \sum_{j=1}^{k} (1-\alpha h)^{k-j} (\bar{b}^{j-1}h + \sqrt{h/m}Z_j)$.
Since $\mathbb{V}[Z_j] = 1$ for all $j \geq 1$, we observe that 
\begin{equation}\label{eq Ornsteinm var}
\mathbb{V}[X_k] = (1-\alpha h)^{2k} \mathbb{V}[X_0] + \frac{h}{m} \sum_{j=1}^{k} (1-\alpha h)^{2(k-j)} \,.
\end{equation}
Moreover,
$\mathbb{V}[\bar{X}_k] = (1-\alpha h)^{2k} \mathbb{V} [\bar{X}_0] + \sum_{j=1}^{k} (1-\alpha h)^{2(k-j)} \left(h^2 \mathbb{V}[\bar{b}^{j-1}] + \frac{h}{m}\right)$.
Following the calculations in Example 2.15 in \cite{MajkaMijatovicSzpruch2018}, we see that for any $j \geq 1$
\begin{equation*}
\mathbb{V}[\bar{b}^j] = \frac{1}{s} \left[ \frac{1}{m} \sum_{j=1}^{m} \xi_j^2 - \left( \frac{1}{m} \sum_{j=1}^{m} \xi_j \right)^2 \right] \,,
\end{equation*}
which, assuming $\bar{X}_0 = X_0$, shows
\begin{equation}\label{eq Ornsteins var}
\mathbb{V}[\bar{X}_k] = \mathbb{V}[X_k] + \frac{h^2}{s} \left[ \frac{1}{m} \sum_{j=1}^{m} \xi_j^2 - \left( \frac{1}{m} \sum_{j=1}^{m} \xi_j \right)^2 \right] \sum_{j=1}^{k} (1-\alpha h)^{2(k-j)} \,.
\end{equation}
Since the sum $\sum_{j=1}^{k} (1-\alpha h)^{2(k-j)}$ is of order $1/h$, we infer that $\mathbb{V}[X_k]$ is of order $1/m$, whereas $\mathbb{V}[\bar{X}_k]$ is of order $1/m + h/s$. Note that this corresponds to taking $f(x) = x$ in $\mathcal{A}^{f,k,N}$ and demonstrates that even in this simple case it is not clear whether the algorithm $\mathcal{A}^{f,k,N}$ based on $\bar{X}_k$ is more efficient than the one based on $X_k$, since the exact values of their costs (\ref{eq cost}) depend on the relation between $m$, $s$ and $h$. Note that our analysis in this case is exact, since we used equalities everywhere.

Let us also consider the case of $f(x) = x^2$, which turns out to be more cumbersome. We first assume that $\bar{X}_0 = X_0$ and observe that then $\mathbb{E}[\bar{X}_k] = \mathbb{E}[X_k]$ for all $k \geq 0$ and hence it is sufficient to compare the variances of the centered versions of $\bar{X}_k$ and $X_k$. More precisely, in our analysis of the algorithm $\cA^{f,k,N}$ we want to look at $\mathbb{V}[f(X_k - \mathbb{E}[X_k])]$ and $\mathbb{V}[f(\bar{X}_k - \mathbb{E}[\bar{X}_k])]$ with $f(x) = x^2$ and hence we will compare their respective upper bounds $\mathbb{E}|X_k - \mathbb{E}[X_k]|^4$ and $\mathbb{E}|\bar{X}_k - \mathbb{E}[\bar{X}_k]|^4$. First we observe that
\begin{equation}\label{eq Ornsteinm var aux1}
\mathbb{E}|X_k - \mathbb{E}[X_k]|^4 = \mathbb{E} \left| (1-\alpha h)^k (X_0 - \mathbb{E}[X_0]) + \sum_{j=1}^{k} (1-\alpha h)^{k-j} \frac{\sqrt{h}}{\sqrt{m}} Z_j \right|^4 \,.
\end{equation}
Hence we can expand the fourth power of the sum as in Section \ref{sectionSubsampling} and, after taking into account all the cross-terms, we see that $\mathbb{E}|X_k - \mathbb{E}[X_k]|^4$ is of order $h/m^2$. On the other hand, using $(a+b)^4 \leq 8a^4 + 8b^4$ we get
\begin{equation}\label{eq Ornsteins var aux1}
\mathbb{E}|\bar{X}_k - \mathbb{E}[\bar{X}_k]|^4 \leq 8\mathbb{E}|X_k - \mathbb{E}[X_k]|^4 + 8\mathbb{E} \left| \sum_{j=1}^{k} (1-\alpha h)^{k-j} (\bar{b}^{j-1} - \mathbb{E}[\bar{b}^{j-1}]) h \right|^4 \,.
\end{equation}
Now the analysis follows again Section \ref{sectionSubsampling} in expanding the fourth power of the sum. Note that similarly as in \eqref{e:subsampling:aux1} the terms involving $\mathbb{E}(\bar{b}^{j-1} - \mathbb{E}[\bar{b}^{j-1}])$ and $\mathbb{E}(\bar{b}^{j-1} - \mathbb{E}[\bar{b}^{j-1}])^3$ vanish and hence we are left with the terms involving $\mathbb{E}(\bar{b}^{j-1} - \mathbb{E}[\bar{b}^{j-1}])^2$ and $\mathbb{E}(\bar{b}^{j-1} - \mathbb{E}[\bar{b}^{j-1}])^4$, for which we can use the bounds obtained in Example 2.15 in \cite{MajkaMijatovicSzpruch2018} and in Section \ref{sectionSubsampling}, to conclude that the second term on the right hand side of (\ref{eq Ornsteins var aux1}) is of order $h^3/s^2$.
Hence we conclude that for the case $f(x) = x^2$
the algorithm based on $X_k$ has the variance of order $h/m^2$, while the algorithm based on $\bar{X}_k$ has the variance of order $h/m^2 + h^3/s^2$.

Finally, let us analyse $f(x) = \sqrt{x}$. To this end, we again use centered versions and compare $\mathbb{E}|X_k - \mathbb{E}[X_k]|$ with $\mathbb{E}|\bar{X}_k - \mathbb{E}[\bar{X}_k]|$. Similarly as in (\ref{eq Ornsteinm var aux1}) we observe that $\mathbb{E}|X_k - \mathbb{E}[X_k]|$ is of order $1/\sqrt{hm}$ (remembering that $\sum_{j=1}^{k} (1-\alpha h)^{k-j}$ is of order $1/h$). Moreover, similarly as in (\ref{eq Ornsteins var aux1}) we have
\begin{equation}\label{eq Ornsteins var aux2}
\mathbb{E}|\bar{X}_k - \mathbb{E}[\bar{X}_k]| \leq \mathbb{E}|X_k - \mathbb{E}[X_k]| + \mathbb{E} \left| \sum_{j=1}^{k} (1-\alpha h)^{k-j} (\bar{b}^{j-1} - \mathbb{E}[\bar{b}^{j-1}]) h \right| \,.
\end{equation}
Hence, recalling once again from Example 2.15 in \cite{MajkaMijatovicSzpruch2018} that $\mathbb{E}(\bar{b}^{j} - \mathbb{E}[\bar{b}^{j}])^2$ is of order $1/s$ for any $j \geq 1$, we can use Jensen's inequality to conclude that $\mathbb{E}(\bar{b}^{j} - \mathbb{E}[\bar{b}^{j}])$ is of order $1/\sqrt{s}$ and, consequently, that the second term on the right hand side of (\ref{eq Ornsteins var aux2}) is also of order $1/\sqrt{s}$ (after cancellation of $h$ with the sum $\sum_{j=1}^{k} (1-\alpha h)^{k-j}$ which, as we already pointed out, is of order $1/h$).
Hence for $f(x) = \sqrt{x}$ we observe that $X_k$ leads to the variance of order $1/\sqrt{hm}$, whereas $\bar{X}_k$ leads to the variance of order $1/\sqrt{hm} + 1/\sqrt{s}$. Again, in all these cases, determining which term is the leading one depends on the interplay between $m$, $s$ and $h$.

\section{Appendix: Uniform bounds on fourth moments of SGLD}

\begin{lemma}\label{lem:fourthmoment}
	Let Assumptions \ref{as diss}, \ref{as fourth moment} and \ref{as linear fourth} hold. Then there exists a constant $C_{IEul}^{(4)} > 0$ such that for the Markov chain $(X_k)_{k=0}^{\infty}$ given by
	\begin{equation*}
	X_{k+1} = X_k + hb(X_k, U_k) + \beta \sqrt{h} Z_{k+1} 
	\end{equation*}
	with pairwise independent $(U_k)_{k = 0}^{\infty}$ satisfying (\ref{estimator}) and $(Z_k)_{k = 0}^{\infty}$ i.i.d.\,, $Z_k \sim N(0,I)$ and independent of $(U_k)_{k = 0}^{\infty}$, we have
	\begin{equation*}
	\mathbb{E}|X_k|^4 \leq C_{IEul}^{(4)}
	\end{equation*}
	for all $k \geq 1$ and $h \in (0, h_0)$, where
	$C_{IEul}^{(4)} := \mathbb{E}|X_0|^4 + \frac{C^{(4)}_{ult}}{c}$
	with $C^{(4)}_{ult} > 0$ given by (\ref{e:lemFM2}) and $c$, $h_0 > 0$ determined by (\ref{e:lemFM1}).
	\begin{proof}
		By a standard computation, we have
		\begin{equation*}
		\begin{split}
		\mathbb{E}|X_{k+1}|^4 &\leq \mathbb{E}|X_k|^4 + h^4 \mathbb{E} |b(X_k, U_k)|^4 + \beta^4 h^2 \mathbb{E}|Z_{k+1}|^4 + 6h^2 \mathbb{E}|X_k|^2 |b(X_k, U_k)|^2 \\
		&+ 6 \beta^2 h \mathbb{E}|X_k|^2 |Z_{k+1}|^2 + 6\beta^2 h^3 \mathbb{E}|b(X_k, U_k)|^2 |Z_{k+1}|^2 + 4h \mathbb{E}|X_k|^2 \langle X_k, b(X_k, U_k) \rangle \\
		&+ 4 \beta \sqrt{h} \mathbb{E}|X_k|^2 \langle X_k, Z_{k+1} \rangle + 4 h^3 \mathbb{E}|b(X_k, U_k)|^2 \langle b(X_k, U_k) , X_k \rangle \\
		&+ 4 \beta h^3 \mathbb{E}|b(X_k, U_k)|^2 \langle b(X_k, U_k) , Z_{k+1} \rangle + 4\beta^3 h^{3/2} \mathbb{E} |Z_{k+1}|^2 \langle Z_{k+1} , X_k \rangle \\
		&+ 4\beta^3 h^{3/2} \mathbb{E} |Z_{k+1}|^2 \langle Z_{k+1} , b(X_k, U_k) \rangle + 6 \beta h^{3/2} \mathbb{E}|X_k|^2 \langle b(X_k, U_k) , Z_{k+1} \rangle \\
		&+ 6 \beta h^{5/2} \mathbb{E}|b(X_k, U_k)|^2 \langle X_k, Z_{k+1} \rangle + 6 \beta^2 h^2 \mathbb{E}|Z_{k+1}|^2 \langle X_k ,  b(X_k, U_k) \rangle =: \sum_{j=1}^{15} \tilde{I}_j \,.
		\end{split}
		\end{equation*}
		By conditioning on $X_k$ and $U_k$ and using properties of the multivariate normal distribution, we see that
		$\tilde{I}_8 = \tilde{I}_{10} = \tilde{I}_{11} = \tilde{I}_{12} = \tilde{I}_{13} = \tilde{I}_{14} = 0$.
		Hence we have
		\begin{equation*}
		\begin{split}
		\mathbb{E}|X_{k+1}|^4 &\leq \mathbb{E}|X_k|^4 + h^4 \mathbb{E} |b(X_k, U_k)|^4 + \beta^4 h^2 \mathbb{E}|Z_{k+1}|^4 + 6h^2 \mathbb{E}|X_k|^2 |b(X_k, U_k)|^2 \\
		&+ 6\beta^2 h \mathbb{E}|X_k|^2 |Z_{k+1}|^2 + 6\beta^2 h^3 \mathbb{E}|b(X_k, U_k)|^2 |Z_{k+1}|^2 + 4h \mathbb{E}|X_k|^2 \langle X_k, b(X_k, U_k) \rangle \\
		&+ 4 h^3 \mathbb{E}|b(X_k, U_k)|^2 \langle b(X_k, U_k) , X_k \rangle + 6 \beta^2 h^2 \mathbb{E}|Z_{k+1}|^2 \langle X_k ,  b(X_k, U_k) \rangle =: \sum_{j=1}^{9} I_j \,.
		\end{split}
		\end{equation*}
		Now observe that due to Assumptions \ref{as linear fourth} and \ref{as fourth moment} we have 
		$\mathbb{E}|b(x,U)|^4 \leq \bar{L}_0^{(4)} (1 + |x|^4)$,
		where
		\begin{equation*}
		\bar{L}_0^{(4)} := 8(\sigma^{(4)} + L_0^{(4)}) \,.
		\end{equation*}
		Indeed, we have
		\begin{equation*}
		\mathbb{E}|b(x,U)|^4 = \mathbb{E}|b(x,U) - a(x) + a(x)|^4 \leq 8 \mathbb{E}|b(x,U) - a(x)|^4 + 8 \mathbb{E}|a(x)|^4 \,.
		\end{equation*}
		Moreover, we have
	$\left( \mathbb{E}|b(x,U)|^3 \right)^{4/3} \leq \mathbb{E} \left( |b(x,U)|^3 \right)^{4/3} = \mathbb{E}|b(x,U)|^4 \leq \bar{L}_0^{(4)}(1 + |x|^4)$ 
		and hence
		\begin{equation*}
		\mathbb{E} |b(x,U)|^3 \leq \left( \bar{L}_0^{(4)} \right)^{3/4} \left( 1 + |x|^4 \right)^{3/4} \leq \left( \bar{L}_0^{(4)} \right)^{3/4} \left( 1 + |x|^3 \right) \,.
		\end{equation*}
		These auxiliary estimates allow us to bound
		$I_2 \leq h^4 \bar{L}_0^{(4)} \left( 1 + \mathbb{E}|X_k|^4 \right)$
		and
		\begin{equation*}
		I_8 \leq 4h^3 \mathbb{E} |b(X_k, U_k)|^3 |X_k| \leq 4h^3 \left( \bar{L}_0^{(4)} \right)^{3/4} \left( C_{IEul}^{1/2} + \mathbb{E}|X_k|^4 \right) \,.
		\end{equation*}
		Moreover, by conditioning, we get
		\begin{equation*}
		\begin{split}
		I_3 &= \beta^4 h^2(d^2 + 2d) \\
		I_4 &\leq 6h^2 \bar{L}_0 \left( \mathbb{E}|X_k|^2 + \mathbb{E}|X_k|^4 \right) \leq 6h^2 \bar{L}_0 C_{IEul} + 6h^2 \bar{L}_0 \mathbb{E}|X_k|^4 \\
		I_5 &= 6\beta^2 hd \mathbb{E}|X_k|^2 \leq 6\beta^2 hd C_{IEul} \\
		I_6 &= 6\beta^2 h^3d \mathbb{E} |b(X_k, U_k)|^2 \leq 6\beta^2 h^3 d \bar{L}_0 (1 + \mathbb{E}|X_k|^2) \leq 6\beta^2 h^3 d \bar{L}_0 (1 + C_{IEul}) \\
		I_7 &= 4h \mathbb{E}|X_k|^2 \langle X_k, a(X_k) \rangle \leq 4h M_2 \mathbb{E}|X_k|^2 - 4hM_1 \mathbb{E}|X_k|^4 \\
		I_9 &\leq 6\beta^2 h^2 d M_2 - 6\beta^2 h^2 d M_1 \mathbb{E}|X_k|^2 \,,
		\end{split}
		\end{equation*}
		where in $I_4$ and $I_6$ we used (\ref{estimatorGrowth}) and in $I_7$ and $I_9$ we used (\ref{driftLyapunov}). Hence we obtain
		\begin{equation*}
		\begin{split}
		\mathbb{E}|X_{k+1}|^4 &\leq - 4hM_1 \mathbb{E}|X_k|^4 + \left( 1 + h^4 \bar{L}_0^{(4)} + 6h^2 \bar{L}_0 + 4h^3 \left( \bar{L}_0^{(4)} \right)^{3/4} \right)\mathbb{E}|X_k|^4 \\
		&+ \beta^4 h^2(d^2 + 2d) + 6\beta^2 h^2 d M_2 + 4h^3 \left( \bar{L}_0^{(4)} \right)^{3/4} C_{IEul}^{1/2} + 4hM_2 C_{IEul} \\
		&+ 6\beta^2 h^3 d \bar{L}_0 (1 + C_{IEul}) + 6\beta^2 hd C_{IEul} + 6h^2 \bar{L}_0 C_{IEul} + h^4 \bar{L}_0^{(4)} \,.
		\end{split}
		\end{equation*}
		We can now choose constants $c$, $h_0 > 0$ such that for all $h \in (0, h_0)$ we have
		\begin{equation}\label{e:lemFM1}
		h^4 \bar{L}_0^{(4)} + 6h^2 \bar{L}_0 + 4h^3 \left( \bar{L}_0^{(4)} \right)^{3/4} - 4hM_1 \leq - ch
		\end{equation}
		holds for all $h \in (0, h_0)$. Then, putting
		\begin{equation}\label{e:lemFM2}
		\begin{split}
		C^{(4)}_{ult} &:= \beta^4 h_0(d^2 + 2d) + 6\beta^2 h_0 d M_2 + 4h_0^2 \left( \bar{L}_0^{(4)} \right)^{3/4} C_{IEul}^{1/2} + 4M_2 C_{IEul} \\
		&+ 6\beta^2 h_0^2 d \bar{L}_0 (1 + C_{IEul}) + 6\beta^2 d C_{IEul} + 6h_0 \bar{L}_0 C_{IEul} + h_0^3 \bar{L}_0^{(4)}
		\end{split}
		\end{equation}
		we get
	$\mathbb{E}|X_{k+1}|^4 \leq (1-ch)\mathbb{E}|X_k|^4 + C^{(4)}_{ult}h$
		for all $h \in (0, h_0)$, and hence
		\begin{equation*}
		\mathbb{E}|X_{k+1}|^4 \leq (1-ch)^{k+1}\mathbb{E}|X_0|^4 + \sum_{j=0}^{k}C^{(4)}_{ult}(1-ch)^j h \leq (1-ch)^{k+1}\mathbb{E}|X_0|^4 + \frac{C^{(4)}_{ult}}{c} \,.
		\end{equation*}
	\end{proof}
\end{lemma}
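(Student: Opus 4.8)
The plan is to establish a one-step geometric recursion of the form $\mathbb{E}|X_{k+1}|^4 \leq (1-ch)\mathbb{E}|X_k|^4 + C h$ for suitable constants $c, C > 0$ and all sufficiently small $h$, and then to iterate it. First I would insert the defining recursion $X_{k+1} = X_k + h b(X_k,U_k) + \beta\sqrt{h}Z_{k+1}$ and expand $|X_{k+1}|^4$ via the multinomial formula for the fourth power of a sum of three vectors, keeping careful track of the power of $h$ attached to each of the resulting fifteen terms.

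The key simplification comes from conditioning on $X_k$ and $U_k$ and exploiting that $Z_{k+1}\sim \mathcal{N}(0,I)$ is centered and independent of $(X_k,U_k)$, with $\mathbb{E}|Z_{k+1}|^2 = d$ and $\mathbb{E}|Z_{k+1}|^4 = d^2+2d$. Every term carrying an odd power of $Z_{k+1}$ vanishes under conditional expectation, removing six of the fifteen terms. The single source of a favourable (negative) contribution is the term $4h\,\mathbb{E}|X_k|^2\langle X_k, b(X_k,U_k)\rangle$: using $\mathbb{E}[b(X_k,U_k)\mid X_k] = a(X_k)$ from unbiasedness \eqref{estimator} together with the Lyapunov bound \eqref{driftLyapunov}, this is at most $4hM_2\mathbb{E}|X_k|^2 - 4hM_1\mathbb{E}|X_k|^4$, yielding the crucial $-4hM_1\mathbb{E}|X_k|^4$. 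The remaining positive terms carry at least two powers of $h$ and I would bound them using the fourth-moment growth estimate $\mathbb{E}|b(x,U)|^4 \leq \bar L_0^{(4)}(1+|x|^4)$ (a consequence of Assumptions \ref{as fourth moment} and \ref{as linear fourth} via $(a+b)^4 \leq 8a^4+8b^4$), the second-moment growth \eqref{estimatorGrowth}, a H\"older-type consequence $\mathbb{E}|b(x,U)|^3 \leq (\bar L_0^{(4)})^{3/4}(1+|x|^3)$ to handle the mixed term $\mathbb{E}|b(X_k,U_k)|^2\langle b(X_k,U_k),X_k\rangle$, and the already-established uniform second-moment bound $\mathbb{E}|X_k|^2 \leq C_{IEul}$.

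Collecting everything, I would arrive at $\mathbb{E}|X_{k+1}|^4 \leq \bigl(1 + h^4\bar L_0^{(4)} + 6h^2\bar L_0 + 4h^3(\bar L_0^{(4)})^{3/4} - 4hM_1\bigr)\mathbb{E}|X_k|^4 + C^{(4)}_{ult}h$, where $C^{(4)}_{ult}$ gathers the $h$-order constant contributions. Choosing $h_0$ small enough that the $O(h^2)$ and higher positive coefficients are dominated by $-4hM_1$ produces a coefficient $\leq 1 - ch$ for some $c > 0$, which is the desired one-step recursion; summing the resulting geometric series then gives $\mathbb{E}|X_k|^4 \leq (1-ch)^k\mathbb{E}|X_0|^4 + C^{(4)}_{ult}/c$ uniformly in $k$ and $h \in (0,h_0)$. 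The main obstacle is entirely in the bookkeeping: one must verify that the negative drift term $-4hM_1\mathbb{E}|X_k|^4$ genuinely dominates \emph{all} the positive higher-order-in-$h$ contributions uniformly in $k$, which hinges on having the a priori uniform second-moment bound $C_{IEul}$ available to control the numerous lower-order cross terms, and on the H\"older estimate for $\mathbb{E}|b|^3$ to keep the $h^3$ term under control.
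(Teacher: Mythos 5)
Your proposal is correct and follows essentially the same route as the paper's proof: the multinomial expansion into fifteen terms, the conditioning on $(X_k,U_k)$ to eliminate the six odd-power-in-$Z_{k+1}$ terms, the use of unbiasedness plus the Lyapunov bound (\ref{driftLyapunov}) to extract the negative term $-4hM_1\mathbb{E}|X_k|^4$, the moment bounds $\mathbb{E}|b(x,U)|^4 \leq \bar L_0^{(4)}(1+|x|^4)$ and $\mathbb{E}|b(x,U)|^3 \leq (\bar L_0^{(4)})^{3/4}(1+|x|^3)$ together with the uniform second-moment bound $C_{IEul}$, and the final geometric-recursion-and-summation step. No gaps; this is the paper's argument.
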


\section{Appendix: Proofs for AMLMC via subsampling}\label{section:ProofsForAMLMCviaSubsampling}

\subsection{Proof of Lemma \ref{thm:AntitheticSubsamplingAux}}

\begin{proof}	
We have
\begin{equation*}
\begin{split}
&\mathbb{E} \left| X^{c+}_{k+1} - X^{c-}_{k+1} \right|^4 = \mathbb{E} \left| X^{c+}_{k} + hb(X^{c+}_{k}, U^{f,2}_{k}) - X^{c-}_{k} - hb(X^{c-}_{k}, U^{f,1}_{k}) \right|^4 \\
&\leq \mathbb{E} \left| X^{c+}_{k} - X^{c-}_{k} \right|^4 \\
&+ 4 \mathbb{E} \left| X^{c+}_{k} - X^{c-}_{k} \right|^2 \left \langle X^{c+}_{k} - X^{c-}_{k} , hb(X^{c+}_{k}, U^{f,2}_{k}) - hb(X^{c-}_{k}, U^{f,1}_{k}) \right \rangle \\
&+ 6 \mathbb{E} \left[ \left| X^{c+}_{k} - X^{c-}_{k} \right|^2 \left| hb(X^{c+}_{k}, U^{f,2}_{k}) - hb(X^{c-}_{k}, U^{f,1}_{k})\right|^2 \right] \\
&+ 4 \mathbb{E} \left \langle X^{c+}_{k} - X^{c-}_{k} , hb(X^{c+}_{k}, U^{f,2}_{k}) - hb(X^{c-}_{k}, U^{f,1}_{k}) \right \rangle \left| hb(X^{c+}_{k}, U^{f,2}_{k}) - hb(X^{c-}_{k}, U^{f,1}_{k})\right|^2 \\
&+ \mathbb{E} \left| hb(X^{c+}_{k}, U^{f,2}_{k}) - hb(X^{c-}_{k}, U^{f,1}_{k})\right|^4 =: B_1 + B_2 + B_3 + B_4 + B_5 \,.
\end{split}
\end{equation*}
We obtain
$B_2 \leq - 4hK \mathbb{E} |X^{c+}_{k} - X^{c-}_{k}|^4$ by conditioning on $X^{c+}_{k}$ and $X^{c-}_{k}$ and using Assumption \ref{as diss}(ii).
In order to deal with $B_5$ we write
\begin{equation}\label{e:antithetic:decomposition2}
\begin{split}
b(X^{c+}_{k}, U^{f,2}_{k}) - b(X^{c-}_{k}, U^{f,1}_{k}) &= b(X^{c+}_{k}, U^{f,2}_{k}) - a(X^{c+}_{k}) + a(X^{c-}_{k}) - a(X^{c+}_{k}) \\
&+ a(X^{c-}_{k}) - b(X^{c-}_{k}, U^{f,1}_{k}) \,.
\end{split}
\end{equation}
We will now use the inequality
$\left( \sum_{j=1}^n a_j \right)^k \leq n^{k-1} \left( \sum_{j=1}^n a_j^k \right)$,
which holds for all $a_j \geq 0$ and all integers $k \geq 2$ and $n \geq 1$ due to the H\"{o}lder inequality for sums. Hence we have $(a+b+c)^4 \leq 27(a^4 + b^4 + c^4)$ and we obtain
\begin{equation*}
\begin{split}
B_5 \leq 27 h^4 \Big(  &\mathbb{E} \left| b(X^{c+}_{k}, U^{f,2}_{k}) -  a(X^{c+}_{k}) \right|^4 +  \mathbb{E} \left| a(X^{c+}_{k}) -  a(X^{c-}_{k}) \right|^4 \\
&+  \mathbb{E} \left| b(X^{c-}_{k}, U^{f,1}_{k}) -  a(X^{c-}_{k}) \right|^4 \Big) \,.
\end{split}
\end{equation*}
Hence, due to Assumption \ref{as fourth moment}, we get
$B_5 \leq 27 h^4 \left( 2\sigma^{(4)} (1+ C^{(4),(2h)}_{IEul})  +  L^4 \mathbb{E} \left| X^{c+}_{k} -  X^{c-}_{k} \right|^4 \right)$.
Using the Cauchy-Schwarz inequality, we now have
\begin{equation*}
\begin{split}
B_4 &\leq 4 h^3 \left( \mathbb{E} \left| X^{c+}_{k} - X^{c-}_{k} \right|^2 \left| b(X^{c+}_{k}, U^{f,2}_{k}) - b(X^{c-}_{k}, U^{f,1}_{k}) \right|^2 \right)^{1/2} \\
&\times \left( \mathbb{E} \left| b(X^{c+}_{k}, U^{f,2}_{k}) - b(X^{c-}_{k}, U^{f,1}_{k})\right|^4 \right)^{1/2} \\
&\leq 4 h^3 \left( \left( \mathbb{E} \left| X^{c+}_{k} - X^{c-}_{k} \right|^4 \right)^{1/2} \left( \mathbb{E} \left| b(X^{c+}_{k}, U^{f,2}_{k}) - b(X^{c-}_{k}, U^{f,1}_{k}) \right|^4 \right)^{1/2} \right)^{1/2} \\
&\times \left( 27 \left( 2\sigma^{(4)} (1+ C^{(4),(2h)}_{IEul})  +  L^4 \mathbb{E} \left| X^{c+}_{k} -  X^{c-}_{k} \right|^4 \right) \right)^{1/2} \,.
\end{split}
\end{equation*}
Hence, after applying the inequalities $ab \leq \frac{1}{2}a^2 + \frac{1}{2}b^2$ and $(a+b)^{1/2} \leq a^{1/2} + b^{1/2}$ several times, we obtain
\begin{equation*}
\begin{split}
B_4 &\leq  4 h^3 \left( \frac{1}{2} \mathbb{E} \left| X^{c+}_{k} - X^{c-}_{k} \right|^4 + \frac{1}{2} \mathbb{E} \left| b(X^{c+}_{k}, U^{f,2}_{k}) - b(X^{c-}_{k}, U^{f,1}_{k}) \right|^4 \right)^{1/2} \\
&\times \left(  \left( 54 \sigma^{(4)} (1+ C^{(4),(2h)}_{IEul}) \right)^{1/2} + \left( 27 L^4 \mathbb{E} \left| X^{c+}_{k} -  X^{c-}_{k} \right|^4 \right)^{1/2} \right) \\
&\leq  4 h^3 \left( \frac{1}{2} \mathbb{E} \left| X^{c+}_{k} - X^{c-}_{k} \right|^4 + \frac{27}{2}L^4 \mathbb{E} \left| X^{c+}_{k} - X^{c-}_{k} \right|^4 + \frac{54}{2} \sigma^{(4)} (1+ C^{(4),(2h)}_{IEul}) \right)^{1/2} \\
&\times \left(  \left( 54 \sigma^{(4)} (1+ C^{(4),(2h)}_{IEul}) \right)^{1/2} + \left( 27 L^4 \mathbb{E} \left| X^{c+}_{k} -  X^{c-}_{k} \right|^4 \right)^{1/2} \right) \\
&\leq  4 h^3 \Big( \left( \frac{1 + 27L^4}{2} 27 L^4 \right)^{1/2} \mathbb{E} \left| X^{c+}_{k} - X^{c-}_{k} \right|^4 \\
&+ \left( \frac{1 + 27L^4}{2}\mathbb{E} \left| X^{c+}_{k} - X^{c-}_{k} \right|^4 \right)^{1/2}\left( 54 \sigma^{(4)} (1+ C^{(4),(2h)}_{IEul}) \right)^{1/2} \\
&+ 27 \sqrt{2} \sigma^{(4)} (1+ C^{(4),(2h)}_{IEul}) + \left( 27 \sigma^{(4)} (1+ C^{(4),(2h)}_{IEul}) \right)^{1/2} \left( 27 L^4 \mathbb{E} \left| X^{c+}_{k} -  X^{c-}_{k} \right|^4 \right)^{1/2} \Big) \\
&\leq 4h^3 \left( \left( \frac{1 + 27L^4}{2} 27 L^4 \right)^{1/2} + \frac{1 + 81 L^4 }{4} \right) \mathbb{E} \left| X^{c+}_{k} - X^{c-}_{k} \right|^4 \\
&+ 4h^3 \left( 27 \sqrt{2} + \frac{81}{2} \right) \sigma^{(4)} (1+ C^{(4),(2h)}_{IEul}) \,.
\end{split}
\end{equation*}
In order to deal with $B_3$, we again use the decomposition (\ref{e:antithetic:decomposition2}). Then, conditioning on $X^{c+}_{k}$ and $X^{c-}_{k}$, and using Assumption \ref{as ina}, we get
\begin{equation*}
\begin{split}
B_3 &\leq 6 h^2 \Big( 3 \mathbb{E} \left( \left| X^{c+}_{k} - X^{c-}_{k} \right|^2 \sigma^2 (1+ |X^{c+}_{k}|^2) \right) + 3L^2  \mathbb{E}\left| X^{c+}_{k} - X^{c-}_{k} \right|^4 \\
&+ 3 \mathbb{E} \left( \left| X^{c+}_{k} - X^{c-}_{k} \right|^2 \sigma^2 (1+ |X^{c-}_{k}|^2) \right) \Big) \,.
\end{split}
\end{equation*}
Now, using the Cauchy-Schwarz inequality and $ab \leq \frac{1}{2}a^2 + \frac{1}{2}b^2$, we have
\begin{equation*}
\begin{split}
\mathbb{E} &\left( \left| X^{c+}_{k} - X^{c-}_{k} \right|^2 \sigma^2 (1+ |X^{c+}_{k}|^2)h^{2} \right) \\
&\leq \left( \mathbb{E} \left| X^{c+}_{k} - X^{c-}_{k} \right|^4 h \varepsilon \right)^{1/2} \left( \sigma^4 \mathbb{E} (1+|X^{c+}_{k}|^2)^2 h^{3} \frac{1}{\varepsilon} \right)^{1/2} \\
&\leq \frac{1}{2} \mathbb{E} \left| X^{c+}_{k} - X^{c-}_{k} \right|^4 h \varepsilon + \frac{1}{2 \varepsilon} h^{3} \mathbb{E} \left(1 + |X^{c+}_{k}|^2 \right)^2 \sigma^4 \\
&\leq \frac{1}{2} \mathbb{E} \left| X^{c+}_{k} - X^{c-}_{k} \right|^4 h \varepsilon +  \frac{1}{2 \varepsilon} h^{3} \left( 1 + 2 C^{(2h)}_{IEul} + C^{(4),(2h)}_{IEul} \right) \sigma^4
\end{split}
\end{equation*}
for some $\varepsilon > 0$ to be specified later. Hence
\begin{equation*}
B_3 \leq 18 \mathbb{E} \left| X^{c+}_{k} - X^{c-}_{k} \right|^4 h \varepsilon + 18 \frac{1}{\varepsilon} h^{3} \left( 1 + 2 C^{(2h)}_{IEul} + C^{(4),(2h)}_{IEul} \right) \sigma^4 + 18 h^2 L^2  \mathbb{E}\left| X^{c+}_{k} - X^{c-}_{k} \right|^4 \,.
\end{equation*}
Note that here $\sigma^4 = (\sigma^2)^2$, where $\sigma^2$ is given in Assumption \ref{as ina}, whereas $\sigma^{(4)}$ appearing in the bounds on $B_4$ and $B_5$ is possibly a different quantity, given in Assumption \ref{as fourth moment}. Combining all our estimates together, we obtain
\begin{equation*}
\begin{split}
&\mathbb{E} \left| X^{c+}_{k+1} - X^{c-}_{k+1} \right|^4 \leq C_1 h^3 \\
&+ \left( 1 - 4hK + 18h \varepsilon + 18 h^2 L^2 + 27h^4 L^4 + 4h^3 \left( \left( \frac{1 + 27L^4}{2} 27 L^4 \right)^{1/2} + \frac{1 + 81 L^4 }{4} \right) \right)\\
&\times \mathbb{E} \left| X^{c+}_{k} - X^{c-}_{k} \right|^4 \,,
\end{split}
\end{equation*}
where 
\begin{equation}\label{defC1AMLMCforSubsampling}
C_1 := 18 \frac{1}{\varepsilon} \left( 1 + 2 C^{(2h)}_{IEul} + C^{(4),(2h)}_{IEul} \right) \sigma^4 + 4 \left( 27 \sqrt{2} + 54h_0 \right) \sigma^{(4)} (1+ C^{(4),(2h)}_{IEul}) \,.
\end{equation}
Hence, choosing $h$, $c_1$, $\varepsilon$ such that
\begin{equation}\label{defc1AMLMCforSubsampling}
- 4hK + 18h \varepsilon + 18 h^2 L^2 + 27h^4 L^4 + 4h^3 \left( \left( \frac{1 + 27L^4}{2} 27 L^4 \right)^{1/2} + \frac{1 + 81 L^4 }{4} \right) \leq - c_1 h \,,
\end{equation}
we obtain, for any $k \geq 1$,
$\mathbb{E} \left| X^{c+}_{k+1} - X^{c-}_{k+1} \right|^4 \leq (1-c_1h)^{k} 	\mathbb{E} \left| X^{c+}_{0} - X^{c-}_{0} \right|^4 + \sum_{j=0}^{k} C_1 (1-c_1h)^j h^3$.
Taking $X^{c+}_{0} = X^{c-}_{0}$ and bounding the sum above by an infinite sum gives
$\mathbb{E} \left| X^{c+}_{k} - X^{c-}_{k} \right|^4 \leq (C_1/c_1) h^2$ 
for all $k \geq 1$ and finishes the proof.
\end{proof}

\subsection{Proof of Lemma \ref{thm:AntitheticSubsampling}}

	\begin{proof}
		Using $b(X^f_k,U^f_k) = \frac{1}{2}b(X^f_k,U^{f,1}_k) + \frac{1}{2}b(X^f_k,U^{f,2}_k)$, we have
		\begin{equation}\label{e:antithetic2}
		\begin{split}
		\mathbb{E}&\left| X^f_{k+1} - \bar{X}^c_{k+1} \right|^2 = \mathbb{E}\left| X^f_{k} - \bar{X}^c_{k} \right|^2 \\
		&+ 2h \mathbb{E} \left\langle X^f_{k} - \bar{X}^c_{k} , \frac{1}{2}b(X^f_{k}, U^{f,1}_{k}) + \frac{1}{2}b(X^f_{k}, U^{f,2}_{k}) - \frac{1}{2}b(X^{c-}_{k}, U^{f,1}_{k}) - \frac{1}{2}b(X^{c+}_{k}, U^{f,2}_{k}) \right\rangle \\
		&+ h^2 \mathbb{E} \left|\frac{1}{2}b(X^f_{k}, U^{f,1}_{k}) + \frac{1}{2}b(X^f_{k}, U^{f,2}_{k}) - \frac{1}{2}b(X^{c-}_{k}, U^{f,1}_{k}) - \frac{1}{2}b(X^{c+}_{k}, U^{f,2}_{k})\right|^2 =: J_1 + J_2 + J_3 \,.
		\end{split}
		\end{equation}
		We begin by bounding $J_2$. We have
		\begin{equation*}
		\begin{split}
		J_2 &= h \mathbb{E} \left\langle X^f_{k} - \bar{X}^c_{k} , b(X^f_{k}, U^{f,1}_{k}) - b(\bar{X}^c_{k}, U^{f,1}_{k}) \right\rangle + h \mathbb{E} \left\langle X^f_{k} - \bar{X}^c_{k} , b(\bar{X}^c_{k}, U^{f,1}_{k}) - b(X^{c-}_{k}, U^{f,1}_{k}) \right\rangle \\
		&+ h \mathbb{E} \left\langle X^f_{k} - \bar{X}^c_{k} , b(X^f_{k}, U^{f,2}_{k}) - b(\bar{X}^c_{k}, U^{f,2}_{k}) \right\rangle + h \mathbb{E} \left\langle X^f_{k} - \bar{X}^c_{k} , b(\bar{X}^c_{k}, U^{f,2}_{k}) - b(X^{c+}_{k}, U^{f,2}_{k}) \right\rangle \\
		&=: I_1 + I_2 + I_3 + I_4 \,.
		\end{split}
		\end{equation*}
		By conditioning on $X^f_{k}$, $X^{c+}_{k}$ and $X^{c-}_{k}$ and using Assumption \ref{as diss} (specifically condition (\ref{driftDissipativity})) we get
$I_1 = I_3 = h \mathbb{E} \left\langle X^f_{k} - \bar{X}^c_{k} , a(X^f_{k}) - a(\bar{X}^{c}_{k}) \right\rangle \leq - hK \mathbb{E} \left| X^f_{k} - \bar{X}^c_{k} \right|^2$,
		while for the other terms we have
		$I_2 = h \mathbb{E} \left\langle X^f_{k} - \bar{X}^c_{k} , a(\bar{X}^{c}_{k}) - a(X^{c-}_{k}) \right\rangle$ and
		$I_4 = h \mathbb{E} \left\langle X^f_{k} - \bar{X}^c_{k} , a(\bar{X}^{c}_{k}) - a(X^{c+}_{k}) \right\rangle$.
		We now use the Taylor formula for $a$ and (\ref{antisymmetry}) to write
		\begin{equation*}
		\begin{split}
		I_2 + I_4 &= h \mathbb{E} \Big\langle X^f_{k} - \bar{X}^c_{k} , - \sum_{|\alpha| = 2} \int_0^1 (1-t) D^{\alpha} a\left(\bar{X}^c_{k} + t\left(\bar{X}^c_{k} - X^{c-}_{k}\right)\right) dt \left( \bar{X}^c_{k} - X^{c-}_{k} \right)^{\alpha} \\
		&- \sum_{|\alpha| = 2} \int_0^1 (1-t) D^{\alpha} a\left(\bar{X}^c_{k} + t\left(\bar{X}^c_{k} - X^{c+}_{k}\right)\right) dt \left( \bar{X}^c_{k} - X^{c+}_{k} \right)^{\alpha} \Big\rangle \\
		&\leq h C_{a^{(2)}} \mathbb{E} |X^f_{k} - \bar{X}^c_{k}| \cdot |\bar{X}^c_{k} - X^{c-}_{k}|^2 + h C_{a^{(2)}} \mathbb{E} |X^f_{k} - \bar{X}^c_{k}| \cdot |\bar{X}^c_{k} - X^{c+}_{k}|^2 \\
		&= \frac{1}{2} C_{a^{(2)}} \mathbb{E}\left[  |X^f_{k} - \bar{X}^c_{k}|h^{1/2} \cdot |X^{c+}_{k} - X^{c-}_{k}|^2 h^{1/2} \right] \\
		&\leq \frac{1}{2} C_{a^{(2)}} \left( \mathbb{E} |X^f_{k} - \bar{X}^c_{k}|^2 \varepsilon_1 h \right)^{1/2} \cdot \left(  \mathbb{E} |X^{c+}_{k} - X^{c-}_{k}|^4 \frac{1}{\varepsilon_1} h \right)^{1/2} \\
		&\leq \frac{1}{4} C_{a^{(2)}} \mathbb{E} |X^f_{k} - \bar{X}^c_{k}|^2 \varepsilon_1 h + \frac{1}{4} C_{a^{(2)}} \mathbb{E} |X^{c+}_{k} - X^{c-}_{k}|^4 \frac{1}{\varepsilon_1} h \,,
		\end{split}
		\end{equation*}
		for some $\varepsilon_1 > 0$ whose exact value will be specified later, where we used the Cauchy-Schwarz inequality and the elementary inequality $ab \leq \frac{1}{2}(a^2 + b^2)$.
		
		We now come back to (\ref{e:antithetic2}) and deal with $J_3$. We have
		\begin{equation*}
		\begin{split}
		&J_3 = \frac{1}{4} h^2 \mathbb{E} \Big|  b(X^f_{k}, U^{f,1}_{k}) - b(\bar{X}^c_{k}, U^{f,1}_{k}) + b(\bar{X}^c_{k}, U^{f,1}_{k}) - b(X^{c-}_{k}, U^{f,1}_{k}) \\ 
		&+ b(X^f_{k}, U^{f,2}_{k}) - b(\bar{X}^c_{k}, U^{f,2}_{k}) + b(\bar{X}^c_{k}, U^{f,2}_{k}) - b(X^{c+}_{k}, U^{f,2}_{k}) \Big|^2 \\
		&\leq \frac{3}{4} h^2 \bar{L}^2 \mathbb{E} \left| X^f_{k} - \bar{X}^c_{k} \right|^2 + \frac{3}{4} h^2 \bar{L}^2 \mathbb{E} \left| X^f_{k} - \bar{X}^c_{k} \right|^2 \\
		&+ \frac{3}{4} h^2 \mathbb{E} \left| b(\bar{X}^c_{k}, U^{f,1}_{k}) - b(X^{c-}_{k}, U^{f,1}_{k}) + b(\bar{X}^c_{k}, U^{f,2}_{k}) - b(X^{c+}_{k}, U^{f,2}_{k}) \right|^2 =: J_{31} + J_{32} + J_{33} \,,
		\end{split}
		\end{equation*}
		where we used the Lipschitz condition (\ref{ina driftLipschitz}). Note that we have
		$J_{31} + J_{32} = \frac{3}{2} h^2 \bar{L}^2 \mathbb{E} \left| X^f_{k} - \bar{X}^c_{k} \right|^2$.
		On the other hand, in order to deal with $J_{33}$, we use the Taylor theorem to write
		\begin{equation*}
		\begin{split}
		&b(\bar{X}^c_{k}, U^{f,1}_{k}) - b(X^{c-}_{k}, U^{f,1}_{k}) = - \Big[ \sum_{|\alpha| = 1} D^{\alpha} b(\bar{X}^c_{k}, U^{f,1}_{k}) \left( \bar{X}^c_{k} - X^{c-}_{k} \right)^{\alpha} \\
		&+ \sum_{|\alpha| = 2} \int_0^1 (1-t) D^{\alpha} b\left(\bar{X}^c_{k} + t\left(\bar{X}^c_{k} - X^{c-}_{k}\right), U^{f,1}_{k} \right) dt \left( \bar{X}^c_{k} - X^{c-}_{k} \right)^{\alpha} \Big] \,.
		\end{split}
		\end{equation*}
		Hence we have
		\begin{equation*}
		\begin{split}
		&b(\bar{X}^c_{k}, U^{f,1}_{k}) - b(X^{c-}_{k}, U^{f,1}_{k}) + b(\bar{X}^c_{k}, U^{f,2}_{k}) - b(X^{c+}_{k}, U^{f,2}_{k}) \\
		&= \frac{1}{2} \sum_{|\alpha| = 1} \left( D^{\alpha} b(\bar{X}^c_{k}, U^{f,1}_{k}) - D^{\alpha}a(\bar{X}^c_{k}) + D^{\alpha}a(\bar{X}^c_{k}) - D^{\alpha}b(\bar{X}^c_{k}, U^{f,2}_{k}) \right) \left( X^{c+}_{k} - X^{c-}_{k} \right)^{\alpha} \\
		&+ \frac{1}{4} \sum_{|\alpha| = 2} \int_0^1 (1-t) D^{\alpha} b\left(\bar{X}^c_{k} + t\left(\bar{X}^c_{k} - X^{c-}_{k}\right), U^{f,1}_{k} \right) dt \left( X^{c+}_{k} - X^{c-}_{k} \right)^{\alpha} \\
		&- \frac{1}{4} \sum_{|\alpha| = 2} \int_0^1 (1-t) D^{\alpha} b\left(\bar{X}^c_{k} + t\left(\bar{X}^c_{k} - X^{c+}_{k}\right), U^{f,2}_{k} \right) dt \left( X^{c+}_{k} - X^{c-}_{k} \right)^{\alpha} \,.
		\end{split}
		\end{equation*}
		Recall that we assume (in Assumption \ref{as smoothnessEstimator}) that
		$\mathbb{E} \left| \nabla b(x, U) - \nabla a(x) \right|^4 \leq \sigma^{(4)}(1 + |x|^4)$
		and that
		$|D^{\alpha} b(x,U)| \leq C_{b^{(2)}}$
		for all multiindices $\alpha$ with $|\alpha| = 2$. Hence we have
		\begin{equation*}
		\begin{split}
		J_{33} &\leq \frac{3}{4} h^2 \mathbb{E} \left[ \left| \nabla b(\bar{X}^c_{k}, U^{f,1}_{k}) - \nabla a (\bar{X}^c_{k}) \right|^2 \left| X^{c+}_{k} - X^{c-}_{k} \right|^2 \right] \\
		&+ \frac{3}{4} h^2 \mathbb{E} \left[ \left|  \nabla a (\bar{X}^c_{k}) - \nabla b(\bar{X}^c_{k}, U^{f,2}_{k}) \right|^2 \left| X^{c+}_{k} - X^{c-}_{k} \right|^2 \right] + \frac{3}{8} h^2 C_{b^{(2)}}^2 \mathbb{E} \left| X^{c+}_{k} - X^{c-}_{k} \right|^4 \\
		&\leq \frac{3}{4} h^3 \sigma^{(4)} (1 + \mathbb{E}|\bar{X}^c_{k}|^4) + \frac{3}{4} h \mathbb{E} \left| X^{c+}_{k} - X^{c-}_{k} \right|^4 + \frac{3}{8} h^2 C_{b^{(2)}}^2 \mathbb{E} \left| X^{c+}_{k} - X^{c-}_{k} \right|^4 \,,
		\end{split}
		\end{equation*}
		where in the second inequality we used Young's inequality. Combining all our estimates together, we see that if we choose $h$, $c_2$ and $\varepsilon_1 > 0$ such that
		\begin{equation}\label{defc2AMLMCforSubsampling}
		- h K + \frac{1}{4} d C_{a^{(2)}} \varepsilon_1 h + \frac{3}{2} h^2 \bar{L}^2 \leq - c_2 h \,,
		\end{equation}
		then we obtain
		$\mathbb{E}\left| X^f_{k+1} - \bar{X}^c_{k+1} \right|^2 \leq (1-c_2h) \mathbb{E}\left| X^f_{k} - \bar{X}^c_{k} \right|^2 + C_2 h^3$,
		where
		\begin{equation}\label{defC2AMLMCforSubsampling}
		C_2 := \frac{3}{4} \sigma^{(4)} (1 + C^{(4)}_{IEul}) + \frac{C_1}{c_1}\left( \frac{1}{4} d C_{a^{(2)}}  \frac{1}{\varepsilon_1} + \frac{3}{4} + \frac{3}{8} h_0 C_{b^{(2)}}^2 \right) \,.
		\end{equation}
		We can now finish the proof exactly as we did in Lemma \ref{thm:AntitheticSubsamplingAux}.
	\end{proof}

\section{Appendix: Proofs for MASGA}\label{sectionMIASGA}

	\begin{proof}[Proof of Lemma \ref{thm:AntitheticMasterChain}]
		The argument is very similar to the proof of Lemma \ref{thm:AntitheticSubsamplingAux} and in fact even simpler as here we have only one inaccurate drift. For completeness, we give here an outline of the proof anyway. We have
		\begin{equation*}
		\begin{split}
		\mathbb{E} \left| X^{f,f}_{k+1} - X_{k+1} \right|^4 &= \mathbb{E} \left| X^{f,f}_{k} - X_{k} + hb(X^{f,f}_{k}, U^f_{k}) - ha(X_{k}) \right|^4 \leq \mathbb{E} \left| X^{f,f}_{k} - X_{k} \right|^4 \\
		&+ 4 \mathbb{E} \left| X^{f,f}_{k} - X_{k} \right|^2 \left\langle X^{f,f}_{k} - X_{k} , hb(X^{f,f}_{k}, U^f_{k}) - ha(X_{k}) \right\rangle \\
		&+ 6 \mathbb{E} \left| X^{f,f}_{k} - X_{k} \right|^2 \left| hb(X^{f,f}_{k}, U^f_{k}) - ha(X_{k}) \right|^2 \\
		&+ 4 \mathbb{E} \left\langle X^{f,f}_{k} - X_{k} , hb(X^{f,f}_{k}, U^f_{k}) - ha(X_{k}) \right\rangle \left| hb(X^{f,f}_{k}, U^f_{k}) - ha(X_{k}) \right|^2 \\
		&+ \mathbb{E} \left| hb(X^{f,f}_{k}, U^f_{k}) - ha(X_{k}) \right|^4 =: B_1 + B_2 + B_3 + B_4 + B_5 \,.
		\end{split}
		\end{equation*}
		By conditioning and Assumption \ref{as diss}, we have $B_2 \leq - 4hK \mathbb{E} \left| X^{f,f}_{k} - X_{k} \right|^4$. Furthermore,
		\begin{equation*}
		\begin{split}
		B_5 &\leq h^4 \mathbb{E} \left| b(X^{f,f}_{k}, U^f_{k}) - a(X^{f,f}_{k}) + a(X^{f,f}_{k}) - a(X_{k}) \right|^4 \\
		&\leq 8h^4 \mathbb{E} \left| b(X^{f,f}_{k}, U^f_{k}) - a(X^{f,f}_{k}) \right|^4 + 8h^4 \mathbb{E} \left| a(X^{f,f}_{k}) - a(X^{f,f}_{k}) \right|^4 \\
		&\leq 8h^4 \sigma^{(4)} (1 + C^{(4)}_{IEul}) + 8h^4 L^4 \mathbb{E} \left| X^{f,f}_{k} - X_{k} \right|^4 \,,
		\end{split}
		\end{equation*}
		where we used Assumptions \ref{as fourth moment}, \ref{as diss} and Lemma \ref{lem:fourthmoment}. It is now clear that the terms $B_3$ and $B_4$ can be dealt with exactly as the corresponding terms in the proof of Lemma \ref{thm:AntitheticSubsamplingAux} and we obtain essentially the same estimates with sligthly different constants, which are, however, of the same order in $s$ and $h$.
	\end{proof}

\begin{proof}[Proof of Lemma \ref{thm:lemmaXi1}] We denote
	\begin{equation*}
	\begin{split}
	\Xi^A_{k} &:= h \left( b(X^{f,f}_{k}, U^f_{k}) - \frac{1}{2}\left( b(X^{c-,f}_{k}, U^{f,1}_{k}) + b(X^{c+,f}_{k}, U^{f,2}_{k}) \right) \right) \\
	\Xi^B_{k} &:= h \left( b(X^{f,c-}_{k}, U^f_{k}) - \frac{1}{2}\left( b(X^{c-,c-}_{k}, U^{f,1}_{k}) + b(X^{c+,c-}_{k}, U^{f,2}_{k}) \right) \right) \\
	\Xi^C_{k} &:= h \left( b(X^{f,c+}_{k}, U^f_{k+1}) - \frac{1}{2}\left( b(X^{c-,c+}_{k}, U^{f,1}_{k+1}) + b(X^{c+,c+}_{k}, U^{f,2}_{k+1}) \right) \right) 
	\end{split}
	\end{equation*}
	and we have $\Xi_{k} = \Xi^A_{k} - \Xi^B_{k} - \Xi^C_{k}$ . Then, using $b(x,U) = \frac{1}{2}b(x,U^1) + \frac{1}{2}b(x,U^2)$, we have
	\begin{equation*}
	\begin{split}
	\Xi^A_{k} &= \frac{1}{2}h b(X^{f,f}_{k}, U^{f,1}_{k}) - \frac{1}{2}h b(\bar{X}^{c,f}_{k}, U^{f,1}_{k}) + \frac{1}{2}h b(\bar{X}^{c,f}_{k}, U^{f,1}_{k}) - \frac{1}{2}h b(X^{c-,f}_{k}, U^{f,1}_{k}) \\
	&+ \frac{1}{2}h b(X^{f,f}_{k}, U^{f,2}_{k}) - \frac{1}{2}h b(\bar{X}^{c,f}_{k}, U^{f,2}_{k}) + \frac{1}{2}h b(\bar{X}^{c,f}_{k}, U^{f,2}_{k}) -  \frac{1}{2}h b(X^{c+,f}_{k}, U^{f,2}_{k})
	\end{split}
	\end{equation*}
	and hence
	\begin{equation*}
	\begin{split}
	\mathbb{E}|\Xi^A_{k}|^2 &\leq \frac{3}{4}h \bar{L} \mathbb{E} \left| X^{f,f}_{k} - \bar{X}^{c,f}_{k} \right|^2 + \frac{3}{4}h \bar{L} \mathbb{E} \left| X^{f,f}_{k} - \bar{X}^{c,f}_{k} \right|^2 \\
	&+\frac{3}{4}h \mathbb{E} \left| b(\bar{X}^{c,f}_{k}, U^{f,1}_{k}) - b(X^{c-,f}_{k}, U^{f,1}_{k}) +  b(\bar{X}^{c,f}_{k}, U^{f,2}_{k}) -  b(X^{c+,f}_{k}, U^{f,2}_{k}) \right|^2 \,.
	\end{split}
	\end{equation*}
	Note that the first two terms on the right hand side above are identical and have the correct order in $s$ and $h$ due to Lemma \ref{thm:AntitheticSubsampling}. Furthermore, the last term can be dealt with by applying Taylor's formula twice in $\bar{X}^{c,f}_{k}$ and using the argument from the proof of Lemma \ref{thm:AntitheticSubsampling} for the term $J_{33}$ therein. Bounds for $\mathbb{E}|\Xi^B_{k}|^2$ and $\mathbb{E}|\Xi^C_{k}|^2$ can be obtained in exactly the same way. 
\end{proof}

\begin{proof}[Proof of Lemma \ref{thm:lemmaXi2}]
We need to introduce an auxiliary chain
\begin{equation*}
X^c_{k+2} = X^c_{k} + 2h a(X^c_{k}) + \beta \sqrt{2h} \hat{Z}_{k+2} \,.
\end{equation*}
Let us begin with bounding $\mathbb{E} \langle \Psi_{k} , \Xi^1_{k} \rangle$. Recall that $\bar{X}^{f,c}_{k} = \frac{1}{2} \left( X^{f,c-}_{k} + X^{f,c+}_{k} \right)$. We denote 
\begin{equation*}
\begin{split}
\Xi^{1,1}_{k} &:= hb(X^{f,c-}_{k}, U^f_{k}) - hb(\bar{X}^{f,c}_{k}, U^f_{k}) + hb(X^{f,c+}_{k}, U^f_{k}) - hb(\bar{X}^{f,c}_{k}, U^f_{k}) \\
\Xi^{1,2}_{k} &:= hb(X^{f,f}_{k}, U^f_{k}) - hb(X_{k}, U^f_{k}) \,, \, \,
\Xi^{1,3}_{k} := hb(\bar{X}^{f,c}_{k}, U^f_{k}) - hb(X^c_{k}, U^f_{k}) \\
\Xi^{1,4}_{k} &:= hb(\bar{X}^{f,c}_{k}, U^f_{k+1}) - hb(X^c_{k}, U^f_{k+1})
\end{split}
\end{equation*}
and we see that
$\Xi^1_{k} = - \Xi^{1,1}_{k} + \Xi^{1,2}_{k} - \Xi^{1,3}_{k} - \Xi^{1,4}_{k} + hb(X_{k}, U^f_{k}) - hb(X^c_{k}, U^f_{k}) - hb(X^c_{k}, U^f_{k+1})$.
By analogy, we define
\begin{equation*}
\begin{split}
\Xi^{2,1}_{k} &:= hb(X^{c-,c-}_{k}, U^{f,1}_{k}) - hb(\bar{X}^{c-,c}_{k}, U^{f,1}_{k}) + hb(X^{c-,c+}_{k}, U^{f,1}_{k}) - hb(\bar{X}^{c-,c}_{k}, U^{f,1}_{k}) \\
\Xi^{2,2}_{k} &:= hb(X^{c-,f}_{k}, U^{f,1}_{k}) - hb(X_{k}, U^{f,1}_{k}) \,, \, \,
\Xi^{2,3}_{k} := hb(\bar{X}^{c-,c}_{k}, U^{f,1}_{k}) - hb(X^c_{k}, U^{f,1}_{k}) \\
\Xi^{2,4}_{k} &:= hb(\bar{X}^{c-,c}_{k}, U^{f,1}_{k+1}) - hb(X^c_{k}, U^{f,1}_{k+1})
\end{split}
\end{equation*}
\begin{equation*}
\begin{split}
\Xi^{3,1}_{k} &:= hb(X^{c+,c-}_{k}, U^{f,2}_{k}) - hb(\bar{X}^{c+,c}_{k}, U^{f,2}_{k}) + hb(X^{c+,c+}_{k}, U^{f,2}_{k}) - hb(\bar{X}^{c+,c}_{k}, U^{f,2}_{k}) \\
\Xi^{3,2}_{k} &:= hb(X^{c+,f}_{k}, U^{f,2}_{k}) - hb(X_{k}, U^{f,2}_{k}) \,, \, \,
\Xi^{3,3}_{k} := hb(\bar{X}^{c+,c}_{k}, U^{f,2}_{k}) - hb(X^c_{k}, U^{f,2}_{k}) \\
\Xi^{3,4}_{k} &:= hb(\bar{X}^{c+,c}_{k}, U^{f,2}_{k+1}) - hb(X^c_{k}, U^{f,2}_{k+1})
\end{split}
\end{equation*}
and hence, since $\Xi_{k} = \Xi^1_{k} - \frac{1}{2}\left( \Xi^2_{k} + \Xi^3_{k} \right)$, we see that
\begin{equation*}
\begin{split}
\Xi_{k} &= - \left( \Xi^{1,1}_{k} - \frac{1}{2} \left(\Xi^{2,1}_{k} + \Xi^{3,1}_{k} \right) \right) + \left( \Xi^{1,2}_{k} - \frac{1}{2} \left(\Xi^{2,2}_{k} + \Xi^{3,2}_{k} \right) \right) \\
&- \left( \Xi^{1,3}_{k} - \frac{1}{2} \left(\Xi^{2,3}_{k} + \Xi^{3,3}_{k} \right) \right) - \left( \Xi^{1,4}_{k} - \frac{1}{2} \left(\Xi^{2,4}_{k} + \Xi^{3,4}_{k} \right) \right) + Rh \,,
\end{split}
\end{equation*}
\begin{equation*}
\begin{split}
R &= b(X_{k}, U^f_{k}) - \frac{1}{2}\left( b(X_{k}, U^{f,1}_{k}) + b(X_{k}, U^{f,2}_{k})\right) - b(X^c_{k}, U^f_{k}) + \frac{1}{2}\left(  b(X^c_{k}, U^{f,1}_{k}) +  b(X^c_{k}, U^{f,2}_{k})\right) \\
&- b(X^c_{k}, U^f_{k+1}) + \frac{1}{2}\left( b(X^c_{k}, U^{f,1}_{k+1}) + b(X^c_{k}, U^{f,2}_{k+1})\right) = 0 \,,
\end{split}
\end{equation*}
since $b(x,U) = \frac{1}{2}b(x,U^1) + \frac{1}{2}b(x,U^2)$. We now write
\begin{equation*}
\begin{split}
\Xi^{1,1}_{k} &= h \sum_{|\alpha|=1} D^{\alpha} b(\bar{X}^{f,c}_{k}, U^f_{k}) \left( X^{f,c-}_{k} - \bar{X}^{f,c}_{k} \right)^{\alpha} \\
&+ h \sum_{|\alpha|=2} \int_0^1 (1-t) D^{\alpha} b\left(\bar{X}^{f,c}_{k} + t \left( X^{f,c-}_{k} - \bar{X}^{f,c}_{k} \right), U^f_{k}\right) dt \left( X^{f,c-}_{k} - \bar{X}^{f,c}_{k} \right)^{\alpha} \\
&+ h \sum_{|\alpha|=1} D^{\alpha} b(\bar{X}^{f,c}_{k}, U^f_{k+1}) \left( X^{f,c+}_{k} - \bar{X}^{f,c}_{k} \right)^{\alpha} \\
&+ h \sum_{|\alpha|=2} \int_0^1 (1-t) D^{\alpha} b\left(\bar{X}^{f,c}_{k} + t \left( X^{f,c+}_{k} - \bar{X}^{f,c}_{k} \right), U^f_{k+1}\right) dt \left( X^{f,c+}_{k} - \bar{X}^{f,c}_{k} \right)^{\alpha} \,.
\end{split}
\end{equation*}
Note that $X^{f,c-}_{k} - \bar{X}^{f,c}_{k} = - \left( X^{f,c+}_{k} - \bar{X}^{f,c}_{k} \right) = \frac{1}{2} \left( X^{f,c+}_{k} - X^{f,c-}_{k} \right)$. Hence
\begin{equation*}
\begin{split}
\mathbb{E} \langle \Psi_{k}, \Xi^{1,1}_{k} \rangle &= h \mathbb{E} \langle \Psi_{k}, \sum_{|\alpha|=1} D^{\alpha} a(\bar{X}^{f,c}_{k}) \left( X^{f,c-}_{k} - \bar{X}^{f,c}_{k} \right)^{\alpha} \\
&+ \sum_{|\alpha|=2} \int_0^1 (1-t) D^{\alpha} a\left(\bar{X}^{f,c}_{k} + t \left( X^{f,c-}_{k} - \bar{X}^{f,c}_{k} \right) \right) dt \left( X^{f,c-}_{k} - \bar{X}^{f,c}_{k} \right)^{\alpha} \\
&+ \sum_{|\alpha|=1} D^{\alpha} a(\bar{X}^{f,c}_{k}) \left( X^{f,c+}_{k} - \bar{X}^{f,c}_{k} \right)^{\alpha} \\
&+ \sum_{|\alpha|=2} \int_0^1 (1-t) D^{\alpha} a\left(\bar{X}^{f,c}_{k} + t \left( X^{f,c+}_{k} - \bar{X}^{f,c}_{k} \right)\right) dt \left( X^{f,c+}_{k} - \bar{X}^{f,c}_{k} \right)^{\alpha} \rangle \\
&= h \mathbb{E} \langle \Psi_{k}, \sum_{|\alpha|=2} \int_0^1 (1-t) D^{\alpha} a\left(\bar{X}^{f,c}_{k} + t \left( X^{f,c-}_{k} - \bar{X}^{f,c}_{k} \right) \right) dt \left( X^{f,c-}_{k} - \bar{X}^{f,c}_{k} \right)^{\alpha} \\
&+ \sum_{|\alpha|=2} \int_0^1 (1-t) D^{\alpha} a\left(\bar{X}^{f,c}_{k} + t \left( X^{f,c+}_{k} - \bar{X}^{f,c}_{k} \right)\right) dt \left( X^{f,c+}_{k} - \bar{X}^{f,c}_{k} \right)^{\alpha} \rangle \\
&\leq 2h C_{a^{(2)}} \varepsilon_5 \mathbb{E} |\Psi_{k}|^2 + \frac{1}{8} h C_{a^{(2)}} \frac{1}{\varepsilon_5} \mathbb{E} \left| X^{f,c+}_{k} - X^{f,c-}_{k} \right|^4 \,,
\end{split}
\end{equation*}
where we used Young's inequality with some $\varepsilon_5 > 0$ to be specified later. From Lemma \ref{lem:term2} we know that $\mathbb{E} \left| X^{f,c+}_{k} - X^{f,c-}_{k} \right|^4$ has the correct order in $s$ and $h$. Similarly, we can show
\begin{equation*}
\begin{split}
\mathbb{E} \langle \Psi_{k}, \Xi^{2,1}_{k} \rangle &\leq 2h C_{a^{(2)}} \varepsilon_6 \mathbb{E} |\Psi_{k}|^2 + \frac{1}{8} h C_{a^{(2)}} \frac{1}{\varepsilon_6} \mathbb{E} \left| X^{c-,c+}_{k} - X^{c-,c-}_{k} \right|^4 \\
\mathbb{E} \langle \Psi_{k}, \Xi^{3,1}_{k} \rangle &\leq 2h C_{a^{(2)}} \varepsilon_7 \mathbb{E} |\Psi_{k}|^2 + \frac{1}{8} h C_{a^{(2)}} \frac{1}{\varepsilon_7} \mathbb{E} \left| X^{c+,c+}_{k} - X^{c+,c-}_{k} \right|^4
\end{split}
\end{equation*}
for some $\varepsilon_6$, $\varepsilon_7 > 0$ and we also conclude that the second terms on the right hand side above have the correct order in $s$ and $h$. Note that in order to deal with the terms $\Xi^{1,1}_{k}$, $\Xi^{2,1}_{k}$ and $\Xi^{3,1}_{k}$ we did not use the structure of our estimator and we just dealt with each of them separately. This will be different in the case of the expression $\left( \Xi^{1,2}_{k} - \frac{1}{2} \left(\Xi^{2,2}_{k} + \Xi^{3,2}_{k} \right) \right)$, where we will use its structure in order to produce an additional antithetic term $X^{f,f}_{k} - \frac{1}{2}\left( X^{c-,f}_{k} + X^{c+,f}_{k} \right)$ on the right hand side of $\mathbb{E} \langle \Psi_{k}, \Xi^{1,2}_{k} - \frac{1}{2} \left(\Xi^{2,2}_{k} + \Xi^{3,2}_{k} \right)  \rangle$ below. Indeed, we first write
\begin{equation*}
\begin{split}
\Xi^{1,2}_{k} &= h \sum_{|\alpha|=1} D^{\alpha} b(X_{k}, U^f_{k}) \left( X^{f,f}_{k} - X_{k} \right)^{\alpha} \\
&+ h \sum_{|\alpha|=2} \int_0^1 (1-t) D^{\alpha} b\left(X_{k} + t\left( X^{f,f}_{k} - X_{k} \right), U^f_{k} \right) dt \left( X^{f,f}_{k} - X_{k} \right)^{\alpha} \,.
\end{split}
\end{equation*}
Then, expanding $\Xi^{2,2}_{k}$ and $\Xi^{3,2}_{k}$ in an analogous way, we see that
\begin{equation*}
\begin{split}
\mathbb{E} &\langle \Psi_{k}, \Xi^{1,2}_{k} - \frac{1}{2} \left(\Xi^{2,2}_{k} + \Xi^{3,2}_{k} \right)  \rangle = h \mathbb{E} \langle \Psi_{k} , \sum_{|\alpha|=1} D^{\alpha} a(X_{k}) \left( X^{f,f}_{k} - \frac{1}{2}\left( X^{c-,f}_{k} + X^{c+,f}_{k} \right) \right)^{\alpha} \\&+ \sum_{|\alpha|=2} \int_0^1 (1-t) D^{\alpha} a\left(X_{k} + t\left( X^{f,f}_{k} - X_{k} \right) \right) dt \left( X^{f,f}_{k} - X_{k} \right)^{\alpha} \\
&- \frac{1}{2} \sum_{|\alpha|=2} \int_0^1 (1-t) D^{\alpha} a\left(X_{k} + t\left( X^{c-,f}_{k} - X_{k} \right) \right) dt \left( X^{c-,f}_{k} - X_{k} \right)^{\alpha} \\
&- \frac{1}{2} \sum_{|\alpha|=2} \int_0^1 (1-t) D^{\alpha} a\left(X_{k} + t\left( X^{c+,f}_{k} - X_{k} \right) \right) dt \left( X^{c+,f}_{k} - X_{k} \right)^{\alpha} \rangle \\
&=: h \mathbb{E} \langle \Psi_{k} , \sum_{|\alpha|=1} D^{\alpha} a(X_{k}) \left( X^{f,f}_{k} - \frac{1}{2}\left( X^{c-,f}_{k} + X^{c+,f}_{k} \right) \right)^{\alpha} \rangle + h \mathbb{E} \langle \Psi_{k} , \hat{\Xi}^2_{k} \rangle
\,.
\end{split}
\end{equation*}
Using Young's inequality with some $\varepsilon_8 > 0$, we can now bound 
\begin{equation*}
\mathbb{E} \langle \Psi_{k} , \hat{\Xi}^2_{k} \rangle \leq
3 \varepsilon_8 C_{a^{(2)}} \mathbb{E} |\Psi_{k}|^2 + \frac{1}{\varepsilon_8} C_{a^{(2)}} \left[ \mathbb{E} \left| X^{f,f}_{k} - X_{k} \right|^4 + \mathbb{E} \left| X^{c-,f}_{k} - X_{k} \right|^4 + \mathbb{E} \left| X^{c+,f}_{k} - X_{k} \right|^4 \right] \,,
\end{equation*}
whereas the remaining antithetic term will be used later.
Note that all the fourth moments above have the correct order in $s$ and $h$ due to Lemma \ref{thm:AntitheticMasterChain}.

Now we turn our attention to $\left( \Xi^{1,3}_{k} - \frac{1}{2} \left(\Xi^{2,3}_{k} + \Xi^{3,3}_{k} \right) \right)$. We start with $\Xi^{1,3}_{k}$ by writing
\begin{equation*}
\begin{split}
\Xi^{1,3}_{k} &= h \sum_{|\alpha| = 1} D^{\alpha} b(X^c_{k}, U^f_{k}) \left( \bar{X}^{f,c}_{k} - X^c_{k} \right)^{\alpha} \\
&+ h \sum_{|\alpha|=2} \int_0^1 (1-t) D^{\alpha} b\left(  X^c_{k} + t\left(\bar{X}^{f,c}_{k} - X^c_{k} \right), U^f_{k} \right) dt \left( \bar{X}^{f,c}_{k} - X^c_{k} \right)^{\alpha} =: \Xi^{1,3,1}_{k} + \Xi^{1,3,2}_{k} \,.
\end{split}
\end{equation*}
Note that
$\mathbb{E} \langle \Psi_{k} , \Xi^{1,3,2}_{k} \rangle \leq h \varepsilon_9 C_{a^{(2)}} \mathbb{E} |\Psi_{k}|^2 + h \varepsilon_9^{-1} C_{a^{(2)}} \mathbb{E} \left| \bar{X}^{f,c}_{k} - X^c_{k} \right|^4$ for some $\varepsilon_9 > 0$.
We have
\begin{lemma}\label{thm:AntitheticMean}
	Under the assumptions of Lemma \ref{thm:AntitheticMasterChain}, there is a constant $C > 0$ such that for all $k \geq 1$,
	\begin{equation*}
	\mathbb{E} \left| \bar{X}^{f,c}_{k} - X^c_{k} \right|^4 \leq C \frac{1}{s^2}h^2 \,.
	\end{equation*}
	\begin{proof}
		We notice that 
		\begin{equation*}
		%\begin{split}
		\mathbb{E} \left| \bar{X}^{f,c}_{k} - X^c_{k} \right|^4 = \mathbb{E} \left| \frac{1}{2}\left(X^{f,c-}_{k} - X^c_{k} + X^{f,c+}_{k} - X^c_{k} \right) \right|^4 \leq \frac{1}{2}\mathbb{E}\left| X^{f,c-}_{k} - X^c_{k} \right|^4 + \frac{1}{2}\mathbb{E}\left| X^{f,c+}_{k} - X^c_{k} \right|^4
		%\end{split}
		\end{equation*}
		and then use an analogue of Lemma \ref{thm:AntitheticMasterChain} for the coarse chain.
	\end{proof}
\end{lemma}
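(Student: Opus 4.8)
The plan is to follow exactly the elementary reduction indicated by the convexity of $x \mapsto |x|^4$, together with the observation that each of the two constituent chains $X^{f,c-}$ and $X^{f,c+}$ stands in the same relation to the accurate coarse chain $X^c$ as the fine chain $X^{f,f}$ stands to the accurate fine chain $X$ in Lemma \ref{thm:AntitheticMasterChain}. First I would apply Jensen's inequality in the form $\left| \frac{1}{2}(u+v) \right|^4 \leq \frac{1}{2}\left( |u|^4 + |v|^4 \right)$ with $u = X^{f,c-}_k - X^c_k$ and $v = X^{f,c+}_k - X^c_k$, which reduces the claim to bounding $\mathbb{E}|X^{f,c-}_k - X^c_k|^4$ and $\mathbb{E}|X^{f,c+}_k - X^c_k|^4$ separately by a quantity of order $\mathcal{O}(s^{-2}h^2)$.

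Second, I would establish the coarse-chain analogue of Lemma \ref{thm:AntitheticMasterChain}. The key point is that $(X^{f,c-}, X^c)$ and $(X^{f,c+}, X^c)$ are each a pair of coarse Euler schemes (time step $2h$, advancing every two fine steps) that are driven by the identical increment $\beta\sqrt{2h}\,\hat{Z}_{k+2}$, are initiated at a common point, and differ only in that the first chain uses the inaccurate drift $b(\cdot, U^f_k)$ (respectively $b(\cdot, U^f_{k+1})$) while the second uses the exact drift $a$. Since $b$ is an unbiased estimator of $a$ for both $U^f_k$ and $U^f_{k+1}$, the one-step difference analysis is verbatim the same as in the proof of Lemma \ref{thm:AntitheticSubsamplingAux} on which Lemma \ref{thm:AntitheticMasterChain} is modelled, the only change being the replacement of $h$ by $2h$ throughout.

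Concretely, writing $X^{f,c-}_{k+2} - X^c_{k+2} = (X^{f,c-}_k - X^c_k) + 2h\left( b(X^{f,c-}_k, U^f_k) - a(X^c_k) \right)$ and decomposing the drift difference into $b(X^{f,c-}_k, U^f_k) - a(X^{f,c-}_k)$ plus $a(X^{f,c-}_k) - a(X^c_k)$, I would expand the fourth power. Conditioning on the state and invoking the global contractivity of $a$ (Assumption \ref{as diss}(ii)) produces a term of order $-c\cdot 2h\cdot \mathbb{E}|X^{f,c-}_k - X^c_k|^4$, while the remaining cross-terms are controlled by Cauchy--Schwarz and Young's inequalities exactly as for the terms $B_3$, $B_4$, $B_5$ in the proof of Lemma \ref{thm:AntitheticSubsamplingAux}. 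The variance contributions are absorbed using the fourth centered moment bound (Assumption \ref{as fourth moment}), which supplies the crucial factor $\sigma^{(4)} = \mathcal{O}(s^{-2})$, while the growth terms are handled via the uniform coarse fourth-moment bound $C^{(4),(2h)}_{IEul}$ from Lemma \ref{lem:fourthmoment}. This yields a recursion $\mathbb{E}|X^{f,c-}_{k+2} - X^c_{k+2}|^4 \leq (1 - c\cdot 2h)\,\mathbb{E}|X^{f,c-}_k - X^c_k|^4 + C(2h)^3$ with $C$ of order $\mathcal{O}(s^{-2})$; iterating from the common initial value and summing the geometric series gives $\mathbb{E}|X^{f,c-}_k - X^c_k|^4 \leq (C/c)\,h^2$, and identically for the $+$ copy, which combined with the first step closes the argument.

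I do not expect a genuine obstacle here, since the argument is an almost mechanical transcription of the proofs of Lemmas \ref{thm:AntitheticSubsamplingAux} and \ref{thm:AntitheticMasterChain}. The only points requiring care are bookkeeping ones: confirming that the uniform coarse Euler fourth-moment bound $C^{(4),(2h)}_{IEul}$ indeed holds (which is Lemma \ref{lem:fourthmoment} applied with step $2h$), and checking that every constant entering the final estimate depends on the subsampling parameter only through $\sigma^{(4)}$ and $\sigma^2$, so that the resulting constant $C$ is genuinely of order $\mathcal{O}(s^{-2})$ and not merely $\mathcal{O}(1)$.
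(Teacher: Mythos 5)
Your proposal is correct and takes essentially the same route as the paper: the convexity bound $\left|\tfrac{1}{2}(u+v)\right|^4 \leq \tfrac{1}{2}\left(|u|^4+|v|^4\right)$ reduces the claim to the pairs $(X^{f,c-}_k, X^c_k)$ and $(X^{f,c+}_k, X^c_k)$, after which the paper invokes exactly the coarse-chain ($2h$-step) analogue of Lemma \ref{thm:AntitheticMasterChain} whose proof you spell out. The only difference is one of detail: the paper leaves that analogue implicit, whereas you carry out its one-step contraction argument explicitly, in a way consistent with how Lemma \ref{thm:AntitheticMasterChain} is itself proved.
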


On the other hand,
\begin{equation*}
\begin{split}
\mathbb{E} &\langle \Psi_{k}, \Xi^{1,3,1}_{k} \rangle = h \mathbb{E} \langle \Psi_{k} , \sum_{|\alpha| = 1} D^{\alpha} a(X^c_{k}) \left( \bar{X}^{f,c}_{k} - X^c_{k} \right)^{\alpha} \rangle = h \mathbb{E} \langle \Psi_{k} , \sum_{|\alpha| = 1} D^{\alpha} a(X_{k}) \left( \bar{X}^{f,c}_{k} - X_{k} \right)^{\alpha} \rangle \\
&+ h \mathbb{E} \langle \Psi_{k} , \sum_{|\alpha| = 1} \left( D^{\alpha} a(X^c_{k}) \left( \bar{X}^{f,c}_{k} - X^c_{k} \right)^{\alpha} - D^{\alpha} a(X_{k}) \left( \bar{X}^{f,c}_{k} - X_{k} \right)^{\alpha} \right) \rangle \\
&=: \mathbb{E} \langle \Psi_{k}, \Xi^{1,3,1,1}_{k} \rangle + \mathbb{E} \langle \Psi_{k}, \Xi^{1,3,1,2}_{k} \rangle \,.
\end{split}
\end{equation*}
Now observe that
\begin{equation}\label{eq auxMASGAproof0}
\begin{split}
\mathbb{E} \langle \Psi_{k}, \Xi^{1,3,1,2}_{k} \rangle &= h \mathbb{E} \langle \Psi_{k} , \sum_{|\alpha| = 1} \left( D^{\alpha} a(X^c_{k}) \left( \bar{X}^{f,c}_{k} - X^c_{k} \right)^{\alpha} - D^{\alpha} a(X_{k}) \left( \bar{X}^{f,c}_{k} - X^c_{k} \right)^{\alpha} \right) \rangle \\
&+ h \mathbb{E} \langle \Psi_{k} , \sum_{|\alpha| = 1} \left( D^{\alpha} a(X_{k}) \left( \bar{X}^{f,c}_{k} - X^c_{k} \right)^{\alpha} - D^{\alpha} a(X_{k}) \left( \bar{X}^{f,c}_{k} - X_{k} \right)^{\alpha} \right) \rangle \\
&= h \mathbb{E} \langle \Psi_{k} , \sum_{|\alpha| = 1} \left( D^{\alpha} a(X^c_{k}) - D^{\alpha} a(X_{k}) \right) \left( \bar{X}^{f,c}_{k} - X^c_{k} \right)^{\alpha}  \rangle \\
&+ h \mathbb{E} \langle \Psi_{k} , \sum_{|\alpha| = 1} D^{\alpha} a(X_{k}) \left( X_{k} - X^c_{k} \right)^{\alpha} \rangle \,.
\end{split}
\end{equation}
Recall that we are dealing now with the group $- \left( \Xi^{1,3}_{k} - \frac{1}{2} \left(\Xi^{2,3}_{k} + \Xi^{3,3}_{k} \right) \right)$. Similarly as above, for $\Xi^{2,3}_{k}$ and $\Xi^{3,3}_{k}$, we have the terms 
\begin{equation*}
\begin{split}
\Xi^{2,3,2}_{k} &= h \sum_{|\alpha|=2} \int_0^1 (1-t) D^{\alpha} b\left(  X^c_{k} + t\left(\bar{X}^{c-,c}_{k} - X^c_{k} \right), U^{f,1}_{k} \right) dt \left( \bar{X}^{c-,c}_{k} - X^c_{k} \right)^{\alpha} \\
\Xi^{3,3,2}_{k} &= h \sum_{|\alpha|=2} \int_0^1 (1-t) D^{\alpha} b\left(  X^c_{k} + t\left(\bar{X}^{c+,c}_{k} - X^c_{k} \right), U^{f,2}_{k} \right) dt \left( \bar{X}^{c+,c}_{k} - X^c_{k} \right)^{\alpha} 
\end{split}
\end{equation*}
for which
\begin{equation*}
\begin{split}
\mathbb{E} \langle \Psi_{k} , \Xi^{2,3,2}_{k} \rangle &\leq h \varepsilon_{10} C_{a^{(2)}} \mathbb{E} |\Psi_{k}|^2 + h \frac{1}{\varepsilon_{10}} C_{a^{(2)}} \mathbb{E} \left| \bar{X}^{c-,c}_{k} - X^c_{k} \right|^4 \\
\mathbb{E} \langle \Psi_{k} , \Xi^{3,3,2}_{k} \rangle &\leq h \varepsilon_{11} C_{a^{(2)}} \mathbb{E} |\Psi_{k}|^2 + h \frac{1}{\varepsilon_{11}} C_{a^{(2)}} \mathbb{E} \left| \bar{X}^{c+,c}_{k} - X^c_{k} \right|^4 
\end{split}
\end{equation*}
for some $\varepsilon_{10}$, $\varepsilon_{11} > 0$, and we can again apply Lemma \ref{thm:AntitheticMean} to conclude that the fourth moments above have the correct order in $s$ and $h$. On the other hand, repeating the analysis for $\Xi^{1,3,1}_{k}$ above, we see that
\begin{equation*}
\begin{split}
\mathbb{E} &\langle \Psi_{k}, \Xi^{2,3,1,1}_{k} \rangle = h \mathbb{E} \langle \Psi_{k} , \sum_{|\alpha| = 1} D^{\alpha} a(X_{k}) \left( \bar{X}^{c-,c}_{k} - X_{k} \right)^{\alpha} \rangle \\
\mathbb{E} &\langle \Psi_{k}, \Xi^{3,3,1,1}_{k} \rangle = h \mathbb{E} \langle \Psi_{k} , \sum_{|\alpha| = 1} D^{\alpha} a(X_{k}) \left( \bar{X}^{c+,c}_{k} - X_{k} \right)^{\alpha} \rangle \,,
\end{split}
\end{equation*}
whereas
\begin{equation}\label{eq:proofMASGAaux}
\begin{split}
\mathbb{E} \langle \Psi_{k}, \Xi^{2,3,1,2}_{k} \rangle &= h \mathbb{E} \langle \Psi_{k} , \sum_{|\alpha| = 1} \left( D^{\alpha} a(X^c_{k}) - D^{\alpha} a(X_{k}) \right) \left( \bar{X}^{c-,c}_{k} - X^c_{k} \right)^{\alpha}  \rangle \\
&+ h \mathbb{E} \langle \Psi_{k} , \sum_{|\alpha| = 1} D^{\alpha} a(X_{k}) \left( X_{k} - X^c_{k} \right)^{\alpha} \rangle \\
\mathbb{E} \langle \Psi_{k}, \Xi^{3,3,1,2}_{k} \rangle &= h \mathbb{E} \langle \Psi_{k} , \sum_{|\alpha| = 1} \left( D^{\alpha} a(X^c_{k}) - D^{\alpha} a(X_{k}) \right) \left( \bar{X}^{c+,c}_{k} - X^c_{k} \right)^{\alpha}  \rangle \\
&+ h \mathbb{E} \langle \Psi_{k} , \sum_{|\alpha| = 1} D^{\alpha} a(X_{k}) \left( X_{k} - X^c_{k} \right)^{\alpha} \rangle  \,.
\end{split}
\end{equation}
Recall that
\begin{equation*}
\begin{split}
\mathbb{E} \langle \Psi_{k}, - \left( \Xi^{1,3}_{k} - \frac{1}{2} \left(\Xi^{2,3}_{k} + \Xi^{3,3}_{k} \right) \right) \rangle &= \mathbb{E} \langle \Psi_{k}, - \left( \Xi^{1,3,1}_{k} - \frac{1}{2} \left(\Xi^{2,3,1}_{k} + \Xi^{3,3,1}_{k} \right) \right) \rangle \\
&+ \mathbb{E} \langle \Psi_{k}, - \left( \Xi^{1,3,2}_{k} - \frac{1}{2} \left(\Xi^{2,3,2}_{k} + \Xi^{3,3,2}_{k} \right) \right) \rangle
\end{split}
\end{equation*}
and, due to our discussion above, $\mathbb{E} \langle \Psi_{k}, - \left( \Xi^{1,3,2}_{k} - \frac{1}{2} \left(\Xi^{2,3,2}_{k} + \Xi^{3,3,2}_{k} \right) \right) \rangle$ is of the correct order in $s$ and $h$. Hence we focus on
\begin{equation*}
\begin{split}
\mathbb{E} &\langle \Psi_{k}, - \left( \Xi^{1,3,1}_{k} - \frac{1}{2} \left(\Xi^{2,3,1}_{k} + \Xi^{3,3,1}_{k} \right) \right) \rangle \\
&= - h \mathbb{E} \langle \Psi_{k} , \sum_{|\alpha| = 1} D^{\alpha} a(X_{k}) \left( \bar{X}^{f,c}_{k} - \frac{1}{2}\left( \bar{X}^{c-,c}_{k} + \bar{X}^{c+,c}_{k} \right) \right)^{\alpha} \rangle \\
&- h \mathbb{E} \langle \Psi_{k} , \sum_{|\alpha| = 1} \left( D^{\alpha} a(X^c_{k}) - D^{\alpha} a(X_{k}) \right) \left( \bar{X}^{f,c}_{k} - \frac{1}{2}\left( \bar{X}^{c-,c}_{k} + \bar{X}^{c+,c}_{k} \right) \right)^{\alpha}  \rangle \,,
\end{split}
\end{equation*}
where the first term on the right hand side above comes from the expression $\mathbb{E} \langle \Psi_{k}, \Xi^{1,3,1,1}_{k} \rangle - \frac{1}{2} \left( \mathbb{E} \langle \Psi_{k}, \Xi^{2,3,1,1}_{k} \rangle + \mathbb{E} \langle \Psi_{k}, \Xi^{3,3,1,1}_{k} \rangle \right)$ and the second term comes from 
\begin{equation}\label{eq auxMASGAproof}
\mathbb{E} \langle \Psi_{k}, \Xi^{1,3,1,2}_{k} \rangle - \frac{1}{2} \left( \mathbb{E} \langle \Psi_{k}, \Xi^{2,3,1,2}_{k} \rangle + \mathbb{E} \langle \Psi_{k}, \Xi^{3,3,1,2}_{k} \rangle \right) \,.
\end{equation}
Note that each term in (\ref{eq auxMASGAproof}) was a sum of two terms, however, all the second terms in (\ref{eq auxMASGAproof}) cancelled out, since they were all of the same form, cf.\ (\ref{eq auxMASGAproof0}) and (\ref{eq:proofMASGAaux}). Now we will combine the first term on the right hand side of $\mathbb{E} \langle \Psi_{k}, - \left( \Xi^{1,3,1}_{k} - \frac{1}{2} \left(\Xi^{2,3,1}_{k} + \Xi^{3,3,1}_{k} \right) \right) \rangle$ with a term from a previous group. To this end, recall that
\begin{equation*}
\begin{split}
\mathbb{E} &\langle \Psi_{k}, \Xi^{1,2}_{k} - \frac{1}{2} \left(\Xi^{2,2}_{k} + \Xi^{3,2}_{k} \right)  \rangle \\
&= h \mathbb{E} \langle \Psi_{k} , \sum_{|\alpha|=1} D^{\alpha} a(X_{k}) \left( X^{f,f}_{k} - \frac{1}{2}\left( X^{c-,f}_{k} + X^{c+,f}_{k} \right) \right)^{\alpha} \rangle + h \mathbb{E} \langle \Psi_{k} , \hat{\Xi}^2_{k} \rangle \,,
\end{split}
\end{equation*}
where $h \mathbb{E} \langle \Psi_{k} , \hat{\Xi}^2_{k} \rangle$ is of the correct order in $s$ and $h$, and notice that we have
\begin{equation*}
\begin{split}
h \mathbb{E} &\langle \Psi_{k} , \sum_{|\alpha|=1} D^{\alpha} a(X_{k}) \left( X^{f,f}_{k} - \frac{1}{2}\left( X^{c-,f}_{k} + X^{c+,f}_{k} \right) \right)^{\alpha} \rangle  \\
&- h \mathbb{E} \langle \Psi_{k} , \sum_{|\alpha| = 1} D^{\alpha} a(X_{k}) \left( \bar{X}^{f,c}_{k} - \frac{1}{2}\left( \bar{X}^{c-,c}_{k} + \bar{X}^{c+,c}_{k} \right) \right)^{\alpha} \rangle \\
&= h \mathbb{E} \langle \Psi_{k} , \sum_{|\alpha| = 1} D^{\alpha} a(X_{k}) \left( X^{f,f}_{k} - \frac{1}{2}\left( X^{c-,f}_{k} + X^{c+,f}_{k} \right) - \left( \bar{X}^{f,c}_{k} - \frac{1}{2}\left( \bar{X}^{c-,c}_{k} + \bar{X}^{c+,c}_{k} \right) \right) \right)^{\alpha} \rangle \\
&= h \mathbb{E} \langle \Psi_{k} , \sum_{|\alpha| = 1} D^{\alpha} a(X_{k}) \left( \Psi_{k} \right)^{\alpha} \rangle \leq -Kh \mathbb{E} |\Psi_{k}|^2 \,. 
\end{split}
\end{equation*}
In the inequality above we used the fact that Assumption \ref{as diss} implies that for all $x$, $y \in \mathbb{R}^d$ we have $\langle y, \sum_{|\alpha| = 1}D^{\alpha}a(x)y^{\alpha} \rangle \leq - K|y|^2$. On the other hand, the first terms in (\ref{eq auxMASGAproof}) give
\begin{equation*}
\begin{split}
h \mathbb{E} &\langle \Psi_{k} , \sum_{|\alpha| = 1} \left( D^{\alpha} a(X^c_{k}) - D^{\alpha} a(X_{k}) \right) \left( \bar{X}^{f,c}_{k} - \frac{1}{2}\left( \bar{X}^{c-,c}_{k} + \bar{X}^{c+,c}_{k} \right) \right)^{\alpha}  \rangle \\
&\leq 2 h \varepsilon_{12} C_{a^{(1)}} \mathbb{E} |\Psi_{k}|^2 +  2 h \frac{1}{\varepsilon_{12}} C_{a^{(1)}} \mathbb{E} \left| \bar{X}^{f,c}_{k} - \frac{1}{2}\left( \bar{X}^{c-,c}_{k} + \bar{X}^{c+,c}_{k} \right) \right|^2 
\end{split}
\end{equation*}
for some $\varepsilon_{12} > 0$ to be chosen later. Now we use
\begin{lemma}\label{thm:AntitheticMeanMean}
	Under the assumptions of Lemma \ref{thm:AntitheticSubsampling}, there is a $C > 0$ such that for all $k \geq 1$ we have
	$\mathbb{E} \left| \bar{X}^{f,c}_{k} - \frac{1}{2}\left( \bar{X}^{c-,c}_{k} + \bar{X}^{c+,c}_{k} \right) \right|^2 \leq Ch^2/s^2$.
	\begin{proof}
		Notice that
		\begin{equation*}
		\begin{split}
		\mathbb{E} &\left| \bar{X}^{f,c}_{k} - \frac{1}{2}\left( \bar{X}^{c-,c}_{k} + \bar{X}^{c+,c}_{k} \right) \right|^2 \\
		&\leq \frac{1}{2} \mathbb{E} \left| X^{f,c-}_{k} - \frac{1}{2}\left( X^{c-,c-}_{k} + X^{c+,c-}_{k} \right) \right|^2 + \frac{1}{2} \mathbb{E} \left| X^{f,c+}_{k} - \frac{1}{2}\left( X^{c-,c+}_{k} + X^{c+,c+}_{k} \right) \right|^2
		\end{split}
		\end{equation*}
		and that both terms above correspond to antithetic estimators with respect to subsampling for coarse chains, hence Lemma \ref{thm:AntitheticSubsampling} applies.
	\end{proof}
\end{lemma}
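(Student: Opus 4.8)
The plan is to reduce the estimate directly to Lemma~\ref{thm:AntitheticSubsampling} by an algebraic rearrangement: I would rewrite the average-of-averages appearing in the statement as a convex combination of two quantities, each of which has exactly the antithetic-via-subsampling structure already analysed. Expanding the three barred chains through their definitions $\bar{X}^{f,c}_{k} = \frac{1}{2}(X^{f,c-}_{k} + X^{f,c+}_{k})$, $\bar{X}^{c-,c}_{k} = \frac{1}{2}(X^{c-,c-}_{k} + X^{c-,c+}_{k})$ and $\bar{X}^{c+,c}_{k} = \frac{1}{2}(X^{c+,c-}_{k} + X^{c+,c+}_{k})$, the vector inside the norm becomes
\[
\frac{1}{2}\Big(X^{f,c-}_{k} - \frac{1}{2}(X^{c-,c-}_{k} + X^{c+,c-}_{k})\Big) + \frac{1}{2}\Big(X^{f,c+}_{k} - \frac{1}{2}(X^{c-,c+}_{k} + X^{c+,c+}_{k})\Big).
\]
Applying convexity of $x \mapsto |x|^2$ (equivalently Jensen's inequality) then gives
\[
\mathbb{E}\Big|\bar{X}^{f,c}_{k} - \frac{1}{2}(\bar{X}^{c-,c}_{k} + \bar{X}^{c+,c}_{k})\Big|^2 \le \frac{1}{2}\mathbb{E}\Big|X^{f,c-}_{k} - \frac{1}{2}(X^{c-,c-}_{k} + X^{c+,c-}_{k})\Big|^2 + \frac{1}{2}\mathbb{E}\Big|X^{f,c+}_{k} - \frac{1}{2}(X^{c-,c+}_{k} + X^{c+,c+}_{k})\Big|^2.
\]

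Second, I would identify each term on the right with the quantity $\mathbb{E}|X^f_k - \bar{X}^c_k|^2$ controlled in Lemma~\ref{thm:AntitheticSubsampling}. Reading off the recursions for the six coarse-in-time chains, the triple $(X^{f,c-}_k, X^{c-,c-}_k, X^{c+,c-}_k)$ is driven by the common noise increment $\hat{Z}_{k+2}$ and employs the full subsampling estimator $b(\cdot, U^f_k)$ for the fine-in-subsampling chain and its two halves $b(\cdot, U^{f,1}_k)$, $b(\cdot, U^{f,2}_k)$ for the two coarse-in-subsampling chains, exactly as prescribed by Assumption~\ref{as splittingEstimator} and the coupling~\eqref{AMLMCforSubsamplingDefinitions}. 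Hence this triple is literally an instance of the antithetic-via-subsampling construction, the only difference being that it runs with time step $2h$ rather than $h$. The same is true of the $c+$ triple, which uses $U^f_{k+1}$, $U^{f,1}_{k+1}$, $U^{f,2}_{k+1}$. Applying Lemma~\ref{thm:AntitheticSubsampling} to each triple therefore bounds both terms by a quantity of order $h^2/s^2$, and summing closes the argument.

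The only point needing attention — and the mild obstacle here — is the passage from step $h$ to step $2h$: Lemma~\ref{thm:AntitheticSubsampling} is stated for chains with time step $h$, whereas these auxiliary chains advance by $2h_{\ell_2}$. I would observe that substituting $2h$ for $h$ throughout the proof of Lemma~\ref{thm:AntitheticSubsampling} merely rescales the constants $C_2, c_2$ and the admissible range $h_0$ of time steps, while leaving the $\mathcal{O}(h^2/s^2)$ order untouched; the required common initialisation $X^{c+}_0 = X^{c-}_0$ is supplied by the standing hypothesis that all auxiliary chains start at the same point, and the $s^{-2}$ factor is inherited as usual from the $\mathcal{O}(s^{-1})$ and $\mathcal{O}(s^{-2})$ bounds on $\sigma^2$ and $\sigma^{(4)}$ in Assumptions~\ref{as ina} and~\ref{as fourth moment}.
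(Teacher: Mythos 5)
Your proposal is correct and follows essentially the same route as the paper: the identical convexity decomposition of $\bar{X}^{f,c}_{k} - \frac{1}{2}\bigl( \bar{X}^{c-,c}_{k} + \bar{X}^{c+,c}_{k} \bigr)$ into the two antithetic-via-subsampling differences, followed by an application of Lemma \ref{thm:AntitheticSubsampling} to each triple. Your explicit treatment of the passage from time step $h$ to $2h$ (rescaling the constants $C_2$, $c_2$, $h_0$ without affecting the $\mathcal{O}(h^2/s^2)$ order) is exactly what the paper's phrase ``for coarse chains'' leaves implicit, so this is a welcome clarification rather than a deviation.
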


Hence it only remains to deal with $\mathbb{E} \langle \Psi_{k} , - \left( \Xi^{1,4}_{k} - \frac{1}{2} \left(\Xi^{2,4}_{k} + \Xi^{3,4}_{k} \right) \right) \rangle$. We have
\begin{equation*}
\begin{split}
\Xi^{1,4}_{k} &= h \sum_{|\alpha|=1} D^{\alpha} b(X^c_{k}, U^f_{k+1}) \left( \bar{X}^{f,c}_{k} - X^c_{k} \right)^{\alpha} \\
&+ h \sum_{|\alpha|=2} \int_0^1 (1-t) D^{\alpha} b\left( X^c_{k} + t\left( \bar{X}^{f,c}_{k} - X^c_{k}  \right) , U^f_{k+1}\right) dt \left( \bar{X}^{f,c}_{k} - X^c_{k} \right)^{\alpha}
\end{split}
\end{equation*}
\begin{equation*}
\begin{split}
\Xi^{2,4}_{k} &= h \sum_{|\alpha|=1} D^{\alpha} b(X^c_{k}, U^{f,1}_{k+1}) \left( \bar{X}^{c-,c}_{k} - X^c_{k} \right)^{\alpha} \\
&+ h \sum_{|\alpha|=2} \int_0^1 (1-t) D^{\alpha} b\left( X^c_{k} + t\left( \bar{X}^{c-,c}_{k} - X^c_{k}  \right) , U^{f,1}_{k+1}\right) dt \left( \bar{X}^{c-,c}_{k} - X^c_{k} \right)^{\alpha}
\end{split}
\end{equation*}
\begin{equation*}
\begin{split}	
\Xi^{3,4}_{k} &= h \sum_{|\alpha|=1} D^{\alpha} b(X^c_{k}, U^{f,2}_{k+1}) \left( \bar{X}^{c+,c}_{k} - X^c_{k} \right)^{\alpha} \\
&+ h \sum_{|\alpha|=2} \int_0^1 (1-t) D^{\alpha} b\left( X^c_{k} + t\left( \bar{X}^{c+,c}_{k} - X^c_{k}  \right) , U^{f,2}_{k+1}\right) dt \left( \bar{X}^{c+,c}_{k} - X^c_{k} \right)^{\alpha} 
\end{split}
\end{equation*}
and hence
\begin{equation*}
\begin{split}
&\mathbb{E} \langle \Psi_{k} , - \left( \Xi^{1,4}_{k} - \frac{1}{2} \left(\Xi^{2,4}_{k} + \Xi^{3,4}_{k} \right) \right) \rangle = - h \mathbb{E} \langle \Psi_{k} , \sum_{|\alpha|=1} D^{\alpha}a(X^c_{k}) \left( \bar{X}^{f,c}_{k} - \frac{1}{2}\left( \bar{X}^{c-,c}_{k} + \bar{X}^{c+,c}_{k} \right) \right)^{\alpha} \rangle \\
&- h \mathbb{E} \langle \Psi_{k} , \sum_{|\alpha|=2} \int_0^1 (1-t) D^{\alpha} a\left( X^c_{k} + t\left( \bar{X}^{f,c}_{k} - X^c_{k}  \right) \right) dt \left( \bar{X}^{f,c}_{k} - X^c_{k} \right)^{\alpha} \rangle \\
&- h \mathbb{E} \langle \Psi_{k} , \sum_{|\alpha|=2} \int_0^1 (1-t) D^{\alpha} a\left( X^c_{k} + t\left( \bar{X}^{c-,c}_{k} - X^c_{k}  \right) \right) dt \left( \bar{X}^{c-,c}_{k} - X^c_{k} \right)^{\alpha} \rangle  \\
&- h \mathbb{E} \langle \Psi_{k} , \sum_{|\alpha|=2} \int_0^1 (1-t) D^{\alpha} a\left( X^c_{k} + t\left( \bar{X}^{c+,c}_{k} - X^c_{k}  \right) \right) dt \left( \bar{X}^{c+,c}_{k} - X^c_{k} \right)^{\alpha} \rangle \\
&\leq  h \varepsilon_{13} C_{a^{(1)}} \mathbb{E} |\Psi_{k}|^2 +   h \frac{1}{\varepsilon_{13}} C_{a^{(1)}} \mathbb{E} \left| \bar{X}^{f,c}_{k} - \frac{1}{2}\left( \bar{X}^{c-,c}_{k} + \bar{X}^{c+,c}_{k} \right) \right|^2 \\
&+ h \varepsilon_{14} C_{a^{(2)}} \mathbb{E} |\Psi_{k}|^2 +   h \frac{1}{\varepsilon_{14}} C_{a^{(2)}} \mathbb{E} \left| \bar{X}^{f,c}_{k} - X^c_{k} \right|^4 \\
&+ h \varepsilon_{15} C_{a^{(2)}} \mathbb{E} |\Psi_{k}|^2 +   h \frac{1}{\varepsilon_{15}} C_{a^{(2)}} \mathbb{E} \left| \bar{X}^{c-,c}_{k} - X^c_{k} \right|^4 + h \varepsilon_{16} C_{a^{(2)}} \mathbb{E} |\Psi_{k}|^2 +   h \frac{1}{\varepsilon_{16}} C_{a^{(2)}} \mathbb{E} \left| \bar{X}^{c+,c}_{k} - X^c_{k} \right|^4 \,,
\end{split}
\end{equation*}
for some $\varepsilon_{13}$, $\varepsilon_{14}$, $\varepsilon_{15}$, $\varepsilon_{16} > 0$. Using Lemmas \ref{thm:AntitheticMean} and \ref{thm:AntitheticMeanMean}, we see that all the fourth moments above have the correct order in $s$ and $h$. This concludes our estimates for $\mathbb{E} \langle \Psi_{k} , \Xi_{k} \rangle$. 
\end{proof}

\section*{Acknowledgement}
A part of this work was completed while MBM was affiliated to the University of Warwick and supported by the EPSRC grant EP/P003818/1.

\bibliographystyle{abbrv}

\bibliography{Matbib_Antithetic} 

\def\cprime{$'$}
\begin{thebibliography}{10}

\bibitem{AicherMaFotiFox2019}
C.~Aicher, Y.-A. Ma, N.~J. Foti, and E.~B. Fox.
\newblock Stochastic gradient {MCMC} for state space models.
\newblock {\em SIAM J. Math. Data Sci.}, 1(3):555--587, 2019.

\bibitem{BakerFearnheadFox2019}
J.~Baker, P.~Fearnhead, E.~B. Fox, and C.~Nemeth.
\newblock Control variates for stochastic gradient {MCMC}.
\newblock {\em Stat. Comput.}, 29(3):599--615, 2019.

\bibitem{Barkhagen2018}
M.~{Barkhagen}, N.~H. {Chau}, {\'E}.~{Moulines}, M.~{R{\'a}sonyi},
  S.~{Sabanis}, and Y.~{Zhang}.
\newblock {On stochastic gradient Langevin dynamics with dependent data streams
  in the logconcave case}.
\newblock {\em Bernoulli}, 2020.

\bibitem{alaya2015central}
M.~Ben~Alaya and A.~Kebaier.
\newblock Central limit theorem for the multilevel {M}onte {C}arlo {E}uler
  method.
\newblock {\em Ann. Appl. Probab.}, 25(1):211--234, 2015.

\bibitem{alaya2020central}
M.~{Ben Alaya}, A.~{Kebaier}, and T.~B. {Tram Ngo}.
\newblock {Central Limit Theorem for the $\sigma$-antithetic multilevel Monte
  Carlo method}.
\newblock {\em arXiv e-prints}, page arXiv:2002.08834, Feb. 2020.

\bibitem{blackard1999comparative}
J.~A. Blackard and D.~J. Dean.
\newblock Comparative accuracies of artificial neural networks and discriminant
  analysis in predicting forest cover types from cartographic variables.
\newblock {\em Computers and electronics in agriculture}, 24(3):131--151, 1999.

\bibitem{BrosseDurmusMoulines2018}
N.~Brosse, A.~Durmus, and E.~Moulines.
\newblock The promises and pitfalls of stochastic gradient langevin dynamics.
\newblock In S.~Bengio, H.~Wallach, H.~Larochelle, K.~Grauman, N.~Cesa-Bianchi,
  and R.~Garnett, editors, {\em Advances in Neural Information Processing
  Systems 31}, pages 8268--8278. Curran Associates, Inc., 2018.

\bibitem{ChatterjiBartlettJordan2018}
N.~Chatterji, N.~Flammarion, Y.~Ma, P.~Bartlett, and M.~Jordan.
\newblock On the theory of variance reduction for stochastic gradient {M}onte
  {C}arlo.
\newblock In J.~Dy and A.~Krause, editors, {\em Proceedings of the 35th
  International Conference on Machine Learning}, volume~80 of {\em Proceedings
  of Machine Learning Research}, pages 764--773, Stockholmsm{\"a}ssan,
  Stockholm Sweden, 10--15 Jul 2018. PMLR.

\bibitem{ChauRasonyi2019}
H.~N. {Chau} and M.~{Rasonyi}.
\newblock {Stochastic Gradient Hamiltonian Monte Carlo for Non-Convex Learning
  in the Big Data Regime}.
\newblock {\em arXiv e-prints}, page arXiv:1903.10328, Mar 2019.

\bibitem{ChauMoulines2019}
N.~H. {Chau}, {\'E}.~{Moulines}, M.~{R{\'a}sonyi}, S.~{Sabanis}, and
  Y.~{Zhang}.
\newblock {On stochastic gradient Langevin dynamics with dependent data
  streams: the fully non-convex case}.
\newblock {\em arXiv e-prints}, page arXiv:1905.13142, May 2019.

\bibitem{ChengJordan2018}
X.~{Cheng}, N.~S. {Chatterji}, Y.~{Abbasi-Yadkori}, P.~L. {Bartlett}, and M.~I.
  {Jordan}.
\newblock {Sharp convergence rates for Langevin dynamics in the nonconvex
  setting}.
\newblock {\em arXiv e-prints}, page arXiv:1805.01648, May 2018.

\bibitem{CornishDoucet2019}
R.~Cornish, P.~Vanetti, A.~Bouchard-Cote, G.~Deligiannidis, and A.~Doucet.
\newblock Scalable {M}etropolis-{H}astings for exact {B}ayesian inference with
  large datasets.
\newblock In K.~Chaudhuri and R.~Salakhutdinov, editors, {\em Proceedings of
  the 36th International Conference on Machine Learning}, volume~97 of {\em
  Proceedings of Machine Learning Research}, pages 1351--1360, Long Beach,
  California, USA, 09--15 Jun 2019. PMLR.

\bibitem{crisan2018unbiased}
D.~Crisan, P.~Del~Moral, J.~Houssineau, and A.~Jasra.
\newblock Unbiased multi-index {M}onte {C}arlo.
\newblock {\em Stoch. Anal. Appl.}, 36(2):257--273, 2018.

\bibitem{Dalalyan2017}
A.~S. Dalalyan.
\newblock Theoretical guarantees for approximate sampling from smooth and
  log-concave densities.
\newblock {\em J. R. Stat. Soc. Ser. B. Stat. Methodol.}, 79(3):651--676, 2017.

\bibitem{Dalalyan2017user}
A.~S. Dalalyan and A.~Karagulyan.
\newblock User-friendly guarantees for the {L}angevin {M}onte {C}arlo with
  inaccurate gradient.
\newblock {\em Stochastic Process. Appl.}, 129(12):5278--5311, 2019.

\bibitem{DenizAkyildiz2020}
{\"O}.~{Deniz Akyildiz} and S.~{Sabanis}.
\newblock {Nonasymptotic analysis of Stochastic Gradient Hamiltonian Monte
  Carlo under local conditions for nonconvex optimization}.
\newblock {\em arXiv e-prints}, page arXiv:2002.05465, Feb. 2020.

\bibitem{Dereich2021}
S.~Dereich.
\newblock General multilevel adaptations for stochastic approximation
  algorithms {II}: {CLT}s.
\newblock {\em Stochastic Process. Appl.}, 132:226--260, 2021.

\bibitem{DereichMuller2019}
S.~Dereich and T.~M\"{u}ller-Gronbach.
\newblock General multilevel adaptations for stochastic approximation
  algorithms of {R}obbins-{M}onro and {P}olyak-{R}uppert type.
\newblock {\em Numer. Math.}, 142(2):279--328, 2019.

\bibitem{Dubey2016}
K.~A. Dubey, S.~J.~Reddi, S.~A. Williamson, B.~Poczos, A.~J. Smola, and E.~P.
  Xing.
\newblock Variance reduction in {S}tochastic {G}radient {L}angevin {D}ynamics.
\newblock In D.~D. Lee, M.~Sugiyama, U.~V. Luxburg, I.~Guyon, and R.~Garnett,
  editors, {\em Advances in Neural Information Processing Systems 29}, pages
  1154--1162. Curran Associates, Inc., 2016.

\bibitem{DurmusEberle2021}
A.~{Durmus} and A.~{Eberle}.
\newblock {Asymptotic bias of inexact Markov Chain Monte Carlo methods in high
  dimension}.
\newblock {\em arXiv e-prints}, page arXiv:2108.00682, Aug. 2021.

\bibitem{DurmusMoulines2017}
A.~Durmus and E.~Moulines.
\newblock Nonasymptotic convergence analysis for the unadjusted {L}angevin
  algorithm.
\newblock {\em Ann. Appl. Probab.}, 27(3):1551--1587, 2017.

\bibitem{DurmusRobertsVilmartZygalakis2017}
A.~Durmus, G.~O. Roberts, G.~Vilmart, and K.~C. Zygalakis.
\newblock Fast {L}angevin based algorithm for {MCMC} in high dimensions.
\newblock {\em Ann. Appl. Probab.}, 27(4):2195--2237, 2017.

\bibitem{Eberle2016}
A.~Eberle.
\newblock Reflection couplings and contraction rates for diffusions.
\newblock {\em Probab. Theory Related Fields}, 166(3-4):851--886, 2016.

\bibitem{EberleGuillinZimmer2019}
A.~Eberle, A.~Guillin, and R.~Zimmer.
\newblock Quantitative {H}arris-type theorems for diffusions and
  {M}c{K}ean-{V}lasov processes.
\newblock {\em Trans. Amer. Math. Soc.}, 371(10):7135--7173, 2019.

\bibitem{EberleMajka2019}
A.~Eberle and M.~B. Majka.
\newblock Quantitative contraction rates for {M}arkov chains on general state
  spaces.
\newblock {\em Electron. J. Probab.}, 24:36 pp., 2019.

\bibitem{Frikha2016}
N.~Frikha.
\newblock Multi-level stochastic approximation algorithms.
\newblock {\em Ann. Appl. Probab.}, 26(2):933--985, 2016.

\bibitem{Gao2018}
X.~{Gao}, M.~{G{\"u}rb{\"u}zbalaban}, and L.~{Zhu}.
\newblock {Global Convergence of Stochastic Gradient Hamiltonian Monte Carlo
  for Non-Convex Stochastic Optimization: Non-Asymptotic Performance Bounds and
  Momentum-Based Acceleration}.
\newblock {\em arXiv e-prints}, page arXiv:1809.04618, Sep 2018.

\bibitem{MG08}
M.~B. Giles.
\newblock Multilevel {M}onte {C}arlo path simulation.
\newblock {\em Oper. Res.}, 56(3):607--617, 2008.

\bibitem{Giles2015Acta}
M.~B. Giles.
\newblock Multilevel {M}onte {C}arlo methods.
\newblock {\em Acta Numer.}, 24:259--328, 2015.

\bibitem{giles2019multilevel}
M.~B. Giles and A.-L. Haji-Ali.
\newblock Multilevel nested simulation for efficient risk estimation.
\newblock {\em SIAM/ASA J. Uncertain. Quantif.}, 7(2):497--525, 2019.

\bibitem{MajkaSzpruchVollmerZygalakisGiles2019}
M.~B. Giles, M.~B. Majka, L.~Szpruch, S.~J. Vollmer, and K.~C. Zygalakis.
\newblock Multi-level {M}onte {C}arlo methods for the approximation of
  invariant measures of stochastic differential equations.
\newblock {\em Stat. Comput.}, 30(3):507--524, 2020.

\bibitem{GilesSzpruch2012}
M.~B. Giles and L.~Szpruch.
\newblock Antithetic multilevel {M}onte {C}arlo estimation for multidimensional
  {SDE}s.
\newblock In {\em Monte {C}arlo and quasi-{M}onte {C}arlo methods 2012},
  volume~65 of {\em Springer Proc. Math. Stat.}, pages 367--384. Springer,
  Heidelberg, 2013.

\bibitem{GilesSzpruch2014}
M.~B. Giles and L.~Szpruch.
\newblock Antithetic multilevel {M}onte {C}arlo estimation for
  multi-dimensional {SDE}s without {L}\'{e}vy area simulation.
\newblock {\em Ann. Appl. Probab.}, 24(4):1585--1620, 2014.

\bibitem{goodfellow2014generative}
I.~Goodfellow, J.~Pouget-Abadie, M.~Mirza, B.~Xu, D.~Warde-Farley, S.~Ozair,
  A.~Courville, and Y.~Bengio.
\newblock Generative adversarial nets.
\newblock In Z.~Ghahramani, M.~Welling, C.~Cortes, N.~D. Lawrence, and K.~Q.
  Weinberger, editors, {\em Advances in Neural Information Processing Systems
  27}, pages 2672--2680. Curran Associates, Inc., 2014.

\bibitem{Haji-Ali2016}
A.-L. Haji-Ali, F.~Nobile, and R.~Tempone.
\newblock Multi-index {M}onte {C}arlo: when sparsity meets sampling.
\newblock {\em Numer. Math.}, 132(4):767--806, 2016.

\bibitem{Hoffman2013}
M.~D. Hoffman, D.~M. Blei, C.~Wang, and J.~Paisley.
\newblock Stochastic variational inference.
\newblock {\em J. Mach. Learn. Res.}, 2013.

\bibitem{hu2019mean}
K.~{Hu}, Z.~{Ren}, D.~{Siska}, and L.~{Szpruch}.
\newblock {Mean-Field Langevin Dynamics and Energy Landscape of Neural
  Networks}.
\newblock {\em arXiv e-prints}, page arXiv:1905.07769, May 2019.

\bibitem{hwang1980laplace}
C.-R. Hwang.
\newblock Laplace's method revisited: weak convergence of probability measures.
\newblock {\em Ann. Probab.}, 8(6):1177--1182, 1980.

\bibitem{jabir2019mean}
J.-F. {Jabir}, D.~{{\v{S}}i{\v{s}}ka}, and {\L}.~{Szpruch}.
\newblock {Mean-Field Neural ODEs via Relaxed Optimal Control}.
\newblock {\em arXiv e-prints}, page arXiv:1912.05475, Dec. 2019.

\bibitem{Jacob2020}
P.~E. Jacob, J.~O'Leary, and Y.~F. Atchad\'{e}.
\newblock Unbiased {M}arkov chain {M}onte {C}arlo methods with couplings.
\newblock {\em J. R. Stat. Soc. Ser. B. Stat. Methodol.}, 82(3):543--600, 2020.

\bibitem{jourdain2019non}
B.~Jourdain and A.~Kebaier.
\newblock Non-asymptotic error bounds for the multilevel {M}onte {C}arlo
  {E}uler method applied to {SDE}s with constant diffusion coefficient.
\newblock {\em Electron. J. Probab.}, 24:Paper No. 12, 34, 2019.

\bibitem{Kebaier2005}
A.~Kebaier.
\newblock Statistical {R}omberg extrapolation: a new variance reduction method
  and applications to option pricing.
\newblock {\em Ann. Appl. Probab.}, 15(4):2681--2705, 2005.

\bibitem{kingma2013auto}
D.~P. Kingma and M.~Welling.
\newblock Auto-encoding variational bayes.
\newblock {\em arXiv preprint arXiv:1312.6114}, 2013.

\bibitem{MaChenFox2015}
Y.-A. Ma, T.~Chen, and E.~Fox.
\newblock A complete recipe for stochastic gradient mcmc.
\newblock In C.~Cortes, N.~D. Lawrence, D.~D. Lee, M.~Sugiyama, and R.~Garnett,
  editors, {\em Advances in Neural Information Processing Systems 28}, pages
  2917--2925. Curran Associates, Inc., 2015.

\bibitem{MaChenJinFlamarionJordan2019}
Y.-A. Ma, Y.~Chen, C.~Jin, N.~Flammarion, and M.~I. Jordan.
\newblock Sampling can be faster than optimization.
\newblock {\em Proc. Natl. Acad. Sci. USA}, 116(42):20881--20885, 2019.

\bibitem{MajkaMijatovicSzpruch2018}
M.~B. {Majka}, A.~{Mijatovi{\'c}}, and L.~{Szpruch}.
\newblock {Non-asymptotic bounds for sampling algorithms without
  log-concavity}.
\newblock {\em Ann. Appl. Probab.}, 2020.

\bibitem{mnih2015human}
V.~Mnih, K.~Kavukcuoglu, D.~Silver, A.~A. Rusu, J.~Veness, M.~G. Bellemare,
  A.~Graves, M.~Riedmiller, A.~K. Fidjeland, G.~Ostrovski, et~al.
\newblock Human-level control through deep reinforcement learning.
\newblock {\em Nature}, 518(7540):529--533, 2015.

\bibitem{BachMoulines2011}
E.~Moulines and F.~R. Bach.
\newblock Non-asymptotic analysis of stochastic approximation algorithms for
  machine learning.
\newblock In J.~Shawe-Taylor, R.~S. Zemel, P.~L. Bartlett, F.~Pereira, and
  K.~Q. Weinberger, editors, {\em Advances in Neural Information Processing
  Systems 24}, pages 451--459. Curran Associates, Inc., 2011.

\bibitem{TrueCost}
T.~{Nagapetyan}, A.~B. {Duncan}, L.~{Hasenclever}, S.~J. {Vollmer},
  L.~{Szpruch}, and K.~{Zygalakis}.
\newblock {The True Cost of Stochastic Gradient Langevin Dynamics}.
\newblock {\em arXiv e-prints}, page arXiv:1706.02692, Jun 2017.

\bibitem{NemethFearnhead2019}
C.~{Nemeth} and P.~{Fearnhead}.
\newblock {Stochastic gradient Markov chain Monte Carlo}.
\newblock {\em arXiv e-prints}, page arXiv:1907.06986, July 2019.

\bibitem{Raginsky2017}
M.~Raginsky, A.~Rakhlin, and M.~Telgarsky.
\newblock Non-convex learning via stochastic gradient langevin dynamics: a
  nonasymptotic analysis.
\newblock In S.~Kale and O.~Shamir, editors, {\em Proceedings of the 2017
  Conference on Learning Theory}, volume~65 of {\em Proceedings of Machine
  Learning Research}, pages 1674--1703, Amsterdam, Netherlands, 07--10 Jul
  2017. PMLR.

\bibitem{Rhee2015}
C.-H. Rhee and P.~W. Glynn.
\newblock Unbiased estimation with square root convergence for {SDE} models.
\newblock {\em Oper. Res.}, 63(5):1026--1043, 2015.

\bibitem{Shamir2016}
O.~Shamir.
\newblock Without-replacement sampling for stochastic gradient methods.
\newblock In D.~D. Lee, M.~Sugiyama, U.~V. Luxburg, I.~Guyon, and R.~Garnett,
  editors, {\em Advances in Neural Information Processing Systems 29}, pages
  46--54. Curran Associates, Inc., 2016.

\bibitem{SzpruchTse2019}
{\L}.~{Szpruch} and A.~{Tse}.
\newblock {Antithetic multilevel particle system sampling method for
  McKean-Vlasov SDEs}.
\newblock {\em arXiv e-prints}, page arXiv:1903.07063, Mar 2019.

\bibitem{TehThieryVollmer2016}
Y.~W. Teh, A.~H. Thiery, and S.~J. Vollmer.
\newblock Consistency and fluctuations for stochastic gradient {L}angevin
  dynamics.
\newblock {\em J. Mach. Learn. Res.}, 17:Paper No. 7, 33, 2016.

\bibitem{villani2009optimal}
C.~Villani.
\newblock {\em Optimal {T}ransport: {O}ld and {N}ew}, volume 338.
\newblock Springer-Verlag, 2009.

\bibitem{VollmerZygalakisTeh2016}
S.~J. Vollmer, K.~C. Zygalakis, and Y.~W. Teh.
\newblock Exploration of the (non-)asymptotic bias and variance of stochastic
  gradient {L}angevin dynamics.
\newblock {\em J. Mach. Learn. Res.}, 17:Paper No. 159, 45, 2016.

\bibitem{welling2011bayesian}
M.~Welling and Y.~W. Teh.
\newblock Bayesian learning via {S}tochastic {G}radient {L}angevin {D}ynamics.
\newblock In L.~Getoor and T.~Scheffer, editors, {\em Proceedings of the 28th
  International Conference on Machine Learning (ICML-11)}, ICML '11, pages
  681--688, New York, NY, USA, June 2011. ACM.

\bibitem{Xifara2014}
T.~Xifara, C.~Sherlock, S.~Livingstone, S.~Byrne, and M.~Girolami.
\newblock Langevin diffusions and the {M}etropolis-adjusted {L}angevin
  algorithm.
\newblock {\em Statist. Probab. Lett.}, 91:14--19, 2014.

\bibitem{Zhang2019}
Y.~{Zhang}, {\"O}.~{Deniz Akyildiz}, T.~{Damoulas}, and S.~{Sabanis}.
\newblock {Nonasymptotic estimates for Stochastic Gradient Langevin Dynamics
  under local conditions in nonconvex optimization}.
\newblock {\em arXiv e-prints}, page arXiv:1910.02008, Oct. 2019.

\bibitem{ZouXuGu2019}
D.~Zou, P.~Xu, and Q.~Gu.
\newblock Stochastic gradient hamiltonian {M}onte {C}arlo methods with
  recursive variance reduction.
\newblock In H.~Wallach, H.~Larochelle, A.~Beygelzimer, F.~d\textquotesingle
  Alch\'{e}-Buc, E.~Fox, and R.~Garnett, editors, {\em Advances in Neural
  Information Processing Systems 32}, pages 3835--3846. Curran Associates,
  Inc., 2019.

\end{thebibliography}
\end{document}